\documentclass[11pt]{article}
\usepackage{geometry}
\usepackage{verbatim}
\usepackage{graphicx}
\usepackage{titletoc}
\usepackage{amsmath, amsfonts, bm, mathrsfs}
\usepackage{amsmath, amsfonts, amssymb, bm, mathrsfs}
\usepackage{amsthm,graphicx}
\usepackage[dvipsnames]{xcolor}
\usepackage[unicode=true,
 bookmarks=false,
 breaklinks=false,
 pdfborder={0 0 1},colorlinks=false
 ]{hyperref}
\hypersetup{colorlinks,citecolor=OliveGreen,filecolor=blue,linkcolor=blue,urlcolor=blue}
\allowdisplaybreaks
\usepackage{float}
\usepackage{multirow}
\usepackage{footnote}
\usepackage{dsfont}
\usepackage{booktabs}
\usepackage{enumitem}
\usepackage{tikz}
\usetikzlibrary{arrows.meta}
\usepackage{pgfplots}
\pgfplotsset{compat=1.17}

\usepackage[linesnumbered,ruled,vlined]{algorithm2e}
\usepackage{algorithmic}

\definecolor{cc}{RGB}{0,127,0}

\usepackage{dsfont}

\title{Phase Transitions for Feature Learning in Neural Networks \vspace{0.5em}}

\author{\vspace{0em}Andrea Montanari\thanks{Department of Statistics and Department of Mathematics, Stanford University}\and Zihao Wang\thanks{Department of Mathematics, Stanford University}}

\date{\vspace{0em}\today}

\newcommand{\Tr}{\operatorname{Tr}}
\newcommand{\de}{\mathrm{d}}
\newcommand{\E}{\mathbb{E}}

\newcommand{\opnorm}[1]{{\left\vert\kern-0.25ex\left\vert\kern-0.25ex\left\vert #1 
\right\vert\kern-0.25ex\right\vert\kern-0.25ex\right\vert}}

\newcommand{\ba}{\boldsymbol{a}}

\newcommand{\bO}{\boldsymbol{O}}

\newcommand{\bb}{\boldsymbol{b}}

\newcommand{\norm}[1]{\left\|{#1}\right\|}

\newcommand{\abs}[1]{\left\vert#1\right\vert}

\def\naturals{{\mathbb N}}
\def\reals{{\mathbb R}}

\DeclareMathOperator*{\plim}{p-lim}

\DeclareMathOperator*{\argmax}{arg\,max}

\newtheoremstyle{myexample}
    {\topsep}           
    {\topsep}                
    {\rmfamily\small }                
    {}                  
    {\bfseries }                
    {.5em}                  
    {}

\newtheoremstyle{myremark}
    {\topsep}             
    {\topsep}                       
    {\rmfamily}
    {}
    {\bfseries}              
    {.}                          
    {.5em}        
    {}

\newtheorem{claim}{Claim}[section]
\newtheorem{lemma}[claim]{Lemma}

\newtheorem{assumption}{Assumption}

\newtheorem{theorem}{Theorem}

\newtheorem{definition}[claim]{Definition}

\theoremstyle{myremark}
\newtheorem{remark}{Remark}[section]

\theoremstyle{myremark}

\theoremstyle{myexample}

\definecolor{darkblue}{RGB}{25,25,200}

\def\<{\langle}
\def\>{\rangle}

\def\ssim{\mbox{\tiny\rm sim}}

\def\GP{{\sf GP}}
\def\GeLU{{\sf GeLU}}

\def\bzero{\boldsymbol{0}}

\def\eps{{\varepsilon}}

\def\bI{{\boldsymbol I}}

\def\sT{{\sf T}}

\def\bTheta{{\boldsymbol \Theta}}

\def\bQ{{\boldsymbol Q}}

\def\det{{\rm det}}

\def\upper{\mathbb{H}}

\def\hhF{\widehat{F}}
\def\rE{\mathrm{E}}
\def\rH{\mathrm{H}}
\def\indep{{\perp\!\!\!\perp}}

\def\normal{{\sf N}}

\def\cT{{\cal T}}

\def\S{\mathcal{S}}

\def\ssp{\mathsf{sp}}
\def\sopt{\mathsf{opt}}
\def\Quad{\mathsf{Quad}}

\def\cF{{\cal F}}

\def\bG{{\boldsymbol G}}

\def\ed{\stackrel{{\rm d}}{=}}

\def\salg{\sf alg}
\def\sIT{\sf IT}
\def\sNN{\sf NN}

\def\cuU{\mathscr{U}}

\def\Uhard{{\mathfrak U}_{\mathrm H}}
\def\Ueasy{{\mathfrak U}_{\mathrm E}}

\def\cK{{\mathcal{K}}}

\def\cO{{\mathcal{O}}}

\def\bA{{\boldsymbol A}}

\def\bB{{\boldsymbol B}}
\def\bC{{\boldsymbol C}}
\def\bH{{\boldsymbol H}}
\def\bW{{\boldsymbol W}}
\def\bR{{\boldsymbol R}}
\def\bU{{\boldsymbol U}}

\def\bK{{\boldsymbol K}}
\def\bM{{\boldsymbol M}}
\def\bD{{\boldsymbol D}}

\def\bT{{\boldsymbol T}}
\def\bX{{\boldsymbol X}}
\def\bY{{\boldsymbol Y}}
\def\bZ{{\boldsymbol Z}}

\def\be{{\boldsymbol e}}

\def\bu{{\boldsymbol u}}

\def\bx{{\boldsymbol x}}

\def\by{{\boldsymbol y}}

\def\btheta{{\boldsymbol \theta}}
\def\bxi{{\boldsymbol \xi}}

\def\bP{{\boldsymbol P}}

\def\bF{{\boldsymbol F}}

\def\bf{{\boldsymbol f}}
\def\bS{{\boldsymbol S}}

\def\hbtheta{\widehat{\boldsymbol \theta}}

\def\beps{{\boldsymbol \varepsilon}}
\def\br{{\boldsymbol r}}

\def\spn{{\rm span}}

\def\prob{{\mathbb P}}

\def\Treg[#1]{T^{{\rm reg},#1}}
\def\GW[#1]{{\rm GW}(#1)}
\def\MGW[#1]{{\rm MGW}(#1)}

\def\hard{{\mathrm H}}
\def\easy{{\mathrm E}}

\def\Unif{{\sf Unif}}
\def\Risk{{\sf Risk}}

\def\supp{{\rm supp}}
\def\dim{{\rm dim}}
\def\ker{{\rm ker}}
\def\rank{{\rm rank}}

\renewcommand{\Re}{\mathrm{Re}}
\renewcommand{\Im}{\mathrm{Im}}

\renewcommand{\leq}{\leqslant}

\newcommand{\cD}{\mathcal{D}}

\begin{document}

\maketitle

\begin{abstract}
According to a popular viewpoint, neural networks learn from data by first identifying
effective low-dimensional representations, and subsequently fitting the 
best model in this low-dimensional space. A  sequence of recent works provides
a rigorous formalization of this phenomenon when learning multi-index models.
In this setting, we are given $n$ i.i.d. pairs of covariate vectors and responses $(\bx_i,y_i)$, 
where the vectors $\bx_i\in\mathbb{R}^d$ are isotropic, and responses $y_i$ only depend on $\bx_i$
through a $k$-dimensional projection $\bTheta_*^{\sT}\bx_i$. Feature
learning amounts to learning the latent space spanned by $\bTheta_*$.

In this context, we study the gradient descent dynamics of two-layer neural networks
under the proportional asymptotics $n,d\to\infty$, $n/d\to\delta\in (0,\infty)$,
while the dimension of the latent space $k$ and the number of hidden neurons $m$ 
are kept fixed. Earlier work establishes that feature learning via polynomial-time algorithms
is possible if $\delta> \delta_{\salg}$, for $\delta_{\salg}$ a threshold depending on the data
distribution, and is impossible (within a certain class of efficient algorithms) below $\delta_{\salg}$.
Here we derive an analogous threshold $\delta_{\sNN}$ for a class of two-layer networks.
Our characterization of $\delta_{\sNN}$ is sufficiently explicit to open
the way to study the dependence of learning dynamics 
on the target function, loss, activation, network width, and initialization.

The threshold $\delta_{\sNN}$ is determined by the following training scenario. 
Gradient descent first visits points for which the gradient of the empirical risk
is large and learns the 
directions spanned by these gradients. Then the gradient becomes smaller and the 
dynamics becomes dominated by negative directions of the Hessian. 
The threshold $\delta_{\sNN}$ corresponds to a phase transition in the spectrum of the Hessian
after the first phase of training, and provides a quantitative explanation for empirical
phenomena such as grokking.
\end{abstract}

\setcounter{tocdepth}{2}
\tableofcontents

\section{Introduction}

Pressed to explain the success of modern deep learning systems, many practitioners might mention the ability
of these systems to `learn low-dimensional representation of the data,' during training. 
This is often contrasted to linear methods (e.g. kernel methods or neural networks under `lazy training')
which work in a fixed representation that is determined by a human's choice of the kernel or featurization map.

An important line of recent work has provided a formalization of this
intuition for the problem of learning multi-index models, see \cite{ba2022highdimensionalasymptoticsfeaturelearning,damian2022neuralnetworkslearnrepresentations,abbe2023sgdlearningneuralnetworks,berthier2024learning,zhang2025neuralnetworkslearngeneric} for a few pointers to this literature.
This is a classical supervised learning problem.
We observe $n$ i.i.d. samples $(\bx_i,y_i)$,  ${i\le n}$ with 
$d$-dimensional covariates $\bx_i$ which we assume don't have low-dimensional structure:
$\bx_i\sim \normal(\boldsymbol{0}, \bI_d)$.
The responses $y_i$ depend on a $k$-dimensional projection
of the covariates:
\begin{align} 
\label{eq:FirstMultiIndex}
y_i = h\big(\bTheta_*^\sT\bx_i,\,\eps_i\big),\qquad \eps_i\sim \normal(0,1),\quad i\in[n]\, .
\end{align}
Here $\bTheta_*\in\reals^{d\times k}$ has orthonormal columns, noise $\eps_i$ is independent of $\bx_i$, and $h:\reals^{k+1}\to\reals$ is a link function. 

We will attempt to learn a predictive model using a two-layer neural network:
\begin{align}
f_{\bTheta}(\bx) := \frac{1}{m}\sum_{j=1}^m a_j\sigma\big(\btheta_j^\sT\bx+b_j\big),\qquad \bTheta:=\begin{bmatrix}\btheta_1&\dots&\btheta_m\end{bmatrix}\in\reals^{d\times m}\, .\label{eq:FirstNeuralModel}
\end{align}
In order to focus on the feature-learning aspects
of this problem and avoid inessential technicalities, we will only train the first-layer 
weights $\bTheta$ and and regard $(a_j,b_j)_{j=1}^m$ as fixed.
We also emphasize that the number of neurons $m$ is in general unrelated to the latent dimension $k$. Throughout, we will regard $k$ as constants independent of $n,d\to\infty$. We will eventually let $m\to\infty$ \emph{after} $n,d\to\infty$.

Given a loss function $\ell(\cdot,\cdot):\mathbb{R}\times\mathbb{R}\to\mathbb{R}$, we 
are interested in the dynamics of gradient descent (GD) on the empirical risk
\begin{align}
\Risk(\bTheta):=\frac{1}{n}\sum_{i=1}^n \ell\big(y_i, f_{\bTheta}(\bx_i)\big). \label{eq:EmpiricalRisk}
\end{align}

The neural network \eqref{eq:FirstNeuralModel} is a natural
model for the multi-index target distribution. Indeed any distribution \eqref{eq:FirstMultiIndex}
(more precisely, the conditional expectation $\E[y|\bx]=h_{\mathsf E}(\bTheta_*^\sT\bx)$) can be 
approximated by a network of the form \eqref{eq:FirstNeuralModel}, with $\|\ba\|_1/m\le R_0$ (for $R_0$ a
sufficiently large constant), resulting in generalization error bounded by $C R_0\sqrt{d/n}$
\cite{bartlett1996valid,bach2017breaking}. Vice versa, any such neural network is well approximated by
a $k$-index model with $k=O(R_0\vee 1)$. 

Since the generalization error is $O(R_0\sqrt{d/n})$, learning by empirical risk minimization succeeds with $n=O(d)$ samples.
Hence, the key question is whether GD (with respect to the empirical risk \eqref{eq:EmpiricalRisk})
can achieve the same and whether it is optimal among algorithms with comparable complexity.
Despite a copious literature on this model, no sharp results exist about this question.

In order to explain the issue, consider for the moment 
the case of a single-index model, $k=1$, and
assume $h(z)=h(z,\eps)$ is a generic even\footnote{If $h$ is not even, then for any $\delta>0$ the latent direction
$\btheta_0$ can be learned with non-trivial accuracy by  linear regression, after preprocessing the $y_i$'s.} function of $z$. 
Letting $n,d\to \infty$ with $\delta_n:=n/d\rightarrow \delta$, then 
there exists an estimator $\hbtheta=\hbtheta(\by,\bX)$ with $|\<\hbtheta,\btheta_0\>|/\|\hbtheta\|$
bounded away from zero (i.e. it achieves `weak recovery') for any $\delta>\delta_{\sIT}$, 
while such an estimator does not exist for $\delta<\delta_{\sIT}$.
The `information theoretic' (or `statistical') $\delta_{\sIT} = \delta_{\sIT}(h)$ was characterized in \cite{Barbier_2019}.

\begin{figure}[!htb]
    \centering
    \includegraphics[width=1.0\textwidth]{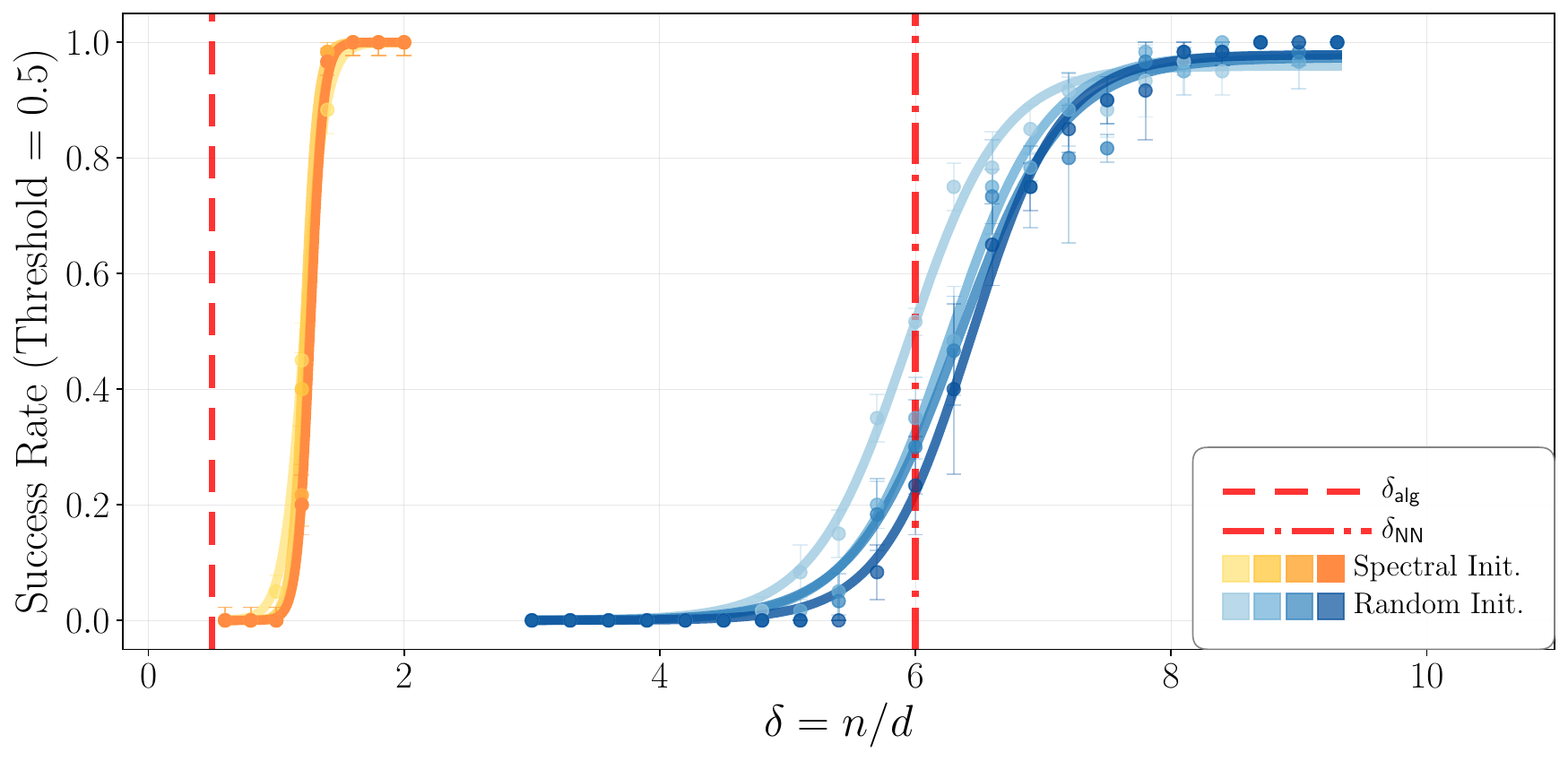}
    
    \caption{Empirical success probabilities and predicted thresholds for learning a single-index model with link function $h(z,\eps)=z^2$ (noiseless phase retrieval). The model is a single-neuron network with $\GeLU$ activation, trained via gradient descent; we declare success when the correlation between the learned feature and the target direction exceeds $1/2$. Orange curves correspond to optimal spectral initialization, blue curves to random initialization; within each color family, lighter to darker shades indicate increasing dimensions $d=1000, 2000, 3000, 4000$. Dots are success rates over $60$ independent trials (error bars show the standard deviation), and solid lines are sigmoid fits.}
    \label{fig:combined_success_rate}
\end{figure}

On the other hand, there exists a threshold $\delta_{\salg}=\delta_{\salg}(h)\in (0,\infty]$ 
and a spectral algorithm\footnote{Existence of a spectral algorithm that matches the algorithmic threshold $\delta_{\salg}$ is an open problem for $k>1$.} that achieves weak recovery for any $\delta>\delta_{\salg}$ \cite{mondelli2018fundamentallimitsweakrecovery},
and it is conjectured that no polynomial time algorithm achieves weak recovery for $\delta<\delta_{\salg}$. In general, the two 
algorithmic and information theoretic thresholds do
not coincide, i.e. $\delta_{\sIT}(h)\le \delta_{\salg}(h)$
always, and it is quite easy to come up with functions $h$ 
for which $\delta_{\sIT}(h)<\delta_{\salg}(h)$ strictly.

Finally (still assuming $k=1$ for now), it is possible to construct an empirical risk of the form \eqref{eq:EmpiricalRisk},
with $f_{\btheta}(\bx) = \<\btheta,\bx\>$ and a gradient-based algorithm that achieves weak recovery for any
$\delta>\delta_{\salg}$. This construction essentially mimics power iteration for the ideal spectral method. It is also known that for generic multi-index models (with `generative exponent' at most two), a standard two-layer neural network can achieve
weak recovery from $n=\widetilde{O}(d)$ samples\footnote{We use $\widetilde{O}$
to hide logarithmic factors.} \cite{dandi2024benefitsreusingbatchesgradient,zhang2025neuralnetworkslearngeneric}.

These results leaves open two crucial questions:
\begin{enumerate}
\item[{\sf Q1.}] Do standard neural networks of the form \eqref{eq:FirstNeuralModel} achieve weak recovery
for all $\delta$ above the algorithmic threshold $\delta_{\salg}$? If not, what is the gap? How does it depend on architecture (activation function, width, \dots) and algorithm (loss function, initialization, stepsize, \dots)?
\item[{\sf Q2.}] What is the learning mechanism implemented by gradient descent for the two-layer network? Is
it an approximation of a spectral method? If so, which spectral method?
\end{enumerate}

\begin{figure}[!htb]
    \centering
    \includegraphics[width=0.7\textwidth]{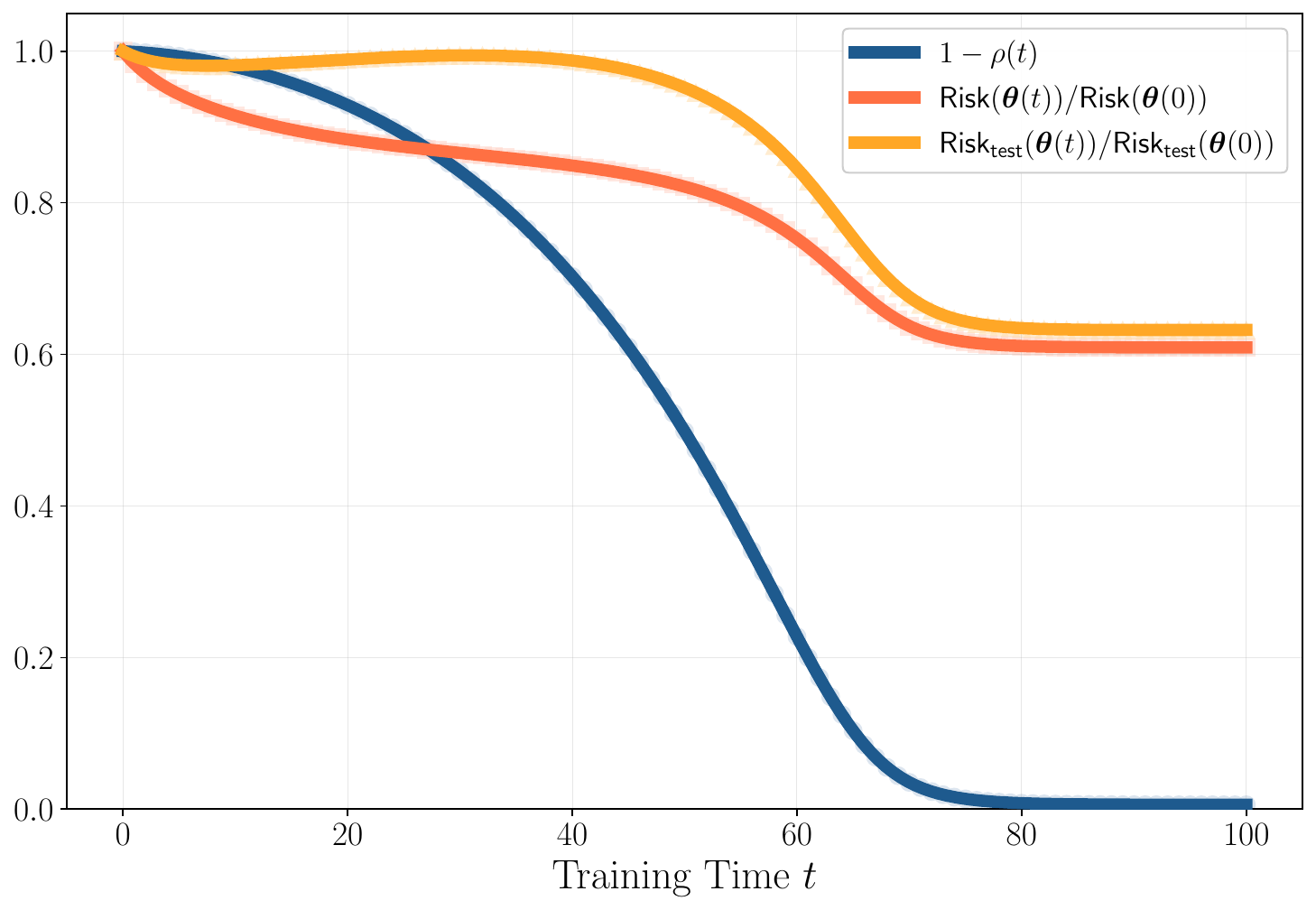}
    \vspace{-0.1cm}
    \caption{Grokking phenomenon for a $\GeLU$ neuron learning the noiseless phase retrieval problem ($d=5000$, $\delta=n/d=17.5$, single run). Here $\rho(t) := |\<\btheta(t),\btheta_*\>|/\|\btheta(t)\|$ denotes the cosine similarity between the learned and target parameters. Light-colored markers show data points; solid lines connect them for visual clarity.
    Initially, training loss decreases while test loss stays near its initial value and misalignment remains high (overfitting phase). Subsequently, the misalignment drops to zero---indicating successful learning of the signal direction---while both losses plateau at around $0.6$ and the generalization error drops to zero (grokking phase).}
    \label{fig:grokking_dynamics_first}
  \end{figure}

As a preview of our results, Fig.~\ref{fig:combined_success_rate} compares the success probabilities
of two algorithms for the case $k=1$ and $h(z,\eps) = z^2$ (noiseless phase retrieval), whereby we perform GD with respect to 
the risk of Eq.~\eqref{eq:EmpiricalRisk}, with $m=1$, Huber loss $\ell$ and $f_{\btheta}(\bx)=\GeLU (\<\btheta,\bx\>)$.
The two sets of curves correspond to different initializations: $(i)$~The optimal spectral initialization given in \cite{mondelli2018fundamentallimitsweakrecovery}; $(ii)$~Random initialization. (See Section \ref{sec:examples_and_numerical} for further details.) Success is declared when GD achieves $|\<\btheta,\btheta_*\>|/\|\btheta\|\ge 1/2$.
We also report the algorithmic threshold for weak recovery, which is $\delta_{\salg}=1/2$ in this case. 
We observe a factor-$5$ gap between the optimal initialization and the random initialization, and a factor-$12$ difference between the weak recovery threshold $\delta_{\salg}$ and the observed threshold for the one-neuron network. 
Our prediction for the latter is $\delta_{\sNN}\approx 6.0$ and matches well the empirical results.

The threshold at $\delta_{\sNN}$ also predicts the emergence of grokking as
demonstrated in Fig.~\ref{fig:grokking_dynamics_first} (same setting of Fig.~\ref{fig:combined_success_rate}, at $\delta=17.5$).
`Grokking' refers to the widely reported empirical phenomenon according to which 
generalization error (the gap between test and train error) 
first increases to reach a plateau and then (often abruptly) decreases, 
with a corresponding decrease in test error \cite{power2022grokking}. This phenomenology is clarified by our
theoretical analysis:
\begin{enumerate}
\item We observe grokking for $\delta>\delta_{\sNN}$. 
Indeed, feature learning (which takes place only for $\delta>\delta_{\sNN}$) drives the abrupt
decrease of generalization error in the late phase of training. 
\item The training time required for the drop in generalization error increases as $\delta\downarrow \delta_{\sNN}$,
(making the drop difficult to observe for $\delta$ close to $\delta_{\sNN}$). This is because feature learning is 
driven by a spectral phase transition in the Hessian of the empirical risk, and the spectral gap vanishes as $\delta\downarrow \delta_{\sNN}$.
\item The maximum of generalization error (the generalization error at the plateau) 
decreases for  $\delta\gg \delta_{\sNN}$, in agreement with uniform convergence arguments. 
\end{enumerate}

We will provide a conceptually simple quantitative explanation of the
threshold $\delta_{\sNN}$ for neural networks, and its gap with respect to
the optimal algorithmic threshold $\delta_{\salg}$.
Namely, we will study the spectrum of the Hessian $\nabla^2\Risk(\bTheta(t))$
along the GD trajectory $\bTheta(t)$ and determine 
a threshold $\delta_{\sNN}(t)$ for the emergence of a descent direction 
(an eigenvector with negative eigenvalue) aligned with the latent subspace. 
Our prediction for the feature learning phase transition is $\delta_{\sNN} = \lim_{t\to\infty}\delta_{\sNN}(t)$. Here, the limit $t\to\infty$  is taken after $n,d\to\infty$. In other words, we are studying the Hessian along the 
GD trajectory on time horizons of order $O(1)$.
This picture provides a conceptual explanation of the gap
between optimal algorithmic threshold $\delta_{\salg}$ and neural network $\delta_{\sNN}$, since the latter corresponds to
a spectral phase transition for a `sub-optimal' preprocessing of the data. 

One closely related explanation was provided in \cite{sarao2020complex,bonnaire2025role}
which analyzed gradient descent in the (real) phase retrieval problem using
non-rigorous statistical physics techniques (see Section \ref{subsec:related} for further 
comparison). 

We emphasize that our determination of the spectral threshold
$\delta_{\sNN}$ (i.e. the study of the Hessian along the
GD trajectory for $t=O(1)$) is fully rigorous.
On the other hand, its connection to the GD behavior on logarithmic (in $d$) 
time scales is a hypothesis based on a heuristic argument. Numerical simulations support this hypothesis but its mathematical proof is an open problem.

\subsection{Main results: An informal overview}\label{subsec:overview}

\subsubsection{Background}
\label{sec:Background}

We train the model \eqref{eq:FirstNeuralModel} using full-batch GD with step size $\eta>0$:
\begin{align}\label{eq:GD-First}
\bTheta(t+1)-\bTheta(t) = -\eta\nabla_{\bTheta}\Risk\big(\bTheta(t)\big)\, , 
\end{align}
and initialization $(\btheta_j)_{j\le m}\stackrel{i.i.d.}{\sim} \Unif(\S^{d-1})$, where $\S^{d-1}$ denotes the unit sphere in $d$
dimensions (a constant scaling of $\btheta_j$ can be absorbed in the activation function).
Throughout, we will focus on the proportional asymptotics
\begin{align}
n,d\to\infty\, ,\;\; n/d:=\delta_n\to \delta\in (0,\infty)\, .\label{eq:FirstProportional}
\end{align}

We begin by decomposing  the latent space $\spn(\bTheta_*)$
into a subspace that (as we will see) can be learned by $O(1)$ 
gradient evaluations (`easy') and the complementary subspace (`hard').

For $z\sim\normal(0,I_k)$ independent of $\eps\sim\normal(0,1)$,
let $y:=h(z,\eps)$.
In the case $k=1$, there is an obvious difference between data
distributions such that $\E\left[\cT(y)z\right]\neq 0$ for some function $\cT:\reals\to\reals$, and others, for which $\E\left[\cT(y)z\right]= 0$ 
for all such $\cT$. In the former case $\hbtheta(\by,\bX) = n^{-1} \sum_{i=1}^n\cT(y_i)\bx_i$ achieves weak recovery for all $\delta>0$. In the latter, weak recovery is information theoretically impossible for $\delta$
small enough.

For $k>1$, the two situations can coexist for different subspaces of $\spn(\bTheta_*)$.
For $k'\le k$, let $\cO(k,k')\subseteq \reals^{k\times k'}$ be
the set of matrices with with orthonormal columns ($\cO(k,k')$ is also known as Stiefel
manifold).
 For any $U\in\cO(k,k')\subseteq \reals^{k\times k'}$, define the projectors $P_{U}:=U U^\sT$ and $P_{U}^\perp:=I_k-P_{U}$,
and the linear subspace spanned by the columns of $U$, $\spn(U)$.

\begin{definition}\label{def:hard_subspace}
Let the collection of \emph{hard orthonormal sets} be 
\begin{align}
\label{eq:HardFrames}
\cuU_{\hard}
 := \Big\{U\in \cO(k,k'): \E\big[\cT\big(y,P_{U}^\perp z,\xi\big)P_{U} z\big]=0,\ \forall\ \text{measurable }\cT:\reals^{k+2}\to\reals\Big\},
\end{align}
where $\xi\sim\normal(0,1)$ is independent of $(x,z,\eps)$.

We then define the \emph{hard subspace} as the span of the maximal hard orthonormal set 
\begin{align}
\Uhard &:= \spn(U^*_{\hard})\, ,\;\;\;\;
 U^*_{\hard} 
 \in \argmax_{U\in\cuU_{\hard}} \rank(U)\, .
\end{align}
With an abuse of terminology, we also refer to the linear subspace  
$\{\bTheta_* u\, :\; u \in 
\Uhard\}\subseteq\reals^d$ as to the hard subspace.
We denote by $r=r_{\hard}:=\max_{U\in\cuU_{\hard}} \dim(U)$ the dimension of the hard subspace.

We refer to the orthogonal complement $\Ueasy:=\Uhard^{\perp}$ as to the \emph{easy subspace}.
\end{definition}

As an example, consider the case $k=2$ and $y=h(z_1,z_2)$, for a generic function $h$ 
that is symmetric under permutations of its arguments $h(z_1,z_2) = h(z_2,z_1)$.
Then $\Uhard = \spn((+1,-1))$ and $\Ueasy =\spn((+1,+1))$. 
It is immediate to show that the definition of $\Uhard$ is unique, because the union of hard subspaces is a hard subspace. 

\begin{remark}[Related definitions]
Related decompositions of the latent space were introduced, among others, in 
\cite{abbe2024mergedstaircasepropertynecessarynearly,abbe2023sgdlearningneuralnetworks,troiani2025fundamentalcomputationallimitsweak,damian2025generativeleapsharpsample}, with the last two being most closely related
to the present one. 
In the terminology of 
\cite{damian2025generativeleapsharpsample}, $\Uhard$ is the maximal subspace with 
generative index greater or equal to two (denoted by $k(\Uhard^{\perp})$ in \cite{damian2025generativeleapsharpsample}). 
The relation with \cite{troiani2025fundamentalcomputationallimitsweak} is a bit more subtle,
in the sense that our `easy' subspace is equal to the union of `trivial' subspaces of the `grand-staircase'
of \cite{troiani2025fundamentalcomputationallimitsweak} if the last construction does not gave easy directions,
and in general is a subspace of the latter.

We emphasize that the term `hard' here is not to be interpreted in its meaning in computational
complexity theory.
\end{remark}

With an abuse of terminology, we also refer to any orthonormal basis  $U_{\hard}^*$ of
$\Uhard$ as to the hard subspace: there will be no risk of confusion because all statements will be invariant under change of basis. 
Similarly, we denote by $U_\mathrm{E}^*$ an (arbitrary) orthonormal basis of $\Ueasy$.

In alternative to Eq.~\eqref{eq:HardFrames}, we use the equivalent definition
\begin{equation}
 U\in \cuU_{\hard}\quad \text{if and only if}\quad \E\big[P_{U} z \big|y,P_{U}^\perp z\big]=0.
\end{equation}
In words, along a hard subspace, the expectation of $P_{U} z$ carries no information 
even after conditioning on both the target and the orthogonal complement.

As shown in \cite{dandi2024benefitsreusingbatchesgradient}, the 
network (weakly) learns the easy subspace $\Ueasy$ within $O(1)$ iterations of gradient descent
(for generic $a_i$'s) as long as $n/d\to \alpha>0$. Namely, there exists $\eps_0>0$ and $t=O(1)$ such that,
for $m$ enough, and under certain conditions on the $m$, the step size $\eta$, 
and the activation function $\sigma$:
\begin{align}
\lim_{n,d\to\infty} \prob\left(s_{\min}(\bTheta(t)^{\sT}\bTheta_*U^*_{\easy}) \ge\eps_0\right)=1\, .
\end{align}
where $s_{\min}(\cdot)$ denotes the minimum singular value. On the other hand, as stated in the next section, after any constant (in $n,d$) number of gradient steps,
the weights $\bTheta(t)$ remain roughly orthogonal to the subspace of hard directions. 
A related result was proved in \cite[Theorem 3.2]{dandi2024benefitsreusingbatchesgradient} for the case in 
which all latent directions are hard (i.e. $\Uhard=\reals^k$).

Our main results are summarized in Section \ref{sec:Hessian_Preview}.
We characterize a spectral phase transition in the Hessian of the empirical risk, along the 
GD trajectory. Namely, we analyze the Hessian $\nabla^2\Risk(\bTheta(t))$ after $t=O(1)$ iterations, and show that,
for $\delta$ above a certain threshold $\delta_{\sNN}(t)$, it develops negative outlier eigenvalues, with the corresponding eigenvectors aligned with the hard subspace. 
This phase transition drives feature learning of the hard directions.

\subsubsection{Hard directions are not learned within $O(1)$ steps}\label{sec:HardDirections}

Our first result establishes that hard directions are not
learned within $O(1)$ steps of gradient descent. In particular
for any fixed $t$, $\bTheta(t)$ is asymptotically orthogonal to $\mathfrak{U}_{\hard}$.

We use an exact characterization of gradient descent in the limit 
of $n,d\to\infty$, for any constant number of iterations $t$, which goes  under the name of dynamical mean field 
theory\footnote{As shown in \cite{celentano2020estimation} this is in fact a consequence of the state evolution analysis of approximate message passing algorithms.} (DMFT)~\cite{sompolinsky1981dynamic,sompolinsky1982relaxational,celentano2020estimation,celentano2021high}. 
For any fixed $t$, DMFT construct a sequence of
random vectors $\Theta_*\in\reals^k$, $\Theta(0),\dots, \Theta(t)\in\reals^m$ such that, for any suitably regular test function $\phi$:
\begin{align}
    \plim_{n,d\to\infty}\frac{1}{d}\sum_{i=1}^d \phi\left(\sqrt{d}\bTheta(0)_i, \dots, 
    \sqrt{d}\bTheta(t)_i, \sqrt{d}\bTheta_{*i}\right) = \E[\phi(\Theta(0),\dots,\Theta(t),\Theta_*)]\, .
\end{align}
where $\bTheta(t)_i$ is the $i$-th row of $\bTheta(t)$ and same for $\bTheta_{*i}$. (Throughout the paper, we use boldface to denote pre-limit vectors or matrices and normal font to denote their asymptotic counterpart.)

Using this characterization, we prove that, for any 
bounded continuous function $\cT: \reals^{m(t+1)}\to \reals$ (see Lemma \ref{lemma_amp_feature}):
\begin{align}
\plim_{n,d\to\infty}\widehat{\bTheta}^{\sT}\bTheta_* P_{U_\mathrm{H}^*} = 0\, , \;\;\;\;\;\;
    \widehat{\bTheta}_i := \frac{1}{\sqrt{d}}\cT\left(\sqrt{d}\bTheta(0)_i, \dots, 
    \sqrt{d}\bTheta(t)_i\right) 
\end{align}
In words, any estimator $\widehat{\bTheta}$ computed in constant number of gradient steps and possibly postprocessing  remains uncorrelated with the hard directions 
$\bTheta_* P_{U_\mathrm{H}^*}$.

We note that the DMFT construction of the random vectors $(\Theta(0),\dots,\Theta(t),\Theta_*)$ 
(as well as the random vectors $(V(0),\dots,V(t),V_*,\eps)$ which characterize the limit of
$(\bTheta(0)^{\sT}\bx_i,\dots,\bTheta(t)^{\sT}\bx_i,\bTheta_*^{\sT}\bx_i,\eps_i)$)
is explicit and can be implemented numerically, see Section \ref{sec:MainDMFT}. This allows us to study the Hessian at
the current position
$\bTheta(t)$, which is our main goal, as we discuss next.

\subsubsection{Hessian phase transitions after gradient descent}
\label{sec:Hessian_Preview}

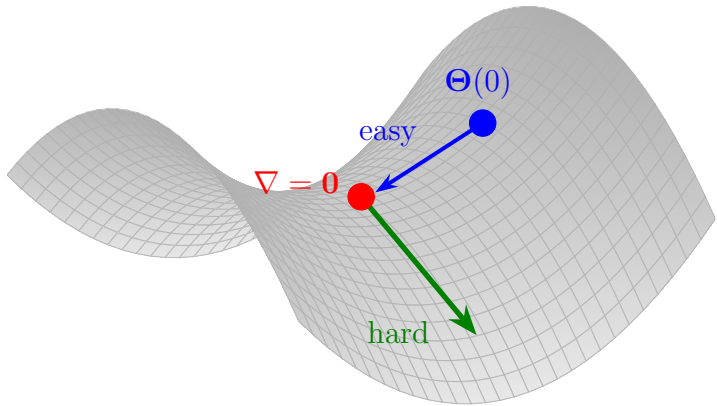
\begin{figure}[htb]
\centering
\vspace{0.3cm}
\begin{tikzpicture}
\begin{axis}[
    view={-35}{30},
    axis lines=none,
    domain=-1.2:1.2,
    y domain=-1.2:1.2,
    samples=30,
    z buffer=sort,
    width=11cm,
    height=9cm,
]
\addplot3[
    surf,
    shader=faceted interp,
    colormap={depthgray}{color=(gray!20) color=(gray!45) color=(gray!70)},
    opacity=0.9,
    faceted color=gray!60,
] {x^2 - y^2};

\addplot3[only marks, mark=*, mark size=5pt, blue] coordinates {(0.7, 0, 0.49)};
\node[blue, font=\large] at (axis cs:0.85, 0.25, 0.7) {$\bTheta(0)$};

\addplot3[only marks, mark=*, mark size=5pt, red] coordinates {(0, 0, 0)};
\node[red, font=\large] at (axis cs:-0.22, 0.22, 0.12) {$\nabla = \bzero$};

\addplot3[<-, blue, line width=1.5pt, >={Stealth[length=3mm, width=2mm]}] coordinates {(0.7, 0, 0.49) (0.08, 0, 0.01)};
\node[blue, font=\large] at (axis cs:0.4, 0.35, 0.25) {easy};

\addplot3[->, green!50!black, line width=2pt, >={Stealth[length=4mm, width=3mm]}] coordinates {(0, 0, 0) (0, -0.95, -0.90)};
\node[green!50!black, font=\large] at (axis cs:-0.45, -0.95, -0.65) {hard};

\end{axis}
\end{tikzpicture}
\caption{Cartoon of feature learning via GD on the empirical risk landscape. Within $O(1)$ steps, the network learns  easy directions (blue arrow) to a saddle point. Escaping this point requires moving along hard directions (green arrow), governed by the Hessian's negative eigenvalues.}
\label{fig:saddle_cartoon}
\end{figure}

In order to understand the dynamics beyond time $t=O(1)$, we need to consider deviation from the DMFT 
trajectory. These deviations are of order\footnote{This can be proven using the finite sample analysis of AMP of \cite{rush2018finite}
and the mapping between AMP and generalized first order methods of \cite{celentano2020estimation}.} $1/\sqrt{d}$, but their dynamics depends on 
the structure of the Hessian at $\bTheta(t)$. It will be convenient to  study the rescaled Hessian
\begin{equation}
 \bH(t) := m\nabla^2\Risk(\bTheta(t))\, .
\end{equation}
Our main result is a sharp characterization asymptotic
of the smallest eigenvalues of $\bH(t)$ and associated eigenvectors under the proportional asymptotics
$n,d\to\infty$, $n/d\to \delta$, in two limit cases $m=1$ and $m\gg 1$. Surprisingly, these
cases can be treated in unified fashion from a mathematical perspective.

\noindent\emph{Case $m=1$.} In this case $\bH(t)\in\reals^{d\times d}$ takes the form
\begin{align}
\bH(t)&:=\frac{1}{n}\sum_{i=1}^n g(y_i,\btheta(t)^{\sT}\bx_i)\bx_i\bx_i^\sT\, ,\\
g(y,z)&:=a^2\ell'\big(y,a\sigma(z+b)\big)\sigma''(z+b)+a\ell''\big(y,a\sigma(z+b)\big)\sigma'(z+b)^2.
\end{align}

\noindent\emph{Case $m\gg 1$.} In this case $\bH(t)\in\reals^{md\times md}$ has a block structure ($m^2$ blocks
of dimensions $d\times d$).
Lemma~\ref{lemma:eigenvalue_diagonal} below shows that, for a convex loss $\ell(y,\,\cdot\,)$, 
the smallest eigenvalues and corresponding eigenvectors of $m\nabla^2\Risk(\bTheta)$ are well-approximated by a block diagonal 
matrix $\bH_{\mathrm{diag}}(t)$, up to an $O(1/m)$ correction. 
(Here we are considering large width $m$ \emph{after} the limit $n,d\to\infty$). 

The $j$-th block of $\bH_{\mathrm{diag}}(t)$ is $a_j \bH_{j}(t)$ where
\begin{align}
 \bH_j(t)&:=\frac{1}{n}\sum_{i=1}^n  g(y_i,\bTheta(t)^{\sT}\bx_i;j)\bx_i\bx_i^\sT\, ,\\
 g(y,\bTheta^\sT \bx;j)&:=\ell'\big(y,f_{\bTheta}(\bx)\big)\sigma''(\btheta_j^{\sT}\bx+b_j)\, .
\end{align}
Hence, characterizing the low-lying eigenvalues of $\bH(t)$ amounts to 
analyzing a random matrix with similar structure in the two cases $m=1$ and $m\gg 1$.
From the mathematical viewpoint, the challenge lies in the fact that the parameters $\bTheta(t)$
depend in an intricate way on the random data $\{\bx_i,y_i\}_{i=1}^n$ through gradient descent updates.

Our main results can be summarized as follows (we refer for concreteness to the case $m\gg 1$,
and the block $\bH_j(t)$, and assume throughout the proportional asymptotics 
\eqref{eq:FirstProportional}).
\begin{description}
\item[Asymptotics of the empirical spectral distribution (ESD).] For any $t\ge 0$, let $\mu_{n,d}^j(t)$ denote the empirical spectral distribution of block $j$.
We then determine a distribution  $\mu_{\infty}^j(t)$ such that 
\begin{align}
\mu_{n,d}^j(t) \Rightarrow \mu_{\infty}^j(t)\, .
\end{align}
(We use $\Rightarrow$ to denote weak convergence and the convergence is understood to hold in probability).
Under certain regularity assumptions on the loss and activation functions, the support of
$\mu_{\infty}^j(t)$ is bounded below, and we denote by $c_j(t)$ its infimum.
\item[Thresholds for emergence of informative eigenvectors.]
 We characterize thresholds $\delta^*_j$ for each block $j$
such that, for $\delta>\delta^*_j$,  the following happens: 
\begin{itemize}
    \item There exists $t_0<\infty$ such that, for all $t\ge t_0$, the spectrum of $\bH_j(t)$
    contains a negative eigenvalue $z_{j}^{n,d}(t)$ below the infimum of the support of $\mu^j_{\infty}$.
    Namely, for some fixed $\Delta>0$
    \begin{align}
    \plim_{n,d\to\infty} z_{j}^{n,d}(t) = z_j^*(t) \le  \min(c_j(t),0)- \Delta\, 
    \end{align}
    \item The corresponding eigenvector $\boldsymbol{\xi}_{j}^{n,d}(t)$ has a projection on the hard subspace
    with magnitude bounded away from zero.
\end{itemize}
\item[No informative eigenvector below spectral thresholds.] Viceversa, 
for $\delta<\delta^*_j$ and for all $t$ large enough, no eigenvector of $\bH_j(t)$ has projection onto the hard subspace which is asymptotically bounded away from zero.
\end{description}

This leads us to define the overall feature learning threshold as $\delta_{\sNN}:=\min_{j\in [m]}\delta_j^*$.
In summary, when the sample size is large enough, the Hessian has a negative isolated
eigenvalue whose corresponding eigenvector is correlated with the hard directions. When the sample size is small, the Hessian has no eigenvector correlated with the hard directions. 
Our results provide explicit formulas for these thresholds, enabling sharp and numerically computable predictions for this phenomenon.

We note that the threshold $\delta_j^*$ is dependent on  the details of the learning procedure, including the loss function, learning rate, initialization, and activation function.
Our results allow to study the dependence of feature learning with respect to these 
factors that can be designed to minimize the sample size for learning.

\paragraph{Outline of the paper.}
The remainder of the paper is organized as follows. Section \ref{subsec:related} reviews related literature, and Section \ref{subsec:notation} summarizes our notation. Section \ref{sec:results} presents our main results
and Section \ref{sec:Consequences} discusses some of their consequences. Section \ref{sec:examples_and_numerical} illustrates our theoretical predictions through numerical experiments. Section \ref{sec:proof_sketch} provides an outline of the proofs. Section \ref{sec:discussion} concludes with a discussion of implications and future directions.
Some technical preliminaries are provided in Appendix \ref{appendix:tools}. Detailed proofs are provided in Appendices \ref{sec_Proofs} and \ref{sec:proof_result_iii}, supporting technical lemmas are collected in Appendix \ref{sec:technical_lemmas}, and additional numerical experiments are presented in Appendix \ref{sec:additional_simulation}.

\subsection{Related literature}\label{subsec:related}

\paragraph{Optimal feature learning.} 

The literature on multi-index models is quite extensive.
Here we limit ourselves to results relevant to the specific setting of our paper, referring to
\cite{bruna2025surveyalgorithmsmultiindexmodels} for a recent review. In particular we assume $k$
fixed, isotropic covariates and focus on the weak recovery problem, i.e., estimating the latent directions with non-vanishing correlation. We generically refer to weak recovery of the latent space as `feature learning'.

Information-theoretically, the sample size required to learn a $k$-index model is linear in the 
dimension $d$
\cite{Barbier_2019,Aubin_2019}, as expected from counting degrees of freedom. 
A significant amount of work has been devoted to understanding whether this can be achieved by 
polynomial-time algorithms \cite{Barbier_2019,mondelli2018fundamentallimitsweakrecovery,lu2019phasetransitionsspectralinitialization,chen2020learningpolynomialsrelevantdimensions,damian2024computationalstatisticalgapsgaussiansingleindex,damian2025generativeleapsharpsample,troiani2025fundamentalcomputationallimitsweak,kovačević2025spectralestimatorsmultiindexmodels,defilippis2025optimalspectraltransitionshighdimensional}. We will refer to this as `efficient feature-learning'.

The situation is particularly simple in the case $k=1$. If 
$\E[z|y]$ is not identically zero (in the notation of Section \ref{sec:Background}), then 
weak recovery of $\btheta_*$
is possible for any $\delta>0$. If on the other hand $\E[z|y]=0$ (this is the case if $y$ only depends on
$z$ via $|z|$),
\cite{mondelli2018fundamentallimitsweakrecovery} showed that efficient feature learning
is possible when $\delta$ crosses a certain threshold $\delta_{\salg}$
by using an optimal spectral method. It has been conjectured that no polynomial-time algorithm 
succeeds for $\delta<\delta_{\salg}$ \cite{mondelli2018fundamentallimitsweakrecovery,Barbier_2019}. 
While it is possible to construct examples (functions $h$) for which $\delta_{\salg}=\infty$,
this situation is non-generic in the sense that it requires the function $h$ to
satisfy an additional equality constraints.

For $k>1$, the simplest case is the one in which all directions are easy, in our language  
$\Ueasy=\reals^k$. Starting with \cite{abbe2023sgdlearningneuralnetworks}, several results
established that efficient feature learning is possible from $n=O(d)$ samples for certain 
classes of distributions of this type.
On the other hand, for several interesting problems $\Ueasy\subsetneq \reals^k$,
e.g. when $h(z,\eps)$ is invariant under permutations of some of the entries 
of $z$.
Recently, \cite{damian2025generativeleapsharpsample} 
proved that sequential spectral method achieve efficient feature learning from linearly many samples
even when hard directions are present, generically. More generally, 
they related the scaling of sample size to a property of the link function $h$,
known as the `generative exponent'. The generic situation (covering common examples) 
corresponds to generative exponent at most two.

The results of \cite{damian2025generativeleapsharpsample} do not yield a sharp threshold 
$\delta_{\salg}$ for polynomial-time feature learning.  
For certain classes of functions $h(z,\eps)$  \cite{troiani2025fundamentalcomputationallimitsweak}
analyze the precise asymptotics of AMP in the presence of
a certain amount of `side information' in the latent direction. 
In the limit of vanishing side information, this yields a 
phase transition behavior at a threshold $\delta_{\salg}$.
Proving that this threshold is predictive of the actual behavior in absence of side information remains an outstanding problem.

Optimal spectral methods are developed in \cite{kovačević2025spectralestimatorsmultiindexmodels,defilippis2025optimalspectraltransitionshighdimensional}.
These works establish that weak recovery is feasible for  $\delta>\delta_{\salg}$
but do not provide evidence for hardness for $\delta<\delta_{\salg}$.

\paragraph{Feature learning in neural networks.}
Empirically, gradient-based training of neural networks efficiently finds hidden representations aligned with low-dimensional structure in the data. 
A large number of theoretical works studies how and when neural networks can efficiently perform feature learning in single-index, multi-index, and other models with planted structure
\cite{arous2021onlinestochasticgradientdescent,
damian2022neuralnetworkslearnrepresentations,abbe2024mergedstaircasepropertynecessarynearly,abbe2023sgdlearningneuralnetworks,ba2022highdimensionalasymptoticsfeaturelearning,bietti2023learninggaussianmultiindexmodels,wang2023learninghierarchicalpolynomialsthreelayer,berthier2024learning,arnaboldi2024repetita,lee2024neuralnetworklearnslowdimensional,dandi2024benefitsreusingbatchesgradient,fu2024learninghierarchicalpolynomialsmultiple,arnaboldi2025repetitaiuvantdatarepetition,montanari2025dynamicaldecouplinggeneralizationoverfitting,zhang2025neuralnetworkslearngeneric,fan2026highdimensionallearningdynamicsmultipass}.

For single-index models, \cite{arnaboldi2024repetita,lee2024neuralnetworklearnslowdimensional} showed that neural networks can efficiently learn single-index polynomial targets with $\widetilde{O}(d)$ samples through sample reuse, and \cite{montanari2025dynamicaldecouplinggeneralizationoverfitting} provided a precise picture of the dynamics of learning single-index models (with information exponent one, within $O(1)$ time) using wide networks under proportional scaling, demonstrating that overfitting occurs at a slower timescale than feature learning.

Early results 
\cite{damian2022neuralnetworkslearnrepresentations,abbe2024mergedstaircasepropertynecessarynearly,abbe2023sgdlearningneuralnetworks} established that neural networks 
can achieve feature learning in multi-index models, albeit with 
suboptimal sample complexity.
Next,
\cite{dandi2024benefitsreusingbatchesgradient} proved that $n=O(d)$ samples are sufficient for 
efficient feature learning using  neural networks and batch gradient descent 
when the generative exponent is one. 
In other words, gradient descent can match the ideal sample size scaling as predicted by the generative exponent. 
In concurrent work, 
\cite{zhang2025neuralnetworkslearngeneric}  showed that neural networks trained via layer-wise gradient descent can efficiently learn generative-exponent-two, non-generative-staircase multi-index models from $\widetilde{O}(d)$ samples. 

Despite such a substantial amount of work, sharp thresholds
for feature learning in neural networks remain unknown. In addition, the precise interplay between activation functions, data structure, loss functions, and initialization in determining the
feature learning behavior is not well understood.

\paragraph{Spiked Models.}

Spiked random matrix models provide a mathematical framework for understanding when and how informative directions can be detected from noisy data. In a standard spiked model, the spectrum of the bulk follows the generalized Marchenko-Pastur law \cite{MarchenkoPastur,BaiYin,BaiSilverstein}. When low-rank signals are introduced, the seminal work of \cite{baik2004phasetransitionlargesteigenvalue} identified the Baik–Ben Arous–Péché (BBP) transition, which precisely characterizes the critical signal-to-noise ratio at which outlier eigenvalues emerge from the bulk spectrum. Above this threshold, the top eigenvectors become correlated with the latent signal, enabling spectral estimators to recover the hidden directions.
Subsequent work has refined this picture through local laws and improved deformation analyses \cite{benaychgeorges2010eigenvalueseigenvectorsfinitelow,
knowles2011isotropic,knowles2016anisotropiclocallawsrandom}.  
Spiked models naturally emerge when designing and analyzing spectral methods used as initialization strategies for nonconvex problems such as phase retrieval \cite{chen2017solving,mondelli2018fundamentallimitsweakrecovery}. 
The structure of the Hessian under  generalized linear models also takes the same form and was
further
studied (for $m=k=1$) in \cite{liao2021hessianeigenspectrarealisticnonlinear}. However, this type of analysis only provides information
on the Hessian at a point $\btheta(0)$ independent of the data itself.

A key technical challenge that we face is the one of analyzing the Hessian at a point 
$\bTheta(t)$ that depend on the data $\bX,\by$. The recent paper 
\cite{arous2025localgeometryhighdimensionalmixture} analyzes the Hessian
after a certain number of stochastic gradient descent (SGD) steps. However they assume that the Hessian is
computed on fresh data, independent of the data used for SGD, hence sidestepping the 
challenge of dependencies between $\bTheta(t)$ and the data.

In order to capture these dependencies \cite{sarao2020complex,bonnaire2025role}
use a statistical mechanics approach, in the context of real phase retrieval
(see also \cite{annesi2025overparametrization} for some generalization). 
Namely, they assume that gradient descent converges to `marginal states', and use 
the replica method together with
random matrix theory to study spectrum of the Hessian in these states. 
While providing the correct picture, we point out that this type of analysis of gradient descent is not ---in general--- asymptotically exact (besides being non-rigorous).

Considering the setting in which the same data is queried
multiple times (and in particular Hessian is computed on the same data used for GD)
poses technical challenges but is unavoidable to capture several crucial learning 
phenomena\footnote{For instance, the generalization error vanishes if data is not re-used, and 
the required sample size can be significantly larger than with multiple-pass GD \cite{dandi2024benefitsreusingbatchesgradient}.}.

\subsection{Summary of notations}\label{subsec:notation}

We use $n, d, m, k$ to denote the sample size, input dimension, network width, and number of target features, respectively. We use $c,C,c_i,C_i,\dots$ to denote constants that may change 
from line to line; these constants are independent of $n,d,m$ but may depend on $k$
and on the constants that appear in the regularity assumptions on $h, \sigma, \ell$. 
We use boldface to denote pre-limit vectors and matrices, and normal font to denote their asymptotic limiting counterparts. We use $\|\cdot\|$ to denote the Euclidean norm when applied to vectors and the operator norm when applied to matrices. We denote by $I_p$ the $p\times p$ identity matrix. 

A function $\phi: \mathbb{R}^p \rightarrow \mathbb{R}$ is called pseudo-Lipschitz of order $s$ (for $s \geq 1$), denoted $\phi \in \operatorname{PL}_s$, if there exists a constant $L>0$ such that for all $x,y \in \mathbb{R}^p$, we have
\begin{equation}
|\phi(x)-\phi(y)| \leq L\left(1+\|x\|^{s-1}+\|y\|^{s-1}\right)\|x-y\|.
\end{equation}

We use $\lesssim$ and $\gtrsim$ to hide constants as above. We write $o_P(1)$ for a quantity that converges to zero in probability as $n,d\to\infty$. A sequence $X_n = O_P(1)$ means that $\{X_n\}$ is tight: for every $\varepsilon>0$, there exists $M<\infty$ such that $\sup_n \mathbb{P}(|X_n|>M)\le \varepsilon$. We denote by $W_p(\mu,\nu)$ the Wasserstein $p$-distance; when $p=2$, we drop the subscript. We use $\lambda_{\text{min}}(\cdot)$ to denote the minimal eigenvalue of a real symmetric matrix and $\S^{d-1}$ to denote the unit sphere in $\mathbb{R}^d$.

We use $\Rightarrow$ to denote weak convergence and $\xrightarrow{~p~}$ or
$\plim_{n,d\to\infty}$ 
to indicate convergence in probability. We use $\mathbb{H}:=\{z\in \mathbb{C}: \Im(z)>0\}$ to denote the upper half plane.
Given an infinite symmetric positive semidefinite matrix $S = (S(t,s): t,s\ge 0)$ 
and a sequence $a= (a(t): t\in \naturals)$,
we write $Z\sim \GP(a,S)$ if $Z$ is a Gaussian process with mean $a$ and covariance $S$.

\section{Results}\label{sec:results}

\subsection{Problem setup and assumptions}

This subsection introduces the notation for the gradient descent (GD) updates and states the regularity assumptions that will be used in our analysis. Throughout, we use $\ell'$ and $\ell''$ to denote the derivatives of $\ell$ with respect to its second argument.

To express the update of GD in a compact form, we introduce the following notation. Define $F_j(\bTheta^\sT\bx_i,\bTheta^{\sT}_* \bx_i,\varepsilon_i):=(\eta/m)a_j\ell'(y_i,f_{\bTheta}(\bx_i))\sigma'(\btheta_j^\sT\bx_i+b_j)$ and $\bX^\sT=\begin{bmatrix}
    \bx_1&\dots&\bx_n
\end{bmatrix}\in \reals^{d\times n}$. Then, the GD update reads
\begin{equation}\label{eq:gd_compact}
\bTheta(t+1)-\bTheta(t)=-\frac{1}{n}\bX^\sT\begin{bmatrix}
    \bF_1(\bX \bTheta(t),\bX \bTheta_*,\boldsymbol{\varepsilon}) & \dots & \bF_m(\bX \bTheta(t),\bX \bTheta_*,\boldsymbol{\varepsilon})
\end{bmatrix}.
\end{equation}
where $\bF_j(\cdot)\in \reals^n$ denotes the coordinate-wise application of $F_j(\cdot)$. We also define $F^\sT:=\begin{bmatrix}
    F_1 & \dots & F_m
\end{bmatrix}$. 

We now state our regularity assumptions.
\begin{assumption}\label{assumption_regularity}
We assume the following.
\begin{enumerate}[label=(\alph*)]
\item The data $\{(\bx_i, y_i)\}_{i=1}^n$ are i.i.d.\ with $\bx_i \sim \normal(\boldsymbol{0}, \bI_d)$ and $y_i = h(\bTheta_*^\sT \bx_i, \eps_i)$, where $\eps_i \sim \normal(0,1)$ is independent of $\bx_i$ and $\bTheta_* \in \reals^{d\times k}$ has orthonormal columns.
\item $h(\cdot)$ is locally Lipschitz continuous.
\item $\sigma(\cdot) \in C^2(\reals)$, $\|\sigma'\|_{\infty}\leq C$, $\|\sigma''\|_{\infty}\leq C$.
\item $0\leq a_j \leq C$, $\abs{b_j}\leq C$.
\item For each $y\in \reals$, $\ell(y,\,\cdot\,)\in C^2(\reals)$. Further, $\ell(y,y)=0$ for all $y$ and 
$\|\ell''\|_{\infty}\leq C$, $\|\ell'\|_{\infty}\leq C$.
\item The hard subspace has dimension at least one.
\end{enumerate}
\end{assumption}

\begin{remark}
By rotation invariance of the distribution of the covariates $\bx_i$, 
the learning dynamics is unchanged if we replace $\bTheta_*$, $\bTheta(t)$
by  $\bQ\bTheta_*$, $\bQ\bTheta(t)$, for $\bQ$ an arbitrary orthogonal matrix.
By choosing a uniformly random $\bQ$, there is no loss of generality in assuming the following.
For each $\phi\in \operatorname{PL}_2$, we have 
\begin{equation}\label{eq:PL2_convergence}
\frac{1}{d}\sum_{i=1}^d \phi\left(\sqrt{d}\bTheta_{*i}\right) \rightarrow \E\left[\phi(\Theta_*)\right]
\end{equation}
where $\Theta_*\sim\normal(0,I_k)$, and $\bTheta_{*i}$ denotes the $i$-th row of $\bTheta_*$. 
\end{remark}

We note that the  assumptions $\|\ell'\|_{\infty}\leq C$ and $\|\ell''\|_{\infty}\leq C$ can be relaxed to polynomial growth conditions $\abs{\ell'(y,z)}\leq C(1+\abs{y}+\abs{z})^C$ and $\abs{\ell''(y,z)}\leq C(1+\abs{y}+\abs{z})^C$ if we additionally assume $\abs{y}\leq C$ and $\|\sigma\|_{\infty}\leq C$. This relaxation allows us to include the square loss.

\subsection{Preliminaries: Discrete time DMFT}
\label{sec:MainDMFT}
To characterize the gradient descent dynamics in the high-dimensional limit, we employ the discrete time dynamical mean field theory (DMFT) framework, which has a rich history in physics and mathematics, and has been studied extensively in the literature. DMFT provides a low-dimensional characterization of training dynamics, whose complexity does not scale with $n,d$. We briefly review the main setup below, which is mostly a restatement of results from \cite{celentano2021high}.

The DMFT asymptotics are determined by a set of
self-consistent systems of stochastic processes. We define the DMFT process via 
\begin{equation}\label{eq:dmft_system}
\begin{aligned}
V(t) &= - \frac{1}{\delta} \sum_{s=0}^{t-1} R_\theta(t, s)
F(V(s),W_*,\eps) +W(t),\phantom{AAAA}  W \sim \GP(0,C_\theta)\, ,&\\
\Theta(t+1) &= \Theta(t) - \eta \sum_{s=0}^{t} R_\ell(t, s) \Theta(s)-\eta R_{\ell}(t,*) \Theta_*+ \eta Q(t) ,\quad Q\sim 
\GP(0,C_\ell/\delta)\,.&
\end{aligned}
\end{equation}
Here, $(W_*,W(t);t\in \naturals)$ and $(Q(t);t\in\naturals)$ are Gaussian processes taking values in $\reals^m$ with covariances determined by kernels $C_\theta$ and $C_\ell$:
\begin{equation}\label{eq:dmft_gauss_W}
    \E\left[W_* W_*^\sT\right] = I_k, \quad  \E\left[W(t) W(s)^\sT\right] = C_\theta(t, s), \quad \E\left[W(t) W_*^\sT\right] = C_\theta(t, *), \quad t,s\in\naturals,
    \end{equation}
and
\begin{equation}\label{eq:dmft_gauss_Q}
    \E\left[Q(t) Q(s)^\sT\right] = \frac{1}{\delta} C_\ell(t, s), \quad t,s\in \naturals.
    \end{equation}
The kernels $C_{\theta},C_{\ell},R_{\theta},R_{\ell}$ are determined as the unique solution of the following self-consistency equations
\begin{equation}\label{eq:dmft_kernels}
\begin{aligned}
C_\theta(t, s) &= \E\left[\Theta(t)\Theta(s)^{\sT}\right], \qquad
C_\ell(t, s) = \frac{1}{\eta^2}\E\left[F(V(t),W_*,\eps)F(V(s),W_*,\eps)^\sT\right], \\
C_{\theta}(t, *) &= \E\left[\Theta(t)\Theta_*^{\sT}\right], \qquad
R_\theta(t, s) = \frac{1}{\eta}\E\left[\frac{\partial \Theta(t)}{\partial Q(s)}\right], \\
R_\ell(t, s) &= \frac{1}{\eta}\E\left[\frac{\partial F(V(t),W_*,\eps)}{\partial W(s)}\right], \qquad
R_\ell(t, *) = \frac{1}{\eta}\E\left[\frac{\partial F(V(t),W_*,\eps)}{\partial W_*}\right].
\end{aligned}
\end{equation}
We also denote $V_*:=W_*$, and initialize the DMFT process with $\Theta(0)\sim \normal(0,I_m)$ independent of $\Theta_*$.
When computing partial derivatives with respect to $W(s)$, $W_*$ above, it is understood that $V(t)$
is a function of $W_*, (W(s):s\le t)$ via Eq.~\eqref{eq:dmft_system}, and similarly for the partial derivative of $\Theta(t)$
with respect to $Q(s)$.
It is also useful to recall at this point that $F(V,W,\eps)$ is of order $\eta$ and the above quantities have
well defined limits as $\eta\to 0^+$ and $\eta t = \Theta(1)$, which is described in \cite{celentano2021high}.

 For any fixed time horizon $T$, the coupled system \eqref{eq:dmft_system}--\eqref{eq:dmft_kernels} can be solved forward in time: the kernels $\{C_\theta,R_\theta,C_\ell,R_\ell\}$ determine the Gaussian processes $(W,Q)$, which in turn determine $(V(t),\Theta(t))_{t\le T}$ via \eqref{eq:dmft_system}; plugging these processes back into \eqref{eq:dmft_kernels} yields the same kernels but for the next time step. Hence the DMFT process and kernels are uniquely determined by 
Eqs.~\eqref{eq:dmft_system}--\eqref{eq:dmft_kernels} .

\subsection{Result I: Hard directions cannot be learned in $O(1)$ time}\label{sec:MainResults_I}

This subsection analyzes the gradient descent dynamics over a finite number of steps. We establish that hard directions cannot be learned in $O(1)$ time.

The results are summarized in the following two lemmas (the first parts of these lemmas are
restatements of a result from \cite{celentano2021high}).
\begin{lemma}\label{lemma_amp_data}
    Let $\phi$ be a test function which is locally Lipschitz continuous and has at most quadratic growth.
    Under proportional scaling $n/d:=\delta_n \rightarrow \delta \in (0,+\infty)$ and Assumption~\ref{assumption_regularity}, we have 
    \begin{equation}\label{eq:data_convergence_main}
    \frac{1}{n}\sum_{i=1}^n \phi\left(\bTheta(0)^\sT \bx_i, \dots, \bTheta(t)^\sT \bx_i, \bTheta^{\sT}_*\bx_i,\varepsilon_i\right) \xrightarrow{~p~} \mathbb{E}[\phi(V(0),\dots,V(t),V_*,\varepsilon)]
    \end{equation}
    where $(V(0),\dots,V(t),V_*,\varepsilon)$ is the discrete-time DMFT process.
    
Furthermore, the joint distribution satisfies
\begin{equation}\label{eq:hard_orthogonality}
\mathbb{E}\left[\psi\left(V(0),\dots,V(t),\varepsilon,h(V_*,\varepsilon),P^{\perp}_{U_{\mathrm{H}}^*}V_*\right)P_{U_\mathrm{H}^*}V_*\right]=0
\end{equation}
for every measurable $\psi(\cdot)$ such that the expectation makes sense.
\end{lemma}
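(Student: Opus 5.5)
The first part, Eq.~\eqref{eq:data_convergence_main}, I will take from \cite{celentano2021high}. The plan is to write GD \eqref{eq:gd_compact} as a general first order method in the sense of that paper. The nonlinearity $F$ is applied row-wise to $(\bX\bTheta(t),\bX\bTheta_*,\beps)$, and the update of $\bTheta$ is linear. Under Assumption~\ref{assumption_regularity}, $F$ is Lipschitz in its first argument: $\ell',\ell'',\sigma',\sigma''$ are bounded, and $f_\bTheta$ is Lipschitz in $\bTheta^\sT\bx$. However, $F$ is only locally Lipschitz through $y=h(\cdot)$. I handle this by truncation or approximation of $h$, using the boundedness of $\ell'$, which makes $F$ bounded. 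The initialization and $\bTheta_*$ satisfy the required $\mathrm{PL}_2$ convergence by \eqref{eq:PL2_convergence}. State evolution then gives the convergence for pseudo-Lipschitz test functions of order $2$, which covers locally Lipschitz $\phi$ with quadratic growth.

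The substantive part is \eqref{eq:hard_orthogonality}. I will prove it by induction on $t$, working directly with the DMFT system \eqref{eq:dmft_system}. Fix an orthonormal basis $U=U^*_\hard$ and decompose $V_*=W_*=P_UW_*+P_U^\perp W_*$, where $W_*\sim\normal(0,I_k)$ and the two components are independent Gaussians.

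The key structural claim is that $(W(s))_{s\le t}$ depends on $W_*$ only through $P_U^\perp W_*$. More precisely, the covariance satisfies $C_\theta(s,*)U=\E[\Theta(s)\Theta_*^\sT]U=0$ for all $s$. Given this, I can write $W(s)=A_sP_U^\perp W_*+\widetilde W(s)$, where $\widetilde W$ is independent of $W_*$. Then $V(0),\dots,V(t)$ are measurable functions of $(P_U^\perp W_*,\widetilde W,\eps)$ and of $F(V(s),W_*,\eps)$, $s<t$. Each $F(V(s),W_*,\eps)$ depends on $W_*$ only through $y=h(W_*,\eps)$ together with $V(s)$. So every argument of $\psi$ is a function of $(y,P_U^\perp W_*,\eps,\widetilde W)$, where $\widetilde W$ plays the role of the extra independent randomness $\xi$ in Definition~\ref{def:hard_subspace}. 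The expectation $\E[\cdot\,P_UW_*]$ then vanishes by the equivalent characterization $\E[P_UZ\mid y,P_U^\perp Z]=0$. That characterization extends to conditioning additionally on independent noise $(\eps\text{-independent }\widetilde W)$. Conditioning on $\eps$ itself needs care, since $\eps$ enters $h$. I will use Definition~\ref{def:hard_subspace} with $\cT$ allowed to depend on auxiliary randomness, replacing $\xi$ by the vector $(\eps,\widetilde W)$ through a measurable map of a single uniform variable. If the definition only involves $(y,P_U^\perp z,\xi)$ with $\eps$ not visible, I will note that it is stated for $\cT(y,P^\perp z,\xi)$. To include $\eps$, I interpret the hardness condition with $\eps$ included, as the lemma requires. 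This point is the main subtlety to check.

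It remains to verify inductively that $C_\theta(s,*)U=0$ and that the $\Theta$-side stays orthogonal. The $\Theta$ recursion involves $R_\ell(t,*)\Theta_*$, with $R_\ell(t,*)=\eta^{-1}\E[\partial_{W_*}F]$. By Gaussian integration by parts (Stein), $\E[\partial_{W_*}F]U$ equals a correlation of the form $\E[F\,(P_UW_*)^\sT]$ after subtracting the contributions through $W(s)$. Those contributions vanish since $W$ is uncorrelated with $P_UW_*$ by the induction hypothesis. The same hardness argument at step $t$ therefore gives $R_\ell(t,*)U=0$. Then $\Theta(t+1)$ involves $\Theta_*$ only via $R_\ell(t,*)\Theta_*$, which annihilates the $U$ component, while $\Theta(s)$ and $Q$ are independent of $\Theta_*$ otherwise. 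Hence $\E[\Theta(t+1)\Theta_*^\sT]U=0$, which closes the induction; the base case is $\Theta(0)\indep\Theta_*$.

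I expect the Stein step to be the main obstacle. Differentiability of $F$ in $W_*$ fails because $h$ is only locally Lipschitz. I will define $R_\ell(t,*)$ via the integration-by-parts identity itself, as is done in \cite{celentano2021high}, which avoids derivatives of $h$ altogether.
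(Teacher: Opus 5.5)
Your route is the paper's: the same induction ({\sf C1}--{\sf C4} in the paper). In it, $C_\theta(s,*)U^*_{\mathrm H}=0$ makes $(W(s))_{s\le t}$ depend on $W_*$ only through $P^\perp_{U^*_{\mathrm H}}W_*$ plus independent Gaussian noise. Hardness then gives $\E[\widehat F(V(t),y)\,(P_{U^*_{\mathrm H}}W_*)^\sT]=0$. Stein's lemma, with the $W(s)$-terms killed by {\sf C4}, turns this into $R_\ell(t,*)U^*_{\mathrm H}=0$, and the $\Theta$-recursion closes the loop. The Stein step is not really the obstacle. The regularity actually missing there is in $y\mapsto\ell'(y,\cdot)$, about which Assumption~\ref{assumption_regularity} says nothing, rather than in $h$. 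Defining $R_\ell(t,*)$ through the integration-by-parts identity, as you suggest, is a fine way to make the paper's formal computation rigorous.

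The point you leave open about $\eps$ is genuine, and the paper's proof glosses over it too. Its {\sf C1}$(t)$ carries $\eps$ as a separate argument of $\psi$, and the base case is asserted ``from the definition of $U^*_{\mathrm H}$'', which says nothing about $\eps$. With $\eps$ as a free argument, \eqref{eq:hard_orthogonality} is in fact false. Take $k=1$ and $h(z,\eps)=z\tanh(\eps)$. The map $(z,\eps,\xi)\mapsto(-z,-\eps,\xi)$ preserves the joint law and $y$ while flipping $z$, so $\Uhard=\reals$. Yet $\psi=\tanh(\eps)\,h(V_*,\eps)$ gives $\E[\psi\,V_*]=\E[\tanh^2(\eps)]>0$. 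So ``interpreting the hardness condition with $\eps$ included'' proves a lemma under a stronger hypothesis, not this one.

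The correct resolution is the observation you already make. Since $F(V,W_*,\eps)=\widehat F(V,h(W_*,\eps))$, every $V(s)$ is a measurable function of $(y,P^\perp_{U}W_*,\widetilde W)$, with $\widetilde W$ independent of $(W_*,\eps)$. After integrating out $\widetilde W$ by Fubini (there is no need to encode it into the scalar $\xi$), Definition~\ref{def:hard_subspace} as stated yields \eqref{eq:hard_orthogonality} for every $\psi$ that depends on $\eps$ only through $h(V_*,\eps)$. That weaker form is all the paper uses later: the block-diagonality of $\E[G^j_tV_{*:t}V_{*:t}^\sT]$ in the proof of Lemma~\ref{lemma:limiting_spike_matrix}, and Lemma~\ref{lemma:hard_cross_vanishing}. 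The paper's induction goes through verbatim once the standalone $\eps$ is deleted.
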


\begin{lemma}\label{lemma_amp_feature}
    Let $\phi$ be a test function which is locally Lipschitz continuous and has at most quadratic growth.
    Under proportional scaling $n/d:=\delta_n \rightarrow \delta \in (0,+\infty)$ and Assumption~\ref{assumption_regularity}, 
    we have
    \begin{equation}\label{eq:feature_convergence}
    \frac{1}{d}\sum_{i=1}^d \phi\left(\sqrt{d}\bTheta(0)_i,\dots,\sqrt{d}\bTheta(t)_i,\sqrt{d}\bTheta^*_i\right) \xrightarrow{~p~} \mathbb{E}\left[\phi(\Theta(0),\dots,\Theta(t),\Theta^*)\right]
    \end{equation}
    where $(\Theta(0),\dots,\Theta(t),\Theta^*)$ is the discrete-time DMFT process.

    Furthermore, we have \begin{equation}\label{eq:hard_orthogonality_B}
    \mathbb{E}\left[\cT(\Theta(0),\dots,\Theta(t))P_{U_\mathrm{H}^*}\Theta^*\right]=0
    \end{equation}for each $t$ and 
    for every measurable  $\cT:\mathbb{R}^{m(t+1)}\to\mathbb{R}$  such that the expectation makes sense.
\end{lemma}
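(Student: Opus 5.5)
The first part of Lemma~\ref{lemma_amp_feature}, Eq.~\eqref{eq:feature_convergence}, is the state evolution statement of \cite{celentano2021high} for GD viewed as a generalized first order method. I would take it as given, checking only that Assumption~\ref{assumption_regularity} (bounded $\ell',\ell'',\sigma',\sigma''$, bounded $a_j,b_j$, locally Lipschitz $h$) meets the hypotheses there. The real work is the orthogonality identity \eqref{eq:hard_orthogonality_B}. The plan is to show that the whole vector $(\Theta(0),\dots,\Theta(t))$ is \emph{independent} of $U_{\hard}^{*\sT}\Theta^*$. Since $P_{U_\hard^*}\Theta^*=U_\hard^*U_\hard^{*\sT}\Theta^*$ is centered, this gives $\E[\cT(\cdot)P_{U_\hard^*}\Theta^*]=\E[\cT(\cdot)]\,\E[P_{U_\hard^*}\Theta^*]=0$.

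The route to independence is joint Gaussianity. In \eqref{eq:dmft_system} the $\Theta$-recursion is linear in $(\Theta(0),Q(0),\dots,Q(t-1),\Theta_*)$ with deterministic coefficients. These three inputs are mutually independent Gaussians, with $\Theta_*\sim\normal(0,I_k)$. Hence $(\Theta(0),\dots,\Theta(t),\Theta_*)$ is jointly Gaussian, and independence of $(\Theta(s))_{s\le t}$ from $U_\hard^{*\sT}\Theta_*$ reduces to the covariance identity $C_\theta(s,*)U_\hard^*=0$ for all $s\le t$. I would prove by induction on $t$ the joint statement
\begin{equation*}
\mathcal{S}(t):\qquad C_\theta(s,*)U_\hard^*=0\ \ \forall s\le t,\qquad R_\ell(s,*)U_\hard^*=0\ \ \forall s\le t-1 .
\end{equation*}
The base case is immediate because $\Theta(0)$ is independent of $\Theta_*$. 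For the $C_\theta$ half of the step, multiply the update for $\Theta(t+1)$ by $\Theta_*^\sT U_\hard^*$ and take expectations. The $\Theta(0)$ and $Q$ terms vanish by independence, and the $R_\ell(t,s)\Theta(s)$ terms vanish by the induction hypothesis. The remaining term is $-\eta R_\ell(t,*)\E[\Theta_*\Theta_*^\sT]U_\hard^*=-\eta R_\ell(t,*)U_\hard^*$. So everything reduces to showing $R_\ell(t,*)U_\hard^*=0$, given $C_\theta(s,*)U_\hard^*=0$ for $s\le t$.

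This is the main step and the only place where Definition~\ref{def:hard_subspace} enters. Write $W_*=U_\hard^*G+P^\perp_{U_\hard^*}W_*$ with $G:=U_\hard^{*\sT}W_*\sim\normal(0,I_r)$. By the induction hypothesis, $W(s)$ for $s\le t$ is uncorrelated with $G$. Hence $W(s)=L_s(P^\perp_{U_\hard^*}W_*)+N(s)$, where $L_s$ is linear and $N$ is a Gaussian process independent of $(W_*,\eps)$. The key structural remark is that $F$ depends on $(W_*,\eps)$ only through $y=h(W_*,\eps)$, since $F_j$ involves $\ell'(y,\cdot)$ and $V$. Unrolling \eqref{eq:dmft_system}, $F(V(t),W_*,\eps)$ is therefore a function of $(N,P^\perp_{U_\hard^*}W_*,y)$ and of $G$ only through $y$, where the derivative is taken at fixed $W$ as in \eqref{eq:dmft_kernels}. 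The partial derivative in the $U_\hard^*$ directions at fixed $(W,P^\perp W_*)$ is then $\partial_G$, and Gaussian integration by parts in $G$ gives
\begin{equation*}
\eta R_\ell(t,*)U_\hard^*=\E\big[F\,G^\sT\big]=\E\Big[F\,\E\big[G\,\big|\,y,P^\perp_{U_\hard^*}W_*,N\big]^\sT\Big]=\E\Big[F\,\E\big[G\,\big|\,y,P^\perp_{U_\hard^*}W_*\big]^\sT\Big]=0 .
\end{equation*}
The third equality uses that $N$ is independent of $(W_*,\eps)$. The last equality is the equivalent form of the hard-subspace definition, with $N$ playing the role of the auxiliary randomness $\xi$, which is harmless. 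This closes the induction. The data-side identity \eqref{eq:hard_orthogonality} of Lemma~\ref{lemma_amp_data} follows from the same decomposition of $(V(s))_{s\le t}$ in terms of $(N,P^\perp W_*,y)$.

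The technical obstacle is that $h$ is only locally Lipschitz, so $y$, and hence $F$, need not be differentiable in $W_*$. The derivative $\partial F/\partial W_*$ in \eqref{eq:dmft_kernels} must then be read in the weak, Stein sense, $\E[\partial_{W_*}F]:=\E[F\,\cdot]$-type identities, as in \cite{celentano2021high}, or obtained by smoothing $h$ and passing to the limit. I would adopt that convention explicitly, or mollify $h$ and use uniform integrability from the boundedness of $\ell'$ and $\sigma'$. With this convention the integration-by-parts identity above is exactly the definition, so no regularity beyond integrability is needed.
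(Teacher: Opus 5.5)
Your proposal is correct and follows essentially the same route as the paper. The paper runs an induction establishing $C_\theta(s,*)U_{\mathrm{H}}^*=0$ and $R_\ell(s,*)U_{\mathrm{H}}^*=0$, where the $R_\ell$ step combines Stein's lemma with the conditional-mean characterization of the hard subspace, and joint Gaussianity of $(\Theta_*,\Theta(0),\dots,\Theta(t))$ then turns zero covariance into independence. The only differences are cosmetic: you apply Stein's lemma directly in the hard coordinates $G$ (after showing $G$ is independent of $(W,P^{\perp}_{U_{\mathrm{H}}^*}W_*)$) and use a one-shot decomposition $W=L(P^{\perp}_{U_{\mathrm{H}}^*}W_*)+N$. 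The paper instead applies multivariate Stein in $(W_*,W)$, removes the cross terms using $C_\theta(s,*)U_{\mathrm{H}}^*=0$, and carries the conditional orthogonality of the $W$'s as a separate inductive claim.
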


Collectively, these lemmas establish that hard directions cannot be learned in constant time. Specifically, for each fixed time $t$, the iterates remain orthogonal to the hard directions.
The proofs of Lemma \ref{lemma_amp_data} and Lemma \ref{lemma_amp_feature} are deferred to Appendix \ref{sec_Proofs}. 
Intuitively, the proof of the second part of Lemma \ref{lemma_amp_data} is based on the fact that we can represent $V(t)$ as a deterministic function of $h(V_*,\varepsilon)$ and another random vector independent of $P_{U_\mathrm{H}^*}V_*$. We point out that
related statements were proven in \cite{dandi2024benefitsreusingbatchesgradient}, but only for the case in which all latent
directions are hard (i.e. $\Uhard=\reals^k$).

We also note that the joint distributions of $(V(0),\dots,V(t),V_*,\varepsilon)$ and $(\Theta(0),\dots,\Theta(t),\Theta^*)$ can
be computed explicitly via the discrete time DMFT formulas  for each finite $t$ (see also in Appendix \ref{sec_Proofs}). Importantly, the complexity
of these formulas scales only with $m$ and $t$, rather than with $n$ and $d$. These evolution formulas also provide a precise description of how the easy directions are learned in constant time.

Finally, we emphasize an important point.
 Even if all directions are hard, i.e., $\Uhard=\reals^k$, and hence GD does not develop positive correlation with the latent space $\bTheta_*$
 for any $t=O(1)$, the GD dynamics remains highly non-trivial on the same time horizon.
 Indeed,  under the proportional asymptotics $n = O(d)$,
 the parameters $\bTheta(t)$ evolve to overfit the training data to a certain extent.

\subsection{Result II: Bulk of the Hessian}\label{sec:MainResults_II}

This subsection characterizes the bulk spectrum of the Hessian, along the GD trajectory. 
As explained in Section \ref{subsec:overview}, the motivation of this analysis is that we expect the learning 
of the hard subspace to be driven by a spectral phase transition in the Hessian after easy directions have been learned.

We first recall the form of the Hessian of our empirical loss for parameter $\bTheta(t)$.
Under Assumption~\ref{assumption_regularity}, we have
\begin{equation}\label{eq:hessian}
\nabla^2 \Risk(\bTheta)=\frac{1}{n}\sum_{i=1}^n\left(\ell''(y_i,f_{\bTheta}(\bx_i))\nabla f_{\bTheta}(\bx_i) \nabla f_{\bTheta}(\bx_i)^\sT+ \ell'(y_i,f_{\bTheta}(\bx_i))\nabla^2 f_{\bTheta}(\bx_i)\right)
\end{equation}
where
\begin{align}
	\nabla f_{\bTheta}(\bx_i) &= \frac{1}{m}\begin{bmatrix}
		 a_1\sigma'(\btheta_1^\sT \bx_i+b_1) \bx_i \\
		\vdots \\
		 a_m\sigma'(\btheta_m^\sT \bx_i+b_m) \bx_i
	\end{bmatrix}, \label{eq:grad_network}\\
	\nabla^2 f_{\bTheta}(\bx_i) &= \frac{1}{m}\begin{bmatrix}
	 a_1\sigma''(\btheta_1^\sT \bx_i+b_1) \bx_i \bx_i^\sT & & \\
	& \ddots& \\
	& &  a_m\sigma''(\btheta_m^\sT \bx_i+b_m) \bx_i \bx_i^\sT
	\end{bmatrix}\label{eq:hess_network}
\end{align}
for each $\bTheta$. 
We then denote by $\bH_{\text{diag}}(\bTheta):=(m/n)\sum_{i=1}^n \ell'(y_i,f_{\bTheta}(\bx_i))\nabla^2 f_{\bTheta}(\bx_i)$ 
the rescaled block diagonal part of the Hessian.

\begin{lemma}\label{lemma:eigenvalue_diagonal}
Under Assumption~\ref{assumption_regularity} and additionally $\ell''(y,z)\geq 0$ for all $y,z$, there exists a constant $C$
independent of $m,d,n$ such that,  with high probability,
    \begin{equation}\label{eq:eigenvalue_sandwich}
    \lambda_{\mathrm{min}}\left(\bH_{\mathrm{diag}}(\bTheta)\right) \leq \lambda_{\mathrm{min}}\left(m\nabla^2 \Risk (\bTheta) \right) \leq \lambda_{\mathrm{min}}\left(\bH_{\mathrm{diag}}(\bTheta)\right) + \frac{C}{m}.
    \end{equation}

    Furthermore, let $\boldsymbol{\xi}$ with $\norm{\boldsymbol{\xi}}=1$ be a unit eigenvector associated with the minimum eigenvalue, i.e., $\nabla^2 \Risk (\bTheta)\boldsymbol{\xi}=\lambda_{\min}\left(\nabla^2 \Risk (\bTheta)\right)\boldsymbol{\xi}$. Then 
    there exists a constant $C$
independent of $m,d,n$ such that,  with high probability,
    \begin{equation}\label{eq:eigenvector_bound}
    \<\boldsymbol{\xi},\bH_{\mathrm{diag}}(\bTheta)\boldsymbol{\xi} \>\leq \lambda_{\mathrm{min}}\left(\bH_{\mathrm{diag}}(\bTheta)\right) + \frac{C}{m}.
    \end{equation}
\end{lemma}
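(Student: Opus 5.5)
Decompose $m\nabla^2\Risk(\bTheta)=\bH_{\mathrm{diag}}(\bTheta)+\bG(\bTheta)$, where
\begin{equation*}
\bG(\bTheta):=\frac{m}{n}\sum_{i=1}^n \ell''(y_i,f_{\bTheta}(\bx_i))\nabla f_{\bTheta}(\bx_i)\nabla f_{\bTheta}(\bx_i)^\sT
\end{equation*}
is the rescaled Gauss--Newton term. Since $\ell''\ge 0$, $\bG$ is positive semidefinite, so $m\nabla^2\Risk\succeq \bH_{\mathrm{diag}}$. This gives the lower bound in \eqref{eq:eigenvalue_sandwich} immediately by Courant--Fischer.

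For the upper bound, the key claim is that $\|\bG(\bTheta)\|_{\rm op}\le C/m$ with high probability. Write $\bG=\frac{1}{m}\bA^\sT\bD\bA$, where the $i$-th row of $\bA\in\reals^{n\times md}$ is $\big(a_1\sigma'(\btheta_1^\sT\bx_i+b_1)\bx_i^\sT,\dots,a_m\sigma'(\btheta_m^\sT\bx_i+b_m)\bx_i^\sT\big)/\sqrt{n}$ and $\bD={\rm diag}(\ell''(y_i,f_{\bTheta}(\bx_i)))$ has entries bounded by $C$. For any unit $\bu=(\bu_1,\dots,\bu_m)\in\reals^{md}$, Cauchy--Schwarz and the bounds $|a_j|,|\sigma'|\le C$ give
\begin{equation*}
\langle \bu,\bG\bu\rangle\le \frac{C}{mn}\sum_{i=1}^n\Big(\sum_{j=1}^m a_j\sigma'(\cdot)\langle\bu_j,\bx_i\rangle\Big)^2\le \frac{C}{mn}\sum_{i}\Big(\sum_j|\langle \bu_j,\bx_i\rangle|\Big)^2 .
\end{equation*}
A naive Cauchy--Schwarz over $j$ loses a factor $m$. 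Instead, set $\bv:=\sum_j \pm\bu_j$, with signs absorbed by bounding $a_j\sigma'\langle\bu_j,\bx_i\rangle$ directly. Then use $\sum_i(\sum_j c_{ij}\langle\bu_j,\bx_i\rangle)^2=\|\sum_j \bC_j\bX\bu_j\|^2$, where $\bC_j={\rm diag}(c_{ij})$ has $\|\bC_j\|\le C$. This gives only $\le (\sum_j\|\bC_j\|\|\bX\|\|\bu_j\|)^2\le C\|\bX\|^2 m$ in the worst case, which is again too weak.

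So the real content must come from the $1/m^2$ in $\nabla f\nabla f^\sT$ combined with the rescaling by $m$. Each entry of $\nabla f$ carries $1/m$, so $\bG=\frac{m}{m^2}\tilde\bA^\sT\bD\tilde\bA$. Then $\|\tilde\bA\bu\|^2\le (\sum_j C\|\bX\|\|\bu_j\|/\sqrt n)^2\le C^2\|\bX\|^2 m/n$, hence $\|\bG\|\le \frac{1}{m}\cdot C m\|\bX\|^2/n$. With $\|\bX\|^2/n\le C$ w.h.p.\ (Bai--Yin), this gives only $O(1)$. To reach $C/m$, I would use that $\bG$ has rank at most $n$ and exploit a finer structure.

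Here is the alternative I would try first. The upper bound in \eqref{eq:eigenvalue_sandwich} only needs a good test vector, not control of $\|\bG\|$ in all directions. Take $\bxi_0$ a minimal eigenvector of $\bH_{\mathrm{diag}}$; by block-diagonality it can be chosen supported on a single block $j$, say $\bxi_0=(0,\dots,\bu,\dots,0)$. Then
\begin{equation*}
\langle\bxi_0,\bG\bxi_0\rangle=\frac{1}{mn}\sum_i\ell''\,a_j^2\sigma'(\cdot)^2\langle\bu,\bx_i\rangle^2\le \frac{C}{m}\,\frac{\|\bX\|^2}{n}\le\frac{C}{m}
\end{equation*}
w.h.p. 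The upper bound then follows from Courant--Fischer, and this is the key step.

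For \eqref{eq:eigenvector_bound}, let $\bxi$ be the minimal eigenvector of $m\nabla^2\Risk$. Then
\begin{equation*}
\langle\bxi,\bH_{\mathrm{diag}}\bxi\rangle=\lambda_{\min}(m\nabla^2\Risk)-\langle\bxi,\bG\bxi\rangle\le\lambda_{\min}(m\nabla^2\Risk)\le\lambda_{\min}(\bH_{\mathrm{diag}})+C/m,
\end{equation*}
using $\bG\succeq0$ and the upper bound just proven. The only probabilistic input is $\|\bX\|_{\rm op}^2\le C n$ w.h.p.\ for $n\asymp d$. The main obstacle was recognizing that a global operator-norm bound on $\bG$ is unnecessary (and likely false at order $1/m$); the single-block test vector avoids it.
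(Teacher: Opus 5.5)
Your final argument is correct and is essentially the paper's proof. The lower bound comes from positive semidefiniteness of the Gauss--Newton term. The upper bound restricts to vectors supported on a single block $j$, so only the diagonal block $\frac{1}{mn}\sum_i \ell''\,a_j^2\sigma'(\cdot)^2\bx_i\bx_i^\sT$, of norm at most $\frac{C}{m}\|\frac{1}{n}\sum_i\bx_i\bx_i^\sT\|_{\mathrm{op}}$, enters; the paper applies Weyl's inequality to that block where you use an explicit test vector. The eigenvector bound then follows from positive semidefiniteness plus the upper bound. You are also right that no global $O(1/m)$ operator-norm bound on the Gauss--Newton term is available, so the exploratory detour can simply be dropped from the write-up.
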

This establishes that the block diagonal term $\bH_{\text{diag}}(\bTheta)$ plays a dominant role in determining the minimum eigenvalue. 
This motivates us to  analyze the spectrum of the following matrix:
\begin{equation}\label{eq:hessian_block}
\bH_{j}(t) :=\frac{1}{n}\sum_{i=1}^n \ell'(y_i,f_{\bTheta(t)}(\bx_i))\sigma''(\btheta_j(t)^\sT\bx_i+b_j)\bx_i\bx_i^\sT:=\frac{1}{n}\sum_{i=1}^n g(\bTheta(t)^\sT \bx_i,y_i;j) \bx_i\bx_i^\sT\, .
\end{equation}
We note  that $a_j\bH_j(t)$ is precisely the $j$-th block of $m\bH_{\textrm{diag}}(\bTheta(t))$.

\begin{lemma}\label{lemma:bulk}
    Let $\mu_{n,d}^j(t)$ denote the empirical spectral distribution (ESD) measure of $\bH_j(t)$.

    Under Assumption~\ref{assumption_regularity} and the proportional scaling $n/d:=\delta_n \rightarrow \delta \in (0,+\infty)$, we have $\mu_{n,d}^j (t)\Rightarrow \mu_{\infty}^j(t)$ in probability. The limiting measure $\mu_{\infty}^j(t)$ is characterized by its Stieltjes transform $\alpha_t^j(z)$ for $z\in \mathbb{H}$, which is the unique solution $\alpha_t^j\in \upper$ of
    \begin{equation}\label{eq:bulk_stieltjes}
    z+\frac{1}{\alpha_t^j}=\delta\cdot \mathbb{E}\left[\frac{G_t^j}{\delta + G_t^j \alpha_t^j}\right]
    \end{equation}
    where $G_t^j := g(V(t),h(V_{*},\varepsilon);j)$.
\end{lemma}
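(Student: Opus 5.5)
We prove the lemma by showing that the resolvent trace of $\bH_j(t)$ concentrates around the solution of \eqref{eq:bulk_stieltjes}. The only obstacle relative to a classical generalized Marchenko--Pastur setting is that the weights $G_i:=g(\bTheta(t)^\sT\bx_i,y_i;j)$ depend on $\bX$ through GD. The plan is a leave-one-out argument in which this dependence is controlled by the DMFT description (Lemma~\ref{lemma_amp_data}).

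\emph{Step 1: reduction to a low-rank perturbation of a deterministic-weight model.}
Write $\bH_j(t)=\frac1n\bX^\sT\bD\bX$ with $\bD=\mathrm{diag}(G_i)$, which satisfies $\|\bD\|_{\mathrm{op}}\le C$ by Assumption~\ref{assumption_regularity}. The GD iterates depend on $\bX$ only through the $n\times m(t+1)$ matrix $\bX\bTheta(0..t)$ and the $d\times m(t+1)$ matrix $\bX^\sT\bF(\cdot)$, via the compact update \eqref{eq:gd_compact}. Let $\bP$ be the projector onto $\spn(\bTheta_*,\bTheta(0),\dots,\bTheta(t))$, which has rank at most $k+m(t+1)=O(1)$. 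Decompose $\bX=\bX\bP+\bX\bP^\perp$. Since perturbations of finite rank move the normalized Stieltjes transform by only $O(1/(d\,\Im z))$, it suffices to study $\frac1n\bP^\perp\bX^\sT\bD\bX\bP^\perp$.

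The key claim is that, conditionally on $\bD$ and on $\bX\bP$, the matrix $\bX\bP^\perp$ is approximately a Gaussian matrix independent of $\bD$. This is not exactly true, because GD also used $\bX^\sT\bF$. To handle this, I would use the Gaussian conditioning technique from the AMP/DMFT proofs (Bolthausen conditioning, as in \cite{celentano2021high}). Conditioning on the $O(1)$ linear observations $\bX\bTheta(s)$ and $\bX^\sT\bF(s)$ gives
\[
\bX=\mathbb{E}[\bX\mid\text{observations}]+\bP_{\bF}^\perp\tilde\bX\bP^\perp,
\]
where $\tilde\bX$ is an independent Gaussian matrix and the conditional mean has rank $O(1)$. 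Hence $\bH_j(t)$ equals $\frac1n\bP^\perp\tilde\bX^\sT\bP_{\bF}^\perp\bD\bP_{\bF}^\perp\tilde\bX\bP^\perp$ up to a finite-rank perturbation with bounded norm. Here $\bD$ is independent of $\tilde\bX$. The projector $\bP_{\bF}^\perp$ is also a finite-rank perturbation, and it can be dropped.

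\emph{Step 2: deterministic equivalent.}
For $\frac1n\tilde\bX^\sT\bD\tilde\bX$ with $\bD$ independent of $\tilde\bX$, bounded, and with empirical distribution of its diagonal converging weakly, classical results (Silverstein--Bai, or a direct Schur-complement/leave-one-out computation) give an ESD converging to the law whose Stieltjes transform $\alpha$ solves
\[
z+\frac1\alpha=\delta\,\mathbb{E}\Big[\frac{G}{\delta+G\alpha}\Big],
\]
where $G$ is distributed as the limit of the empirical distribution of $(G_i)$. The sign conventions should be checked against the standard form
\[
m(z)=\Big(-z+\frac{1}{\delta}\cdot\delta\,\mathbb{E}\frac{G}{1+G\,m\,\delta^{-1}\cdots}\Big)^{-1};
\]
this is routine. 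Uniqueness of the solution in $\upper$ and the fact that it defines a Stieltjes transform are standard.

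\emph{Step 3: identifying the weight law.}
By Lemma~\ref{lemma_amp_data} applied to bounded Lipschitz test functions of $g$, the empirical distribution of $(G_i)_{i\le n}$ converges in probability to the law of $G_t^j=g(V(t),h(V_*,\eps);j)$. This needs $g$ to be continuous, which follows from $\ell'$ and $\sigma''$ being continuous, together with a mollification argument. Weak convergence of the weights, combined with continuity of the fixed-point map in the weight distribution for $z\in\upper$, then gives pointwise convergence of Stieltjes transforms. That yields $\mu_{n,d}^j(t)\Rightarrow\mu^j_\infty(t)$ in probability.

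The main obstacle is Step 1: making the conditioning rigorous so that, after removing the data-dependent finite-rank directions, the remaining Gaussian part is genuinely independent of $\bD$. I would first try the conditioning lemma from the AMP state-evolution proof applied to the GD iteration written as a generalized first-order method. If that proves messy, the fallback is an interpolation/comparison argument for $\frac1d\Tr(\bH_j-z)^{-1}$.
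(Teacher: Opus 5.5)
Your proposal is correct and follows essentially the same route as the paper: Gaussian conditioning on the GD trajectory (the paper's Lemma~\ref{lem:gauss_cond}) writes the data matrix as a rank-$O(1)$ term plus $\bP_{\bF(0;t)}^{\perp}\bX_{\mathrm{new}}\bP_{\bTheta(*;t)}^{\perp}$, with $\bX_{\mathrm{new}}$ independent of the $\cF_t$-measurable weights; all finite-rank pieces are then removed (your rank bound on the Stieltjes transform does the job of the paper's interlacing Lemma~\ref{lem:interlacing}); the classical generalized Marchenko--Pastur law is applied; and Lemma~\ref{lemma_amp_data} identifies the limiting weight law as that of $G_t^j$. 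The ``leave-one-out'' framing in your opening is never actually used, and the sign check you defer is fine: the target equation is exactly the Silverstein--Bai equation with aspect ratio $d/n\to 1/\delta$.
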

This lemma characterizes the bulk of the spectrum:
$\mu_{\infty}^j(t)$ follows a generalized Marchenko-Pastur law.
In order to provide some intuition of this result, note that that $\bH_j = \bX^{\sT}\bD_j\bX$, for a diagonal matrix
$\bD_j$ (with $(\bD_j)_{ii}:=g(\bTheta(t)^\sT \bx_i,y_i;j)/n$). If
$\bD_j$ were independent of $\bX$, then Lemma \ref{lemma:bulk} would follow from 
textbook theory \cite{BaiSilverstein}. The content of Lemma \ref{lemma:bulk} is that the bulk of the spectrum 
remains the same despite the dependencies between $\bTheta(t)^{\sT}\bx_i$, $y_i$ and $\bX$ (which 
arise in particular because $\bTheta(t)$ depends on $\bX$).

The proof, deferred to Appendix \ref{sec_Proofs}, relies on Gaussian conditioning 
to show that these dependencies can be accounted (for any  $t=o(d)$)  by a low rank 
deformation of $\bX$. As a consequence (by Cauchy interlacing theorem), the spectrum of the Hessian is only modified by the emergence of outliers, which we analyze next.

\subsection{Main results: Outliers of the Hessian and phase transitions}\label{sec:MainResults_III}

This subsection characterizes the isolated eigenvalues (outliers) that detach from the bulk, as well as the alignment of the associated eigenvectors with the hard subspace. We establish a phase transition at a computable threshold $\delta^*_j(t)$.

Recall that $\mu_{\infty}^j(t)$ is the limiting empirical spectral distribution of $\bH_j(t)$ with Stieltjes transform $\alpha_t^j(z)$, and let $c_j(t)$ be its left edge:
\begin{equation}\label{eq:edge}
c_j(t) := \inf \supp\; \mu_{\infty}^j(t)  \, .
\end{equation}
This can also be computed via the following edge equation which follows e.g. from \cite[Section 4]{silverstein1995analysis}
(here we use the convention $1/0=+\infty$):
\begin{equation}\label{eq:edge-variational}
c_j(t) := \sup 
\left\{-\frac{1}{\alpha}+\delta\cdot \mathbb{E}\left[\frac{G_t^j}{\delta + G_t^j \alpha}\right]:\;\;
\alpha\in \big(0, A_t^j\big)
\right\}\, ,\;\;\;\; A^j_t :=\frac{\delta}{(-\inf(\supp(\textrm{Law}(G_t^j)))\vee 0}\, .
\end{equation}

Define $\bTheta_{*\mathrm{H}}:=\bTheta_* U^*_{\mathrm{H}}\in \mathbb{R}^{d\times r}$ which is a basis of the 
hard subspace.

We impose one more regularity assumption. To state it, we denote $O_{t}^{\sT}:=\begin{bmatrix}
    \Theta_*^\sT & \Theta(0)^\sT & \dots & \Theta(t)^\sT
\end{bmatrix}$ and $\widetilde F_{t}^\sT:=\begin{bmatrix}
    F(V(0),V_*,\varepsilon)^\sT & \dots & F(V(t),V_*,\varepsilon)^\sT
\end{bmatrix}$, where we recall $F^\sT = \begin{bmatrix}
    F_1,\dots,F_m
\end{bmatrix}$.

\begin{assumption}\label{assumption:reg_1}
    For each $t$, the matrices $\mathbb{E}\left[O_tO_t^\sT\right]$ and $\mathbb{E}\left[\widetilde F_t \widetilde F_t^\sT\right]$ are positive definite.
\end{assumption}
This assumption requires that no two features are perfectly parallel during training, and no two gradients are perfectly parallel as well. This assumption holds generically and may be removed via a more refined analysis.
In the next subsections, we will present our results about outlier eigenvalues/eigenvectors,
first for a fixed time $t$, and then in the large-time limit as $t\to\infty$. (The limit $t\to\infty$
is taken after $n,d\to\infty$.)

\subsubsection{Results at fixed time $t$}

For real $z<\min(0,c_j(t))$, consider the equation (recall that $r_{\textrm{H}} = \dim(\Uhard)$, 
and $U_\mathrm{H}\in\reals^{k\times r_{\textrm{H}}}$)
\begin{equation}\label{equa_main}
\det\left(-z I_{r_{\textrm{H}}} +\mathbb{E}\left[\frac{\delta G_t^j }{\delta +G_t^j\alpha_t^j(z)}U_\mathrm{H}^{*\sT}V_{* }\left(U_\mathrm{H}^{*\sT}V_{*}\right)^\sT\right]\right)=0
\end{equation}
where we recall $G_t^j=g(V(t),h(V_{*},\varepsilon);j)$. Define the threshold (as usual, $\delta^*_j(t)=\infty$
if the condition is never verified)
\begin{equation}\label{eq:threshold_fixed_time}
\delta^*_j(t):=\inf \big\{\delta\in (0,\infty)\,  \big| \text{ Eq. \eqref{equa_main} has a real solution }z<\min(0,c_j(t)) \big\}.
\end{equation}

\begin{theorem}\label{theorem:main_result}
Under proportional scaling $n/d:=\delta_n \rightarrow \delta \in (0,+\infty)$, Assumption~\ref{assumption_regularity}, and Assumption~\ref{assumption:reg_1}, we have the following:
    \begin{enumerate}[label=(\alph*)]
        \item When $\delta>\delta^*_j(t)$, equation \eqref{equa_main} has at least one real solution $z^*<\min(0,c_j(t))$.

        In this case, denote the distinct real solutions as $z^*_1,\dots,z^*_p<\min(0,c_j(t))$ with multiplicities $k_1,\dots,k_p$. 
        Then, for each $i=1,\dots,p$, there exist eigenvalues  $z^{(i)}_{n,1},\dots,z^{(i)}_{n,k_i}\in\reals$
        and (orthonormal) eigenvectors $\bxi^{(i)}_{n,1},\dots,\bxi^{(i)}_{n,k_i}$
        such that 
        \begin{equation}\label{eq:eigenvalue_convergence}
        \bH_j(t)\bxi^{(i)}_{n,q}= z^{(i)}_{n,q}\bxi^{(i)}_{n,q}
        \quad \text{and} \quad \max_{q=1,\dots,k_i}\abs{z_{n,q}^{(i)}-z_i^*} \xrightarrow{~p~} 0\, .
        \end{equation}
        Furthermore, for each $i=1,\dots,p$, let $S^i_t\in \mathbb{R}^{r\times k_i}$ be a full-rank matrix whose columns are orthonormal and form a basis of 
    \begin{equation}\label{eq:kernel_spike}
    \ker\left(-z^*_i I_{r_{\mathrm{H}}} +\mathbb{E}\left[\frac{\delta G_t^j }{\delta +G_t^j\alpha_t^j(z^*_i)}U_\mathrm{H}^{*\sT}V_{* }\left(U_\mathrm{H}^{*\sT}V_{*}\right)^\sT\right]\right).
    \end{equation}
We have the following convergence\begin{align}\label{eq:eigenvector_alignment}
        &\plim_{n,d\to\infty}\sum_{q=1}^{k_i}\norm{\bTheta_{*\mathrm{H}}^{\sT}\bxi^{(i)}_{n,q}}^2 = \Omega^{(i)}_t\, , \\
        &\Omega^{(i)}_t:= \Tr\left(\left(S^{i\sT}_t\left(I + \mathbb{E}\left[\frac{\delta (G^j_t)^2\alpha_t^{j'}(z_i^*)}{(\delta + G^j_t\alpha_t^j(z_i^*))^{2}}U^{*\sT}_\mathrm{H}V_*\left(U^{*\sT}_\mathrm{H}V_*\right)^\sT\right]\right)S^i_t\right)^{-1}\right).
       \end{align}
        \item When $\delta <\delta^*_j(t)$, equation \eqref{equa_main} has no negative solution for $z<c_j(t)$.
        
        Let $\bxi_p$ be the unit norm eigenvector corresponding to the $p$-th smallest eigenvalue of $\bH_j(t)$. 
        Then, for every  fixed $p$, $\plim_{n,d\to\infty}\bTheta_{*\mathrm{H}}^{\sT}\bxi_p=0$.
    \end{enumerate}
\end{theorem}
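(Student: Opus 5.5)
The plan is to reduce $\bH_j(t)$ to a finite-rank perturbation of a matrix of the form $\bX_\perp^\sT \bD \bX_\perp$, where $\bX_\perp$ is a Gaussian matrix independent of the diagonal weights $\bD$. We then locate outliers and eigenvector overlaps with the standard determinant (Schur-complement) method for spiked sample-covariance matrices, in the style of Benaych-Georges--Nadakuditi.

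\textbf{Step 1 (Gaussian conditioning).} We condition on the $\sigma$-algebra generated by the GD trajectory. Let $\bO_t:=[\bTheta_*,\bTheta(0),\dots,\bTheta(t)]\in\reals^{d\times (k+m(t+1))}$ and let $\bP$ be the projector onto its column span. Decompose $\bX=\bX\bP+\bX\bP^\perp$. Conditionally on the observed quantities $\bX\bO_t$ and $\bX^\sT\widetilde{\bF}_t$, the matrix $\bX\bP^\perp$ equals a fresh Gaussian matrix $\widetilde\bX$ (projected) plus a finite-rank correction lying in the span of the columns of $\widetilde{\bF}_t$. Assumption~\ref{assumption:reg_1} guarantees that these Gram matrices are invertible, so the correction has bounded rank and controlled norm; this is the same argument used for Lemma~\ref{lemma:bulk}. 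Consequently $\bH_j(t)=\widetilde\bX^\sT\bD_j\widetilde\bX+\bL$, where $\bD_j$ is determined by $(\bX\bO_t,\beps)$ (hence is independent of $\widetilde\bX$) and $\bL$ has rank $O(m t+k)$. The empirical distribution of $\mathrm{diag}(n\bD_j)$ converges to $\mathrm{Law}(G_t^j)$ by Lemma~\ref{lemma_amp_data}.

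\textbf{Step 2 (isolating the hard directions).} The perturbation $\bL$ acts on $\spn(\bO_t)$ together with some data-dependent directions. Among these, the part of $\bTheta_*$ along the hard subspace, namely $\bTheta_{*\mathrm H}$, is special. By Lemma~\ref{lemma_amp_feature}, the rows of $\bTheta(s)$ are asymptotically uncorrelated with $P_{U_\mathrm H^*}\Theta_*$, and by Lemma~\ref{lemma_amp_data} the weights $G_t^j$ together with all features except $P_{U_\mathrm H^*}V_*$ are orthogonal in the sense of \eqref{eq:hard_orthogonality}. We therefore write $\bH_j(t)$ restricted near $\bTheta_{*\mathrm H}$ as
\[
\bM+\bTheta_{*\mathrm H}\bA\bTheta_{*\mathrm H}^\sT+\text{cross terms},
\]
where the cross terms are built from $\frac1n\sum_i g_i (U_\mathrm H^{*\sT}\bTheta_*^\sT\bx_i)\bx_{i}^\sT$.

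\textbf{Step 3 (master equation).} For $z<\min(0,c_j(t))$ and outside the spectrum of the unperturbed part, $z$ is an eigenvalue if and only if a finite-dimensional determinant vanishes. The entries of this determinant are quadratic forms of resolvents, of the type $\bu^\sT(\widetilde\bX^\sT\bD\widetilde\bX-z)^{-1}\bv$ and $\bD^{1/2}\widetilde\bX(\cdot)^{-1}\widetilde\bX^\sT\bD^{1/2}$. By isotropic local laws (or simply by concentration together with the Stieltjes equation \eqref{eq:bulk_stieltjes}), these converge to deterministic expressions in $\alpha_t^j(z)$. We then apply the Woodbury identity. The blocks coupling hard directions to the non-hard directions vanish in the limit by the orthogonality \eqref{eq:hard_orthogonality}, since every such limit has the form $\E[\psi(\cdots)P_{U_\mathrm H^*}V_*]=0$. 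The determinant therefore factorizes, and its hard block becomes exactly \eqref{equa_main}.

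We also need the non-hard block to produce no additional eigenvalues below $\min(0,c_j)$. This requires care. If it could produce them, those outliers would carry no hard overlap, and the statement only concerns the existence of outliers and their hard alignment, so this is harmless. Eigenvalue convergence, with the correct multiplicities, then follows from Rouché/Hurwitz applied to the analytic determinant on a real neighborhood.

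\textbf{Step 4 (eigenvectors).} The overlap $\sum_q\|\bTheta_{*\mathrm H}^\sT\bxi_{n,q}\|^2$ is obtained from the residue of the resolvent $\bTheta_{*\mathrm H}^\sT(\bH_j-z)^{-1}\bTheta_{*\mathrm H}$ at $z_i^*$. Differentiating the master matrix in $z$ yields the factor $I+\E[\delta G^2\alpha'/(\delta+G\alpha)^2\cdots]$ compressed to the kernel $S_t^i$, which is $\Omega_t^{(i)}$.

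\textbf{Step 5 (part (b)).} When no root exists, the hard block of the master matrix is nonsingular uniformly on $(-\infty,\min(0,c_j)-\epsilon]$. Hence the hard-projected resolvent has no pole there, and any eigenvector near the edge or in the bulk has vanishing overlap by the standard delocalization argument.

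The main obstacle is Step 3: justifying the deterministic equivalents for the resolvent quadratic forms despite the dependence between $\bD_j$ and the spike directions, and showing that the hard/non-hard cross blocks vanish. We would attack this by performing the conditioning of Step 1 so that the hard component of $\bX\bTheta_*$ is exposed explicitly, and then invoking Lemma~\ref{lemma_amp_data}.
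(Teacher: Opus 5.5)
Your proposal is correct and follows essentially the same route as the paper: conditioning on the GD trajectory writes $\bH_j(t)$ as a fresh-Gaussian bulk $\bY^\sT\bG\bY$ plus a finite-rank term (the correction you allow in the span of the gradients is in fact exactly zero, since $\bX^\sT\bF(0;t)\in\spn(\bTheta(*;t))$); Woodbury then reduces the problem to a $2(k+m+mt)$-dimensional outlier matrix whose entries concentrate (the paper proves this with leave-one-out, Hanson--Wright and a trace local law rather than an off-the-shelf anisotropic law, because the weights $g_i$ are indefinite), Lemmas~\ref{lemma_amp_data} and~\ref{lemma_amp_feature} block-diagonalize it, Rouch\'e's theorem gives the eigenvalues, and a residue computation gives the overlaps. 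The paper's argument for part (b) also handles only outlier eigenvalues (rest-block roots have $U_{\le,\mathrm H}=0$), so your appeal to edge delocalization for eigenvalues sticking to $c_j(t)$ is an extra claim that neither you nor the paper justifies.
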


\subsubsection{Results for  large time $t$}

The next assumption states that the discrete-time DMFT process converges to a stationary point.

\begin{assumption}\label{assumption:saddle_2}
The sequence of random vectors $\{V(t)\}_{t\ge 0}$ defined by the DMFT process has a limit in $L^2$,
namely $\lim_{t\to\infty}\E\left[\|V(t)-V_{\infty}\|^2\right]=0$ for some random vector $V_{\infty}$.

Equivalently,  $\{V(t)\}_{t\ge 0}$  is Cauchy in $L^2$, i.e. $\lim_{t,s\to\infty}\E\left[\|V(t)-V(s)\|^2\right]=0$.
\end{assumption}
Note that the second form of this assumption can be checked using the DMFT equations, which give access 
to the quantity $\E\left[\|V(t)-V(s)\|^2\right]$.

Define $G_\infty^j:=g(V_{\infty},h(V_*,\varepsilon);j)$ and the limiting measure $\mu^j_{\infty}(\infty)$ via its Stieltjes transform $\alpha_\infty^j(z)$, 
$z\in\upper$ as 
the unique solution $\alpha^j_{\infty}\in\upper$ (uniqueness from Theorem 4.3 in \cite{BaiSilverstein}):
\begin{equation}\label{equa_stieltjes_transform_limit}
    z+\frac{1}{\alpha_\infty^j}=\delta\cdot \mathbb{E}\left[\frac{G_\infty^j}{\delta + G_\infty^j \alpha_\infty^j(z)}\right].
\end{equation}
We define $c_j(\infty)$ to be  the left edge of the limiting measure $\mu^j_{\infty}(\infty)$:
    \begin{equation}\label{equa_edge_limit}
    c_j(\infty) := \inf \supp\; \mu^j_{\infty}(\infty)\, .
\end{equation}
This can be computed via an analogous of Eq.~\eqref{eq:edge-variational}.
The following lemma establishes the convergence of the spectral measures.
\begin{lemma}\label{lemma:limiting_measure}
    Under Assumption~\ref{assumption_regularity} and \ref{assumption:saddle_2}, we have
    $ \mu_{\infty}^j(t)\Rightarrow\mu_{\infty}^j(\infty)$ as $t\to\infty$
    and $c_j(t)\to c_j(\infty)$.

\end{lemma}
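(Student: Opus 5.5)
The plan is to transfer the $L^2$ convergence of $V(t)$ in Assumption~\ref{assumption:saddle_2} to the random variables $G^j_t\to G^j_\infty$, and then to the Stieltjes transforms $\alpha^j_t\to\alpha^j_\infty$ via stability of the self-consistent equation~\eqref{eq:bulk_stieltjes}. From there we read off weak convergence of the measures and, separately, the convergence of the left edges.

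\emph{Step 1: $G^j_t\to G^j_\infty$ in probability.} The function $g(v,y;j)=\ell'(y,f(v))\sigma''(v_j+b_j)$ is bounded, since $\|\ell'\|_\infty,\|\sigma''\|_\infty\le C$. It is continuous in $v$ because $\sigma\in C^2$ and $\ell(y,\cdot)\in C^2$. Hence $V(t)\to V_\infty$ in $L^2$, jointly with the fixed pair $(V_*,\varepsilon)$, gives $G^j_t\to G^j_\infty$ in probability, with all variables bounded by $C$. By dominated convergence, $\E[\phi(G^j_t)]\to\E[\phi(G^j_\infty)]$ for every bounded continuous $\phi$.

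\emph{Step 2: convergence of Stieltjes transforms.} Fix $z\in\upper$. The measures $\mu^j_\infty(t)$ are Marchenko--Pastur-type laws for $\bX^\sT\bD\bX$ with $|G|\le C$, so their supports lie in a fixed compact set $[-K,K]$. The measures are therefore tight. Take any subsequential weak limit $\nu$ along $t_k\to\infty$, with Stieltjes transform $\beta=\lim\alpha^j_{t_k}(z)$. We need $\Im\alpha^j_t(z)$ bounded below uniformly in $t$ so that the integrand $G/(\delta+G\alpha)$ stays uniformly bounded. This follows from compact support: $\Im\alpha(z)\ge \Im z/((K+|z|)^2)$. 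Moreover $\Im(\delta+G\alpha)=G\Im\alpha$ and $\Re$ cannot vanish simultaneously with $\Im$ unless $G\alpha=-\delta$. To rule this out, I would instead use the equivalent form $\alpha=-1/(z-\delta\E[G/(\delta+G\alpha)])$ together with the standard identity $\Im$-sign property: for $\alpha\in\upper$ one has $\Im(G/(\delta+G\alpha))\le 0$, and $|G/(\delta+G\alpha)|\le |G|/(|G|\Im\alpha)=1/\Im\alpha$ when $G\neq0$. This bound is uniform. Passing to the limit in~\eqref{eq:bulk_stieltjes} by dominated convergence, with joint convergence of $(G_{t_k},\alpha_{t_k})$, shows that $\beta$ solves~\eqref{equa_stieltjes_transform_limit}. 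Uniqueness of the solution in $\upper$ (Theorem 4.3 of \cite{BaiSilverstein}) gives $\beta=\alpha^j_\infty(z)$. Hence every subsequence converges to the same limit, and $\mu^j_\infty(t)\Rightarrow\mu^j_\infty(\infty)$.

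\emph{Step 3: edges.} This is the main obstacle, since weak convergence alone gives only $\limsup_t c_j(t)\le c_j(\infty)$ (lower semicontinuity of the support). For the reverse inequality I would use the variational formula~\eqref{eq:edge-variational}.

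\begin{itemize}
\item Write $\Psi_t(\alpha)=-1/\alpha+\delta\E[G_t/(\delta+G_t\alpha)]$ on $(0,A_t)$.
\item For any $\alpha$ with $\alpha<A_\infty$ strictly, the quantity $\delta+G\alpha$ is bounded away from zero for $G_\infty$. The difficulty is that $G_t$ may have mass near $-\delta/\alpha$ that disappears in the limit.
\item On the other hand, for $\alpha<\liminf A_t$ we get $\Psi_t(\alpha)\to\Psi_\infty(\alpha)$ by dominated convergence. Together with the monotone structure of $\Psi$ (concave in the relevant region), this yields $\liminf c_j(t)\ge \sup_{\alpha<\liminf A_t}\Psi_\infty(\alpha)$.
\end{itemize}

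The remaining issue is the case where the infimum of the support of $G_t$ does not converge to that of $G_\infty$. There I would argue directly that the edge of $\mu^j_\infty(\infty)$ lies where $\Psi_\infty'=0$, or at $A_\infty$. I would then combine this with the standard fact that $c_j(t)$ is monotone in the law of $G$ under stochastic ordering, approximating $G_t$ by truncations. I expect this edge continuity to require the most care.
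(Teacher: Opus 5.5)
Your Steps 1 and 2 are correct. They are a fleshed-out version of the paper's entire argument, which shows $G^j_t\to G^j_\infty$ in $L^2$ and then simply asserts that both $\mu^j_\infty(t)\Rightarrow\mu^j_\infty(\infty)$ and $c_j(t)\to c_j(\infty)$ follow. Your use of mere continuity of $g$ in $v$ is actually more accurate than the paper's claim that $g$ is Lipschitz, which $\sigma\in C^2$ does not give.

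The gap is Step~3, and it is not merely unfinished. The case you leave open is $\liminf_t A^j_t<\alpha^*_\infty:=\lim_{z\uparrow c_j(\infty)}\alpha^j_\infty(z)$, i.e.\ $\inf\supp G^j_t$ does not converge to $\inf\supp G^j_\infty$. No argument that uses only convergence of the law of $G^j_t$ can handle it, because there the conclusion can fail. Take $\delta>1$ and $G_\infty\equiv 1$, so $c(\infty)=(1-\delta^{-1/2})^2>0$. Let $G_t=-K$ on an event of probability $1/t$ and $G_t=1$ otherwise. Then $G_t\to G_\infty$ in $L^2$, but:
\begin{itemize}
\item $A_t=\delta/K$ for every $t$;
\item on $(0,\delta/K)$ one has $\Psi_t\le\Psi_\infty$ and $\Psi_t\to\Psi_\infty$ pointwise;
\item hence, by \eqref{eq:edge-variational}, $c(t)\to\Psi_\infty(\delta/K)$, which is strictly below $c(\infty)$ whenever $\delta/K<\alpha^*_\infty$ (for large $K$ it is $\approx 1-K/\delta$).
\end{itemize}
Concretely, for $t\ge 4\delta$ the $\approx n/t$ samples of weight $-K$ span a subspace of dimension $\approx\delta d/t$. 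On it the Rayleigh quotient of $\bH$ is at most $(1+\delta^{-1/2})^2-K/(4\delta)+o(1)$, so by Courant--Fischer every $\mu_\infty(t)$ puts mass at least $\delta/t$ there. A vanishing fraction of strongly negative weights thus leaves a BBP-type cluster of eigenvalues. That cluster persists for every $t$ because $t\to\infty$ is taken after $n,d\to\infty$.

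Your proposed repair also points the wrong way. Coupling $G^j_t$ with a truncation $G^j_t\vee(-K')\ge G^j_t$ gives a matrix that dominates $\bH_j(t)$ in Loewner order. So stochastic monotonicity yields an upper bound on $c_j(t)$, which you already have from weak convergence; you need a lower bound.

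What is actually needed is an extra hypothesis controlling the lower tail of $G^j_t$. Examples are $\inf\supp(G^j_t)\to\inf\supp(G^j_\infty)$, or more sharply $\liminf_t A^j_t\ge\alpha^*_\infty$. Under either, your third bullet already closes the argument. Indeed $c_j(t)\ge\Psi_t(\alpha)\to\Psi_\infty(\alpha)$ for every $\alpha<\liminf_t A^j_t$, and $\sup_{\alpha<\alpha^*_\infty}\Psi_\infty(\alpha)=c_j(\infty)$, so $\liminf_t c_j(t)\ge c_j(\infty)$.

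Neither Assumption~\ref{assumption_regularity} nor Assumption~\ref{assumption:saddle_2} obviously supplies such control. Nothing prevents the lower tail of $G^j_t$ from thinning out as $t\to\infty$, e.g.\ if $V(t)$ has full support but $V_\infty$ is degenerate. The paper's proof passes directly from $L^2$ convergence of $G^j_t$ to $c_j(t)\to c_j(\infty)$. So the edge statement is unproven in your proposal, as it is in the paper.
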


For $z<\min(0,c_j(\infty))$, consider the following equation:
\begin{equation}\label{equa_main_limit}
\operatorname{det}\left(-z I_{r_{\mathrm{H}}} +\mathbb{E}\left[\frac{\delta G_\infty^j }{\delta +G_\infty^j\alpha_\infty^j(z)}U_\mathrm{H}^{*\sT}V_{* }\left(U_\mathrm{H}^{*\sT}V_{*}\right)^\sT\right]\right)=0
\end{equation}
Define the threshold (as usual, $\delta^*_j(\infty)=\infty$ if the condition is never verified)
\begin{equation}\label{equa_threshold_limit}
\delta^*_j(\infty):=\inf \left\{\delta\in (0,\infty)\,  \big| \text{ Eq. \eqref{equa_main_limit} has a real solution }z<c_j(\infty) \right\}.
\end{equation}
\begin{theorem}\label{theorem:main_result_limit}
    Under proportional scaling $n/d:=\delta_n \rightarrow \delta \in (0,+\infty)$ and Assumption~\ref{assumption_regularity}, \ref{assumption:reg_1}, and \ref{assumption:saddle_2}, we have the following:
    \begin{enumerate}[label=(\alph*)]
        \item When $\delta>\delta^*_j(\infty)$, equation \eqref{equa_main_limit} has at least one real solution $z^*<c_j(\infty)$.

        In this case, denote the distinct real solutions as $z_1^*,\dots,z_p^*<c_j(\infty)$ with multiplicities $k_1,\dots,k_p$. 
        Then, for each $t$ and each $i=1,\dots,p$, there exist eigenvalues $z_{n,1}^{(i)},\dots,z_{n,k_i}^{(i)}\in\reals$ and (orthonormal) eigenvectors $\bxi_{n,1}^{(i)},\dots,\bxi_{n,k_i}^{(i)}$ such that 
        \begin{equation}\label{eq:eigenvalue_convergence_limit}
        \bH_j(t)\bxi_{n,q}^{(i)}=z_{n,q}^{(i)}\bxi_{n,q}^{(i)} \quad\text{and} \quad \lim_{t\rightarrow\infty}\plim_{n,d\rightarrow +\infty}\max_{q=1,\dots,k_i}\abs{z_{n,q}^{(i)}-z_i^*} = 0\, .
        \end{equation}
        Furthermore, for each $i=1,\dots,p$, let $S_{\infty}^i\in \mathbb{R}^{r\times k_i}$ be a full-rank matrix whose columns are orthonormal and form a basis of 
    \begin{equation}\label{eq:kernel_spike_limit}
    \ker\left(-z^*_i I_{r_{\mathrm{H}}} +\mathbb{E}\left[\frac{\delta G_\infty^j }{\delta +G_\infty^j\alpha_\infty^j(z^*_i)}U_\mathrm{H}^{*\sT}V_{* }\left(U_\mathrm{H}^{*\sT}V_{*}\right)^\sT\right]\right).
    \end{equation}
We have the following convergence\begin{align}\label{eq:eigenvector_alignment_limit}
        &\lim_{t\rightarrow\infty}\plim_{n,d\rightarrow +\infty}\sum_{q=1}^{k_i}\norm{\bTheta_{*\mathrm{H}}^{\sT}\bxi^{(i)}_{n,q}}^2 = \Omega^{(i)}_\infty\, , \\
        &\Omega^{(i)}_\infty:= \Tr\left(\left(S_{\infty}^{i\sT}\left(I + \mathbb{E}\left[\frac{\delta (G^j_\infty)^2\alpha_\infty^{j'}(z_i^*)}{(\delta + G^j_\infty\alpha_\infty^j(z_i^*))^{2}}U^{*\sT}_\mathrm{H}V_*\left(U^{*\sT}_\mathrm{H}V_*\right)^\sT\right]\right)S_{\infty}^i\right)^{-1}\right).
       \end{align}

        \item When $\delta <\delta^*_j(\infty)$, equation \eqref{equa_main_limit} has no negative solution for $z<c_j(\infty)$.
        
        Let $\bxi_p$ be the unit norm eigenvector corresponding to the $p$-th smallest eigenvalue of $\bH_j(t)$. 
        Then, there exists a finite $t^*_j$ such that, for every $t>t_j^*$, we have  $\plim_{n,d\to\infty}\bTheta_{*\mathrm{H}}^{\sT}\bxi_p=0$ for every fixed $p$.
    \end{enumerate}
\end{theorem}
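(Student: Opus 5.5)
The natural route is to reduce Theorem~\ref{theorem:main_result_limit} to the fixed-time Theorem~\ref{theorem:main_result} by a continuity argument in $t$, with the limits taken in the order $n,d\to\infty$ first and then $t\to\infty$. Write
\[
M_t(z):=-zI_{r_{\mathrm H}}+\E\Big[\frac{\delta G_t^j}{\delta+G_t^j\alpha_t^j(z)}U_\mathrm{H}^{*\sT}V_*(U_\mathrm{H}^{*\sT}V_*)^\sT\Big],
\]
and let $M_\infty(z)$ be the analogous matrix with $G_\infty^j,\alpha_\infty^j$. The first step is to show that $M_t\to M_\infty$ and $M_t'\to M_\infty'$ uniformly on compact subsets of $(-\infty,\min(0,c_j(\infty))-\Delta]$ for every $\Delta>0$. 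The same uniform convergence must hold for $\alpha_t^j,\alpha_t^{j\prime}$.

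\textbf{Continuity in $t$.} By Assumption~\ref{assumption:saddle_2}, $V(t)\to V_\infty$ in $L^2$. Since $g$ is continuous and bounded (by Assumption~\ref{assumption_regularity}, $\ell'$ and $\sigma''$ are bounded), it follows that $G_t^j\to G_\infty^j$ in $L^2$ jointly with $V_*$. Lemma~\ref{lemma:limiting_measure} gives $c_j(t)\to c_j(\infty)$ and weak convergence of $\mu_\infty^j(t)$. Hence for $t$ large, any compact $K$ to the left of $\min(0,c_j(\infty))-\Delta$ lies at distance $\ge\Delta/2$ from every support. On such $K$, $\alpha_t^j(z)=\int(\lambda-z)^{-1}\mu^j_\infty(t)(d\lambda)$ and its derivative converge uniformly.

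We also need the denominators $\delta+G_t^j\alpha_t^j(z)$ to stay bounded away from zero uniformly in $t$ on $K$. I would get this from the edge characterization \eqref{eq:edge-variational}: for real $z<c_j(t)$ the value $\alpha_t^j(z)$ lies in $(0,A_t^j)$, and it should stay a fixed distance inside, uniformly in large $t$, because $z$ is a fixed distance from the edge. Dominated convergence then gives $M_t\to M_\infty$ and convergence of the matrices entering $\Omega_t^{(i)}$, using the bounded second moments of $V_*$. The bound on the denominators is the step I expect to be the main obstacle. $G_t^j$ converges only in $L^2$, so its essential infimum can move, and $A_t^j$ need not converge to $A_\infty^j$. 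My first attempt would be to use the boundedness of $g$: it confines every $G_t^j$ to a fixed interval, and continuity of the map $\alpha\mapsto -1/\alpha+\delta\E[G/(\delta+G\alpha)]$ should give a uniform lower bound on $\delta+G\alpha$ at points strictly left of the edge.

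\textbf{Part (a).} Let $\delta>\delta_j^*(\infty)$, so $\det M_\infty$ has real roots $z_i^*<c_j(\infty)$ with multiplicity $k_i$. Each $z_i^*$ is also $<0$: $M_\infty(z)$ is $-zI$ plus a matrix with the sign structure of $G$, and I would verify the sign directly. Since $M_\infty(z)$ is symmetric and analytic in $z$, its eigenvalue branches are analytic. Uniform convergence together with Hurwitz/Rouché (applied to $\det M_t$ on a small complex disc around $z_i^*$, away from the supports) gives, for large $t$, exactly $k_i$ roots of $\det M_t$ counted with multiplicity, converging to $z_i^*$. For large $t$ these roots satisfy $z<\min(0,c_j(t))$. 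Applying Theorem~\ref{theorem:main_result}(a) at each such $t$ produces the eigenvalues $z_{n,q}^{(i)}$ with in-probability limits equal to the roots of $\det M_t$; letting $t\to\infty$ yields \eqref{eq:eigenvalue_convergence_limit}. If the roots split into distinct ones at finite $t$, I would sum the contributions of Theorem~\ref{theorem:main_result} over the cluster. For the overlaps, the kernel projectors $S_t S_t^\sT$ of the cluster converge to $S_\infty^iS_\infty^{i\sT}$, by Riesz projection continuity for the analytic family. Combined with convergence of the inner matrix, this gives $\Omega^{(i)}_t$ (summed over the cluster) $\to\Omega^{(i)}_\infty$.

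\textbf{Part (b).} Let $\delta<\delta_j^*(\infty)$. By the monotone structure of $M_\infty$, namely that $M_\infty$ is increasing in the positive-definite order as $z$ decreases, $\det M_\infty$ has no zero on $(-\infty,\min(0,c_j(\infty)))$. I then need a margin: $\lambda_{\min}(M_\infty(z))\ge\epsilon>0$ there. Near $-\infty$ this comes from the $-zI$ term; near the edge I would use compactness. Uniform convergence then shows that for $t>t_j^*$, $\det M_t$ has no root below $\min(0,c_j(t))$, so $\delta<\delta_j^*(t)$. Theorem~\ref{theorem:main_result}(b) at each such $t$ then gives $\bTheta_{*\mathrm H}^\sT\bxi_p\to0$. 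Near the edge itself, the fixed-time theorem is already stated with no informative eigenvector, so no extra uniformity is needed there.
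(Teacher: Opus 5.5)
Your route is the paper's: uniform convergence of the hard-block determinant $\det M_t\to\det M_\infty$ on compacts of $\{\Re z<\min(0,c_j(\infty))\}$, using $G^j_t\to G^j_\infty$ in $L^2$ and Lemma~\ref{lemma:limiting_measure}. This is followed by Rouch\'e around each $z_i^*$ combined with Theorem~\ref{theorem:main_result} at fixed $t$, and then convergence of the kernels (Davis--Kahan in the paper, Riesz projections in yours) plus continuity of the trace formula.

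The denominator bound you single out is only asserted in the paper, and your proposed fix works. Along the branch followed by $\alpha(z)$ for real $z<c$, one has $z'(\alpha)=\alpha^{-2}-\delta\E[G^2/(\delta+G\alpha)^2]\le\alpha^{-2}$. Set $g_-:=(-\inf\supp G)\vee 0\le C$ and $\alpha_c:=\lim_{z\uparrow c}\alpha(z)\le\delta/g_-$. Then $c-z\le 1/\alpha(z)-1/\alpha_c\le 1/\alpha(z)-g_-/\delta$. This gives $\delta+G\alpha(z)\ge\delta-g_-\alpha(z)\ge\delta\Delta/(\Delta+C/\delta)$ whenever $c-z\ge\Delta$, uniformly in $t$.

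Drop the claim in (a) that real roots below $c_j(\infty)$ are automatically negative. It can fail: a positive $G$ that is large where $|V_*|$ is small can produce, for large $\delta$, a root in $(0,c_j(\infty))$. It is also unnecessary, since \eqref{equa_main_limit} is only posed on $z<\min(0,c_j(\infty))$, which is the only region where Theorem~\ref{theorem:main_result} is available.

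The genuine gap is at the end of (b). Your margin $\lambda_{\min}(M_\infty(z))\ge\epsilon$ and the uniform convergence only hold on $(-\infty,\min(0,c_j(\infty))-\Delta]$; the $-zI$ term, or your monotonicity in $z$, handles $z\to-\infty$ there. So for $t\ge t(\Delta)$ you exclude roots of $\det M_t$ only outside a strip of width $\Delta$ at the edge.

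The sentence ``near the edge the fixed-time theorem already gives no informative eigenvector'' misreads Theorem~\ref{theorem:main_result}. Its part (b) is conditional on $\delta<\delta^*_j(t)$, i.e.\ on $\det M_t$ having no root anywhere on $(-\infty,\min(0,c_j(t)))$. Its part (a) says that a root just to the left of $\min(0,c_j(t))$ does produce an outlier with $\Omega^{(i)}_t>0$. Nothing you prove excludes roots $z^{(t)}$ of $\det M_t$ with $z^{(t)}\uparrow\min(0,c_j(\infty))$, and such roots would contradict the conclusion for every large $t$.

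Your monotonicity reduces the question to the endpoint: it suffices that $\lim_{z\uparrow\min(0,c_j(t))}\lambda_{\min}(M_t(z))\ge\epsilon>0$ for all large $t$. That requires two things:
\begin{enumerate}
\item[(i)] strict positivity of $\lim_{z\uparrow\min(0,c_j(\infty))}\lambda_{\min}(M_\infty(z))$, which does not follow from $\delta<\delta^*_j(\infty)$ alone, since that definition only excludes roots in the open half-line;
\item[(ii)] convergence of $M_t$ up to the endpoint, i.e.\ of the edge values $\alpha^j_t(\min(0,c_j(t))^-)$ and of the denominators there, which goes beyond Lemma~\ref{lemma:limiting_measure} when $c_j(\infty)<0$.
\end{enumerate}
The paper's Step~IV has the same weakness. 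It excludes roots only on each fixed compact $\cK$, with $t$ depending on $\cK$, and then invokes the arbitrariness of $\cK$. So this step must be supplied by you rather than borrowed from the paper.
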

We defer all proofs to Appendix \ref{sec:proof_result_iii}.

\begin{remark}
Assumptions~\ref{assumption_regularity}.$(c)$, $(e)$ are essential for Theorem \ref{theorem:main_result} and
 Theorem \ref{theorem:main_result_limit} 
in that they ensure that $\ell'(y_i,f_{\bTheta(t)}(\bx_i))$ and $\sigma''(\, \cdot\, )$
are uniformly lower bounded and therefore $\bH_j=\bX^{\sT}\bD_j\bX$ where 
$(\bD_j)_{ii}\ge -C/n$. Without such bounds, the bulk spectrum of 
$\bH_j(t)$ can have unbounded support on the left, preventing any meaningful outlier/bulk decomposition and making the sharp  phase transition in Theorems~\ref{theorem:main_result} and \ref{theorem:main_result_limit} impossible.
\end{remark}

\section{Consequences and discussion of main results}
\label{sec:Consequences}
\subsection{The single-neuron case}

Our general theory applies to the $m=1$ (single-neuron) case, which is already interesting. Consider the case in which
the target is a single-index model where the only latent direction is hard.
Further assume $a=1$ with bias $b \in \mathbb{R}$. The Hessian with respect to $\btheta(t)$ is
 \begin{equation}\label{eq:hessian_single_neuron}
\nabla^2 \Risk(\btheta(t))=\frac{1}{n}\sum_{i=1}^n\Big(\ell''\big(y_i,f_{\btheta(t)}(\bx_i)\big)\sigma'(\btheta(t)^\sT\bx_i+b)^2+\ell'\big(y_i,f_{\btheta(t)}(\bx_i)\big)\sigma''(\btheta(t)^\sT\bx_i+b)\Big)\bx_i\bx_i^\sT.
 \end{equation}
 Equivalently, the diagonal entries become $\widehat g= \ell''\,\sigma'^2+\ell'\sigma''$. Define
 \begin{equation}\label{eq:single_neuron_weights}
 \widehat G_t:=\widehat g\big(V(t), h(V_*,\varepsilon)\big),\quad \widehat G_\infty:=\widehat g\big(V_\infty, h(V_*,\varepsilon)\big).
 \end{equation}
Since $m=1$, we can drop the subscript $j$ and
all results in the previous section hold with $G_t^j$ replaced by 
$\widehat G_t$; we denote the corresponding Stieltjes transform by $\widehat \alpha_t(z)$. In particular, the outlier equation \eqref{equa_main} at fixed time reads
 \begin{equation}\label{eq:spike_single_neuron}
 \det\!\left(-z I+\mathbb{E}\left[\frac{\delta\widehat G_t}{\delta+\widehat G_t\widehat \alpha_t(z)}U_\mathrm{H}^{*\sT}V_*\left(U_\mathrm{H}^{*\sT}V_*\right)^\sT\right]\right)=0,
 \end{equation}
 with the large-time analogue obtained by replacing $(\widehat G_t,\widehat \alpha_t)$ by $(\widehat G_\infty,\widehat \alpha_\infty)$. The eigenvector-alignment and threshold statements follow by the same substitutions: replace $G_t^j$ (resp. $G_\infty^j$) by $\widehat G_t$ (resp. $\widehat G_\infty$) and $\alpha_t^j$ (resp. $\alpha_\infty^j$) by $\widehat \alpha_t$ (resp. $\widehat \alpha_\infty$) in \eqref{equa_main}--\eqref{equa_threshold_limit} and in the corresponding alignment formulas.

\subsection{Grokking}
\label{sec:Grokking}

 As discussed in the introduction, our results suggest the emergence of
 grokking for $\delta>\delta_{\sNN}$. (See Figure  \ref{fig:grokking_dynamics_first}).
 \begin{enumerate}
     \item During the first training stage ($t=O(1)$), the network cannot learn the hard directions, and therefore the parameters overfit slightly to the data whenever $\delta$ is finite (even when there is no easy directions). 
     \item During the second training stage ($t\gg 1$), as long as $\delta>\delta_{\sNN}$, 
     the emergence  of negative directions of the Hessian drives
     the learning of hard features. Consequently the generalization error decreases. 
 \end{enumerate}
 This phenomenon becomes less pronounced for $\delta\gg \delta_{\sNN}$
 because the generalization error decreases by uniform convergence arguments.
 We elaborate on this phenomenon further in the numerical experiments of Section 
 \ref{sec:examples_and_numerical}.

\subsection{Interpretation of the outliers equation}

Useful intuition into the outlier equations \eqref{equa_main} and \eqref{equa_main_limit} 
is gained by recognizing that they coincide with the outlier equations of
a simpler random matrix model. We will describe this interpretation for $t$
fixed (Theorem \ref{theorem:main_result}); the analogous interpretation for $t=\infty$
(Theorem \ref{theorem:main_result_limit}) is immediate.

In order to motivate the simplified random matrix model, note that, by the DMFT formulas, there exists a measurable map $\varphi_t$ and an auxiliary vector $\zeta$ independent of $U_{\mathrm{H}}^{*\sT}V_*$ such that
\begin{equation}\label{eq:state_evolution_map}
V(t) \ed \varphi_t\big(h(V_*,\varepsilon),\, \zeta\big).
\end{equation}
Therefore, writing $G_t^j=g(V(t), h(V_*,\varepsilon);j)$, there exists a map $\Gamma_t^j$ (obtained by composing $g$ with $\varphi_t$) such that, jointly with $U_\mathrm{H}^{*\sT}V_*$,
\begin{equation}\label{eq:gamma_map}
\big(G_t^j, U_\mathrm{H}^{*\sT}V_*\big) \ed
\big(\Gamma_t^j(h(V_*,\varepsilon),\zeta), U_\mathrm{H}^{*\sT}V_*\big), \quad \zeta \perp\!\!\!\perp U_\mathrm{H}^{*\sT}V_*.
\end{equation}

Motivated by these remarks, we define the random matrix model
\begin{align}\label{eq:surrogate_hessian}
&\bH_j^{\ssim}(t) := \frac{1}{n}\sum_{i=1}^n \Gamma_t^j\big(h(\bTheta_{*\mathrm{H}}^\sT\bx_i,v_i,\varepsilon_i),\zeta_i\big)\, \bx_i \bx_i^\sT, \\ 
&(v_i,\varepsilon_i,\zeta_i)_{i\le n} \stackrel{\text{i.i.d.}}{\sim} {\rm Law}(U_{\mathrm{E}}^{*\sT}V_*,\varepsilon,\zeta),\ \ (v_i,\varepsilon_i,\zeta_i) \perp\!\!\perp \bx_i.\nonumber
\end{align}
Note that this differs from the actual distribution of $\bH_j(t)$
in two ways: $(i)$~We replaced the diagonal terms $g(\bTheta(t)^{\sT}\bx_i, h(\bTheta_*^{\sT}\bx_i,\eps_i);j)$
(which can be written in the form $\Gamma_{n,t}^j\big(h(\bTheta_{*\mathrm{H}}^\sT \bx_i ,v_{n,i},\varepsilon_{n,i}),\, \zeta_{n,i}\big)$ for some $\Gamma_{n,t}^j$ and $(v_{n,i},\varepsilon_{n,i},\zeta_{n,i})$)
by its asymptotic counterpart 
$\Gamma_t^j\big(h(\bTheta_{*\mathrm{H}}^\sT\bx_i,v_i,\varepsilon_i),\zeta_i\big)$; $(ii)$~We drop dependencies
between $(v_{n,i},\varepsilon_{n,i},\zeta_{n,i})$ and $\bx_i$. 

By construction, $\Gamma_t^j(h(V_*,\varepsilon),\zeta) \ed G_t^j$, so the bulk Stieltjes transform and edge equation of $\bH_j^{\ssim}(t)$ obey the same generalized Marchenko--Pastur law as in Lemma~\ref{lemma:bulk}. Moreover, the standard resolvent computation for outliers in this multi-index model yields the outlier condition
\begin{equation}\label{eq:surrogate_spike}
\operatorname{det}\left(-z I +\mathbb{E}\left[\frac{\delta \Gamma_t^j(h(V_*,\varepsilon),\zeta)}{\delta + \Gamma_t^j(h(V_*,\varepsilon),\zeta) \alpha_t^j(z)} U_\mathrm{H}^{*\sT}V_*\big(U_\mathrm{H}^{*\sT}V_*\big)^\sT\right]\right)=0
\end{equation}
which coincides with \eqref{equa_main} upon identifying $\Gamma_t^j \ed G_t^j$.
Hence, the critical aspect ratio $\delta_j^*(t)$ defined from \eqref{equa_main} is exactly the spectral (BBP-type) threshold for detecting the hard subspace in the multi-index spiked covariance model $\bH_j^{\ssim}(t)$ (with the additional requirement $z<0$)
(see \cite{kovačević2025spectralestimatorsmultiindexmodels}).

In words, we claim that the informative outlier eigenvalue of the Hessian block 
$\bH_j(t)$ are the same as the ones of the simplified model 
$\bH_j^{\ssim}(t)$ which retains only the dependencies between 
$\bTheta_{*\mathrm{H}}^{\sT}\bx_i$ and $\bx_i$.
We emphasize that this does not mean that 
full random matrices $\bH_j(t)$ and $\bH_j^{\ssim}(t)$ are equal in law,
or even that they have the same outliers asymptotically.
In particular, $\bH_j^{\ssim}(t)$ can have at most $r_{\mathrm{H}}$ outliers corresponding to the hard directions, whereas $\bH_j(t)$ might have additional outliers whose eigenvectors are not aligned with the hard directions.

\subsection{Suboptimality among polynomial-time algorithms}

From previous discussion, we see that feature learning via gradient descent can be interpreted as a two-stage polynomial-time algorithm:
\begin{enumerate}
    \item \emph{Preprocessing stage:} Run $O(1)$ steps of gradient descent, which learns the easy directions but cannot recover the hard subspace (Lemma~\ref{lemma_amp_data}).
    \item \emph{Feature learning stage:} Learn the hard directions via the spectral algorithm induced by the Hessian at the current iterate, as characterized in Section~\ref{sec:MainResults_III}.
\end{enumerate}
We note that the first stage also induces the data preprocessing for the spectral feature learning algorithm in the second stage, as seen in the simplified model \eqref{eq:surrogate_hessian}. This preprocessing is in general sub-optimal.

This can be more clearly illustrated in the case when all latent directions are hard ($\Uhard = \mathbb{R}^k$). In this setting, there are no easy directions to learn, and the simplified model \eqref{eq:surrogate_hessian} (with the same learning threshold) reduces to
\begin{equation}\label{eq:spectral_all_hard}
\bH_j^{\ssim}(t) = \frac{1}{n}\sum_{i=1}^n \Gamma_t^j(y_i, \zeta_i) \bx_i\bx_i^\sT, \quad \zeta_i \stackrel{\text{i.i.d.}}{\sim} \Unif([0,1]),\ \zeta_i \perp\!\!\!\perp (\bx_i, y_i).
\end{equation}
Here $\zeta_i$ is independent random noise that does not depend on the data $(\bx_i, y_i)$, and without loss of generality we can assume it $\Unif([0,1])$. The function $\Gamma^j_t$ depends on the network architecture $\sigma$, $a_j$, $b_j$), the loss function ($\ell$), the training hyperparameters ($\eta$).
A general spectral method uses instead an arbitrary preprocessing function $\Psi$
\cite{mondelli2018fundamentallimitsweakrecovery,kovačević2025spectralestimatorsmultiindexmodels}:
\begin{equation}\label{eq:spectral_estimator_form}
\bH_{\Psi,\xi} = \frac{1}{n}\sum_{i=1}^n \Psi(y_i, \zeta_i)\bx_i\bx_i^\sT, \quad \zeta_i \stackrel{\text{i.i.d.}}{\sim} \Unif([0,1]),\ \xi_i \perp\!\!\!\perp 
(\bx_i, y_i)\, .
\end{equation}
The corresponding spectral threshold $\delta_{\ssp}(\Psi)$ is typically minimized at a unique
$\Psi_{\sopt}$ (up to measure zero modifications) and hence in general
we have the strict inequality:
\begin{equation}\label{eq:suboptimal_ineq}
\delta_j^*(t) =\delta_{\ssp}(\Gamma_t^j)>\delta_{\sopt} :=
\inf_{\Psi}\delta_{\ssp}(\Psi). 
\end{equation}
and in fact, $\inf_{t\ge 0} \delta_j^*(t) >\delta_{\sopt}$.

As a simple illustration, consider a single-neuron network ($m=1$) learning a single-index model ($k=1$), with $a=1$ and $b=0$. Suppose the activation function $\sigma$ is locally quadratic near zero (i.e., $\sigma(0)=\sigma'(0) = 0$ and $\sigma''(0) \neq 0$), and the weight vector is initialized with $\|\btheta(0)\| = d^{-C}$ for some large constant $C > 0$. In this regime, the gradient remains negligible for any $O(1)$ steps, and the dynamics is dominated by the Hessian. Consequently, gradient descent effectively performs a power method on the Hessian at zero:
\begin{equation}\label{eq:hessian_example}
\nabla^2 \Risk(\mathbf{0}) \propto \frac{1}{n}\sum_{i=1}^n \ell'(y_i,0)\,\bx_i\bx_i^\sT.
\end{equation}
The feature learning threshold $\delta_{j}^*(t)$ coincides with
the spectral threshold $\delta_{\ssp}^*(\ell'(\cdot\,,0))$ for the preprocessing function $y\mapsto \ell'(y, 0)$. 
This is in general not optimal (consider for instance $h(z,\eps) = z^2$),
demonstrating that inequality \eqref{eq:suboptimal_ineq} can be strict.

\section{Examples and numerical illustrations}\label{sec:examples_and_numerical}

This section instantiates the theory on the following data distribution with 
$k=1$ (`real phase retrieval')
\begin{equation}\label{eq:phase_retrieval_model}
 y = h(z,\varepsilon) = z^2, \qquad z := \btheta_*^\sT\bx, \quad \bx\sim \normal(\boldsymbol{0},\bI_d), \quad \|\btheta_*\|_2=1.
\end{equation}
In this example, the information-theoretic threshold and the algorithmic threshold are known to be $\delta_{\sIT}=\delta_{\salg}=1/2$ \cite{mondelli2018fundamentallimitsweakrecovery}.

We derive explicit thresholds for this task and  compare them with numerical experiments with various activation functions. As we will see, the predicted thresholds precisely match empirical phase transitions and explain the observed grokking behavior when $\delta>\delta_{\sNN}$. 

We extend the numerical analysis to multi-neuron networks in Appendix \ref{sec:additional_simulation}.

\subsection{General setting}

In this setting, the hard subspace coincides with the entire latent space,
i.e. $r=k=1$.
Consequently, Lemmas~\ref{lemma_amp_data} and~\ref{lemma_amp_feature} imply that no correlation with $\btheta_*$ can be learned in finitely many gradient steps.

We use the standard Huber loss with parameter $M>0$, which ensures bounded diagonal terms in the Hessian and thus a finite left edge of the bulk spectrum:
\begin{equation}\label{eq:huber_loss}
 \ell(y,z)=\begin{cases}
 \tfrac12\,(z-y)^2, & |z-y|\le M,\\
 M|z-y| - \tfrac12 M^2, & |z-y|>M,
 \end{cases}
\end{equation}

For $m=1$ with $(a,b)=(1,0)$, the empirical risk and Hessian read
\begin{equation}\label{eq:pr_hessian_single}
\Risk\big(\btheta\big)=\frac{1}{n}\sum_{i=1}^n\ell(y_i,\sigma(\btheta^\sT\bx_i))\, ,\;\;\;
 \nabla^2 \Risk\big(\btheta\big)=\frac{1}{n}\sum_{i=1}^n
 \widehat g(\btheta^\sT\bx_i;y_i)\, \bx_i\bx_i^{\sT}\, ,
\end{equation}
where
\begin{equation}\label{eq:pr_weight_function}
 \widehat g(u;y):=\ell''\big(y,\sigma(u)\big)\big(\sigma'(u)\big)^2+\ell'\big(y,\sigma(u)\big)\sigma''(u), \qquad u\in\reals,\ y\in\reals.
\end{equation}
Using the discrete-time DMFT random variables $(V(t),V_*)$ from Lemma~\ref{lemma_amp_data} (here $y=h(V_*,\varepsilon)=V_*^2$), we define
\begin{equation}\label{eq:pr_state_evolution_weights}
 \widehat G_t:=\widehat g\big(V(t);V_*^2\big),\qquad \widehat G_\infty:=\widehat g\big(V_{\infty};V_*^2\big).
\end{equation}

\paragraph{Formulas for the asymptotic spectrum.} The limiting spectral distribution of the Hessian follows a generalized Marchenko--Pastur law with Stieltjes transform $\widehat\alpha_t(z)$ determined by (cf. Lemma~\ref{lemma:bulk}):
\begin{equation}\label{eq:pr_bulk}
 z+\frac{1}{\widehat \alpha_t(z)}=\delta\cdot\E\left[\frac{\widehat G_t}{\delta+\widehat G_t\widehat \alpha_t(z)}\right].
\end{equation}
we denote the left edge of the support of the asymptotic ESD by $c(t)$.
In the large-time limit, replace $(\widehat G_t,\widehat \alpha_t,c(t))$ by $(\widehat G_\infty,\widehat \alpha_\infty,c(\infty))$ as in \eqref{equa_stieltjes_transform_limit}–\eqref{equa_edge_limit}.

Since $r=1$, there is only one hard direction. The outlier equation \eqref{equa_main} specializes to the scalar equation
\begin{equation}\label{eq:pr_outlier_finite}
 S_t(z;\delta):=-z+\E\left[\frac{\delta\widehat G_t V_*^2}{\delta+\widehat G_t\widehat\alpha_t(z)}\right]=0,\qquad z<\min(c(t),0),
\end{equation}
and the finite-time weak-recovery threshold is
\begin{equation}\label{eq:pr_threshold_finite}
 \delta^*(t):=\inf\Big\{\delta\in(0,\infty]\ :\ \exists\ z<\min(c(t),0)\text{ solving Eq.~\eqref{eq:pr_outlier_finite}}\Big\}.
\end{equation}
The obvious analogous of these formulas hold for $t=\infty$.

\paragraph{Computing the threshold.}
Since $\widehat\alpha_t'(z)>0$ for $z<c(t)$ by standard properties of Stieltjes transforms, differentiating yields $\frac{\partial}{\partial z} S_t(z;\delta)<0$ on $(-\infty,\min(c(t),0))$. Hence, for each fixed $\delta$, $S_t(z;\delta)$ is strictly decreasing in $z$, and a root exists for some $z<\min(c(t),0)$ if and only if $S_t\big(z^{\dagger}(t);\delta\big)< 0$ at the boundary value
\begin{equation}\label{eq:pr_boundary}
z^{\dagger}(t):=\min\big(c(t),0\big),\qquad \widehat\alpha_t^{\dagger}:=\lim_{z\uparrow z^{\dagger}(t)}\widehat\alpha_t(z).
\end{equation}
Therefore, the finite-time threshold $\delta^*(t)$ solves the boundary equation
\begin{equation}\label{eq:thr_finite}
\E\!\left[\frac{\delta\widehat G_t V_*^2}{\delta+\widehat G_t\widehat\alpha_t^{\dagger}}\right] =z^{\dagger}(t)
\end{equation}
for $\delta\in(0,\infty]$. If \eqref{eq:thr_finite} has no positive solution, then $\delta^*(t)=+\infty$.
The large-time threshold $\delta^*(\infty)$ is computed analogously by replacing $(\widehat G_t,\widehat\alpha_t, c(t))$ with $(\widehat G_\infty,\widehat\alpha_\infty, c(\infty))$.

\begin{figure}[!t]
    \centering
    \includegraphics[width=0.48\textwidth]{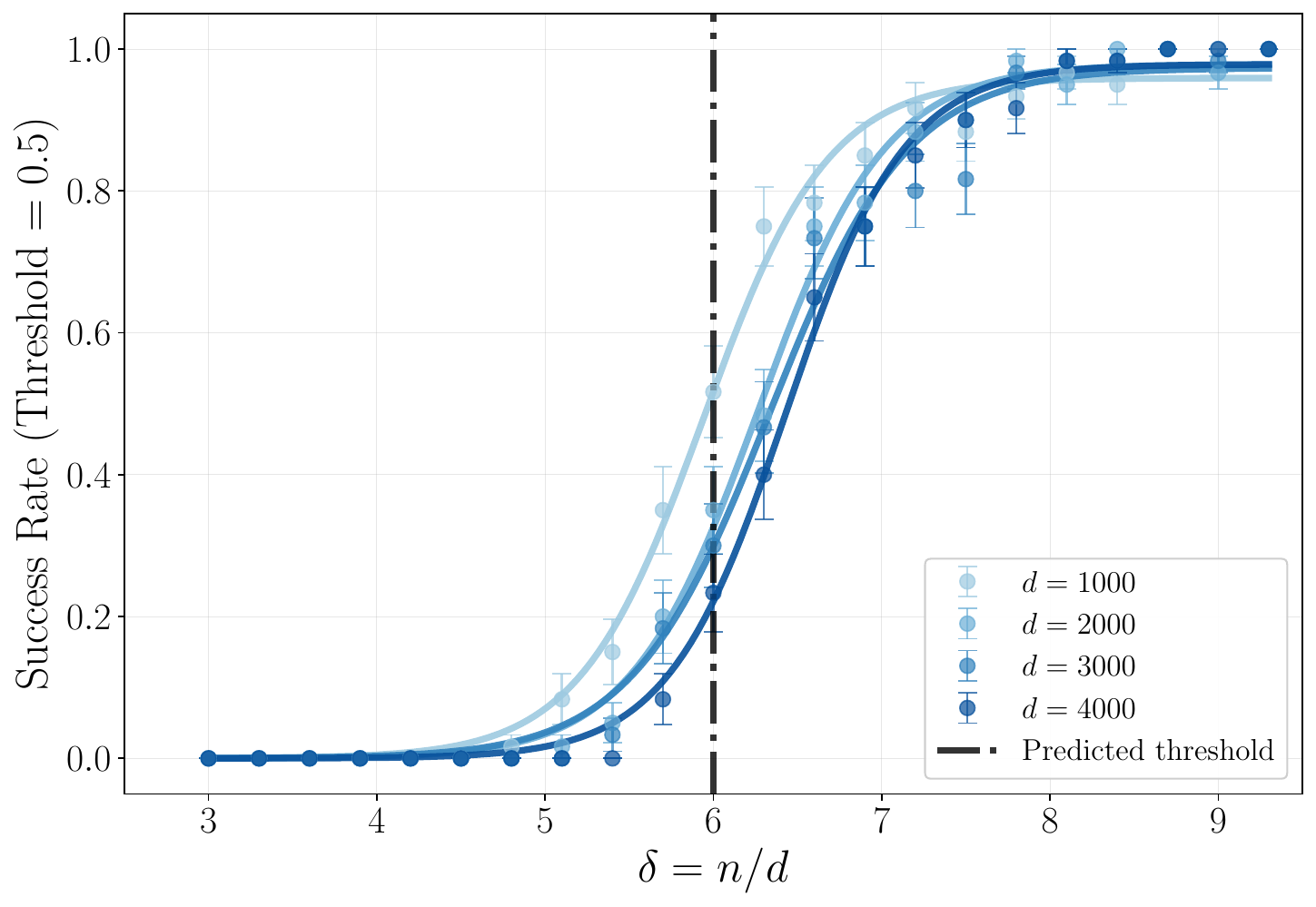}
    \includegraphics[width=0.48\textwidth]{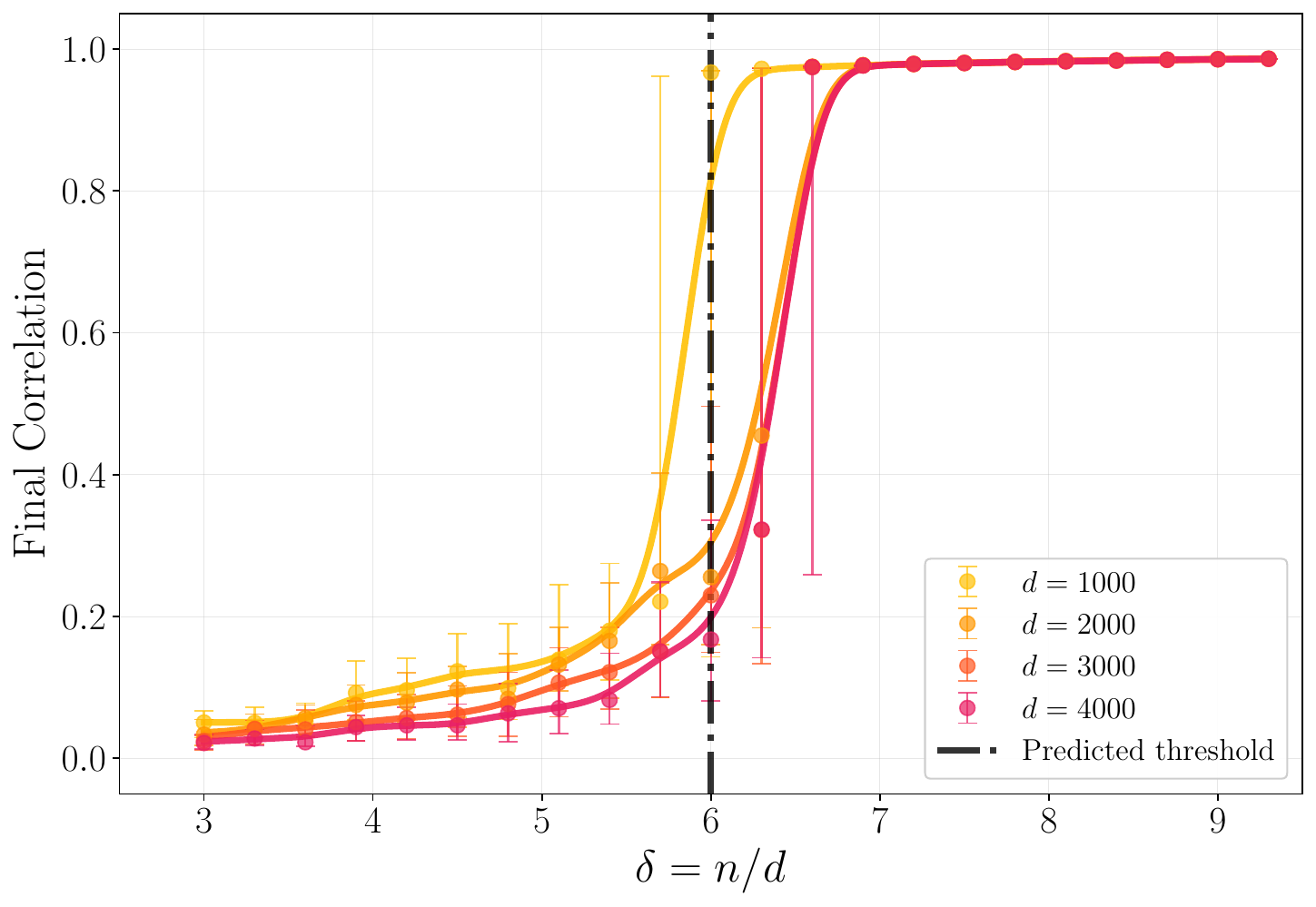}
    \vspace{-0.1cm}
    \caption{Phase transitions for $\GeLU$ with single-neuron learner. Left: Success rate with sigmoid fits; each data point shows mean $\pm$ standard deviation across 60 independent runs. Right: Final correlation with median and 30th/70th percentile error bars (smooth splines fitted). Both metrics exhibit sharp phase transitions matching the predicted threshold $\delta^*\approx 6.0$ (dash-dot line).}
    \label{fig:phase_transition_gelu}
    \end{figure}

    \begin{figure}[!t]
        \centering
        \includegraphics[width=0.48\textwidth]{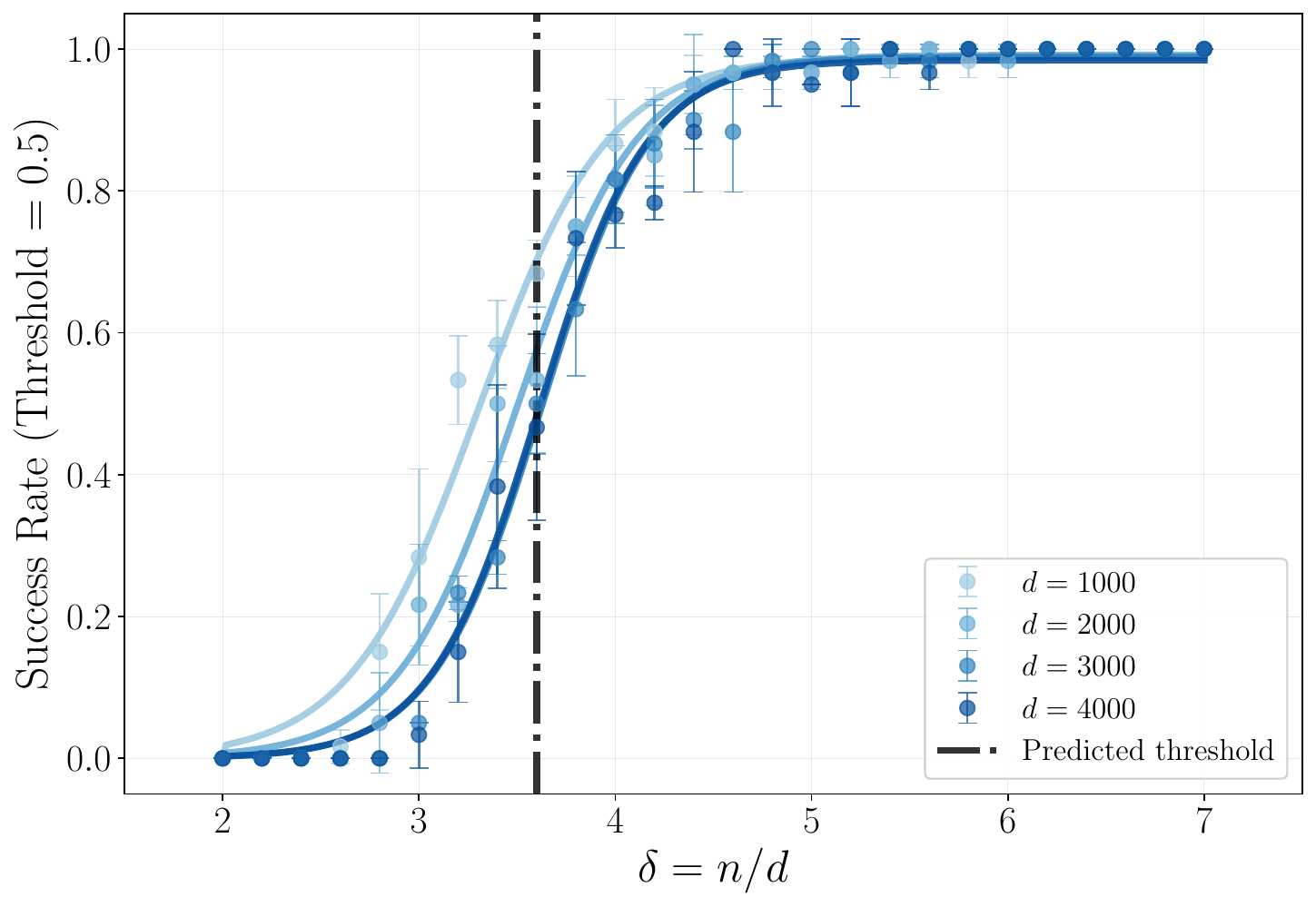}
        \includegraphics[width=0.48\textwidth]{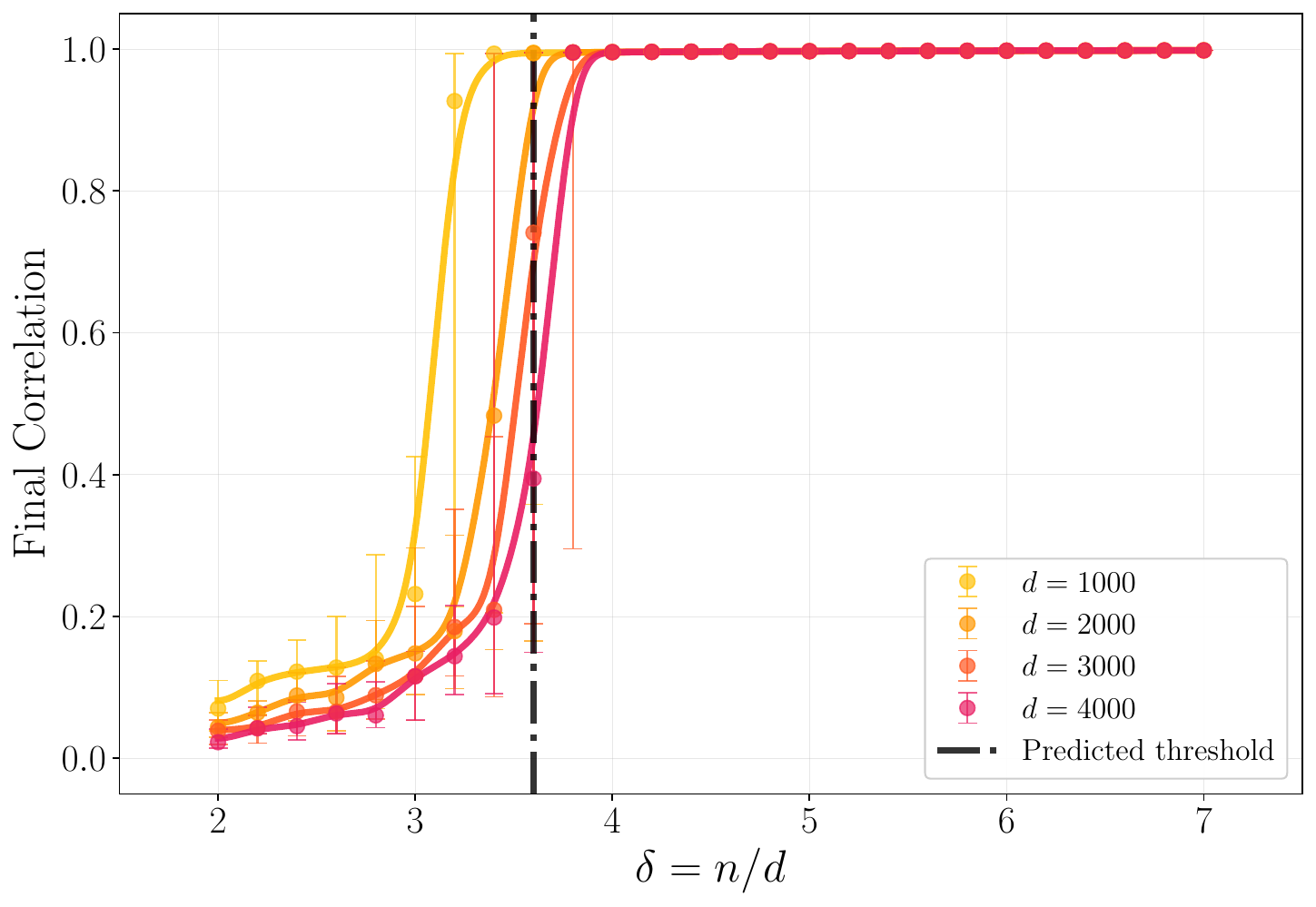}
        \vspace{-0.1cm}
        \caption{Phase transitions for $\mathsf{Quad}$ with single-neuron learner. Left: Success rate with sigmoid fits; each data point shows mean $\pm$ standard deviation across 60 independent runs. Right: Final correlation with median and 30th/70th percentile error bars (smooth splines fitted). Both metrics exhibit sharp phase transitions matching the predicted threshold $\delta^*\approx 3.6$ (dash-dot line).}
        \label{fig:phase_transition_quadratic}
      \end{figure}

\subsection{Results for $\GeLU$ and $\mathsf{Quad}$ activations}

We begin by reporting results for Huber loss with $M=1$ and $\GeLU$ activation
\begin{equation}\label{eq:num_gelu}
\sigma_{\GeLU}(z) := z\Phi(z)\, ,\;\;\;\;
\sigma_{\mathsf{Quad}}(z) = \alpha\left(1 - e^{-(\beta z)^2}\right),\qquad \alpha=9,\quad \beta=\tfrac{1}{3},
\end{equation}
where $\Phi(\cdot)$ is the CDF of the standard normal distribution.
For $\Quad$ activation we use $\alpha=9$, $\beta=1/3$,

We initialize $\btheta(0)$ as a uniformly random unit vector, and run full-batch gradient descent with learning rate $\eta=1.5$ for $\GeLU$ and $\eta=0.25$
for $\Quad$ (these were selected by optimizing accuracy over a grid of values of $\eta$) for at most 1000 iterations. We terminate early once the correlation reaches $\rho(t) \geq 0.9$, after which we run for 100 additional steps to ensure sufficient training. We also stop if the algorithm plateaus for too long, i.e., if the correlation changes by less than $0.01$ over 500 consecutive steps.

\paragraph{Simulation results.}

We  consider $d\in\{1000,2000,3000,4000\}$ with varying sample sizes $n$ to explore different sampling ratios $\delta = n/d$. For each configuration $(d,\delta)$, we perform 60 independent runs with different random seeds for $\btheta_*$, $\btheta(0)$, and the data $\{\bx_i,y_i\}_{i=1}^n$.

We measure the correlation between the learned parameter and the ground truth via the normalized inner product
\begin{equation}\label{eq:num_correlation}
\rho(t) := \left|\frac{\btheta(t)^\sT \btheta_*}{\|\btheta(t)\|_2}\right|
\end{equation}
evaluated at the final time $t=T$ (maximum 1000 steps). We report two key metrics: $(i)$ the \emph{success rate}, defined as the proportion of runs achieving $\rho(T)\geq 0.5$, and $(ii)$ the \emph{final correlation} $\rho(T)$ itself. For success rates, we fit sigmoid curves; for correlations, we report medians with 30th/70th percentile error bars.

Figures~\ref{fig:phase_transition_gelu} and \ref{fig:phase_transition_quadratic}
report simulation results for $\GeLU$ and $\Quad$ respectively.
We plot empirical success rate and final correlation versus $\delta = n/d$ across four dimensions. Both metrics exhibit a sharp phase transition around $\delta=6$
(for $\GeLU$) and $\delta=3.6$ (for $\mathsf{Quad}$) : the success rate (left panel) jumps from near zero to near one, while the final correlation (right panel) transitions from $\rho(T)\approx 0$ to $\rho(T)\approx 1$.

These behaviors match the predictions from high-dimensional asymptotics that are reported as vertical lines. We discuss the computation of these predictions next.

\begin{figure}[!t]
  \centering
  \includegraphics[width=0.48\textwidth]{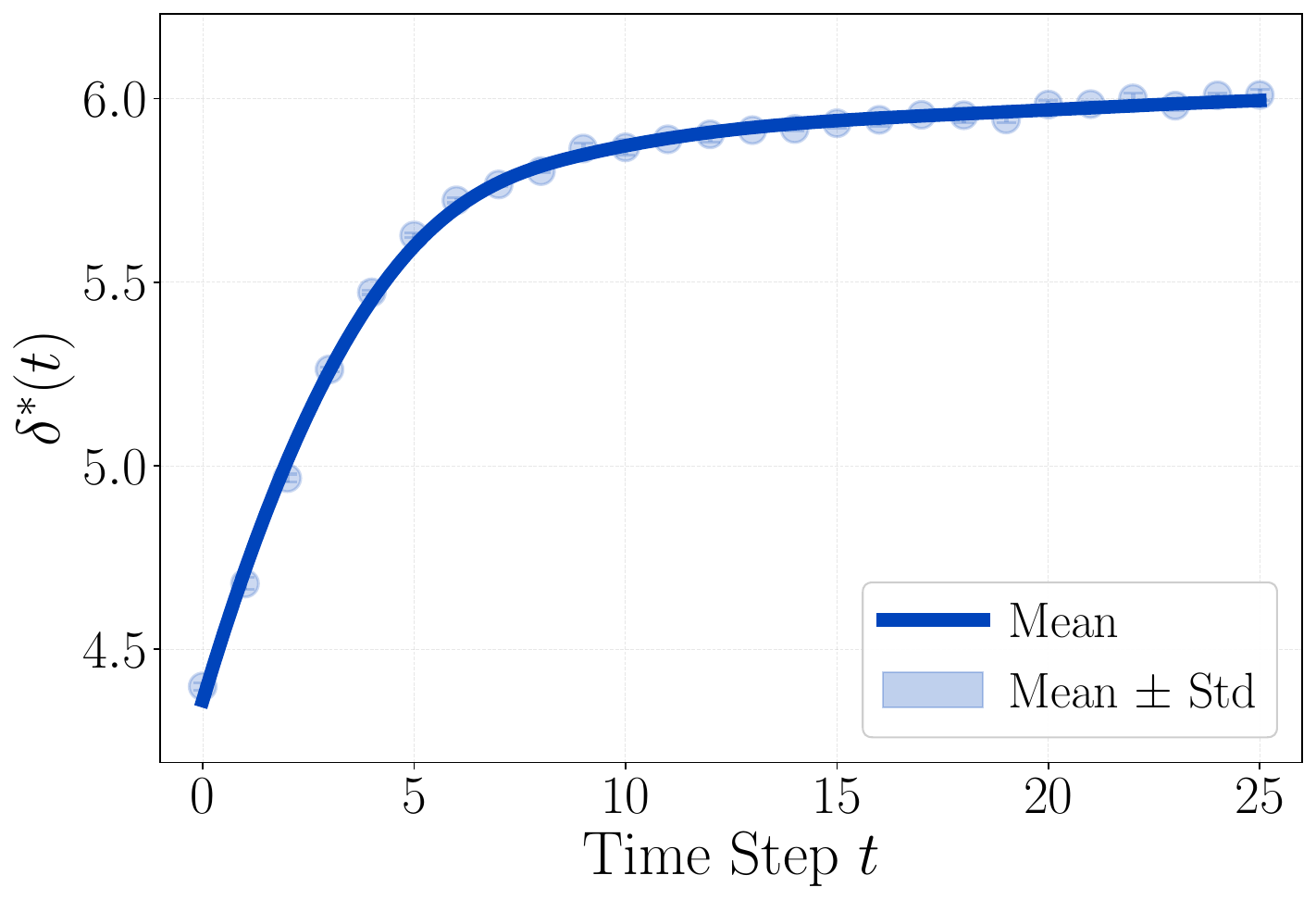}
  \includegraphics[width=0.48\textwidth]{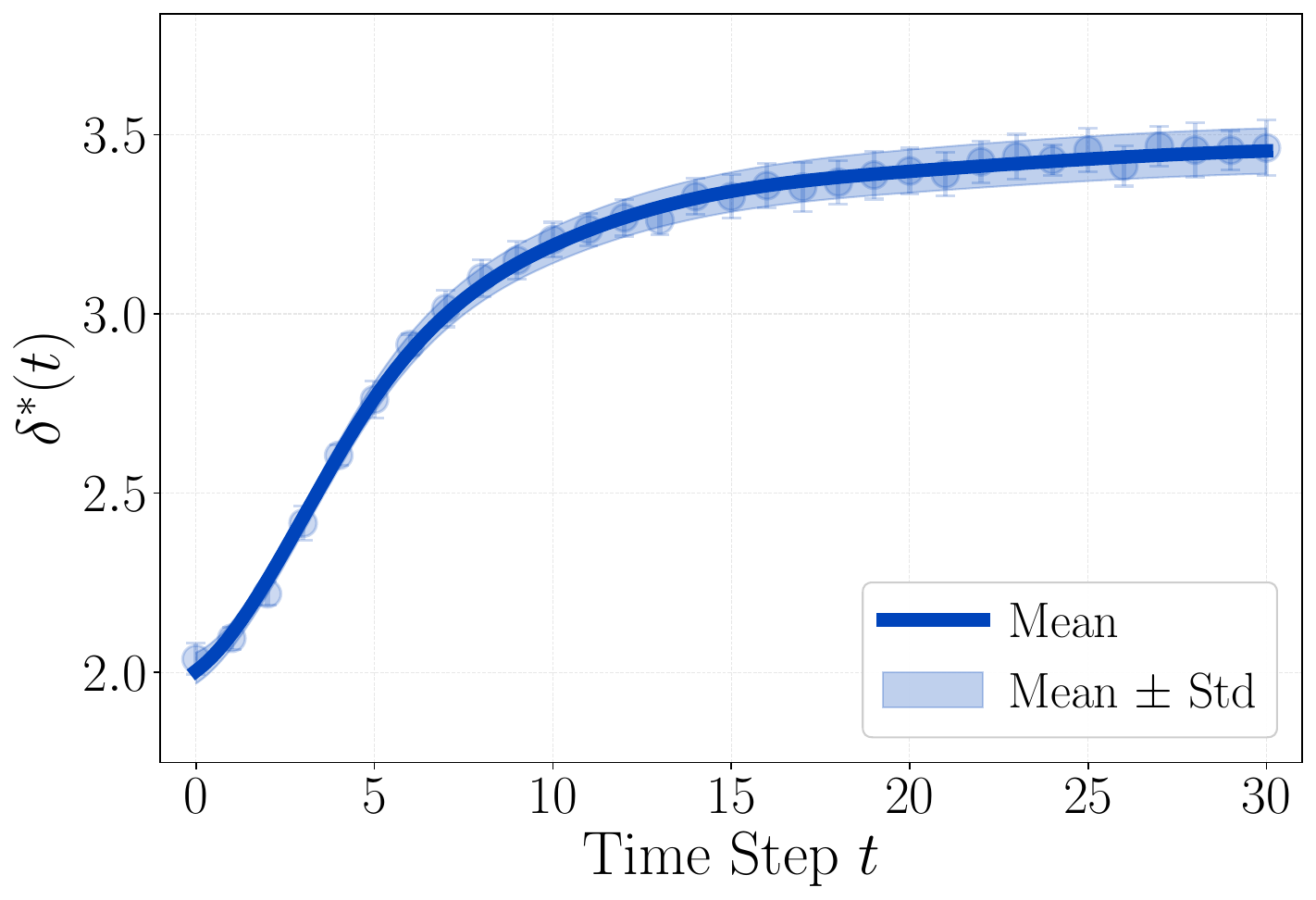}
  \vspace{-0.1cm}
  \caption{Predicted feature learning thresholds $\delta^*(t)$ with $m=1$.
  Left: $\GeLU$, $m=1$. Right: $\Quad$. Circles: mean over $3$ or $20$ seeds
  (respectively for $\GeLU$ 
  and $\mathsf{Quad}$); error bars: standard deviation. Solid line and shaded band: spline interpolations of the mean and mean $\pm$ standard deviation. }
  \label{fig:time_evolution_gelu_new}
\end{figure}

\paragraph{Theoretical predictions.}

We compute the predicted thresholds $\delta^*(t)$ for $t \in \{0, 1, \dots, 25\}$
(for $\GeLU$) or $t \in \{0, 1, \dots, 25\}$
(for $\mathsf{Quad}$) as follows. For each $t$ and each $\delta$, we run the discrete-time DMFT recursion (Section~\ref{sec:MainDMFT}) forward to time $t$, drawing fresh Monte Carlo samples (with sample size $N = 9 \times 10^5$) at each step to estimate the expectations in \eqref{eq:dmft_kernels} and update the covariance and response kernels $(C_\theta, C_\ell, R_\theta, R_\ell)$. We then check whether an outlier eigenvalue emerges to the left of the bulk spectrum by testing if the outlier equation \eqref{equa_main} admits a solution $z < c(t) - 0.01$, where $c(t)$ is the left bulk edge (cf.\ \eqref{eq:edge}); the gap of $0.01$ ensures numerical stability. We use binary search over $\delta$ (tolerance $0.01$) to locate the critical threshold, and average results over 3 independent random seeds. 

The resulting thresholds are plotted in Figure~\ref{fig:time_evolution_gelu_new}. 
For $\GeLU$, the threshold exhibits rapid initial growth from $\delta^*(0) \approx 4.35$ to $\delta^*(10)\approx 5.85$, and then appears to converge to a limiting value. To estimate this limit, we fit a degree-4 polynomial in $1/t$ to the data, obtaining $\delta^*(\infty) \approx 6.05$.

For $\mathsf{Quad}$, the threshold exhibits rapid initial growth from 
$\delta^*(0) \approx 2.0$ to $\delta^*(10) \approx 3.2$, and then appears to converge to a limiting value. To estimate this limit, we fit a degree-4 polynomial in $1/t$ to the data, obtaining $\delta^*(\infty) \approx 3.6$.

We report the predictions for $\delta^*(\infty)$ in 
Figures~\ref{fig:phase_transition_gelu} and~\ref{fig:phase_transition_quadratic} and observe that they closely match the empirical phase transition.

In summary, the empirical results confirm our expectation that,
as $n,d \to \infty$, the threshold $\delta^*(\infty)$ controls the learning of latent directions.
However, the separation between the first phase of training ($t=O(1)$),
which in this case corresponds to pure overfitting, and the second one 
 ($t=\Theta(\log d)$)\footnote{The time $\Theta(\log d)$ is known to be the saddle point escape time; see, e.g., \cite{arous2021onlinestochasticgradientdescent}.}, on which feature learning takes place, is relatively modest for 
 moderate $d$. Since $\delta^*(t)<\delta^*(\infty)$ in this case, it
 is possible that feature learning occurs at smaller $\delta$ 
 for moderate $d$.

\subsection{Training dynamics and grokking}

We already illustrated the grokking phenomenon in Figure 
\ref{fig:grokking_dynamics_first} of the introduction. We discussed its connection 
with the two stages learning dynamics in Section \ref{sec:Grokking}.

Figure~\ref{fig:grokking_multi_delta} illustrates how the learning dynamics change as a function of $\delta$, for a $\GeLU$ neuron with $d=5000$ and $\eta=0.5$ on noiseless phase retrieval. Recall that the predicted feature learning threshold is $\delta_{\sNN}\approx 6.0$. For $\delta=5$ (below threshold), no feature learning takes place: the correlation $\rho(t)$ remains near zero throughout training. At $\delta=10$, slightly above threshold, grokking emerges but requires extended training time ($t\approx 300$, compared to $t\approx 80$ in Fig.~\ref{fig:grokking_dynamics_first}). As $\delta$ increases to $15$, the transition accelerates (occurring around $t\approx 100$), while grokking can still be observed. Finally, at $\delta=50$, learning becomes rapid with negligible generalization gap, and the grokking phenomenon becomes less pronounced. In total, these observations confirm our theoretical predictions: $(i)$~no learning for $\delta<\delta_{\sNN}$; $(ii)$~grokking emerges for $\delta>\delta_{\sNN}$; $(iii)$~grokking time increases as $\delta\downarrow\delta_{\sNN}$; $(iv)$~generalization gap diminishes for $\delta\gg\delta_{\sNN}$.

The same scenario holds with other activation functions as shown in Appendix \ref{sec:additional_simulation}.

\begin{figure}[!t]
  \centering
  \includegraphics[width=1.0\textwidth]{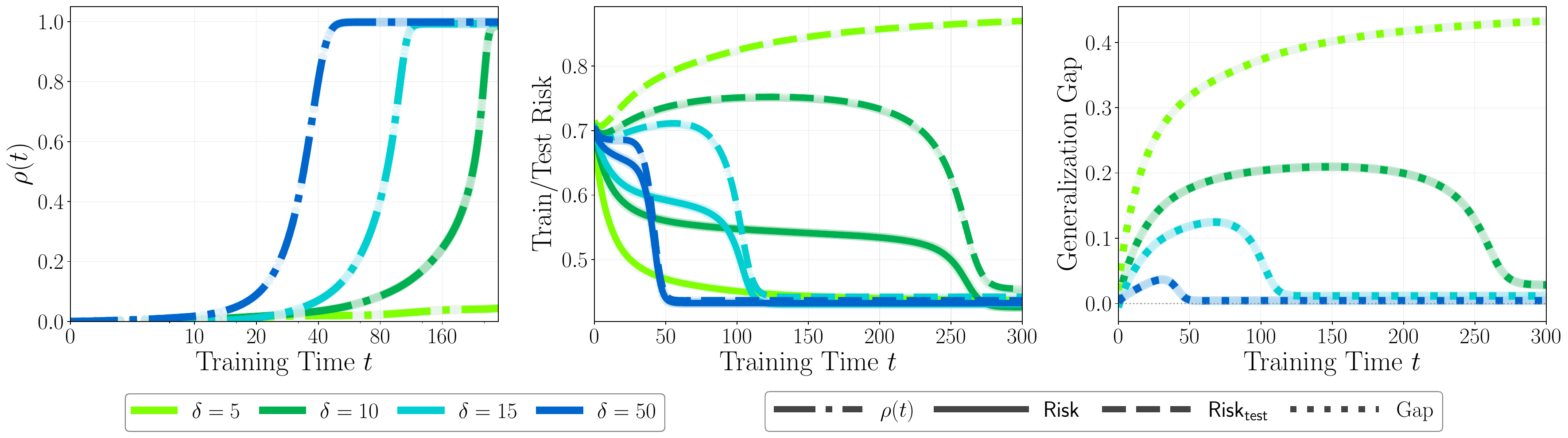}
  \caption{Learning dynamics of a $\GeLU$ neuron for varying sample ratios $\delta = n/d \in \{5, 10, 15, 50\}$, with $d=5000$, $\eta=0.5$, and noiseless phase retrieval target. Left: Correlation $\rho(t)$ between learned and target parameters (log time scale is used for better visualization). Center: Training risk (solid) and test risk (dashed). Right: Generalization gap $\mathsf{Risk}_{\mathsf{test}} - \mathsf{Risk}$. Grokking is observed when $\delta$ is moderately above the threshold: training risk first drops while test risk remains high before eventually decreasing. The generalization gap peaks during the overfitting phase and vanishes once feature learning completes.}
  \label{fig:grokking_multi_delta}
\end{figure}

\section{Outline of the proofs}\label{sec:proof_sketch}

This section outlines the proofs of  Lemma~\ref{lemma:bulk} and Theorem~\ref{theorem:main_result}.

We organize the arguments in three steps: Step 1: Gaussian conditioning; 
Step 2: Detecting outlier eigenvalues via resolvent expansion and concentration; Step 3: Verifying that eigenvectors corresponding to specific outliers align with hard directions. Full proofs are given in Appendices~\ref{sec_Proofs} and~\ref{sec:proof_result_iii}.

\subsection{Step 1: Gaussian conditioning}

The starting point is the $j$-th Hessian diagonal block of Eq.~\eqref{eq:hessian_block}, which we rewrite:
\begin{equation}\label{eq:outline_Hj_def}
\bH_j(t) = \frac{1}{n}\sum_{i=1}^ng(\bTheta(t)^\sT\bx_i,y_i;j)
 \bx_i \bx_i^\sT
= \bX^\sT \bG_j(t) \bX,
\end{equation}
where $\bG_j(t) = (1/n)\operatorname{diag}\big(g(\bTheta(t)^\sT\bx_1,y_1;j),\dots,g(\bTheta(t)^\sT\bx_n,y_n;j)\big)$ and we recall that
\begin{alignat}{2}
g(z,y;j) &:= \ell'\!\left(y,  \frac{1}{m}\sum_{i=1}^m a_i \sigma(z_i+b_i)\right)\sigma''\!\big(z_j\big) &\qquad& \text{for $m>1$,}\label{eq:Gdef_1}\\
g(z,y) &:= \ell''\!\big(y,  \sigma(z)\big)\big(\sigma'(z)\big)^2+\ell'\!\big(y,\sigma(z)\big)\sigma''(z) && \text{for $m=1$.}
\label{eq:Gdef_2}
\end{alignat}

\paragraph{Isolating the dependence on the GD trajectory.}

Although $\bH_j(t)$ depends on $\bX$ in a complicated nonlinear way (through  $\{\bTheta(t)^\sT\bx_i\}_{i\le n}$ and 
$\{y_i\}_{i\le n}$, where $\bTheta(t)$ itself depends on $\bX,\by$ via the GD update rule),
conditioning on a certain GD trajectory  is equivalent to condition on a certain number of 
\emph{linear} functions of $\bX$. Since $\bX$ is Gaussian, its conditional distribution 
given a set of linear observations can be written explicitly. In the present 
case $\bX$ decomposes into the sum of a low-rank component, which is measurable on the GD trajectory,
and a high-rank component which is (aside from a projection step) independent from the GD trajectory.
This decomposition is the foundation for both outlier detection (Step~2) and eigenvector analysis (Step~3).

\paragraph{Gaussian conditioning decomposition.} 

Define the parameter and gradient matrices
\begin{equation}\label{eq:outline_WQ_def}
\begin{aligned}
\bTheta(*;t)
&:= \big[\bTheta_*, \bTheta(0), \dots, \bTheta(t)\big] 
\in \reals^{d\times (k+m(t+1))},\\
\bF(0;t) 
&:= \frac{1}{\sqrt{n}}\big[\bF(\bX \bTheta(0), \bX \bTheta_*, \boldsymbol{\varepsilon}), \dots, \bF(\bX \bTheta(t-1), \bX \bTheta_*, \boldsymbol{\varepsilon})\big]
\in \reals^{n\times mt}\, .
\end{aligned}
\end{equation}
By the gradient descent update rule, we have $\bX^\sT \bF(0;t) \in \operatorname{span}(\bTheta(*;t))$. 
Let $\cF_t$ be the $\sigma$-algebra generated $\bTheta(*;t)$, $\bF(0;t)$.
Conditional on $\cF_t$, we have  the distributional equivalence\footnote{For two random variables $\bX$, $\bX'$
we write $\bX\big|_{\cF_t}\ed \bX'$ if $\E[\varphi(\bX) Z]= \E[\varphi(\bX') Z]$ for any bounded measurable function $\varphi$
and any bounded random variable $Z$ measurable on $\cF_t$.}
\begin{equation}\label{eq:outline_gauss_decomp}
\bX\big|_{\cF_t}
\ed
\bP_{\bF(0;t)}^{\perp} \bX_{\mathrm{new}} \bP_{\bTheta(*;t)}^{\perp}
+ \bX \bP_{\bTheta(*;t)},
\end{equation}
where $\bX_{\mathrm{new}} \in \reals^{n\times d}$ has i.i.d.\ $\normal(0,1)$ entries independent of all other randomness, and $\bP_{\bM}$ denotes orthogonal projection onto $\operatorname{span}(\bM)$. One of the crucial observations is that $\bP_{\bF(0;t)}\bX\bP_{\bTheta(*;t)}^{\perp} = \boldsymbol{0}$ due to the gradient descent structure. (Details in Appendix~\ref{sec_Proofs}, Lemma~\ref{lem:gauss_cond}.)

Substituting~\eqref{eq:outline_gauss_decomp} into~\eqref{eq:outline_Hj_def} and expanding yields
(using the shorthands $\bY(t) = \bP_{\bF(0;t)}^{\perp}\bX_{\text{new}}\bP_{\bTheta(*;t)}^{\perp}$, and $\bP_t:=\bP_{\bTheta(*;t)}$)
\begin{align}\nonumber
\bH_j(t)
 &\ed
\bY(t)^\sT \bG_j(t) \bY(t)+\bY(t)^\sT \bG_j(t) \bX \bP_{t}+ \bP_{t}\bX^\sT \bG_j(t) \bY(t)+
\bP_{t}\bX^\sT \bG_j(t)  \bX \bP_{t}\\
&= \bX_{\text{new}}^\sT \bG_j(t) \bX_{\text{new}} + \text{low-rank component},\label{eq:outline_Hj_decomp}
\end{align}
where the low-rank component has rank dependent uniquely on the number of GD steps $t$, the number of neurons $m$ and the number of target directions $k$, but not on $n,d$.

\paragraph{Bulk spectrum.} By Cauchy interlacing, the low-rank perturbation in~\eqref{eq:outline_Hj_decomp} do not affect the limiting spectral distribution. Since $\bX_{\text{new}}$ has i.i.d.\ Gaussian entries independent of $\bG_j(t)$, the asymptotic
empirical spectral distribution of $\bX_{\text{new}}^\sT \bG_j(t) \bX_{\text{new}}$ follows a general Marchenko--Pastur law. This asymptotic distribution depends on the aspect ratio $\delta =\lim_{n,d\to\infty} n/d$ and the asymptotic distribution of $\bG_j(t)$ (given by discrete-time DMFT in Lemma~\ref{lemma_amp_data}). This establishes Lemma~\ref{lemma:bulk}.

We refer to Appendix \ref{sec:appendix_bulk} for details omitted in this outline.

\subsection{Step 2: Outlier matrix and concentration to deterministic limit}

We now locate outlier eigenvalues, namely eigenvalues that do not converge to the support of
the asymptotic empirical spectral distribution $\mu^j_{\infty}(t)$ (which, as we saw, follows a general Marchenko-Pastur law)
that from the bulk. Applying the Woodbury matrix identity to decomposition~\eqref{eq:outline_Hj_decomp}, we express the resolvent as
\begin{equation}\label{eq:outline_resolvent}
\begin{aligned}
(\bH_j(t) - z \bI)^{-1}
&= \bR_0(z)
- \bR_0(z)
\begin{bmatrix} \bTheta(*;t) & \bY(t)^\sT \bG_j(t)\bZ(t) \end{bmatrix}
 \bM_j(z;t)^{-1}\\
&\quad\quad\quad\quad\times
\begin{bmatrix} \bTheta(*;t)^\sT \\ \bZ(t)^\sT \bG_j(t)\bY(t) \end{bmatrix}
\bR_0(z),
\end{aligned}
\end{equation}
where $\bR_0(z) := \big(\bY(t)^\sT \bG_j(t)\bY(t) - z\bI\big)^{-1}$ is the bulk resolvent, $\bZ(t) := \bX \bTheta(*;t)\big(\bTheta(*;t)^\sT \bTheta(*;t)\big)^{-1}$, and the matrix
$\bM_j(z;t)\in\mathbb{R}^{2(k+m(t+1))\times 2(k+m(t+1))}$ has the explicit form
\begin{equation}\label{eq:outline_outlier_matrix}
\bM_j(z;t)
:= \begin{bmatrix}
\bZ(t)^\sT \bG_j(t)\bZ(t) & \bI \\
\bI & \boldsymbol{0}
\end{bmatrix}^{-1}
+ \begin{bmatrix}
\bTheta(*;t)^\sT \\
\bZ(t)^\sT \bG_j(t)\bY(t)
\end{bmatrix}
\bR_0(z)
\begin{bmatrix}
\bTheta(*;t) & \bY(t)^\sT \bG_j(t)\bZ(t)
\end{bmatrix}.
\end{equation}
We refer to $\bM_j(z;t)$ (and its $n,d\to\infty$ limit defined below) as to the \emph{`outlier matrix.'}

Since all eigenvalues of $\bY(t)^\sT \bG_j(t)\bY(t)$ converge to the support of the general Marchenko-Pastur law, 
it follows from~\eqref{eq:outline_resolvent} that $z$ is an outlier eigenvalue of $\bH_j(t)$ 
only if $\det(\bM_j(z;t)) = 0$. Crucially, $\bM_j(z;t)$ is low-dimensional.
Namely, it has dimension $2(k+m+mt) \times 2(k+m+mt)$, independent of the ambient dimensions $n,d$. 
As a consequence, it concentrates (for $z$ in a suitable domain) around a deterministic matrix, 
and it is therefore sufficient to study the latter.  

\paragraph{Deterministic limit of the outlier matrix.} We next describe the deterministic limit of
$\bM_j(z;t)$. Without loss of generality, we consider the case where the hard directions are the first $r$ target directions.

\begin{lemma}[Restatement of Lemma~\ref{lemma:limiting_spike_matrix}]\label{lemma:outline_outlier_limit}
For $z<\min(0,c_j(t))$, under the assumptions of Theorem \ref{theorem:main_result}, the outlier matrix~\eqref{eq:outline_outlier_matrix} converges in probability to a deterministic limit
\begin{equation}\label{eq:outline_Minfty}
\plim_{n,d\to\infty}\bM_j(z;t) = M_j^\infty(z;t)
= \begin{bmatrix}
-\frac{1}{z} \E[O_t O_t^\sT] & I \\[4pt]
I & \left(\E[O_t O_t^\sT]\right)^{-1} S_j(z;t) \left(\mathbb{E}[O_t O_t^\sT]\right)^{-1}
\end{bmatrix},
\end{equation}
where $O_t^\sT := [\Theta_*^\sT, \Theta(0)^\sT, \dots, \Theta(t)^\sT]$ collects the state-evolution variables. 

Furthermore, denoting by $V_{*\leq}$ the first $r$ coordinates of $V_*$, the matrix $S_j(z;t)\in\mathbb{R}^{(k+m+mt)\times(k+m+mt)}$ is \emph{block-diagonal}:
\begin{equation}\label{eq:outline_S_block}
S_j(z;t)
= \begin{bmatrix}
- \E\!\left[\frac{\delta G_t^j}{\delta + G_t^j \alpha_t^j(z)}V_{*\leq}V_{*\leq}^\sT\right] & 0 \\[6pt]
0 & S_{j,\mathrm{R}}(z;t)
\end{bmatrix}.
\end{equation}
\end{lemma}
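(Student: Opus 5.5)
The plan is to compute the limit of each block of $\bM_j(z;t)$ separately, combining the Gaussian conditioning decomposition \eqref{eq:outline_gauss_decomp}, the DMFT limits of Lemmas~\ref{lemma_amp_data}--\ref{lemma_amp_feature}, and deterministic-equivalent results for resolvents of $\bY(t)^\sT\bG_j(t)\bY(t)$.

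\textbf{The first summand of $\bM_j(z;t)$.} Write $\bZ(t)^\sT\bG_j(t)\bZ(t)=(\bTheta^\sT\bTheta)^{-1}\bTheta^\sT\bX^\sT\bG_j\bX\bTheta(\bTheta^\sT\bTheta)^{-1}$, with $\bTheta=\bTheta(*;t)$. Lemma~\ref{lemma_amp_feature} gives $\bTheta^\sT\bTheta\to\E[O_tO_t^\sT]$. Lemma~\ref{lemma_amp_data}, applied with test function $g\cdot(\text{products of } V\text{'s})$ after a truncation argument, gives $\bTheta^\sT\bX^\sT\bG_j\bX\bTheta\to\E[G_t^j\,\mathcal V\mathcal V^\sT]$, where $\mathcal V$ collects $(V_*,V(0),\dots,V(t))$. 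Assumption~\ref{assumption:reg_1} makes the inverses continuous. The $2\times2$ block inverse is then $\begin{bmatrix}0&I\\ I&-A\end{bmatrix}$ with $A$ the limit of $\bZ^\sT\bG_j\bZ$.

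\textbf{The resolvent blocks.} Let $\bY=\bP^\perp_{\bF}\bX_{\rm new}\bP^\perp_t$ and $\bR_0=\bR_0(z)$. Since $\bP_t\bY^\sT=0$, we have $\bR_0\bTheta=-z^{-1}\bTheta$ exactly. This yields $\bTheta^\sT\bR_0\bTheta\to -z^{-1}\E[O_tO_t^\sT]$ and $\bTheta^\sT\bR_0\bY^\sT\bG_j\bZ=0$ exactly, so the off-diagonal identity blocks survive. For the last block, $\bZ^\sT\bG_j\bY\bR_0\bY^\sT\bG_j\bZ$, condition on $\cF_t$. Then $\bX_{\rm new}$ is independent of $\bG_j,\bZ$, and the projections have rank $O(1)$, so they can be removed at $o(1)$ cost for $z<\min(0,c_j(t))$, where the resolvent is bounded. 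The identity
\[
\bG\bX_{\rm new}(\bX_{\rm new}^\sT\bG\bX_{\rm new}-z)^{-1}\bX_{\rm new}^\sT\bG=\bG-\bG^{1/2}\big(I+\ldots\big)^{-1}\bG^{1/2}
\]
together with the standard anisotropic local law for sample-covariance resolvents gives the deterministic equivalent $\mathrm{diag}\big(g_i-\tfrac{\delta g_i}{\delta+g_i\alpha_t^j(z)}\big)/n$ for $\bG\bX_{\rm new}\bR\bX_{\rm new}^\sT\bG$. Here $g_i$ ranges over the diagonal $\cF_t$-measurable weights. The quadratic form in $\bZ$ then converges to $A-\E\big[\tfrac{\delta G}{\delta+G\alpha}\mathcal V\mathcal V^\sT\big]$ premultiplied and postmultiplied by $\E[OO^\sT]^{-1}$. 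Adding $-A$ cancels the $A$ term and leaves the stated form $\E[OO^\sT]^{-1}S_j\E[OO^\sT]^{-1}$ with $S_j=-\E\big[\tfrac{\delta G}{\delta+G\alpha}\mathcal V\mathcal V^\sT\big]$, up to bookkeeping of the $\mathcal V$ versus $O$ coordinates.

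\textbf{Block-diagonality and the main obstacle.} Block-diagonality of $S_j$ comes from \eqref{eq:hard_orthogonality}. For $q\le r$, the cross term $\E\big[\tfrac{\delta G}{\delta+G\alpha}V_{*q}\,\cdot\big]$ involves $V(s)$, $P^\perp V_*$ and $y$ multiplying a hard coordinate, and $G_t^j$ is a function of $(V(t),y)$. This is exactly of the form $\psi(\cdots)P_{U_{\rm H}}V_*$, so it vanishes (after rotating so that the hard directions are the first $r$). The main obstacle is the rigorous deterministic equivalent for the $\bG\bY\bR_0\bY^\sT\bG$ block. The weights $g_i$ are $\cF_t$-measurable but correlated with $\bZ$, so I would condition on $\cF_t$ and apply the local law for $\bX_{\rm new}$ with fixed diagonal $\bG$ and fixed test vectors $\bG^{1/2}\bZ$. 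I would also need uniform control near the bulk edge, for which I would use $z<\min(0,c_j(t))$ to keep $\delta+g\alpha$ bounded away from $0$.
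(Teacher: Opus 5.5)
Your treatment of the first summand, of the $(1,1)$ block via the exact identity $\bR_0(z)\bTheta(*;t)=-z^{-1}\bTheta(*;t)$, and of the vanishing off-diagonal blocks matches the paper. Your hard-block algebra $g-\frac{\delta g}{\delta+g\alpha}=\frac{\alpha g^2}{\delta+g\alpha}$ also reproduces the paper's leading term.

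The gap is the claim that in $\bZ^\sT\bG\bY\bR_0\bY^\sT\bG\bZ$ the projections ``have rank $O(1)$, so they can be removed at $o(1)$ cost.'' This is fine for $\bP^\perp_{\bTheta(*;t)}$, which only replaces $d$ by $d_\star$. It fails for $\bP^\perp_{\bF(0;t)}$, because a rank-$O(1)$ change is negligible for normalized traces but not for bilinear forms whose test vectors overlap with its range. Here the test vectors are the columns of $\bG\bZ$. They have norm $O(n^{-1/2})$ and sandwich an operator whose deterministic equivalent has size $d_\star\alpha(z)\asymp n$. They are also correlated with the gradients: for each $s<t$,
\[
\frac{1}{n}\sum_{i=1}^n F(\bTheta(s)^\sT\bx_i,\bTheta_*^\sT\bx_i,\varepsilon_i)\,g_i\,\bx_i^\sT\bTheta(*;t)\xrightarrow{~p~}\E\big[F(V(s),V_*,\varepsilon)\,G_t^j\,V_{*:t}^\sT\big],
\]
and the right-hand side is generically nonzero in the non-hard coordinates. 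Hence $\bP_{\bF(0;t)}\bG\bZ$ is of the same order as $\bG\bZ$, and dropping the projection changes the block by $\Theta(1)$. Your conclusion $S_j=-\E\big[\tfrac{\delta G}{\delta+G\alpha}\mathcal V\mathcal V^\sT\big]$ is therefore wrong in the rest--rest block, where the projection corrections contribute non-trivially. Since you read block-diagonality off that formula, your argument establishes neither the vanishing of the hard--rest blocks nor the convergence of the rest--rest block.

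The missing step is to keep $\bP^\perp_{\bF(0;t)}$ and use hard orthogonality a second time, on the projection corrections. The paper does this as follows.
\begin{itemize}
\item By rotation invariance, the deterministic equivalent is a non-diagonal matrix function of $\bP^\perp_{\bF(0;t)}\bG\bP^\perp_{\bF(0;t)}$.
\item With $d_\star\approx n/\delta$, this function is a multiple of $(s_z\bI-\bP^\perp_{\bF(0;t)}\widetilde\bG\bP^\perp_{\bF(0;t)})^{-1}$, where $s_z=-\delta/\alpha(z)$ and $\widetilde\bG=n\bG$.
\item A Woodbury expansion around the diagonal matrix $s_z\bI-\widetilde\bG$ yields your diagonal term plus finitely many corrections, each containing $\bP_{\bF(0;t)}$ or $[\widetilde\bG\bF(0;t),\bF(0;t)]$.
\item In any entry with a hard index $p\le r$, each correction carries a scalar factor $\frac1n\sum_i\btheta_{*p}^\sT\bx_i\,\rho_i$. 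Here $\rho_i$ depends on sample $i$ only through $y_i$ and the non-hard coordinates of $\bTheta(*;t)^\sT\bx_i$, so by \eqref{eq:hard_orthogonality} the factor tends to $\E[\rho\,V_{*p}]=0$.
\end{itemize}
This gives the hard block and the zero hard--rest blocks at once. The rest--rest block converges to some deterministic $S_{j,\mathrm R}(z;t)$ by the DMFT lemmas, which is all the statement asserts.

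A smaller point: $g=\ell'\sigma''$ takes both signs, which is where negative outliers come from. So neither $\bG^{1/2}$ nor the anisotropic local law for positive weights is available off the shelf. The paper instead proves the needed bilinear-form concentration directly, using leave-one-out, Hanson--Wright, an $i\eta_n$ regularization and companion-resolvent bounds on the denominators.
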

Lemma~\ref{lemma:outline_outlier_limit} reduces the problem of determining the 
outlier eigenvalues of the high-dimensional random matrix $\bH_j(t)\in\reals^{d\times d}$
to analyzing the deterministic matrix $M_j^\infty(z;t)$, whose dimension is independent of $n,d$. 
The proof proceeds in two stages. \emph{First,} we condition on the GD trajectory,
i.e. on the $\sigma$-algebra $\cF_t$. Using the decomposition \eqref{eq:outline_gauss_decomp}, 
we can think as $(\bX,\beps)$ fixed and treat the independent Gaussian matrix $\bX_{\mathrm{new}}$ from~\eqref{eq:outline_gauss_decomp} as the 
only source of randomness. We apply leave-one-out expansions and Hanson--Wright inequalities to show that 
each entry concentrates to a conditional limit involving the Stieltjes transform of the bulk 
distribution. \emph{Second,} we integrate over $(\bX,\boldsymbol\eps)$.
Namely, we use discrete-time DMFT  (Lemmas~\ref{lemma_amp_data} and~\ref{lemma_amp_feature}), replacing to show that quantities measurable over $\cF_t$ arising from the first step concentrate
around expectations over the state-evolution variables $(V_*,V(0),\dots,V(t),\varepsilon)$ and $(\Theta_*,\Theta(0),\dots,\Theta(t))$. (Full proof in Appendix~\ref{sec:proof_result_iii}, 
Sections~\ref{subsec:concentration}, \ref{subsec:convergence}.) 

\paragraph{Block-diagonal structure and the hard block equation.} The block-diagonal form~\eqref{eq:outline_S_block} turns out to be a consequence of the fact that $\Uhard$ is
the (maximal) hard subspace property and hence, by Lemma~\ref{lemma_amp_data}, hard 
directions $V_{*\leq}$ are uncorrelated with all quantities observable from training. This decouples the hard block (upper-left $r\times r$) from the rest. 
Lemma~\ref{lemma_amp_feature} establishes that $\E[O_tO_t^\sT]$ as a similar block-diagonal
structure. 

Using this block-diagonal structure, and applying the Schur complement formula to~\eqref{eq:outline_Minfty}, 
the equation $\det(M_j^\infty(z;t))=0$ factors as
\begin{equation}\label{eq:outline_det_factor}
\det\big(M_{j,\mathrm{H}}^\infty(z;t)\big)\cdot
\det\big(M_{j,\mathrm{R}}^\infty(z;t)\big)=0,
\end{equation}
for suitably defined matrices $M_{j,\mathrm{R}}^\infty(z;t)$, $M_{j,\mathrm{H}}^\infty(z;t)$. 
In particular, the equation $\det\big(M_{j,\mathrm{H}}^\infty(z;t)\big)$
reads
\begin{equation}\label{eq:outline_hard_eq}
\det\!\left(-z I_r + \mathbb{E}\!\left[\frac{\delta G_t^j}{\delta + G_t^j \alpha_t^j(z)}\, V_{*\leq}V_{*\leq}^\sT\right]\right) = 0.
\end{equation}
We will refer to this as to the `hard block equation'.
Note that this is precisely the  outlier equation~\eqref{equa_main} in Theorem~\ref{theorem:main_result}. 

Finally notice that outliers can converge not only to 
solutions of the hard block equation but also to
other solutions of Eq.~\eqref{eq:outline_det_factor},
namely to solutions of $\det\big(M_{j,\mathrm{R}}^\infty(z;t)\big)=0$.
However, in Step~3 we show that only solutions to~\eqref{eq:outline_hard_eq} yield eigenvectors aligned with hard directions. 

Finally, we use Rouch\'e's theorem  from complex analysis (Appendix~\ref{section:eigenvalues}) to show that the neighborhood
of solutions of $\det(M_j^\infty(z;t))$ must contain actual 
eigenvalues of the Hessian $\bH_j(t)$ (with matching multiplicity) converging to those solutions.

\subsection{Step 3: Eigenvector alignment via residue calculation}

Finally, we determine which outlier eigenvectors align with the  hard subspace $\bTheta_{*\mathrm{H}} = [\btheta_1^*,\dots,\btheta_r^*]$. For simplicity,
in this outline we assume that the zeros of $\det(M_j^\infty(z;t))=0$ are simple (multiplicity one). The general case follows analogously.

Let $z_*$ be a simple zero of $\det(M_j^\infty(z;t))$. By Rouch\'e's theorem (Appendix~\ref{section:eigenvalues}), there exists a unique eigenvalue $z_n$ of $\bH_j(t)$ satisfying $|z_n-z_*|\xrightarrow{~p~} 0$ with corresponding unit eigenvector $\boldsymbol{\xi}_n$. To access the eigenvector, we use the spectral projector
\begin{equation}\label{eq:outline_correlation}
\bP_*^{n,d} := -\frac{1}{2\pi i}\oint_\gamma (\bH_j(t)-z\bI)^{-1}\, \de z= \boldsymbol{\xi}_n\boldsymbol{\xi}_n^\sT,
\end{equation}
where $\gamma$ is a contour around $z_*$ enclosing 
no other zero of $\det(M_j^\infty(z;t))$ and no part of the support of the
general Marchenko-Pastur law $\mu_j^{\infty}(t)$. The squared correlation with hard directions is
\begin{equation}\label{eq:outline_correlation_trace}
\|\bTheta_{*\mathrm{H}}^{\sT}\boldsymbol{\xi}_n\|^2
= \operatorname{Tr}\!\left(\bTheta_{*\mathrm{H}}^{\sT}\boldsymbol{\xi}_n\boldsymbol{\xi}_n^\sT\bTheta_{*\mathrm{H}}\right)
= \operatorname{Tr}\!\left(\bTheta_{*\mathrm{H}}^{\sT}\bP_*^{n,d}\bTheta_{*\mathrm{H}}\right).
\end{equation}
The contour integral formulation converts the eigenvector alignment question into a residue calculation that can be analyzed using the limiting outlier matrix.

\paragraph{Residue formula and block-diagonal structure.} Substituting the resolvent expansion~\eqref{eq:outline_resolvent} into~\eqref{eq:outline_correlation}, we observe that the bulk resolvent $\bR_0(z)$ is analytic inside $\gamma$ and contributes no residue. The only pole arises from $\bM_j(z;t)^{-1}$, reducing the contour integral to
\begin{equation}\label{eq:outline_residue_integral}
\|\bTheta_{*\mathrm{H}}^{\sT}\boldsymbol{\xi}_n\|^2 = \frac{1}{2\pi i}\oint_\gamma \operatorname{Tr}\!\left(\bTheta_{*\mathrm{H}}^{\sT}\bR_0(z)
\begin{bmatrix}\bTheta(*;t) & \bY^\sT\bG\bZ\end{bmatrix}
\bM_j(z;t)^{-1}
\begin{bmatrix}\bTheta(*;t)^\sT \\ \bZ^\sT\bG\bY\end{bmatrix}
\bR_0(z)\bTheta_{*\mathrm{H}}\right)\de z.
\end{equation}
By Lemma~\ref{lemma:outline_outlier_limit}, we replace $\bM_j(z;t)^{-1}$ by its limit $M_j^\infty(z;t)^{-1}$. 

The matrix $\bTheta_{*\mathrm{H}}^{\sT}\bR_0(z)[\bTheta(*;t), \bY^\sT\bG\bZ]$ 
has constant dimension and concentrates around a deterministic limit 
as $n,d\to\infty$. First, by using $\bY\bTheta(*;t)= 0$  and Lemma~\ref{lemma_amp_feature},
we get
\begin{equation}\label{eq:outline_R0_theta}
\bTheta_{*\mathrm{H}}^{\sT} \bR_0(z) \bTheta(*;t) = -\frac{1}{z}\bTheta_{*\mathrm{H}}^{\sT}\bTheta(*;t) \xrightarrow{~p~} -\frac{1}{z}\Big[\; I_r  \; \Big\vert \; 0\; \Big] \in \reals^{r\times (k+m+mt)}.
\end{equation}
Second, using again $\bY\bTheta(*;t)=\bzero$, we get
$
\bTheta_{*\mathrm{H}}^{\sT} \bR_0(z) \bY^\sT\bG\bZ = 0$.
Summarizing, we proved
\begin{equation}\label{eq:outline_R0_structure}
\bTheta_{*\mathrm{H}}^{\sT}\bR_0(z)[\bTheta(*;t), \bY^\sT\bG\bZ]
\xrightarrow{~p~}
A_\infty(z)^{\sT}:= -\frac{1}{z}\Big[\; I_r  \; \Big\vert \; 0\; \Big]
\in \mathbb{R}^{r\times 2(k+m+mt) }.
\end{equation}

Consequently, using Lemma \ref{lemma:outline_outlier_limit},  the integrand in Eq.~\eqref{eq:outline_residue_integral} converges to
\begin{equation}\label{eq:outline_F_infty}
F_\infty(z) := A_\infty(z)^\sT M_j^\infty(z;t)^{-1} A_\infty(z).
\end{equation}
The only pole of $F_{\infty}(x)$ inside $\gamma$ arises because of the factor $M_j^\infty(z;t)^{-1}$ at
$z = z_*$. Let
\begin{equation}\label{eq:outline_kernel_vector}
v = \begin{bmatrix} v_{\leq} \\ v_{>} \end{bmatrix} \in \ker(M_j^\infty(z_*;t)),
\quad\text{where }v_{\leq} \in \mathbb{R}^{k+m+mt}\text{ is the upper block.}
\end{equation}
Since $z_*$ is a simple zero of $\det(M_j^\infty(z;t))$, the inverse $M_j^\infty(z;t)^{-1}$ has a simple pole at $z_*$ with residue
\begin{equation}\label{eq:outline_pole_residue}
\frac{v v^\sT}{v^\sT \partial_z M_j^{\infty}(z_*;t) v}.
\end{equation}
By the residue theorem, the contour integral equals
\begin{equation}\label{eq:outline_contour_equals}
\frac{1}{2\pi i}\oint_{\gamma}F_\infty(z)\,\de z
= \operatorname{Tr}\left(A_\infty(z_*)^\sT\,\frac{v v^\sT}{v^\sT\partial_z M_j^{\infty}(z_*;t) v}\,A_\infty(z_*)\right)
= 
 \frac{1}{(z_*)^2}\cdot\frac{\|v_{\mathrm{H}}\|^2}{v^\sT \partial_z M_j^{\infty}(z_*;t) v}.
\end{equation}
where $v_{\mathrm{H}}$ is the vector comprising the first $r$ coordinates of $v_{\le}$.

The derivative $\partial_zM_j^{\infty}(z_*;t)$ inherits a block structure from~\eqref{eq:outline_Minfty}, and direct calculation yields
\begin{equation}\label{eq:outline_derivative_identity}
v^\sT \partial_z M_j^{\infty}(z_*;t) v = \frac{1}{(z_*)^2} v_{\leq}^\sT\left(\mathbb{E}[O_tO_t^\sT] + \partial_zS(z_*;t)\right) v_{\leq}.
\end{equation}
Substituting into~\eqref{eq:outline_contour_equals}, the $(z_*)^2$ factors cancel, yielding the following compact formula for eigenvector correlation:
\begin{equation}\label{eq:outline_final_formula} 
\plim_{n,d\to\infty}\|\bTheta_{*\mathrm{H}}^{\sT}\boldsymbol{\xi}_n\|^2  =\frac{\|v_{\mathrm{H}}\|^2}{v_{\leq}^\sT\left(\mathbb{E}[O_tO_t^\sT] + \partial_zS(z_*;t)\right)v_{\leq}}.
\end{equation}

By the factorization formula~\eqref{eq:outline_det_factor},
$z_*$ can be either a zero of $\det(M_{j,\mathrm{R}}^\infty(z;t))$
or a zero of $\det(M_{j,\mathrm{H}}^\infty(z;t))$, 
In the first case, we will show that $v_{\mathrm{H}} = 0$, 
whence the expression in Eq.~\eqref{eq:outline_final_formula} vanishes: such eigenvectors are asymptotically
orthogonal to $\bTheta_{*\mathrm{H}}$. 

On the other hand, if $z_*$ is a zero of $\det(M_{j,\mathrm{H}}^\infty(z;t))$, 
 then $v_{\leq}^\sT=\begin{bmatrix}v_{\mathrm{H}}^\sT & 0^\sT\end{bmatrix}$, and the block-diagonal structures of $\mathbb{E}[O_tO_t^\sT]$ and $S'(z_*;t)$ simplify~\eqref{eq:outline_final_formula}  to an expression involving only the hard block, yielding strictly positive correlation. This completes the eigenvector alignment analysis (full details in Appendix~\ref{section:eigenvectors}).

 \section{Discussions and conclusions}\label{sec:discussion}

 We studied feature learning in $k$-index models.
 For each $t\ge 0$, we determine a  threshold $\delta_{\sNN}^*(t)$ in the number of samples per dimension, at which the spectrum of the Hessian of the empirical risk exhibits a qualitative change. For $n/d$ above $\delta_{\sNN}^*(t)$  the Hessian  after $t$ GD steps develops negative eigendirections which are correlated with `hard' direction  in the latent space. We expect the limit
 $\delta_{\sNN}^*(\infty) = \lim_{t\to\infty}\delta_{\sNN}^*(t)$ 
to be the threshold for feature learning under GD. 

Within the conjectured scenario, feature learning under GD is essentially equivalent to feature learning
in a spectral algorithm that computes the low-lying eigenvectors of a matrix of the form $\bX^{\sT}\bD\bX$.
The diagonal matrix $\bD$ depends on the response variables $y_i$ through a preprocessing 
function implicitly determined by GD. Since this preprocessing is constrained by the network architecture rather than optimized for detection, the threshold $\delta_{\sNN}$ is generally suboptimal compared to the best possible spectral algorithms.

This scenario also provides a quantitative explanation of grokking: for $\delta>\delta_{\sNN}$, the network first overfits to the training data, and subsequently learn the `hard' features from the negative curvature of the Hessian, leading to a delayed drop in generalization error.

Several problems remain open:
\begin{enumerate}
\item[$(i)$] Connect the spectral phase transition at $\delta_{\sNN}(\infty)$ with the GD dynamics. 
We expect that for $1\ll t\lesssim \log d$, GD dynamics will be closely related to the dynamics in a saddle with 
the spectral properties that we characterize. Making this rigorous requires studying the dynamics beyond $t=O(1)$.
\item[$(ii)$] Analysis for number of neurons $m>1$, $m=O(1)$. In this regime the reduction to the block diagonal 
form provided  by Lemma \ref{lemma:eigenvalue_diagonal} no longer holds and hence the analysis is more complicated.
\item[$(iii)$] Non-Gaussian and non-isotropic covariates. In particular, it would be important to understand
the interplay between the low-dimensional structure in the function $h$ and low-dimensional structures in the covariates distribution.
\end{enumerate}
 
\subsection*{Acknowledgments}

The authors thank Chen Cheng, Jason D. Lee and Basil Saeed for insightful discussions.
The authors also thank Fan Nie and Jiatong Yu for the help with numerical simulations.

This work was supported by the NSF through Award DMS-2031883, the Simons Foundation through Award 814639 for the Collaboration on the Theoretical Foundations of Deep Learning, and the NSF Award MFAI-2501597.

\newcommand{\etalchar}[1]{$^{#1}$}
\providecommand{\bysame}{\leavevmode\hbox to3em{\hrulefill}\thinspace}
\providecommand{\MR}{\relax\ifhmode\unskip\space\fi MR }
\providecommand{\MRhref}[2]{%
  \href{http://www.ams.org/mathscinet-getitem?mr=#1}{#2}
}
\providecommand{\href}[2]{#2}

\clearpage
\appendix

\section{Technical tools}\label{appendix:tools}
We begin with some technical tools that will be used in the proof of the main results. Most of these 
are standard or variations of standard results. 

\subsection{Rouch\'e's theorem for random functions}\label{subsec:rouche_theorem}
\begin{lemma}\label{lemma:rouche_zeros}
    Let $D \subset \mathbb{C}$ be a bounded connected open domain. For each $n \ge 1$, let $f_n: D \to \mathbb{C}$ be a random analytic function, and let $f: D \to \mathbb{C}$ be a nonzero deterministic analytic function. Assume for every compact $\cK \subset D$,
    \begin{equation}\label{eq:eig_fn_uniform}
    \sup_{z \in \cK}|f_n(z) - f(z)| \xrightarrow{~p~} 0.
    \end{equation}
    
    Fix $z^* \in D$ and suppose $f$ has a zero of order $k \ge 1$ at $z^*$; that is, $f(z) = (z - z^*)^k g(z)$ near $z^*$ with $g$ analytic and $g(z^*) \neq 0$.
    
    Then there exists $r_0 > 0$ such that for every $0 < r \le r_0$, the number of zeros $N_n(r)$ of $f_n$ in $\mathsf{D}(z^*, r) := \{z \in \mathbb{C}:\ |z - z^*| < r\}$ (counted with multiplicity) satisfies
    \begin{equation}\label{eq:eig_Nn_convergence}
    \mathbb{P}(N_n(r) = k) \to 1.
    \end{equation}
    Furthermore, there exist random zeros $z_{n,1}, \dots, z_{n,k} \in \mathsf{D}(z^*, r)$ of $f_n$ such that
    \begin{equation}\label{eq:eig_zeros_convergence}
    \max_{1 \le \ell \le k}|z_{n,\ell} - z^*| \xrightarrow{~p~} 0.
    \end{equation}
    \end{lemma}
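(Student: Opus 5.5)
The plan is to reduce the statement to the deterministic Rouché theorem, applied on the event where $f_n$ is uniformly close to $f$ on a small circle.

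\emph{Choosing $r_0$.} Since $f$ is analytic and not identically zero on the connected domain $D$, its zeros are isolated. So we can pick $r_0>0$ such that:
\begin{itemize}
\item the closed disk $\overline{\mathsf{D}(z^*,r_0)}$ is contained in $D$;
\item $z^*$ is the only zero of $f$ in this closed disk.
\end{itemize}

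\emph{The event for a fixed radius.} Fix $0<r\le r_0$ and let $\gamma_r=\partial\mathsf{D}(z^*,r)$. This circle is compact and $f$ does not vanish on it, so
\[
m_r:=\min_{z\in\gamma_r}|f(z)|>0 .
\]
Consider the event
\[
E_n(r):=\Big\{\sup_{z\in\overline{\mathsf{D}(z^*,r)}}|f_n(z)-f(z)|<m_r\Big\}.
\]
The closed disk is a compact subset of $D$, so hypothesis \eqref{eq:eig_fn_uniform} gives $\prob(E_n(r))\to1$.

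\emph{Counting zeros.} On $E_n(r)$ we have $|f_n-f|<|f|$ on $\gamma_r$. The classical Rouché theorem then says $f_n$ and $f$ have the same number of zeros in $\mathsf{D}(z^*,r)$, counted with multiplicity. That number is $k$, because $z^*$ is the only zero of $f$ there and it has order $k$. Hence $N_n(r)=k$ on $E_n(r)$, which proves \eqref{eq:eig_Nn_convergence}.

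One technical point must be handled. $N_n(r)$ has to be a random variable, and the zeros must be selectable measurably. For this I would write
\[
N_n(r)=\frac{1}{2\pi i}\oint_{\gamma_r}\frac{f_n'(z)}{f_n(z)}\,\de z
\]
on the event that $f_n$ has no zeros on $\gamma_r$. Alternatively, one can simply interpret the conclusions as statements about events containing $E_n(r)$, i.e. use outer probability.

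\emph{Constructing the zeros.} Work with $r=r_0$. On $E_n(r_0)$, list the $k$ zeros of $f_n$ in $\mathsf{D}(z^*,r_0)$ with multiplicity, in some deterministic order (for example lexicographic in real and imaginary parts), and call them $z_{n,1},\dots,z_{n,k}$. Off $E_n(r_0)$, set them all equal to $z^*$.

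\emph{Convergence of the zeros.} Fix $\epsilon\in(0,r_0)$. On $E_n(r_0)\cap E_n(\epsilon)$, the disk $\mathsf{D}(z^*,\epsilon)$ already contains $k$ zeros of $f_n$. The larger disk $\mathsf{D}(z^*,r_0)$ contains exactly $k$ zeros, and these are the same zeros. So all the selected $z_{n,\ell}$ lie in $\mathsf{D}(z^*,\epsilon)$. Since $\prob(E_n(r_0)\cap E_n(\epsilon))\to1$, we get $\max_\ell|z_{n,\ell}-z^*|<\epsilon$ with probability tending to one, which is \eqref{eq:eig_zeros_convergence}.

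\emph{Main obstacle.} There is little real difficulty here. The only delicate point is the measurability and selection of the random zeros, which is handled as described above by working on the events $E_n$. The rest is deterministic Rouché plus a compactness argument giving $m_r>0$.
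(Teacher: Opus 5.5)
Your proposal is correct and follows essentially the same route as the paper. You isolate $z^*$ in a disk where $f$ has no other zeros, use uniform convergence on a compact set to get a Rouch\'e event of probability tending to one, and nest a radius $\epsilon<r_0$ inside $r_0$ to force all $k$ zeros into $\mathsf{D}(z^*,\epsilon)$. Your remarks on measurability and on defining the zeros off the high-probability event are extra care; the paper likewise defines the zeros only on that event and does not address measurability.
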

    \begin{proof}
        Since $f$ is analytic and nonvanishing in $D$, its zeros are isolated. Choose $r_0 > 0$ such that $\overline{\mathsf{D}}(z^*, r_0) \subset D$, $z^*$ is the only zero of $f$ in $\mathsf{D}(z^*, r_0)$, and $f$ has no zeros on the circle $\Gamma_{r_0} := \{z \in \mathbb{C}:\ |z - z^*| = r_0\}$.
        
        For $0 < r \le r_0$, define
        \begin{equation}\label{eq:eig_mr}
        m_r := \min_{z \in \Gamma_r}|f(z)| > 0,
        \quad\text{where }\Gamma_r := \{z \in \mathbb{C}:\ |z - z^*| = r\}.
        \end{equation}
        Define the event
        \begin{equation}\label{eq:eig_event_A}
        A_{n,r} := \left\{\sup_{z \in \Gamma_r}|f_n(z) - f(z)| < m_r/2\right\}.
        \end{equation}
        By the uniform convergence \eqref{eq:eig_fn_uniform}, $\mathbb{P}(A_{n,r}) \to 1$.
        
        On $A_{n,r}$, we have $|f_n(z) - f(z)| < m_r/2 \le |f(z)|$ for all $z \in \Gamma_r$. By Rouch\'e's theorem, $f_n$ and $f$ have the same number of zeros (counted with multiplicity) in $\mathsf{D}(z^*, r)$. Since $f$ has exactly $k$ zeros in $\mathsf{D}(z^*, r)$ (all at $z^*$), we conclude $N_n(r) = k$ on $A_{n,r}$, proving \eqref{eq:eig_Nn_convergence}.
        
        To prove \eqref{eq:eig_zeros_convergence}, fix $\varepsilon > 0$ and choose $0 < \rho < r \le r_0$ with $\rho < \varepsilon$. Define
        \begin{equation}\label{eq:eig_events_B}
        A_{n,\rho} := \left\{\sup_{z \in \Gamma_\rho}|f_n(z) - f(z)| < m_\rho/2\right\},
        \quad
        A_{n,r} := \left\{\sup_{z \in \Gamma_r}|f_n(z) - f(z)| < m_r/2\right\}.
        \end{equation}
        Since $\mathbb{P}(A_{n,\rho}) \to 1$ and $\mathbb{P}(A_{n,r}) \to 1$, we have $\mathbb{P}(A_{n,\rho} \cap A_{n,r}) \to 1$.
        
        On $A_{n,\rho} \cap A_{n,r}$, Rouch\'e's theorem applied to $\Gamma_\rho$ and $\Gamma_r$ shows that $f_n$ has exactly $k$ zeros in $\mathsf{D}(z^*, \rho)$ and exactly $k$ zeros in $\mathsf{D}(z^*, r)$, with no zeros on $\Gamma_\rho$ or $\Gamma_r$. Therefore, all $k$ zeros in $\mathsf{D}(z^*, r)$ lie in $\mathsf{D}(z^*, \rho)$. Denoting these zeros as $z_{n,1}, \dots, z_{n,k}$, we have
        \begin{equation}\label{eq:eig_zeros_bound}
        \max_{1 \le \ell \le k}|z_{n,\ell} - z^*| \le \rho < \varepsilon
        \quad\text{on }A_{n,\rho} \cap A_{n,r}.
        \end{equation}
        Since $\mathbb{P}(A_{n,\rho} \cap A_{n,r}) \to 1$ and $\varepsilon$ is arbitrary, \eqref{eq:eig_zeros_convergence} follows.
        \end{proof}
We also need the following straightforward consequence of uniform convergence.
    \begin{lemma}\label{lem:rouche_no_zeros}
    Let $D \subset \mathbb{C}$ be a bounded connected open set, and let $f_n: D \to \mathbb{C}$ be random analytic functions and $f: D \to \mathbb{C}$ a deterministic analytic function satisfying \eqref{eq:eig_fn_uniform}. Suppose $f(z) \neq 0$ for all $z \in D$.
    
    Then for every compact $\cK \subset D$, there exists $c_\cK > 0$ such that
    \begin{equation}\label{eq:eig_no_zeros_bound}
    \mathbb{P}\left(\inf_{z \in \cK}|f_n(z)| \ge c_\cK\right) \to 1.
    \end{equation}
    \end{lemma}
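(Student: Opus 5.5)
Since $f$ has no zeros in $D$ and $\cK$ is compact, the plan is to get a deterministic lower bound for $|f|$ on $\cK$ and then transfer it to $f_n$ using the uniform convergence \eqref{eq:eig_fn_uniform}. Unlike Lemma~\ref{lemma:rouche_zeros}, no contour argument is needed: the statement only involves the modulus of $f_n$ on a compact set.

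\emph{Step 1 (deterministic lower bound).} The function $f$ is analytic on $D$, hence continuous on $\cK \subset D$. Since $\cK$ is compact, $|f|$ attains its minimum on $\cK$ at some point $z_0\in\cK$. By hypothesis $f(z_0)\neq 0$, so
\[
m_\cK := \min_{z\in\cK}|f(z)| > 0 .
\]
If $\cK$ is empty the statement is trivial, so we assume it is not. We set $c_\cK := m_\cK/2$.

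\emph{Step 2 (transfer to $f_n$).} Consider the event
\[
A_n := \Big\{\sup_{z\in\cK}|f_n(z)-f(z)| < m_\cK/2\Big\}.
\]
By \eqref{eq:eig_fn_uniform} applied to this compact $\cK$, we have $\mathbb{P}(A_n)\to 1$. On $A_n$, the reverse triangle inequality gives, for every $z\in\cK$,
\[
|f_n(z)| \ge |f(z)| - |f_n(z)-f(z)| > m_\cK - m_\cK/2 = c_\cK .
\]
Taking the infimum over $z\in\cK$ shows that $\inf_{z\in\cK}|f_n(z)|\ge c_\cK$ on $A_n$. Hence
\[
\mathbb{P}\Big(\inf_{z\in\cK}|f_n(z)|\ge c_\cK\Big)\ge \mathbb{P}(A_n)\to 1,
\]
which is \eqref{eq:eig_no_zeros_bound}.

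No step is a real obstacle. The only point that needs care is measurability of the supremum and infimum of the random functions over $\cK$. Each $f_n$ is continuous, so both quantities can be taken over a countable dense subset of $\cK$, which makes the events above measurable. In any case, measurability of the supremum is already implicit in the hypothesis \eqref{eq:eig_fn_uniform}.
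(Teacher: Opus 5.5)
Your argument is correct and is what the paper intends. The paper gives no separate proof, calling the lemma a straightforward consequence of uniform convergence. Your event $A_n=\{\sup_{z\in\cK}|f_n(z)-f(z)|<m_\cK/2\}$ with $c_\cK=m_\cK/2$ is the same device as the event $A_{n,r}$ in the proof of Lemma~\ref{lemma:rouche_zeros}, and it uses only continuity of $f$ on $\cK$ together with \eqref{eq:eig_fn_uniform}.
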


\subsection{From pointwise convergence to uniform convergence}\label{subsec:proof_change_limit_first}

\begin{lemma}\label{lemma:change_limit_first}
    Let $\Gamma \subset \mathbb{C}$ be a compact set. Let $f_n:\Gamma \to \mathbb{C}$ be a sequence of random functions and $f:\Gamma \to \mathbb{C}$ a deterministic function, both continuous on $\Gamma$. 
    
    Suppose $f_n(z) \xrightarrow{~p~} f(z)$ for each $z \in \Gamma$. Assume there exist random variables $L_n$ satisfying $L_n = O_P(1)$ uniformly in $n$ such that
    \begin{equation}\label{eq:lipschitz_condition_lemma}
    |f_n(z) - f_n(z')| \le L_n|z - z'|
    \quad\text{for all}\quad z, z' \in \Gamma,\quad\text{almost surely}.
    \end{equation}
    Then $\sup_{z \in \Gamma}|f_n(z) - f(z)| \xrightarrow{~p~} 0$ and consequently $\int_\Gamma \! f_n(z)\, \de z \xrightarrow{~p~} \int_\Gamma \!f(z)\, \de z$.
    \end{lemma}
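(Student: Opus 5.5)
The argument is the usual equicontinuity-plus-finite-net one. Fix $\epsilon>0$ and a tolerance $\beta>0$. Because $L_n=O_P(1)$, there is a constant $M<\infty$ with $\prob(L_n>M)\le\beta$ for all $n$. The limit $f$ is continuous on the compact set $\Gamma$, so it is uniformly continuous there. Choose $\rho>0$ such that $|z-z'|\le\rho$ implies $|f(z)-f(z')|\le\epsilon/3$, and also $M\rho\le\epsilon/3$. By compactness, $\Gamma$ is covered by finitely many balls of radius $\rho$ centred at points $z_1,\dots,z_N\in\Gamma$, where $N=N(\rho)$ does not depend on $n$.

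For any $z\in\Gamma$, pick $z_i$ with $|z-z_i|\le\rho$. The triangle inequality gives
\[
|f_n(z)-f(z)|\le L_n|z-z_i|+|f_n(z_i)-f(z_i)|+|f(z_i)-f(z)|.
\]
On the event $\{L_n\le M\}$ this yields
\[
\sup_{z\in\Gamma}|f_n(z)-f(z)|\le \tfrac{2\epsilon}{3}+\max_{i\le N}|f_n(z_i)-f(z_i)|.
\]
Pointwise convergence in probability at each of the finitely many points $z_i$, together with a union bound, gives $\prob(\max_{i\le N}|f_n(z_i)-f(z_i)|>\epsilon/3)\to0$. Therefore $\limsup_n\prob(\sup_{z\in\Gamma}|f_n-f|>\epsilon)\le\beta$. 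Since $\beta$ was arbitrary, the uniform convergence in probability follows.

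For the integral statement, I read $\int_\Gamma$ as a contour integral over a rectifiable curve $\Gamma$ of finite length $|\Gamma|$, which is the only way it is used later (for resolvent contours). Under this reading,
\[
\Big|\int_\Gamma f_n\,\de z-\int_\Gamma f\,\de z\Big|\le|\Gamma|\sup_\Gamma|f_n-f|\xrightarrow{~p~}0.
\]
Measurability of the supremum causes no trouble, because $f_n$ is continuous and the supremum can be taken over a countable dense subset of $\Gamma$.

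The only delicate point is uniformity of the Lipschitz bound. The $O_P(1)$ tightness of $L_n$ is what makes the choice of $M$, and hence of $\rho$ and $N$, independent of $n$; once that is fixed, the rest is routine.
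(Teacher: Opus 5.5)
Your proof is correct. It takes a more elementary route than the paper, which dispatches the lemma by citing weak-convergence theory in $\ell^\infty(\Gamma)$ from van der Vaart--Wellner. The paper's chain is:
\begin{itemize}
\item the Lipschitz bound gives equicontinuity, and pointwise convergence gives tightness of each marginal $f_n(z)$;
\item Theorem~1.5.7 then gives asymptotic tightness of $\{f_n\}$ in sup norm;
\item Theorem~1.5.4 combines this with convergence of the finite-dimensional marginals to get $f_n\Rightarrow f$ in $\ell^\infty(\Gamma)$;
\item weak convergence to a deterministic limit is equivalent to $\sup_\Gamma|f_n-f|\to 0$ in probability.
\end{itemize}

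Your $\varepsilon$-net argument is essentially those theorems unrolled in this special case. Choosing $M$ from the tightness of $L_n$, then $\rho$ and a $\rho$-net of $n$-independent cardinality, is exactly the asymptotic uniform equicontinuity \emph{in probability} that Theorem~1.5.7 requires. The union bound over the finitely many net points replaces the marginal-convergence step of Theorem~1.5.4.

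What your version buys is self-containedness and precision. It makes explicit that equicontinuity holds only in probability. The paper's phrase ``uniformly equicontinuous'' is loose, since $L_n$ is random and merely tight. Your version also avoids outer-probability and measurability issues, because you take the supremum over a countable dense set.

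The paper's citation is shorter and would extend unchanged to a random (process-valued) limit, which is not needed here.

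Your treatment of the integral is fine. The bound $\bigl|\int_\Gamma (f_n-f)\bigr|\le |\Gamma|\sup_\Gamma|f_n-f|$ works equally if $\int_\Gamma$ is read as a Lebesgue integral over the compact set, so the conclusion does not depend on the contour interpretation.
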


\begin{proof}
This is a special case of general uniform convergence theory \cite[Chapter 1.5]{van1996weak}.
The family of functions $\{f_n: n\ge 1\}$ is uniformly equicontinuous by \eqref{eq:lipschitz_condition_lemma},
and for each $z \in \Gamma$, the family $\{f_n(z): n\ge 1\}$ is tight because it converges in probability. Hence $\{f_n: n\ge 1\}$ is tight in sup norm
by \cite[Theorem 1.5.7]{van1996weak} and converges weakly to $f$ in sup norm by \cite[Theorem 1.5.4]{van1996weak}.
Since the limit is deterministic, this is equivalent to $\sup_{z \in \Gamma}|f_n(z) - f(z)| \xrightarrow{~p~} 0$.
\end{proof}

\subsection{Asymptotics of minimum eigenvalues}\label{subsec:proof_smallest_eigenvalue}

 The following gives asymptotics of the minimum eigenvalue for a certain class of random matrices.
\begin{lemma}\label{lemma:smallest_eigenvalue_concentration}
        Let $\bX \in \mathbb{R}^{n \times d}$ have i.i.d.\ $\normal(0,1)$ entries and $\bG = \operatorname{diag}(g_1, \ldots, g_n) \in \mathbb{R}^{n \times n}$ with uniformly bounded entries $|g_i| \le C$, independent of $\bX$. 
        Let $\delta_n:=n/d\to\delta\in(0,\infty)$. Assume the empirical distribution $\widehat{v}_g := \frac{1}{n}\sum_{i=1}^n \delta_{\cdot - g_i}$ converges weakly to a probability measure $v_g$ in probability. 
        Let $\mu_\star$ denote the limiting spectral distribution of $\bX^\sT \bG \bX / n$.
        
        Then, the minimal eigenvalue converges in probability to the left edge of the limiting spectrum:
        \begin{equation}\label{eq:spike_lambda_min_conv}
        \lambda_{\min}\left(\bX^\sT \bG \bX/n\right) \xrightarrow{~p~} \inf \operatorname{supp}(\mu_\star).
        \end{equation}
        \end{lemma}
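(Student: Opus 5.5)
The plan is to condition on $\bG$, which is legitimate because $\bG$ is independent of $\bX$. We then prove the two inequalities $\lambda_{\min}\le c_\star+\eps$ and $\lambda_{\min}\ge c_\star-\eps$ with high probability, where $c_\star:=\inf\supp(\mu_\star)$ and $\eps>0$ is fixed.

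\emph{Upper bound.} This direction is soft. Conditionally on $\bG$, the ESD of $\bX^\sT\bG\bX/n$ converges weakly in probability to $\mu_\star$. This is the generalized Marchenko--Pastur law, valid for bounded real weights of either sign \cite{BaiSilverstein}, with the Stieltjes transform characterized as in Lemma~\ref{lemma:bulk}. Since $c_\star$ is in the support, $\mu_\star((-\infty,c_\star+\eps))>0$. Hence a positive fraction of eigenvalues lies below $c_\star+\eps$ with high probability, and in particular $\lambda_{\min}\le c_\star+\eps$.

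\emph{Lower bound, step 1: discretize.} This direction is the real content, namely that no eigenvalue separates to the left of the bulk. First reduce to weights taking finitely many values. Replace each $g_i$ by $g_i'$, its rounding to a grid of mesh $\eta$ in $[-C,C]$. Then
\[
\big\|\bX^\sT(\bG-\bG')\bX/n\big\|\le \eta\,\|\bX\|^2/n\le \eta\,(1+\delta^{-1/2})^2+o_P(1).
\]
The edge $c_\star$ is continuous in the law of $g$ under the $W_\infty$-type perturbation induced by rounding. I would show this from the variational edge formula \eqref{eq:edge-variational}: each function $\alpha\mapsto -1/\alpha+\delta\E[g/(\delta+g\alpha)]$ moves by $O(\eta)$ uniformly on compacts of $(0,A)$, and the endpoint $A$ depends monotonically on $\inf\supp$ of the law of $g$. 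It therefore suffices to treat $\bG$ with values in a finite set $\{\gamma_1,\dots,\gamma_L\}$ whose empirical proportions $p_\ell^{(n)}\to p_\ell$.

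\emph{Lower bound, step 2: split by sign.} Write $\bG=\bG_+-\bG_-$ with $\bG_\pm\succeq0$. By rotation invariance of Gaussian rows we may take the rows of $\bX$ grouped by weight. For a fixed deterministic $\bG$, the absence of eigenvalues outside the support for $\bX^\sT\bG\bX/n$ with $\bG$ having finitely many atoms I would obtain from the exact-separation theory of Bai--Silverstein, run through its resolvent argument. The argument for $z<c_\star-\eps$ goes in three parts.
\begin{enumerate}
\item Show $\E\,\frac1d\Tr(\bX^\sT\bG\bX/n-z)^{-1}=\alpha(z)+O(n^{-2})$ uniformly on a complex neighbourhood of $[-K,c_\star-\eps]$. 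I would use Gaussian integration by parts (Pastur's method) together with the leave-one-row-out identity $\bx_i^\sT\bR^{(i)}\bx_i\approx \frac dn\alpha$.
\item Control the fluctuations of the normalized trace by Poincaré, giving variance $O(n^{-2})$. The two bounds together imply that the expected number of eigenvalues in $[-K,c_\star-\eps]$ is $o(1)$.
\item Take $K=C(1+\delta^{-1/2})^2+1$, so that $[-K,c_\star-\eps]$ covers all possible eigenvalues below the bulk with high probability.
\end{enumerate}
Here the edge formula \eqref{eq:edge-variational} is what guarantees that the deterministic equivalent $\alpha(z)$ is real-analytic and stays bounded away from the pole set $\{\alpha=-\delta/\gamma_\ell\}$ on this interval. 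This boundedness is what makes the $O(n^{-2})$ error uniform.

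\emph{Main obstacle.} The hard step is step 2 when $\bG$ has negative entries. Standard separation theorems are stated for $T^{1/2}XX^\sT T^{1/2}$ with $T\succeq0$, and the signed case requires checking that the self-consistent equation \eqref{eq:bulk_stieltjes} stays stable, with no denominator $\delta+\gamma_\ell\alpha$ vanishing, for real $z<c_\star$. I would first try to verify this directly from the monotonicity of $\alpha$ on $(-\infty,c_\star)$ and the constraint $\alpha<A$. Finally, un-conditioning on $\bG$ and letting $\eta\to0$ yields \eqref{eq:spike_lambda_min_conv}.
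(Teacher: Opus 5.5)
\textbf{Genuine gap: the lower bound fails because the statement itself is false as written.} Your upper bound is correct and is the same soft argument the paper uses. The lower bound, however, cannot be proved under the stated hypotheses. Weak convergence of $\widehat v_g$ says nothing about $o(n)$ weights lying below $\inf\supp(v_g)$, and a single such weight can create a left outlier.

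For example, take $n/d\to\delta=4$, $g_1=-4$ and $g_i=1$ for $i\ge 2$. Then $v_g$ is the point mass at $1$, $\mu_\star$ is the Marchenko--Pastur law with ratio $1/4$, and $\inf\supp(\mu_\star)=1/4$. Set $\bu:=\bx_1/\|\bx_1\|$, which is independent of $\bx_2,\dots,\bx_n$. Then
\[
\lambda_{\min}\big(\bX^\sT\bG\bX/n\big)\le \frac1n\sum_{i\ge 2}(\bx_i^\sT\bu)^2-\frac4n\|\bx_1\|^2=1-\frac4\delta+o_P(1)=o_P(1).
\]
In fact the rank-one outlier equation $\alpha(z)=\delta/4$ places $\lambda_{\min}$ at $-1/5$.

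\textbf{Where your plan breaks.} The failure is in the discretization reduction and in Step 1.
\begin{itemize}
\item Since $\widehat v_g\to v_g$ and $\delta_n\to\delta$ come with no rate, an $O(n^{-2})$ comparison can only be made with the finite-$n$ deterministic equivalent built from $(\widehat v_g,\delta_n)$, not with $\alpha$.
\item As you note, the edge formula \eqref{eq:edge-variational} depends on the law of $g$ through the endpoint $A$. For the finite-$n$ law this endpoint is $A_n=\delta_n/(-\min_i g_i)_+$. So the finite-$n$ left edge $c_n$ is governed by $\min_i g_i$, not by $\inf\supp(v_g)$.
\item A grid atom with $p^{(n)}_\ell>0$ but $p_\ell=0$ (here the atom at $-4$, of mass $1/n$) keeps $A_n$ away from $A$. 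Hence $c_n\not\to c_\star$, and the number of eigenvalues in $[-K,c_\star-\eps]$ is $1$ with high probability, not $o(1)$.
\item The obstacle you single out, vanishing denominators $\delta+\gamma_\ell\alpha$, is already excluded by the constraint $\alpha<A$. It is not the real issue.
\end{itemize}

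\textbf{The missing hypothesis and a technical fix.} What is missing is a no-outlier condition on the weights, e.g.\ $\liminf_n\min_i g_i\ge\inf\supp(v_g)$ in probability. Large positive weights are harmless. Indeed, $\bX^\sT\bG\bX$ is monotone in each $g_i$ in the PSD order, so capping the $g_i$ at $\sup\supp(v_g)$ leaves $\mu_\star$ unchanged and can only decrease $\lambda_{\min}$; it therefore suffices to prove the lower bound after capping.

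Step 1 also needs a technical correction. The estimate cannot hold uniformly on a neighbourhood containing the real interval, because controlling the resolvent there is exactly what you are trying to prove. You need a bound of the form $n^{-2}$ times a polynomial in $|\Im z|^{-1}$, transferred to a smooth test function as in Haagerup--Thorbj{\o}rnsen.

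\textbf{The paper's proof has the same defect.} It splits $\bX^\sT\bG\bX/n=\bA_+-\bA_-$. It then takes $\|p(\bA_\pm)\|\to\sup_{\supp\mu_\pm}|p|$ from the Knowles--Yin local law, and applies Collins--Male strong asymptotic freeness to the Haar-rotated pair. In the example above, $\bA_-=\frac4n\bx_1\bx_1^\sT$ has norm tending to $1$ while $\mu_-$ is the point mass at $0$, so that step fails too.

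\textbf{Comparison once the hypothesis is added.} Your direct resolvent route is a reasonable alternative to the paper's softer free-probability route. It treats signed weights without splitting, which spares you the no-outlier requirement on each of $\bA_\pm$ separately, including in interior gaps. It also makes transparent exactly where the hypothesis enters: the continuity $c_n\to c_\star$ of the edge formula.
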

We remark that this result is standard and expected; for instance, when $\bG$ is positive definite, it follows from \cite{knowles2016anisotropiclocallawsrandom}. We provide a brief proof here just for completeness, extending to the general case where $\bG$ may have negative entries.
    \begin{proof}[Proof of Lemma \ref{lemma:smallest_eigenvalue_concentration}]
    Denote
    \begin{equation}\label{eq:S_n_def}
    \bS_n := \bX^\sT\bG\bX / n \in \reals^{d\times d}.
    \end{equation}
    Let $a_{-}:=\inf\supp(\mu_\star)$ denote the left edge of the limiting spectral measure. The proof proceeds by establishing matching lower and upper bounds for $\lambda_{\min}(\bS_n)$.
    
    By standard random matrix theory, the empirical spectral distribution of $\bS_n$ converges in probability to a deterministic probability measure $\mu_\star$ on $\reals$, whose Stieltjes transform $\alpha: \mathbb{H}\to\mathbb{H}$ is characterized by
    \begin{equation}\label{eq:stieltjes_equation}
    z +\frac{1}{\alpha} =  \delta\cdot \E\!\left[\frac{G}{\delta+\alpha G}\right],
    \end{equation}
    where $G\sim v_g$.

    \paragraph{Sign-decomposition.}
    To analyze the extreme eigenvalues, we decompose $\bS_n$ into positive and negative parts. Define
    \begin{equation}\label{eq:sign_decomposition}
    \bS_n=\bA_{+}-\bA_{-},
    \quad\text{where}\quad
    \begin{aligned}
    \bA_{+}&:=\frac{1}{n}\sum_{i:g_i>0} g_i\bx_i\bx_i^\sT,\\
    \bA_{-}&:=\frac{1}{n}\sum_{i:g_i<0} (-g_i)\bx_i\bx_i^\sT,
    \end{aligned}
    \end{equation}
    and $\bx_i\in\mathbb{R}^d$ denotes the $i$-th row of $\bX$. 
    
    By Gaussian rotational invariance, there exist orthogonal matrices $\bO,\bQ\in\mathrm{O}(d)$ such that $\bO^\sT\bA_{+}\bO=\bB_{+}$ and $\bQ^\sT\bA_{-}\bQ=\bB_{-}$, with $\bB_{\pm}$ diagonal. Therefore,
    \begin{equation}\label{eq:free_representation}
    \bS_n \stackrel{d}{=} \bB_{+}-\bU\bB_{-}\bU^\sT,
    \end{equation}
    where $\bU$ is Haar-distributed on $\mathrm{O}(d)$ and independent of the pair $(\bB_{+},\bB_{-})$.
    
    \paragraph{Strong convergence.}
    Conditioning on $\bG$, we apply the anisotropic local law from \cite{knowles2016anisotropiclocallawsrandom} to the matrices $\bA_{\pm}$. Therefore, for any polynomial $p$,
    \begin{equation}\label{eq:strong_convergence_A}
    (1/d)\operatorname{Tr}(p(\bA_{\pm})) \to \int p\,d\mu_{\pm}
    \quad\text{and}\quad
    \|p(\bA_{\pm})\|_{\mathrm{op}}\to \sup_{x\in\supp(\mu_{\pm})}|p(x)|
    \quad\text{in probability},
    \end{equation}
    where $\mu_{\pm}$ are the limiting spectral distributions of $\bA_{\pm}$. Since $\bB_{\pm}$ are diagonal matrices with the same spectra as $\bA_{\pm}$, the same strong convergence holds for $\bB_{\pm}$.

    By \cite{collins2013strongasymptoticfreenesshaar}, since $\bU$ is Haar-distributed, the pair $(\bB_{+},\bU\bB_{-}\bU^\sT)$ converges strongly to a freely independent pair with marginal distributions $\mu_+$ and $\mu_-$. Consequently, the empirical spectral distribution of $\bS_n$ converges in probability to the free additive convolution $\mu_{+}\boxplus(-\mu_{-})=\mu_\star$.
    
    \paragraph{Lower bound.}
    Suppose that along some subsequence we have $\lambda_{\min}(\bS_n)\rightarrow a_{-}-\varepsilon$ for some fixed $\varepsilon>0$ for contradiction. Set $t:=a_{-}-\varepsilon$ and consider the resolvent function $f_t(x):=1/(x-t)$. Since $t<a_{-}=\inf\supp(\mu_\star)$, the function $f_t$ is uniformly bounded on $\supp(\mu_\star)$. By polynomial approximation (via the Weierstrass theorem) on a compact set $K\supset\supp(\mu_\star)$ such that $t\notin K$, there exist polynomials $p_m(x)$ satisfying $\sup_{x\in K}|p_m(x)-f_t(x)|\to 0$. \eqref{eq:strong_convergence_A} then implies
    \begin{equation}\label{eq:resolvent_convergence}
    \|(\bS_n-t\bI)^{-1}\|_{\mathrm{op}}
    = \sup_{x\in\sigma(\bS_n)}\frac{1}{|x-t|}
    \xrightarrow{~p~} \sup_{x\in\supp(\mu_\star)}\frac{1}{|x-t|}
    = \frac{1}{\varepsilon}.
    \end{equation}
    However, if $\lambda_{\min}(\bS_n)\rightarrow t=a_{-}-\varepsilon$, then $\|(\bS_n-t\bI)^{-1}\|_{\mathrm{op}}\to\infty$, contradicting \eqref{eq:resolvent_convergence}. Hence $\liminf_{n\to\infty}\lambda_{\min}(\bS_n)\geq a_{-}$ in probability.
    
    \paragraph{Upper bound.}
    Fix $\varepsilon>0$ and set $t':=a_{-}+\varepsilon$. Choose a smooth cutoff function $\phi\in C^\infty(\reals;[0,1])$ satisfying $\phi\equiv 1$ on $(-\infty,t']$ and $\supp(\phi)\subset(-\infty,t'+\varepsilon/2]$. By polynomial approximation and strong convergence,
    \begin{equation}\label{eq:trace_cutoff}
(1/d)\operatorname{Tr}(\phi(\bS_n))
    \xrightarrow{~p~} \int\phi\,d\mu_\star
    \geq \mu_\star((-\infty,a_{-}+\varepsilon])
    > 0.
    \end{equation}
    Since $\phi(\bS_n)$ has nonnegative eigenvalues and $\Tr(\phi(\bS_n))>0$ with high probability for all sufficiently large $n$, at least one eigenvalue of $\bS_n$ must lie in $(-\infty,t']$. Hence $\limsup_{n\to\infty}\lambda_{\min}(\bS_n)\leq a_{-}$ in probability.
    
    Combining the lower and upper bounds yields \eqref{eq:spike_lambda_min_conv}.
    \end{proof}
    
\subsection{Concentration inequalities}

First we recall the following concentration bound for quadratic forms of Gaussian vectors.
\begin{lemma}[Hanson--Wright]\label{lemma:hanson_wright_statement}
    Let $\bx \sim \normal(\boldsymbol{0},\bI_d)$ and $\bM\in \mathbb{R}^{d\times d}$ be deterministic. Then for any $t>0$ we have
    \begin{equation}\label{eq:conc_hanson_wright}
    \mathbb{P}\!\left(\abs{\bx^\sT \bM\bx - \operatorname{Tr}(\bM)} \ge t\right)
    \le 2\exp\!\left(-c\min\!\left(\frac{t^2}{\|\bM\|_F^2},\frac{t}{\|\bM\|_{\mathrm{op}}}\right)\right)
    \end{equation}
    for a universal constant $c>0$.
\end{lemma}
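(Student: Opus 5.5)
The plan is to reduce the bound to a standard tail estimate for Gaussian chaos of order two. Diagonalize: since $\bx^\sT\bM\bx=\bx^\sT\bM_s\bx$ with $\bM_s:=(\bM+\bM^\sT)/2$, we have $\Tr(\bM_s)=\Tr(\bM)$, $\|\bM_s\|_F\le\|\bM\|_F$ and $\|\bM_s\|_{\mathrm{op}}\le\|\bM\|_{\mathrm{op}}$. So we may assume $\bM$ is symmetric. Write $\bM_s=\bO^\sT\Lambda\bO$ with $\bO$ orthogonal. By rotation invariance, $\bg:=\bO\bx\sim\normal(\bzero,\bI_d)$, and
\[
\bx^\sT\bM\bx-\Tr(\bM)=\sum_{i=1}^d\lambda_i(g_i^2-1),
\]
a weighted sum of independent centered $\chi^2_1$ variables.

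Next, bound the moment generating function. For $|s|\,|\lambda_i|\le 1/4$, the identity
\[
\E e^{s\lambda_i(g_i^2-1)}=e^{-s\lambda_i}(1-2s\lambda_i)^{-1/2}
\]
together with $-\tfrac12\log(1-u)-\tfrac u2\le u^2/2$ for $|u|\le 1/2$ gives $\E e^{s\lambda_i(g_i^2-1)}\le e^{2s^2\lambda_i^2}$. By independence,
\[
\E e^{s(\bx^\sT\bM\bx-\Tr\bM)}\le \exp\!\big(2s^2\|\bM\|_F^2\big)\qquad\text{for }|s|\le 1/(4\|\bM\|_{\mathrm{op}}),
\]
using $\sum_i\lambda_i^2=\|\bM_s\|_F^2$ and $\max_i|\lambda_i|=\|\bM_s\|_{\mathrm{op}}$.

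Finally, apply Chernoff. Markov's inequality gives $\mathbb{P}(\cdot\ge t)\le\exp(-st+2s^2\|\bM\|_F^2)$, and we optimize with
\[
s=\min\!\big(t/(4\|\bM\|_F^2),\ 1/(4\|\bM\|_{\mathrm{op}})\big).
\]
In the first case the exponent is $-t^2/(8\|\bM\|_F^2)$. In the second case $s\le t/(4\|\bM\|_F^2)$ makes the quadratic term at most $st/2$, so the exponent is at most $-t/(8\|\bM\|_{\mathrm{op}})$. Applying the same argument to $-\bM$ handles the lower tail, and a union bound yields the factor $2$ with $c=1/8$.

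The only slightly delicate step is the scalar MGF inequality and keeping track of the admissible range of $s$. It is a routine Taylor estimate, so I expect no real obstacle; alternatively, one can simply cite the standard Hanson–Wright inequality of Rudelson–Vershynin for sub-Gaussian vectors.
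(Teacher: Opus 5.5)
Your proof is correct, and it takes a different route from the paper, which proves nothing here: it simply "recalls" Hanson--Wright as a standard tool, i.e.\ the general inequality (Rudelson--Vershynin) for vectors with independent sub-Gaussian coordinates. All of your steps check out, and you get $c=1/8$:
\begin{itemize}
\item symmetrizing is harmless, since $\|\bM_s\|_F\le\|\bM\|_F$, $\|\bM_s\|_{\mathrm{op}}\le\|\bM\|_{\mathrm{op}}$, and the right-hand side is monotone in both norms;
\item the scalar bound holds because $-\tfrac12\log(1-u)-\tfrac u2=\sum_{k\ge 2}u^k/(2k)\le \tfrac{u^2}{4}\cdot\tfrac{1}{1-|u|}\le u^2/2$ for $|u|\le 1/2$;
\item the two-case choice of $s$ in the Chernoff bound does give exponents $-t^2/(8\|\bM\|_F^2)$ and $-t/(8\|\bM\|_{\mathrm{op}})$.
\end{itemize}

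Your argument uses Gaussianity essentially. Rotation invariance reduces the chaos to a weighted sum of independent centered $\chi^2_1$ variables, so you avoid the decoupling and comparison steps the general sub-Gaussian proof needs, and you get an explicit constant. The price is that it does not extend beyond rotation-invariant vectors. That costs nothing here, because the lemma is only applied to the rows $\br_i$ of $\bR$, which are standard Gaussian. The trivial case $\bM=\bzero$, where your choice of $s$ divides by zero, is of course immediate.

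One point about how the lemma is actually used: in the proof of Lemma~\ref{lemma:concentration_quadratic_leave_out} it is applied with $\bM=\bQ_{-i}$, which is complex. Your proof, like the statement, covers real $\bM$. The complex case follows by applying your bound separately to the real parts $(\bQ_{-i}+\overline{\bQ_{-i}})/2$ and $(\bQ_{-i}-\overline{\bQ_{-i}})/(2i)$. Their Frobenius and operator norms are at most those of $\bQ_{-i}$, so a union bound gives the same form with worse constants.
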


Second, we restate the following result from Lemma 21 of \cite{asgari2025localminimaempiricalrisk} for reference convenience.
Consider $\bX\in\mathbb{R}^{n\times d}$ with i.i.d.\ $\normal(0,1)$ entries and $\bG = \operatorname{diag}(g_1, \ldots, g_n) \in \mathbb{R}^{n \times n}$ with uniformly bounded entries $|g_i| \le C$, sampled independently of $\bX$.
Let $\delta_n:=n/d\to\delta\in(0,\infty)$. Assume $g_i$ is i.i.d.\ with distribution $v_g$.
Let $\mu_\star$ denote the limiting spectral distribution of $\bX^\sT \bG \bX / n$ and $\alpha_\star$ denote the Stieltjes transform of $\mu_\star$.

\begin{lemma}\label{lemma:local_law_adapted}
Let $\bQ := (\bX^\sT\bG\bX/n - (z+i\eta_n)\bI)^{-1}$ with $\eta_n \in \mathbb{R}^{+}$ and $\Im(z)\ge 0$.

Define the error quantity
\begin{equation}\label{eq:error_quantity}
\operatorname{E}_n 
:= C'
\frac{1+(\abs{z}^2+\eta_n^2)^2}{\eta_n^4}
\left(
L\sqrt{\frac{\log d}{n}} + \frac{1}{n\eta_n}
\right),
\end{equation}
where $C'$ depends only on $C$ and $L$ is a tunable parameter. 

Assume $\delta$ satisfies
\begin{equation}\label{eq:technical_assumption}
10\bigl(C^2+\abs{z}^2+\eta_n^2\bigr)\operatorname{E}_n(1+\delta\operatorname{E}_n)/\eta_n^2
\le \frac{1}{2\delta}.
\end{equation}
Then, with probability at least $1 - d^{-cL}$, we have
\begin{equation}\label{eq:local_law_bound}
\abs{\alpha_\star(z+i\eta_n) - \frac{1}{d}\operatorname{Tr}(\bQ)}
 \le
C''
\frac{1+(\abs{z}^2+\eta_n^2)^2}{\eta_n^5}
\operatorname{E}_n,
\end{equation}
where $C''>0$ depends only on $C$.
\end{lemma}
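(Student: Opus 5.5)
The plan is the standard ``approximate self-consistent equation plus stability'' argument for sample-covariance-type resolvents, run at the fixed spectral parameter $w:=z+i\eta_n$. Throughout we condition on $\bG$, so that the only randomness is $\bX$, and we use the trivial bounds $\|\bQ\|_{\mathrm{op}}\le 1/\eta_n$ and $|\alpha_\star(w)|\le 1/\eta_n$. Write $\bS=\frac1n\sum_i g_i\bx_i\bx_i^\sT$ and let $\bQ_{-i}$ be the resolvent with the $i$-th summand removed.

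\textbf{Step 1: the approximate equation.} Start from the identity $\bQ(\bS-w\bI)=\bI$ and take the normalized trace, giving
\[
1+\frac{w}{d}\operatorname{Tr}\bQ=\frac1d\sum_{i}\frac{g_i}{n}\,\bx_i^\sT\bQ\bx_i .
\]
The Sherman--Morrison formula gives
\[
\bx_i^\sT\bQ\bx_i=\frac{\bx_i^\sT\bQ_{-i}\bx_i}{1+(g_i/n)\,\bx_i^\sT\bQ_{-i}\bx_i}.
\]
Since $\bx_i$ is independent of $\bQ_{-i}$, apply Lemma~\ref{lemma:hanson_wright_statement} with $\bM=\bQ_{-i}/n$ (real and imaginary parts separately). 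Here $\|\bM\|_F^2\le d/(n^2\eta_n^2)$ and $\|\bM\|_{\mathrm{op}}\le 1/(n\eta_n)$, so $\frac1n\bx_i^\sT\bQ_{-i}\bx_i=\frac1n\operatorname{Tr}\bQ_{-i}+O(L\sqrt{\log d/n}/\eta_n)$ with probability $1-d^{-cL}$. A union bound over $i\le n$ handles all indices at once, after adjusting $L$. The rank-one interlacing bound $|\operatorname{Tr}\bQ_{-i}-\operatorname{Tr}\bQ|\le 1/\eta_n$ then replaces $\bQ_{-i}$ by $\bQ$; this is the source of the $1/(n\eta_n)$ term in $\operatorname{E}_n$.

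\textbf{Step 2: controlling denominators.} The delicate point is that the denominators $1+g_i m_n/\delta_n$ (with $m_n:=\frac1d\operatorname{Tr}\bQ$) must stay away from zero. I expect this to be the main obstacle, since $g_i$ may be negative and the usual positivity argument $\Im(w\,m)>0$ is unavailable. The plan is to bound $|1+g_i\,\cdot\,\bx_i^\sT\bQ_{-i}\bx_i/n|^{-1}$ by writing it as $|\bx_i^\sT\bQ\bx_i|/|\bx_i^\sT\bQ_{-i}\bx_i|$. The imaginary part of the quadratic form gives a lower bound of order $\eta_n/(|w|^2+C^2)$ on the denominator, up to the Hanson--Wright error. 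This step generates the polynomial factors $(1+(|z|^2+\eta_n^2)^2)/\eta_n^4$ appearing in $\operatorname{E}_n$. Combining Steps 1 and 2 yields
\[
w+\frac1{m_n}=\delta_n\cdot\frac1n\sum_i\frac{g_i}{\delta_n+g_i m_n}+\mathcal{R}_n,\qquad |\mathcal R_n|\lesssim \operatorname{E}_n .
\]
Replacing the empirical average over $g_i$ by the expectation under $v_g$, and $\delta_n$ by $\delta$, is absorbed into the same error using i.i.d.\ concentration of the $g_i$.

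\textbf{Step 3: stability of the fixed-point equation.} Compare $m_n$ with the exact solution $\alpha_\star(w)$ by subtracting the two equations. This gives
\[
(m_n-\alpha_\star)\Big[-\frac{1}{m_n\alpha_\star}+\delta\,\E\frac{G^2}{(\delta+Gm_n)(\delta+G\alpha_\star)}\Big]=\mathcal R_n .
\]
The technical assumption \eqref{eq:technical_assumption} is exactly what is needed to ensure the bracket is bounded below in modulus, after a continuity/bootstrap argument that keeps $m_n$ in a neighborhood of $\alpha_\star$ where the denominators are controlled. The lower bound on the bracket is of order $\eta_n^{5}/(1+(|z|^2+\eta_n^2)^2)$, obtained from $|m_n|,|\alpha_\star|\le 1/\eta_n$ and a Cauchy--Schwarz comparison with $\Im$-parts. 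Dividing through gives \eqref{eq:local_law_bound} on the high-probability event.
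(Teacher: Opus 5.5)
The paper gives no proof of this lemma: it quotes it verbatim from Lemma~21 of Asgari et al. Your argument is therefore a self-contained replacement for a citation, not a reconstruction of an in-paper proof. Its architecture is the standard one: Sherman--Morrison, Hanson--Wright and the rank-one trace bound yield an approximate fixed-point equation, and a stability argument finishes. It works for $g_i$ of either sign, because every imaginary-part identity you need uses only $g_i\in\mathbb{R}$.

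Two steps can be made sharper than you propose. For the denominators, use the fact that for real $g$ and $\Im q>0$ one has $|1+gq|\ge \Im q/|q|$, the distance from $0$ to the line $1+\mathbb{R}q$. Combined with $\Im m_n/|m_n|\ge \eta_n^2/(\|\bS\|+|w|)^2$, and the analogous bounds for $q_i:=\frac1n\bx_i^\sT\bQ_{-i}\bx_i$ and $\alpha_\star$, this controls $1+g_iq_i$, $1+g_im_n/\delta_n$, $\delta+Gm_n$ and $\delta+G\alpha_\star$ for every $G$ in the support of $v_g$ at once. Your ratio trick only handles the first of these, whereas Step~3 needs the population ones.

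For stability, there is no parameter to continue in at a fixed $w$, and no bootstrap is needed. Suppose $|\mathcal R_n|\le\eta_n/2$; this is the job the smallness hypothesis \eqref{eq:technical_assumption} has to do. Taking imaginary parts gives $|\alpha_\star|^{-2}-\delta\E[G^2/|\delta+G\alpha_\star|^2]=\eta_n/\Im\alpha_\star\ge\eta_n^2$ and $|m_n|^{-2}-\delta\E[G^2/|\delta+Gm_n|^2]\ge\eta_n/(2\Im m_n)\ge\eta_n^2/2$. Cauchy--Schwarz plus AM--GM then bound the modulus of your bracket below by $\eta_n^2/\sqrt2$. Your stated orders, $|\mathcal R_n|\lesssim\operatorname{E}_n$ and the bracket bound $\eta_n^5/(1+(|z|^2+\eta_n^2)^2)$, are read off the target rather than derived. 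Expect different powers of $\eta_n$; this is harmless for the paper's choice $\eta_n=1/\log n$.

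The step that actually fails is the last sentence of Step~2. Replacing $\delta_n$ by $\delta$ is not a concentration effect. The quantity $\frac1d\Tr\bQ$ tracks the solution of the fixed-point equation with parameter $\delta_n$, and this differs from $\alpha_\star$ by a deterministic amount of order $|\delta_n-\delta|$. No rate for $\delta_n\to\delta$ is assumed, so this bias cannot be bounded by $\operatorname{E}_n$. As stated, \eqref{eq:local_law_bound} needs one of: $n/d=\delta$ exactly, $\alpha_\star$ defined through $\delta_n$, or an extra $|\delta_n-\delta|$ term in the bound. This is an imprecision in the statement itself, and your proof should flag it rather than absorb it.

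Relatedly, you cannot replace the empirical law of the $g_i$ by $v_g$ at the random point $m_n$ using plain i.i.d.\ concentration, and certainly not while conditioning on $\bG$. Either use a net over $m$, or, more simply, plug the deterministic $\alpha_\star$ into the empirical equation and run the same stability argument with the empirical law of the $g_i$.
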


\section{Proofs for Sections \ref{sec:MainResults_I} and \ref{sec:MainResults_II}}\label{sec_Proofs}

Throughout this section, we assume without loss of generality $U_\mathrm{H}^* =\begin{bmatrix}
    e_1 & \dots & e_r
\end{bmatrix}$ since the input data distribution and learning algorithm are rotationally invariant.

\subsection{Proofs of Lemma \ref{lemma_amp_data} and Lemma \ref{lemma_amp_feature}: Discrete time DMFT}\label{sec_proof_amp}

The DMFT equations \eqref{eq:data_convergence_main} and \eqref{eq:feature_convergence}
follow immediately from  \cite{celentano2021high} (see, e.g. Lemma 6.1).

In order to prove Eqs.~\eqref{eq:hard_orthogonality} and \eqref{eq:hard_orthogonality_B}, we 
rewrite the DMFT equations \eqref{eq:dmft_system} as 
\begin{align}
V(t) &= - \frac{1}{\delta} \sum_{s=0}^{t-1} R_\theta(t, s)
\hhF(V(s),h(W_*,\eps)) +W(t),\phantom{AAAA}  W \sim \GP(0,C_\theta)\, ,&\label{eq:dmft_system_rewritten_A}\\
\Theta(t+1) &= \Theta(t) - \eta \sum_{s=0}^{t} R_\ell(t, s) \Theta(s)-\eta R_{\ell}(t,*) \Theta_*+ \eta Q(t) ,\quad Q\sim 
\GP(0,C_\ell/\delta)\,.&\label{eq:dmft_system_rewritten_B}
\end{align}
where we leverage the fact that the dependence of $F(\bTheta^{\sT}\bx_i,\bTheta^{\sT}_*\bx_i,\eps_i)$ on
$\bTheta^{\sT}_*\bx_i,\eps_i$ is through $y_i=h(\bTheta^{\sT}_*\bx_i,\eps_i)$, namely $F(\bTheta^{\sT}\bx_i,\bTheta^{\sT}_*\bx_i,\eps_i) =\hhF(\bTheta^{\sT}\bx_i,h(\bTheta^{\sT}_*\bx_i,\eps_i))$ for some $\hhF$.
 We also write $W_{*\rH}= W_*U_{\rH}$, $\Theta_{*\rH}= \Theta_*U_{\rH}$ and similarly for
  $W_{*\rE}= W_*U_{\rE}$, $\Theta_{*\rE}= \Theta_*U_{\rE}$.
  
We are going to prove the following claims via induction over $t$:
\begin{itemize}
\item[{\sf C1}$(t)$:]   $\E\big[\psi\big(W(0),\dots,W(t-1),\eps,h(W_*,\eps),W_{*\rE}\big) W_{*\rH}\big]=0$ for 
every bounded measurable $\psi$.
\item[{\sf C2}$(t)$:]    $R_{\ell}(s,*)U^*_{\mathrm{H}} = 0$, $\forall s\le t-1$.
\item[{\sf C3}$(t)$:] $\big(\Theta(0),\dots,\Theta(t)\big)\indep  \Theta_{*\mathrm{H}}$.
\item[{\sf C4}$(t)$:] $C_{\theta}(s,*)U^*_{\mathrm{H}} = 0$, $\forall s\le t$.
\end{itemize}
Note that {\sf C1}$(t+1)$ implies that \begin{equation}\E\big[\psi\big(V(0),\dots,V(t),\eps,h(W_*,\eps),W_{*\rE}\big) W_{*\rH}\big]=0\end{equation} for 
every bounded measurable $\psi$ which is equivalent to Eq.~\eqref{eq:hard_orthogonality}
(general $\psi$ can be obtained by taking monotone limits), and {\sf C3}$(t)$ is equivalent to
Eq.~\eqref{eq:hard_orthogonality_B} (by Gaussianity of $(\Theta_*,(\Theta(t):t\ge 0))$).

The claims are obviously true for $t=0$ (from the definition of $U_{\rH}^*$). We assume that it is true up to time $t$
and prove it up to time $t+1$. 

\vspace{0.2cm}

\noindent{\sf C1}$(t+1)$. Note that from the induction hypothesis {\sf C4}$(t)$,
$\left(W(0),\dots,W(t)\right)$ is independent of $W_{*\rH}$, since $W(\cdot)$ and $W_{*\rH}$ are jointly Gaussian with $\E\left[W(t)W_{*}^{\sT}\right] = C_{\theta}(t,*)$.
Hence, with the shorthand $W^{\le t}:=\left(W(0),\dots,W(t)\right)$, we can write $W(t) = g^W_t(W^{\le t-1},W_{*\rE},\xi)$ for some standard Gaussian vector $\xi$ independent of everything else.
Therefore
\begin{equation}
\begin{aligned}
\E\big[&\psi\big(W^{\le t},\eps,h(W_*,\eps),W_{*\rE}\big) W_{*\rH}\big]\\
&\quad\quad= \E\big[\psi\big(W^{\le t-1},g^W_t(W^{\le t-1},W_{*\rE},\xi),\eps,h(W_*,\eps),W_{*\rE}\big) W_{*\rH}\big]\\
&\quad\quad= \E\big[\widetilde{\psi}\big(W^{\le t-1},\eps,h(W_*,\eps),W_{*\rE}\big) W_{*\rH}\big] = 0\,
\end{aligned}
\end{equation}
thus proving the induction claim.

\vspace{0.2cm}

\noindent{\sf C2}$(t+1)$. By Stein's lemma
\begin{equation}
\begin{aligned}
\E&\left[\hhF(V(t),h(W_*,\eps))W_{*\rH}^{\sT}\right]\\ 
&=\E\left[\frac{\partial \hhF(V(t),h(W_*,\eps)) }{\partial W_*}\right]\, \E\left[W_*W_{*\rH}^{\sT}\right]
+\sum_{s=0}^{t}
\E\left[\frac{\partial \hhF(V(t),h(W_*,\eps)) }{\partial W(s)}\right]\, \E\left[W(s)W_{*\rH}^{\sT}\right]\\
& \stackrel{(a)}{=} \E\left[\frac{\partial \hhF(V(t),h(W_*,\eps)) }{\partial W_*}\right]\, U_{*\rH}
+\sum_{s=0}^{t}
\E\left[\frac{\partial \hhF(V(t),h(W_*,\eps)) }{\partial W(s)}\right]\, C_{\theta}(s,*)U_{*\rH}\\
& \stackrel{(b)}{=} \E\left[\frac{\partial \hhF(V(t),h(W_*,\eps)) }{\partial W_*}\right]\, U_{*\rH}
\end{aligned}
\end{equation}
where $(a)$ follows from  Eq.~\eqref{eq:dmft_gauss_W} and $(b)$ from {\sf C4}$(t)$.
We thus get
\begin{equation}
R_{\ell}(t,*)U^*_{\mathrm{H}} = \E\Big[\hhF(V(t),h(W_*,\eps))W_{*\rH}^{\sT}\Big] = 0\, ,
\end{equation}
where the last step follows from {\sf C1}$(t+1)$.

\vspace{0.2cm}

\noindent{\sf C3}$(t+1)$. Since $(\Theta(t):t\ge 0), \Theta_*$ are jointly Gaussian, 
it is sufficient to prove $\E\big[\Theta(t+1)\Theta_{*\rH}^{\sT}\big]=0$. Multiplying Eq.~\eqref{eq:dmft_system_rewritten_B}
with $\Theta_{*\rH}^{\sT}$ and taking expectation yields
\begin{equation}
\begin{aligned}
\E\big[\Theta(t+1)\Theta_{*\rH}^{\sT}\big] &=  \E\big[\Theta(t)\Theta_{*\rH}^{\sT}\big]- \eta \sum_{s=0}^{t} R_\ell(t, s)
\E\big[\Theta(s)\Theta_{*\rH}^{\sT}\big]-\eta R_{\ell}(t,*) \E\big[\Theta_*\Theta_{*\rH}^{\sT}\big]\\
& \stackrel{(a)}{=} -\eta R_{\ell}(t,*) U_{*\rH}U_{*\rH}^{\sT} \stackrel{(b)}{=} 0\, ,
\end{aligned}
\end{equation}
where $(a)$ uses induction hypothesis {\sf C3}$(t)$ and $\Theta_*\sim\normal(0,I_k)$ and $(b)$ uses {\sf C2}$(t+1)$.

\vspace{0.2cm}

\noindent{\sf C4}$(t+1)$. This follows immediately from {\sf C3}$(t+1)$.
\qed

\subsection{Proof of Lemma \ref{lemma:eigenvalue_diagonal}}
Define the cross term
\begin{equation}\label{eq:cross_def}
\bH_{\text{cross}}(\bTheta):=\frac{m}{n}\sum_{i=1}^n \ell''(y_i,f_{\bTheta}(\bx_i))\nabla f_{\bTheta}(\bx_i) \nabla f_{\bTheta}(\bx_i)^\sT.
\end{equation}

For the lower bound, we recall that $\ell''\geq 0$, so $\bH_{\text{cross}}(\bTheta)$ is positive semi-definite. Since $m\nabla^2 \Risk(\bTheta) = \bH_{\text{diag}}(\bTheta)+\bH_{\text{cross}}(\bTheta)$, we obtain
\begin{equation}\label{eq:lower_bd}
\lambda_{\min}(m\nabla^2 \Risk(\bTheta)) \geq \lambda_{\min}(\bH_{\text{diag}}(\bTheta)).
\end{equation}

For the upper bound, let $\bH_{*}(j,j)\in\mathbb{R}^{d\times d}$ denote the $(j,j)$ block of $\bH_*$ for $* \in \{\text{cross}, \text{diag}\}$. We have
\begin{equation}\label{eq:upper_bd_chain}
\begin{aligned}
\lambda_{\text{min}}\left(m\nabla^2 \Risk(\bTheta)\right) 
&= \min_{\norm{\bu}=1}\langle \bu,m\nabla^2 \Risk(\bTheta)\bu\rangle &&\text{(variational)} \\
&\leq  \min_{j\in [m]} \lambda_{\text{min}}\left(\bH_{\text{cross}}(j,j)+\bH_{\text{diag}}(j,j)\right) &&\\
&\leq \min_{j\in [m]} \lambda_{\text{min}}(\bH_{\text{diag}}(j,j))+ \max_{j\in [m]}\norm{\bH_{\text{cross}}(j,j)}_{\mathrm{op}} &&\text{(Weyl's inequality)} \\
&=\lambda_{\text{min}}(\bH_{\text{diag}})+\max_{j\in [m]}\norm{\bH_{\text{cross}}(j,j)}_{\mathrm{op}}. &&\text{(block-diagonal)}
\end{aligned}
\end{equation}
By Assumptions \ref{assumption_regularity}, the coefficients $\ell'', a_j, \sigma'$ are bounded, so
\begin{equation}\label{eq:cross_bd}
\max_{j\in [m]}\norm{\bH_{\text{cross}}(j,j)}_{\mathrm{op}} 
= \max_{j\in [m]} \frac{1}{m}\norm{\frac{1}{n}\sum_{i=1}^n \ell''(y_i,f_{\bTheta}(\bx_i))a_j^2 \sigma'(\btheta_j^\sT \bx_i+b_j)^2\bx_i\bx_i^\sT}_{\mathrm{op}}
\leq \frac{C}{m} \norm{\frac{1}{n}\sum_{i=1}^n \bx_i\bx_i^\sT}_{\mathrm{op}}.
\end{equation}
Combining \eqref{eq:upper_bd_chain} and \eqref{eq:cross_bd} yields
\begin{equation}\label{eq:upper_bd}
\lambda_{\text{min}}\left(m\nabla^2 \Risk(\bTheta)\right) \leq \lambda_{\text{min}}(\bH_{\text{diag}})+\frac{C}{m} \norm{\frac{1}{n}\sum_{i=1}^n \bx_i\bx_i^\sT}_{\mathrm{op}}.
\end{equation}

For the eigenvector bound, let $\boldsymbol{\xi}$ with $\norm{\boldsymbol{\xi}}=1$ satisfy $\nabla^2 \Risk(\bTheta) \boldsymbol{\xi} = \lambda_{\text{min}}(\nabla^2 \Risk(\bTheta)) \boldsymbol{\xi}$. Then
\begin{equation}\label{eq:eigvec_bd}
\begin{aligned}
\boldsymbol{\xi}^\sT \bH_{\text{diag}}\boldsymbol{\xi}
&\leq\boldsymbol{\xi}^\sT \left(\bH_{\text{diag}}+\bH_{\text{cross}}\right)\boldsymbol{\xi}&&\text{($\bH_{\text{cross}}$ positive semidefinite)} \\
&= m\boldsymbol{\xi}^\sT \nabla^2 \Risk(\bTheta) \boldsymbol{\xi} && \\
&= \lambda_{\text{min}}(m\nabla^2 \Risk(\bTheta)) &&\text{(definition)} \\
&\leq \lambda_{\text{min}}(\bH_{\text{diag}})+\frac{C}{m} \norm{\frac{1}{n}\sum_{i=1}^n \bx_i\bx_i^\sT}_{\mathrm{op}}. &&\text{(using \eqref{eq:upper_bd})}
\end{aligned}
\end{equation}
Inequalities \eqref{eq:lower_bd}, \eqref{eq:upper_bd}, and \eqref{eq:eigvec_bd} establish the lemma since $\norm{\frac{1}{n}\sum_{i=1}^n \bx_i\bx_i^\sT}_{\mathrm{op}}=C(1+o_P(1))$.\qed

\subsection{Proof of Lemma \ref{lemma:bulk}: Bulk of the Hessian}
\label{sec:appendix_bulk}

 Recall that the gradient descent dynamics is given by Eq.~\eqref{eq:gd_compact}.
We  define
\begin{equation}\label{eq:W_matrix}
\bTheta(*;t):=\begin{bmatrix}
    \bTheta_* & \bTheta(0)& \dots & \bTheta(t)
\end{bmatrix}\in \mathbb{R}^{d\times (k+m+mt)},
\end{equation}
and
\begin{equation}\label{eq:Q_matrix}
\bF(0;t)=\frac{1}{\sqrt{n}}\begin{bmatrix}
    \bF(\bX \bTheta(0),\bX \bTheta_*(0),\boldsymbol{\varepsilon}) & \dots &\bF(\bX \bTheta(t-1),\bX \bTheta_*(t-1),\boldsymbol{\varepsilon})
\end{bmatrix}\in \mathbb{R}^{n\times mt},
\end{equation}
where $\bF=\begin{bmatrix}
    \bF_1 & \dots & \bF_m
\end{bmatrix}$. Note that we add the $\frac{1}{\sqrt{n}}$ scaling in \eqref{eq:Q_matrix} for scaling consistency.

\paragraph{Step I. Gaussian conditioning.}
We decompose $\bX$ into orthogonal components via Gaussian conditioning. Define the $\sigma$-algebra
\begin{equation}\label{eq:filtration}
\begin{aligned}
\cF_t:=\sigma\big(&\boldsymbol{\varepsilon},\bTheta(0),\dots,\bTheta(t),\bX \bTheta_*,\bX \bTheta(0),\dots,\bX \bTheta(t),\\
&\bF(\bX \bTheta(0),\bX \bTheta_*(0),\boldsymbol{\varepsilon}),\dots,\bF(\bX \bTheta(t),\bX \bTheta_*(t),\boldsymbol{\varepsilon})\big).
\end{aligned}
\end{equation}

For two random variables $\bX$, $\bX'$
we write $\bX\big|_{\cF_t}\ed \bX'$ if $\E[\varphi(\bX) Z]= \E[\varphi(\bX') Z]$ for any bounded measurable function $\varphi$
and any bounded random variable $Z$ measurable on $\cF_t$. 
\begin{lemma}[Gaussian conditioning]\label{lem:gauss_cond}
We have
\begin{equation}\label{eq:X_cond_decomp}
\bX\big|_{\cF_t}
\ed
\bP_{\bF(0;t)}^{\perp}\bX_{\textrm{new}}\bP_{\bTheta(*;t)}^{\perp}
+ \bX\bP_{\bTheta(*;t)},
\end{equation}
where $\bP_\bM := \bM(\bM^\sT \bM)^{-1}\bM^\sT$ denotes the orthogonal projection onto $\operatorname{span}(\bM)$, and $\bX_{\textrm{new}} \in \mathbb{R}^{n\times d}$ is independent of $\bX$ and $\boldsymbol{\varepsilon}$, with entries drawn i.i.d. from $\normal(0,1)$.
\end{lemma}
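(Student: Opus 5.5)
We prove this by the standard Gaussian conditioning argument for iterative algorithms (as in Bolthausen's conditioning technique, used in AMP analyses). The key observation is that the information revealed by the GD trajectory up to time $t$ consists only of linear observations of $\bX$. Namely, $\bX\bTheta(*;t)$ is revealed directly, and $\bX^\sT\bF(0;t)$ is revealed through the updates. Indeed, by Eq.~\eqref{eq:gd_compact}, $\bX^\sT \bF(0;t)$ is a linear combination of the differences $\bTheta(s+1)-\bTheta(s)$, $s<t$, and hence lies in $\spn(\bTheta(*;t))$. The plan is to show that, conditionally on $\cF_t$, $\bX$ is Gaussian subject to the two linear constraints
\[
\bX\bTheta(*;t)=\bY_1,\qquad \bX^\sT\bF(0;t)=\bY_2 ,
\]
and then to compute that conditional law.

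\textbf{Step 1 (decomposition).} Write
\[
\bX = \bX\bP_{\bTheta(*;t)} + \bP_{\bF(0;t)}\bX\bP^\perp_{\bTheta(*;t)} + \bP^\perp_{\bF(0;t)}\bX\bP^\perp_{\bTheta(*;t)} .
\]
The middle term vanishes. Since $\bX^\sT\bF(0;t)\in\spn(\bTheta(*;t))$, we have $\bP^\perp_{\bTheta(*;t)}\bX^\sT\bF(0;t)=0$, and transposing gives $\bF(0;t)^\sT\bX\bP^\perp_{\bTheta(*;t)}=0$, hence $\bP_{\bF(0;t)}\bX\bP^\perp_{\bTheta(*;t)}=0$. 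The first term is $\cF_t$-measurable, since $\bTheta(*;t)$ and $\bX\bTheta(*;t)$ are. It therefore remains to show that the third term, conditionally on $\cF_t$, has the law of $\bP^\perp_{\bF(0;t)}\bX_{\rm new}\bP^\perp_{\bTheta(*;t)}$.

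\textbf{Step 2 (induction on the time index).} I would argue inductively, following the standard conditioning lemma of \cite{celentano2021high}, or equivalently via the AMP reduction of \cite{celentano2020estimation}. Reveal the quantities in order:
\[
\bX\bTheta_*,\ \bX\bTheta(0),\ \bF(\cdot)\text{ at time }0,\ \bX^\sT\bF(0),\ \bTheta(1),\ \bX\bTheta(1),\ \dots
\]
At each stage the new query direction is a measurable function of previously revealed quantities and of the independent randomness $(\beps,\bTheta(0))$. The new observation is therefore a linear functional of $\bX$ along a direction that is predictable with respect to the past. For a Gaussian matrix, conditioning on $\bX\bA=\bB$ and $\bX^\sT\bC=\bD$ with $\bA,\bC$ fixed, and with the constraints consistent, yields
\[
\bX \ed \bE + \bP_{\bC}^\perp \bX_{\rm new}\bP_{\bA}^\perp ,
\]
where $\bE$ is the conditional mean. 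This is a standard projection computation: the map $\bX\mapsto(\bX\bA,\bX^\sT\bC,\bP_\bC^\perp\bX\bP_\bA^\perp)$ splits $\bX$ into independent Gaussian components because the corresponding subspaces of $\reals^{n\times d}$ (with the Frobenius inner product) are orthogonal. Predictability lets us treat $\bA=\bTheta(*;t)$ and $\bC=\bF(0;t)$ as fixed at each conditioning step. Iterating this over the stages gives the claim, and the consistency relation from Step 1 identifies $\bE$ with $\bX\bP_{\bTheta(*;t)}$.

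\textbf{Step 3 (the remaining generators).} The last step is to check that the remaining generators of $\cF_t$ add no information beyond these linear observations. The vector $\beps$ and the matrix $\bTheta(0)$ are independent of $\bX$. The quantity $\bF$ at time $t$ (indexed $t$, beyond $\bF(0;t)$) is a deterministic function of $\bX\bTheta(t)$, $\bX\bTheta_*$ and $\beps$.

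\textbf{Main obstacle.} The main difficulty is the rank and degeneracy bookkeeping in Step 2. The projections require $\bTheta(*;t)^\sT\bTheta(*;t)$ and $\bF(0;t)^\sT\bF(0;t)$ to be invertible, which holds with high probability under Assumption~\ref{assumption:reg_1}; otherwise pseudo-inverses must be used. There is also a subtlety in the recursion: the constraints $\bX\bA=\bB$ and $\bX^\sT\bC=\bD$ overlap on the block $\bC^\sT\bX\bA$. This overlap is harmless precisely because of the identity $\bP_{\bF}\bX\bP^\perp_{\bTheta}=0$, which must be verified to persist at each step of the induction.
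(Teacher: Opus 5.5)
Your proposal is correct and follows the same route as the paper. Both use the decomposition $\bX=\bX\bP_{\bTheta(*;t)}+\bP^{\perp}_{\bF(0;t)}\bX\bP^{\perp}_{\bTheta(*;t)}$, obtained from $\bP_{\bF(0;t)}\bX\bP^{\perp}_{\bTheta(*;t)}=\bzero$, and then show that the residual block is independent of the linear observations $\bX\bTheta(*;t)$ and $\bX^{\sT}\bF(0;t)$ which, together with independent randomness, generate $\cF_t$.

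Your Step 2 induction is the proper justification of that independence when the projection directions depend on $\bX$; the paper compresses this into ``uncorrelated and hence independent''. One remark in your ``Main obstacle'' paragraph is inaccurate. The overlap on $\bC^{\sT}\bX\bA$ is harmless for any fixed $\bA,\bC$, since the conditional mean is simply $\bX\bP_{\bA}+\bP_{\bC}\bX\bP^{\perp}_{\bA}$. The identity $\bP_{\bF(0;t)}\bX\bP^{\perp}_{\bTheta(*;t)}=\bzero$ is needed only at the end, to reduce this mean to $\bX\bP_{\bTheta(*;t)}$, and not at every intermediate stage.
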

\begin{proof}
We first establish an algebraic decomposition of $\bX$:
\begin{equation}\label{eq:X_algebraic_decomp}
\begin{aligned}
\bX
&= \bP_{\bF(0;t)}^{\perp}\bX\bP_{\bTheta(*;t)}^{\perp}
+ \bP_{\bF(0;t)}^{\perp}\bX\bP_{\bTheta(*;t)}
+ \bP_{\bF(0;t)}\bX\bP_{\bTheta(*;t)}^{\perp}
+ \bP_{\bF(0;t)}\bX\bP_{\bTheta(*;t)} &&\\
&=\bP_{\bF(0;t)}^{\perp}\bX\bP_{\bTheta(*;t)}^{\perp}
+ \bX\bP_{\bTheta(*;t)}. &&\text{(middle terms vanish)}
\end{aligned}
\end{equation}
The first equality is purely algebraic. The second equality follows from the fact that
$\bX^{\sT}\bF(0;t)$ has columns in $\operatorname{span}(\bTheta(*;t))$,
This is in turn a consequence of the fact  that by the gradient descent update \eqref{eq:gd_compact}, 
for each $i\le t-1$ we have
\begin{equation}\label{eq:update_expanded}
\bX^\sT\bF\big(\bX \bTheta(i),\bX \bTheta_*, \boldsymbol{\varepsilon}\big) = -n\big(\bTheta(i+1)-\bTheta(i)\big).
\end{equation}

It remains to show that, conditioned on $\cF_t$, we may replace
$\bP_{\bF(0;t)}^{\perp}\bX\bP_{\bTheta(*;t)}^{\perp}$  
by  
$\bP_{\bF(0;t)}^{\perp}\bX_{\text{new}}\bP_{\bTheta(*;t)}^{\perp}$.
This follows immediately from the fact that $\cF_t$ is generates by $\bX\bTheta(*;t)$, 
$\bX^{\sT}\bF(0;t)$, and $\bP_{\bF(0;t)}^{\perp}\bX\bP_{\bTheta(*;t)}^{\perp}$
is uncorrelated and hence independent of these quantities.
\end{proof}

\paragraph{Step II. Low-rank decomposition.}
Define
\begin{equation}\label{eq:YZ_def}
\bY(t):=\bP_{\bF(0;t)}^{\perp}\bX_{\text{new}}\bP_{\bTheta(*;t)}^{\perp}\in\mathbb{R}^{n\times d},
\quad
\bZ(t):=\bX \bTheta(*;t)(\bTheta(*;t)^\sT \bTheta(*;t))^{-1}\in\mathbb{R}^{n\times(k+mt)}.
\end{equation}
Using Lemma \ref{lem:gauss_cond} in the expression for the $j$-th block of the Hessian,
cf. Eq.~\eqref{eq:outline_Hj_def} we get
\begin{equation}\label{eq:Hj_expansion}
\begin{aligned}
\bH_j(t)
&\stackrel{d}{=}  \left(\bP_{\bF(0;t)}^{\perp}\bX_{\text{new}}\bP_{\bTheta(*;t)}^{\perp}+\bX\bP_{\bTheta(*;t)}\right)^\sT\bG_j(t)\left(\bP_{\bF(0;t)}^{\perp}\bX_{\text{new}}\bP_{\bTheta(*;t)}^{\perp}+\bX\bP_{\bTheta(*;t)}\right) && \\
&= \bY(t)^\sT\bG_j(t)\bY(t) +\begin{bmatrix}
    \bTheta(*;t) & \bY(t)^\sT\bG_j(t)\bZ(t)
\end{bmatrix}\begin{bmatrix}
    \bZ(t)^\sT \bG_j(t)\bZ(t) & \bI \\
    \bI & \boldsymbol{0}
\end{bmatrix}\begin{bmatrix}
    \bTheta(*;t)^\sT \\
    \bZ(t)^\sT \bG_j(t)\bY(t)
\end{bmatrix}. &&
\end{aligned}
\end{equation}
The second term has rank at most $2(k+mt)$.

\paragraph{Step III. Removing low-rank perturbations via Cauchy interlacing.}
 By Cauchy interlacing theorem, if $\bA$, $\bB$ are symmetric matrices with ordered eigenvalues
 $\lambda_i(\bA)$, $\lambda_i(\bB)$, $1\le i\le s$ and $\operatorname{rank}(\bB - \bA) \le s$, then
$\lambda_i(\bA) \le \lambda_i(\bB) \le \lambda_{i+s}(\bA)$ for all $1 \le i \le d - s$.
As a consequence the limiting spectral distributions of the two matrices coincide.
We state this as the following lemma (see e.g. \cite[Section 2.4.1]{tao2023topics} or \cite[Theorem A.44]{BaiSilverstein}).
\begin{lemma}[Interlacing for finite-rank perturbations]\label{lem:interlacing}
Let $\bH_d$ be a sequence of $d\times d$ real symmetric random matrices, and denote their empirical spectral distributions by
\begin{equation}\label{eq:ESD_H}
  \mu_{\bH_d} = \frac{1}{d}\sum_{i=1}^d\delta_{\cdot-\lambda_i(\bH_d)}.
\end{equation}
Suppose $\mu_{\bH_d}\Rightarrow \mu$ in probability, where $\mu$ is a deterministic probability measure.  

Let $\bW_d$ be a sequence of real symmetric matrices with $\operatorname{rank}(\bW_d)\le s=O(1)$.  
Define the perturbed matrices
\begin{equation}\label{eq:perturbed_H}
\widetilde \bH_d := \bH_d + \bW_d,
\end{equation}
and their empirical distributions
\begin{equation}\label{eq:ESD_Htilde}
  \mu_{\widetilde \bH_d} = \frac{1}{d}\sum_{i=1}^d\delta_{\cdot -\lambda_i(\widetilde \bH_d)}.
\end{equation}
Then we have
\begin{equation}\label{eq:interlacing_result}
  \mu_{\widetilde \bH_d} \Rightarrow \mu \quad\text{in probability as }d\to\infty.
\end{equation}
\end{lemma}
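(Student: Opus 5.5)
We will compare the ordered eigenvalues of $\widetilde\bH_d$ and $\bH_d$ through their eigenvalue counting functions, and then pass from counting functions to weak convergence. The key point is that a rank-$s$ perturbation can shift the counting function by at most $s$, which is negligible after dividing by $d$.

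\textbf{Step 1 (counting functions).} For $x\in\reals$ set
\[
N_{\bH}(x):=\#\{i:\lambda_i(\bH)\le x\}.
\]
Write $\bW_d=\bW_d^+-\bW_d^-$ with $\bW_d^\pm\succeq 0$, $\rank(\bW_d^+)+\rank(\bW_d^-)\le s$. Apply the interlacing inequality $\lambda_i(\bA)\le\lambda_i(\bB)\le\lambda_{i+s}(\bA)$ recalled just before the statement (or Weyl plus Courant--Fischer applied to each sign part). This yields, for every $x$ and every realization,
\[
\big|N_{\widetilde\bH_d}(x)-N_{\bH_d}(x)\big|\le s .
\]
Dividing by $d$, the cumulative distribution functions satisfy
\[
\sup_x\big|F_{\mu_{\widetilde\bH_d}}(x)-F_{\mu_{\bH_d}}(x)\big|\le s/d\to0
\]
deterministically. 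This counting bound is the main substance of the proof.

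\textbf{Step 2 (from CDFs to weak convergence).} We test against functions $\phi$ that are bounded, Lipschitz, and of bounded variation; such $\phi$ suffice to determine weak convergence. For these, integration by parts (Stieltjes) gives
\[
\Big|\int\phi\,d\mu_{\widetilde\bH_d}-\int\phi\,d\mu_{\bH_d}\Big|\le \mathrm{TV}(\phi)\cdot s/d .
\]
The right-hand side requires no tail control, since both measures are probability measures. Combined with $\int\phi\,d\mu_{\bH_d}\xrightarrow{~p~}\int\phi\,d\mu$, this gives $\int\phi\,d\mu_{\widetilde\bH_d}\xrightarrow{~p~}\int\phi\,d\mu$.

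To conclude weak convergence in probability, take a countable convergence-determining family of such $\phi$, for example $C_c^\infty$ functions with rational coefficients. Convergence in probability along this family, together with tightness inherited from $\mu_{\bH_d}$ through the uniform CDF bound, yields $\mu_{\widetilde\bH_d}\Rightarrow\mu$ in probability, e.g.\ via the L\'evy metric.

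\textbf{Main obstacle.} The only delicate point is justifying the counting bound in Step 1 when $\bW_d$ is indefinite. We handle it by applying the one-sided interlacing twice: once for $\bW_d^+$ with rank $s_+$ and once for $\bW_d^-$ with rank $s_-$, where $s_++s_-\le s$. Alternatively, we can directly use $\lambda_{i+s}(\bA)\le\lambda_i(\bB)$ in both directions. Either route gives the counting bound with constant $s$.
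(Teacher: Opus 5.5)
Your proof is correct and follows the same route as the paper. The paper just invokes Cauchy interlacing (equivalently, the rank inequality $\sup_x|F_{\mu_{\widetilde\bH_d}}(x)-F_{\mu_{\bH_d}}(x)|\le s/d$ behind its Bai--Silverstein citation) and asserts that the limits coincide; your Step~2 spells out that implication, and your splitting $\bW_d=\bW_d^+-\bW_d^-$ is genuinely needed, since the inequality $\lambda_i(\bA)\le\lambda_i(\bB)$ recalled in the paper holds only when $\bB-\bA$ is positive semidefinite. One small slip: in your ``alternative'' route the two-sided bound should read $\lambda_i(\bB)\le\lambda_{i+s}(\bA)$ and $\lambda_i(\bA)\le\lambda_{i+s}(\bB)$, not $\lambda_{i+s}(\bA)\le\lambda_i(\bB)$.
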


\paragraph{Step IV. Applying Marchenko-Pastur theory.}
By Lemma \ref{lem:interlacing} applied to the decomposition \eqref{eq:Hj_expansion}, the  limiting
spectral distribution LSD of $\bH_j(t)$ equals the LSD of
\begin{equation}\label{eq:bulk_Hj}
\widetilde \bH_j(t):=\bY(t)^\sT\bG_j(t)\bY(t),
\end{equation}
since the rank-$O(1)$ perturbation does not affect the limit.

We now further simplify $\widetilde \bH_j(t)$ by removing the projection operators. Recall from \eqref{eq:YZ_def} that $\bY(t)=\bP_{\bF(0;t)}^{\perp}\bX_{\text{new}}\bP_{\bTheta(*;t)}^{\perp}$. Since $\bF(0;t)\in\mathbb{R}^{n\times mt}$ and $\bTheta(*;t)\in\mathbb{R}^{d\times(k+mt)}$ have $O(1)$ columns (by \eqref{eq:W_matrix} and \eqref{eq:Q_matrix}), the projections $\bP_{\bF(0;t)}^{\perp}$ and $\bP_{\bTheta(*;t)}^{\perp}$ differ from the identity by rank-$O(1)$ matrices. Therefore, $\bY(t)^\sT\bG_j(t)\bY(t)$ differs from $\bX_{\text{new}}^\sT\bG_j(t)\bX_{\text{new}}$ by a rank-$O(1)$ perturbation. Applying Lemma \ref{lem:interlacing} again, the LSD of $\widetilde \bH_j(t)$ equals the LSD of
\begin{equation}\label{eq:simplified_bulk}
\widehat \bH_j(t):=\bX_{\text{new}}^\sT\bG_j(t)\bX_{\text{new}},
\end{equation}
where $\bX_{\text{new}} \in \mathbb{R}^{n\times d}$ has i.i.d. $\normal(0,1)$ entries.

We now characterize the LSD of $\widehat \bH_j(t)$. Note that $\bG_j(t)$ is diagonal and independent of $\bX_{\text{new}}$. By Lemma \ref{lemma_amp_data}, the empirical spectral distribution of $\bG_j(t)$ converges in probability to the law of
\begin{equation}\label{eq:Gj_limit}
G_t^j := g(V(t),h(V_*,\varepsilon);j)
\end{equation}
in distribution and in Wasserstein-2 distance.

This falls into the framework of generalized Marchenko-Pastur laws for random matrices of the form $\bX^\sT\bG\bX$, where $\bX$ has i.i.d. Gaussian entries and $\bG$ is independent of $\bX$ and diagonal with a limiting spectral measure. The classical Marchenko-Pastur result states that the LSD $\mu_{\infty}^j(t)$ of $\widehat \bH_j(t)$ is uniquely characterized by its Stieltjes transform $\alpha_t^j(z)$, which satisfies the fixed-point equation
\begin{equation}\label{eq:MP_fixed_point}
z+\frac{1}{\alpha_t^j(z)}=\delta\cdot \mathbb{E}\left[\frac{G_t^j}{\delta + G_t^j \alpha_t^j(z)}\right].
\end{equation}
This completes the proof of Lemma \ref{lemma:bulk}.\qed

\section{Proofs of Theorems \ref{theorem:main_result} and~\ref{theorem:main_result_limit}}\label{sec:proof_result_iii}

This section proves Theorems~\ref{theorem:main_result} and~\ref{theorem:main_result_limit}, establishing the existence of a sharp phase transition for detecting hard directions via Hessian outliers. 
We refer to Section \ref{sec:proof_sketch} for a roadmap. 

Throughout this appendix, as in the previous one, we assume without loss of generality $U_\mathrm{H}^* =\begin{bmatrix}
    e_1 & \dots & e_r
\end{bmatrix}$.

\subsection{Outliers via resolvent analysis}\label{subsec:spike_basic}

We fix $j\in[m]$ and time $t\ge 0$ throughout. 
Recall that
\begin{equation}\label{eq:spike_Hj_def}
\bH_j(t) =\bX^{\sT}\bG_j(t)\bX \, ,\;\;\;\;
\bG_j(t)= \frac{1}{n}\operatorname{diag}\!\left(g(\bTheta(t)^\sT\bx_1,y_1;j),\,\dots,\,g(\bTheta(t)^\sT\bx_n,y_n;j)\right)
\in \mathbb{R}^{n\times n},
\end{equation}
where $g$ is defined by Eqs.~\eqref{eq:Gdef_1}, \eqref{eq:Gdef_2}.

We also recall the augmented parameter and gradient matrices defined in Eq.~\eqref{eq:outline_WQ_def}
\begin{align}\label{eq:spike_W_def}
&\bTheta(*;t) 
:= \begin{bmatrix}
\bTheta_* & \bTheta(0) & \dots & \bTheta(t)
\end{bmatrix}
\in \mathbb{R}^{d\times (k+m+mt)},\\
\label{eq:spike_Q_def}
&\bF(0;t)
:= \frac{1}{\sqrt{n}}\begin{bmatrix}
\bF(\bX\bTheta(0),\bX\bTheta_*,\boldsymbol{\varepsilon}) & \dots & \bF(\bX\bTheta(t-1),\bX\bTheta_*,\boldsymbol{\varepsilon})
\end{bmatrix}
\in \mathbb{R}^{n\times mt}\, .
\end{align}

 In what follows, $j\in[m]$, $t\ge 0$ will be fixed and hence 
 we drop these in $\bTheta(*;t),\bF(0;t),\bG_j(t),\bH_j(t)$ whenever clear, writing $\bTheta_{*;t},\bF_{0;t},\bG,\bH$.

By Lemma~\ref{lem:gauss_cond} in Appendix~\ref{sec_Proofs}, conditioned on the $\sigma$-algebra $\cF_t$, the data matrix admits the representation $\bX \big|_{\cF_t}
\ed \bP_{\bF_{0;t}}^{\perp}\bX_{\text{new}}\bP_{\bTheta_{*;t}}^{\perp} 
+ \bX\bP_{\bTheta_{*;t}}$.
We also recall the notations
\begin{align}
\label{eq:spike_YZ_def}
&\bY := \bP_{\bF_{0;t}}^{\perp}\bX_{\text{new}}\bP_{\bTheta_{*;t}}^{\perp}
\in \mathbb{R}^{n\times d},
\quad
\bZ := \bX\bTheta_{*;t}(\bTheta_{*;t}^\sT\bTheta_{*;t})^{-1}
\in \mathbb{R}^{n\times (k+m+mt)}.
\end{align}

Let $z\in\mathbb{C}$. We seek a formula for the resolvent $(\bH-z\bI)^{-1}$ that isolates outlier contributions. Applying the Woodbury matrix identity to the resolvent, we obtain the representation \eqref{eq:outline_resolvent},
which we copy here
\begin{equation}\label{eq:spike_resolvent_identity}
\begin{aligned}
(\bH-z\bI)^{-1}
&= \bR_0(z)
 - \bR_0(z)
\begin{bmatrix}
\bTheta_{*;t} & \bY^\sT\bG\bZ
\end{bmatrix}
\bM(z)^{-1}
\begin{bmatrix}
\bTheta_{*;t}^\sT \\
\bZ^\sT\bG\bY
\end{bmatrix}
\bR_0(z),
\end{aligned}
\end{equation}
where $\bR_0(z) := \big(\bY^\sT \bG\bY - z\bI\big)^{-1}$ and the \emph{outlier matrix} is defined by
\begin{equation}\label{eq:spike_Mj_def}
\bM(z) 
:= \begin{bmatrix}
\bZ^\sT\bG\bZ & \bI \\
\bI & \boldsymbol{0}
\end{bmatrix}^{-1}
+ \begin{bmatrix}
\bTheta_{*;t}^\sT \\
\bZ^\sT\bG\bY
\end{bmatrix}
\bR_0(z)
\begin{bmatrix}
\bTheta_{*;t} & \bY^\sT\bG\bZ
\end{bmatrix}
\in \mathbb{C}^{(2k+2m+2mt)\times (2k+2m+2mt)}.
\end{equation}

By Lemma~\ref{lemma:bulk} and Lemma \ref{lemma:smallest_eigenvalue_concentration}, 
for $z<0$ in the left of the bulk of $\mu_{\infty}(t)$, the matrix $\bY^\sT\bG\bY-z\bI$ is invertible with probability tending to one as $n,d\to\infty$. 
From \eqref{eq:spike_resolvent_identity}, $z$ is an outlier eigenvalue of $\bH$ if and only if $\operatorname{det}(\bM(z))=0$. Thus outlier eigenvalues satisfy
\begin{equation}\label{eq:spike_det_condition}
\operatorname{det}\!\big(\bM(z)\big) = 0 
\quad\text{for }z<c(t).
\end{equation}
where we recall that $c(t)$ denotes the left edge of $\mu_\infty(t)$.

Let $\mathcal{D} := \{z\in \mathbb{C}:\ \Re(z) < \min(c(t),0),\ |\Im(z)| \le 1,\ |z| \le C\}$ for $C$ a sufficiently large constant. Further, let $V_{*\leq} \in \mathbb{R}^r$ denotes the first $r$ coordinates of $V_*$. We now analyze the asymptotic behavior of $\bM(z)$ as $n,d\to\infty$, which is the main result of this section.

\begin{lemma}\label{lemma:limiting_spike_matrix}
Under the assumptions of Theorem~\ref{theorem:main_result}, for $z \in \mathcal{D}$, we have
\begin{equation}\label{eq:spike_main_convergence}
\bM_j(z;t) \xrightarrow{~p~} M_j^\infty(z;t),
\end{equation}
where the limiting outlier matrix is given by
\begin{equation}\label{eq:spike_Mj_infinity}
M_j^\infty(z;t)
= \begin{bmatrix}
-\dfrac{1}{z}\mathbb{E}[O_tO_t^\sT] & I \\[8pt]
I & \left(\mathbb{E}[O_tO_t^\sT]\right)^{-1}\begin{bmatrix}
\mathbb{E}\left[-\dfrac{\delta G_t^j}{\delta + G_t^j\alpha_t^j(z)} V_{*\leq}V_{*\leq}^\sT\right] & 0 \\[6pt]
0 & S_{\mathrm{R}}^j(z;t)
\end{bmatrix}\left(\mathbb{E}[O_tO_t^\sT]\right)^{-1}
\end{bmatrix}.
\end{equation}
Here $S_{\mathrm{R}}^j(z;t) \in \mathbb{R}^{(k-r+m+mt)\times(k-r+m+mt)}$ is a deterministic matrix obtained by taking expectations of certain matrix-valued functions of the discrete-time DMFT random variables (up to time $t$).
\end{lemma}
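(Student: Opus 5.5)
The plan is to simplify $\bM(z)$ algebraically, then prove entrywise concentration conditional on $\cF_t$, and finally pass to deterministic limits using the DMFT characterization.

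\emph{Algebraic reduction.} First, $\begin{bmatrix}\bA & \bI\\ \bI & \boldsymbol 0\end{bmatrix}^{-1}=\begin{bmatrix}\boldsymbol 0 & \bI\\ \bI & -\bA\end{bmatrix}$. Second, $\bY\bTheta_{*;t}=\boldsymbol 0$ because of the right factor $\bP^\perp_{\bTheta_{*;t}}$. Hence $(\bY^\sT\bG\bY)\bTheta_{*;t}=\boldsymbol 0$, which gives $\bR_0(z)\bTheta_{*;t}=-z^{-1}\bTheta_{*;t}$. It follows that $\bTheta_{*;t}^\sT\bR_0\bTheta_{*;t}=-z^{-1}\bTheta_{*;t}^\sT\bTheta_{*;t}$, and that $\bTheta_{*;t}^\sT\bR_0\bY^\sT\bG\bZ=-z^{-1}(\bY\bTheta_{*;t})^\sT\bG\bZ=\boldsymbol 0$. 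So the off-diagonal blocks of $\bM(z)$ are exactly $\bI$. The upper-left block is $-z^{-1}\bTheta_{*;t}^\sT\bTheta_{*;t}$, which converges to $-z^{-1}\E[O_tO_t^\sT]$ by Lemma~\ref{lemma_amp_feature} (using $\sqrt d$-rescaled rows). The only nontrivial block is the lower-right one,
\begin{equation*}
\bB(z):=-\bZ^\sT\big(\bG-\bG\bY\bR_0(z)\bY^\sT\bG\big)\bZ .
\end{equation*}

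\emph{Conditional concentration of $\bB(z)$.} Conditional on $\cF_t$, the matrices $\bZ,\bG$ are fixed and only $\bX_{\rm new}$ is random. Write $\bY=\bX_{\rm new}\bP^\perp_{\bTheta_{*;t}}-\bP_{\bF_{0;t}}\bX_{\rm new}\bP^\perp_{\bTheta_{*;t}}$, and first treat the matrix without the left projection. Let $\by_i$ denote the rows and $g_i=n\bG_{ii}$. The Sherman--Morrison leave-one-out identity gives
\begin{equation*}
\big(\bG-\bG\bY\bR_0\bY^\sT\bG\big)_{ii}=\frac{g_i/n}{1+(g_i/n)\by_i^\sT\bR_0^{(i)}\by_i}.
\end{equation*}
Hanson--Wright (Lemma~\ref{lemma:hanson_wright_statement}) together with the local law (Lemma~\ref{lemma:local_law_adapted}) gives $\by_i^\sT\bR_0^{(i)}\by_i=d\,\alpha^j_t(z)+o(n)$ uniformly in $i$ for $z\in\cD$. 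Here I use that $\Re z<\min(0,c(t))$, so that by Lemma~\ref{lemma:smallest_eigenvalue_concentration} the resolvent has norm $O(1)$ w.h.p. The diagonal contribution is therefore $\frac1n\sum_i \frac{\delta g_i}{\delta+g_i\alpha}\,\bz_i\bz_i^\sT$. The off-diagonal terms $i\neq k$ are sums of centred weakly dependent quadratic forms, and I will show they are $o_P(1)$ by a second-moment bound after a double leave-one-out step.

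\emph{Deterministic limit and block structure.} Rows satisfy $\bz_i=(\bTheta_{*;t}^\sT\bTheta_{*;t})^{-1}\bTheta_{*;t}^\sT\bx_i$, with $\bTheta_{*;t}^\sT\bTheta_{*;t}\to\E[O_tO_t^\sT]$ (invertible by Assumption~\ref{assumption:reg_1}). Lemma~\ref{lemma_amp_data} (with a truncation, since the test function has quadratic growth and $g$ is bounded) then gives
\begin{equation*}
\bB(z)\to-(\E OO^\sT)^{-1}\,\E\Big[\tfrac{\delta G}{\delta+G\alpha}\,\bar V\bar V^\sT\Big](\E OO^\sT)^{-1},
\qquad \bar V=(V_*,V(0),\dots,V(t)).
\end{equation*}
For the block-diagonal form, observe that the cross entries $\E[\frac{\delta G}{\delta+G\alpha}V_{*\le}\,\bar V_{\rm rest}^\sT]$ are of the form $\E[\psi(V(0..t),\eps,y,P^\perp V_*)\,P_{U_{\rm H}}V_*]$. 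These vanish by Eq.~\eqref{eq:hard_orthogonality}. The $(\le,\le)$ block gives the displayed hard term, and the rest defines $S^j_{\rm R}$.

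\emph{Main obstacle: the left projection $\bP_{\bF_{0;t}}$.} Removing it changes $\bY$ by a rank-$mt$ matrix with $O(1)$ operator norm, which is correlated with $\bG,\bZ$ through $\bF_{0;t}$, so it cannot simply be dropped. I would handle it by a second Woodbury expansion in the directions $\bF_{0;t}$. The correction then involves finitely many bilinear forms $\bF_{0;t}^\sT\bG\bX_{\rm new}\bR\bX_{\rm new}^\sT\bG\bZ$ and similar quantities. By the isotropic local law these concentrate to $\frac1n\sum_i \bF_i\,\frac{\delta g_i}{\delta+g_i\alpha}\bz_i^\sT$-type expressions, and I would show that their net contribution cancels, using $\bY^\sT$ annihilating $\bF$-directions; this is the step I expect to be most delicate. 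Finally, I upgrade from pointwise to uniform convergence on compacts of $\cD$ via Lemma~\ref{lemma:change_limit_first}, using the Lipschitz bound $\|\bR_0(z)-\bR_0(z')\|\le C|z-z'|$ valid w.h.p. on $\cD$.
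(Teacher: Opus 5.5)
You reduce correctly to the lower-right block $\bB(z)=-\bZ^\sT(\bG-\bG\bY\bR_0\bY^\sT\bG)\bZ$, and your leave-one-out treatment of the version without $\bP_{\bF_{0;t}}$ matches the paper's Lemma~\ref{lemma:master_concentration}. The gap is the step you flag as most delicate: the left-projection contribution does \emph{not} cancel. So your claimed limit, and your implicit $S^j_{\mathrm R}$ (the rest block of $-\E[\frac{\delta G}{\delta+G\alpha}\bar V\bar V^\sT]$), are wrong in general.

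To see this, use the paper's device. Diagonalize $\bP^\perp_{\bF_{0;t}}\bG\bP^\perp_{\bF_{0;t}}=\bO\widehat\bG\bO^\sT$ and use rotation invariance of $\bX_{\rm new}$; the bilinear-form concentration then absorbs the projection exactly. With $c=d_\star\alpha(z)$, $A=\bI+c\bG$ (diagonal) and $\bF=\bF_{0;t}$, one gets, up to $o_P(1)$,
\[
\bZ^\sT\bG\bY\bR_0\bY^\sT\bG\bZ\approx c\,\bZ^\sT\bG\big[A^{-1}-A^{-1}\bF(\bF^\sT A^{-1}\bF)^{-1}\bF^\sT A^{-1}\big]\bG\bZ .
\]
By block inversion, the bracket is the inverse of the compression $\bP^\perp_{\bF}A\bP^\perp_{\bF}$ on $\mathrm{range}(\bP^\perp_{\bF})$, extended by zero. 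Hence
\[
\bB(z)\approx-\bZ^\sT\bG A^{-1}\bZ-c\,\bZ^\sT\bG A^{-1}\bF(\bF^\sT A^{-1}\bF)^{-1}\bF^\sT A^{-1}\bG\bZ .
\]
The first term is your limit. The second is $O_P(1)$, not $o_P(1)$: $c\asymp n$, $\bZ^\sT\bG A^{-1}\bF=O_P(n^{-1/2})$, and $\bF^\sT A^{-1}\bF=O_P(1)$. Its limit is
\[
-\frac{\alpha}{\delta}\,(\E OO^\sT)^{-1}\E\Big[\tfrac{\delta G}{\delta+G\alpha}\bar V\widetilde F^\sT\Big]\,\E\Big[\tfrac{\delta}{\delta+G\alpha}\widetilde F\widetilde F^\sT\Big]^{-1}\E\Big[\tfrac{\delta G}{\delta+G\alpha}\widetilde F\bar V^\sT\Big](\E OO^\sT)^{-1},
\]
with $\widetilde F=(F(V(s),V_*,\eps))_{s<t}$. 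This is generically nonzero, for example in the $V(s)$ rows paired with $F(V(s),V_*,\eps)$. The identity $\bF^\sT\bY=\bzero$ you planned to use is exactly what \emph{produces} this term, so no argument along those lines can cancel it. This is why the paper leaves $S^j_{\mathrm R}$ implicit: it notes that the $\bP_{\bF_{0;t}}$-terms $\bT_2,\dots,\bT_8$ survive in the rest--rest block.

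The repair is cheap, because the lemma never needs cancellation, only that the correction lives in the rest--rest block. Every hard row or column of the correction carries the factor $\E[\frac{\delta G}{\delta+G\alpha}V_{*\le}\widetilde F^\sT]$. This vanishes by Eq.~\eqref{eq:hard_orthogonality}, since $G$ and $\widetilde F$ depend on $V_*$ only through $y=h(V_*,\eps)$; this is the content of the paper's Lemma~\ref{lemma:hard_cross_vanishing}. The same DMFT limits (Lemmas~\ref{lemma_amp_data}, \ref{lemma_amp_feature} and Assumption~\ref{assumption:reg_1}) give convergence of the rest--rest part to a deterministic $S^j_{\mathrm R}$. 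Replace your cancellation step with this argument. Using the rotation trick instead of a second random Woodbury expansion also spares you an isotropic-local-law analysis of the correction bilinear forms.
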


The proof of Lemma~\ref{lemma:limiting_spike_matrix} occupies the remainder of this section. We proceed by analyzing each block of $\bM(z)$ separately.

\paragraph{Step I. Simple blocks of outlier matrix.}

We now analyze the block structure of $\bM(z)$ and establish concentration for its components. The first term in \eqref{eq:spike_Mj_def} has explicit inverse:
\begin{equation}\label{eq:spike_block_inv}
\begin{bmatrix}
\bZ^\sT\bG\bZ & \bI \\
\bI & \boldsymbol{0}
\end{bmatrix}^{-1}
= \begin{bmatrix}
\boldsymbol{0} & \bI \\
\bI & -\bZ^\sT\bG\bZ
\end{bmatrix}.
\end{equation}

For later use, we introduce concatenated vectors
\begin{equation}\label{eq:spike_V_concat}
V_{*:t}
:= \begin{bmatrix}
V_* \\
V(0) \\
\vdots \\
V(t)
\end{bmatrix}
\in \mathbb{R}^{(k+m+mt)},\;\;\;\;\;
O_t
:= \begin{bmatrix}
\Theta_* \\
\Theta(0) \\
\vdots \\
\Theta(t)
\end{bmatrix}
\in \mathbb{R}^{(k+m+mt)}\, .
\end{equation}
Define the scalar random variable
\begin{equation}\label{eq:spike_Gt_def}
G_t := g(V(t), h(V_*,\varepsilon)),
\end{equation}
where $(V(t), V_*, \varepsilon)$ are the state evolution random variables from Lemma~\ref{lemma_amp_data} and $h(V_*,\varepsilon)$ is the output function. In what follows we drop the subscript $t$ and write $G$ when the context is clear.
\begin{lemma}\label{lem:spike_ZGZ_concentration}
 Under the assumptions of Theorem~\ref{theorem:main_result}, we have
\begin{equation}\label{eq:spike_ZGZ_limit}
\bZ^\sT\bG\bZ 
\xrightarrow{~p~} 
\big(\mathbb{E}[O_tO_t^\sT]\big)^{-1}
\mathbb{E}\!\left[G V_{*:t}V_{*:t}^\sT\right]
\big(\mathbb{E}[O_tO_t^\sT]\big)^{-1},
\end{equation}
and consequently
\begin{equation}\label{eq:spike_block_inv_limit}
\begin{bmatrix}
\bZ^\sT\bG\bZ & \bI \\
\bI & \boldsymbol{0}
\end{bmatrix}^{-1}
\xrightarrow{~p~}
\begin{bmatrix}
0& I \\
I & -\big(\mathbb{E}[O_tO_t^\sT]\big)^{-1}
\mathbb{E}\!\left[G V_{*:t}V_{*:t}^\sT\right]
\big(\mathbb{E}[O_tO_t^\sT]\big)^{-1}
\end{bmatrix}.
\end{equation}
\end{lemma}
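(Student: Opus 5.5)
The plan is to write $\bZ^\sT\bG\bZ$ as a product of two finite-dimensional factors that each converge by the DMFT lemmas. Set $\bA := \bTheta_{*;t}^\sT\bTheta_{*;t}\in\reals^{(k+m+mt)\times(k+m+mt)}$. By the definition \eqref{eq:spike_YZ_def},
\begin{equation*}
\bZ^\sT\bG\bZ = \bA^{-1}\Big(\bTheta_{*;t}^\sT\bX^\sT\bG\bX\bTheta_{*;t}\Big)\bA^{-1}.
\end{equation*}
The $i$-th row of $\bX\bTheta_{*;t}$ is $(\bTheta_*^\sT\bx_i,\bTheta(0)^\sT\bx_i,\dots,\bTheta(t)^\sT\bx_i)$. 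Hence the middle factor equals
\begin{equation*}
\frac1n\sum_{i=1}^n g(\bTheta(t)^\sT\bx_i,y_i;j)\,\bv_i\bv_i^\sT,\qquad \bv_i:=\bTheta_{*;t}^\sT\bx_i .
\end{equation*}
I will show that $\bA\to\E[O_tO_t^\sT]$ and that the middle factor converges to $\E[G V_{*:t}V_{*:t}^\sT]$, both in probability.

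For the Gram matrix, the entries of $\bA$ are $\sum_{i\le d}\bTheta(s)_{i a}\bTheta(s')_{i b}=\frac1d\sum_{i}(\sqrt d\bTheta(s)_i)_a(\sqrt d\bTheta(s')_i)_b$. This is a test function with quadratic growth applied to the rows, so Lemma \ref{lemma_amp_feature} gives $\bA\xrightarrow{p}\E[O_tO_t^\sT]$. The rows involving $\bTheta_*$ need no extra care, since $\bTheta_*^\sT\bTheta_*=I_k$ exactly.

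For the middle factor, each entry has the form $\frac1n\sum_i\phi(\bTheta(0)^\sT\bx_i,\dots,\bTheta(t)^\sT\bx_i,\bTheta_*^\sT\bx_i,\eps_i)$ with $\phi=g\cdot v_av_b$. Two properties of $\phi$ follow from Assumption \ref{assumption_regularity}: $g$ is bounded, since $\ell'$, $\ell''$, $\sigma'$, $\sigma''$ and $a_j$ are bounded, so $\phi$ has quadratic growth; but $g$ is only continuous, because $\sigma''$ and $\ell''$ are merely continuous and $h$ is locally Lipschitz. Lemma \ref{lemma_amp_data} requires locally Lipschitz test functions, so this is the main technical obstacle.

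I would handle it by a standard approximation and truncation argument. First approximate $g$ uniformly on compacts by Lipschitz functions $g_\epsilon$ with the same bound, for example by mollification of $\sigma''$ and $\ell'$ composed with $h$. The error $\frac1n\sum_i|g-g_\epsilon|\,\|\bv_i\|^2$ is then split into two parts. On the event that all arguments lie in a ball of radius $K$, it is controlled by $\sup_{B_K}|g-g_\epsilon|\cdot\frac1n\sum\|\bv_i\|^2$, and $\frac1n\sum\|\bv_i\|^2=O_P(1)$ by Lemma \ref{lemma_amp_data} applied to a quadratic. Off that ball it is bounded by $2C\frac1n\sum_i\|\bv_i\|^2\mathbf 1\{\|(\bv_i,y_i)\|>K\}$. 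This tail term is small uniformly in $n$ for large $K$, by uniform integrability: apply Lemma \ref{lemma_amp_data} to a Lipschitz cutoff of $\|v\|^2\mathbf 1\{\|v\|>K\}$ and use the finite second moments of the Gaussian DMFT variables. The same approximation applied to $\E[g_\epsilon VV^\sT]$ lets $\epsilon\to0$ on the limit side.

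Finally, Assumption \ref{assumption:reg_1} makes $\E[O_tO_t^\sT]$ invertible. Matrix inversion is continuous at invertible matrices, so $\bA^{-1}\to(\E[O_tO_t^\sT])^{-1}$, and the continuous mapping theorem yields \eqref{eq:spike_ZGZ_limit}. The block limit \eqref{eq:spike_block_inv_limit} then follows immediately from the explicit inverse \eqref{eq:spike_block_inv}, which depends continuously on $\bZ^\sT\bG\bZ$.
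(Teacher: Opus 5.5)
Your proposal is correct and follows the paper's proof. It uses the same factorization $\bZ^\sT\bG\bZ=(\bTheta_{*;t}^\sT\bTheta_{*;t})^{-1}\,\bTheta_{*;t}^\sT\bX^\sT\bG\bX\bTheta_{*;t}\,(\bTheta_{*;t}^\sT\bTheta_{*;t})^{-1}$, Lemma~\ref{lemma_amp_data} for the middle factor, Lemma~\ref{lemma_amp_feature} with Assumption~\ref{assumption:reg_1} and continuous mapping for the inverse Gram matrix, and the explicit block inverse~\eqref{eq:spike_block_inv}.

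The one difference is in how the test function is handled. The paper simply asserts that $g$ is Lipschitz. You rightly note that Assumption~\ref{assumption_regularity} only makes $\sigma''$ continuous, and you add a truncation/approximation step to extend Lemma~\ref{lemma_amp_data} to such test functions. This is a valid tightening, though it still tacitly needs $\ell'$ (and, for $m=1$, $\ell''$) to be jointly continuous in $(y,z)$, which the assumption does not state.
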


\begin{proof}
Expand the quadratic form:
\begin{equation}\label{eq:spike_ZGZ_expand}
\begin{aligned}
\bZ^\sT\bG\bZ 
&= \big(\bTheta_{*;t}^\sT\bTheta_{*;t}\big)^{-1} 
\bTheta_{*;t}^\sT\bX^\sT\bG\bX\bTheta_{*;t} 
\big(\bTheta_{*;t}^\sT\bTheta_{*;t}\big)^{-1}
&&\text{(definition \eqref{eq:spike_YZ_def})} \\
&= \big(\bTheta_{*;t}^\sT\bTheta_{*;t}\big)^{-1} 
\left(\frac{1}{n}\sum_{i=1}^n g(\bTheta^\sT\bx_i,y_i)\, \bTheta_{*;t}^\sT\bx_i(\bTheta_{*;t}^\sT\bx_i)^\sT\right)
\big(\bTheta_{*;t}^\sT\bTheta_{*;t}\big)^{-1}.
&&\text{(expanding $\bG$)}
\end{aligned}
\end{equation}
By Assumption \ref{assumption_regularity}, the function $g(\cdot)$ is uniformly bounded (by $\|\ell'\|_\infty\|\sigma''\|_\infty<\infty$) and Lipschitz continuous. 

Therefore, the test function
\begin{equation}\label{eq:spike_test_func}
\phi(\bTheta(0)^\sT\bx_i,\bTheta(1)^\sT\bx_i,\dots,\bTheta(t)^\sT\bx_i, \bTheta^{\sT}_*\bx_i, \varepsilon_i)
:= g(\bTheta(t)^\sT\bx_i,y_i)\, \bTheta_{*;t}^\sT\bx_i(\bTheta_{*;t}^\sT\bx_i)^\sT
\end{equation}
has at most quadratic growth and is locally Lipschitz. Applying Lemma~\ref{lemma_amp_data} yields
\begin{equation}\label{eq:spike_inner_conv}
\frac{1}{n}\sum_{i=1}^n g(\bTheta(t)^\sT\bx_i,y_i)\, \bTheta_{*;t}^\sT\bx_i(\bTheta_{*;t}^\sT\bx_i)^\sT
\xrightarrow{~p~}
\mathbb{E}\!\left[G V_{*:t}V_{*:t}^\sT\right],
\end{equation}
where $G$ is defined in \eqref{eq:spike_Gt_def}.

Similarly, Lemma~\ref{lemma_amp_feature} gives $\bTheta_{*;t}^\sT\bTheta_{*;t} \xrightarrow{~p~} \mathbb{E}[O_tO_t^\sT]$, which is positive definite under Assumption~\ref{assumption:reg_1}. By continuous mapping,
\begin{equation}\label{eq:spike_WW_inv_conv}
\big(\bTheta_{*;t}^\sT\bTheta_{*;t}\big)^{-1} \xrightarrow{~p~} \big(\mathbb{E}[O_tO_t^\sT]\big)^{-1}.
\end{equation}
Combining \eqref{eq:spike_ZGZ_expand}, \eqref{eq:spike_inner_conv}, and \eqref{eq:spike_WW_inv_conv} yields \eqref{eq:spike_ZGZ_limit}. Equation~\eqref{eq:spike_block_inv_limit} follows by substituting into \eqref{eq:spike_block_inv}.
\end{proof}

\paragraph{Step II. Resolvent-dependent terms.} 
We now analyze the second term in \eqref{eq:spike_Mj_def}, which involves the 
resolvent $\bR_0(z) = (\bY^\sT\bG\bY-z\bI)^{-1}$.

This term has a $2\times2$ block structure:
\begin{equation}\label{eq:spike_resolvent_term_blocks}
\begin{aligned}
&\begin{bmatrix}
\bTheta_{*;t}^\sT \\
\bZ^\sT\bG\bY
\end{bmatrix}
\bR_0(z)
\begin{bmatrix}
\bTheta_{*;t} & \bY^\sT\bG\bZ
\end{bmatrix} \\
&\quad= \begin{bmatrix}
\bTheta_{*;t}^\sT\bR_0(z)\bTheta_{*;t} & \bTheta_{*;t}^\sT\bR_0(z)\bY^\sT\bG\bZ \\
\bZ^\sT\bG\bY\bR_0(z)\bTheta_{*;t} & \bZ^\sT\bG\bY\bR_0(z)\bY^\sT\bG\bZ
\end{bmatrix}.
\end{aligned}
\end{equation}
We analyze each block separately. 

Recall from \eqref{eq:spike_YZ_def} that $\bY=\bP_{\bF_{0;t}}^{\perp}\bX_{\text{new}}\bP_{\bTheta_{*;t}}^{\perp}$. Hence
\begin{equation}\label{eq:spike_resolvent_factorization}
\bY^\sT\bG\bY 
= \bP_{\bTheta_{*;t}}^{\perp}\bX_{\text{new}}^\sT\bP_{\bF_{0;t}}^{\perp}\bG\bP_{\bF_{0;t}}^{\perp}\bX_{\text{new}}\bP_{\bTheta_{*;t}}^{\perp}.
\end{equation}
Perform a spectral decomposition on the central sandwich: since $\bP_{\bF_{0;t}}^{\perp}\bG\bP_{\bF_{0;t}}^{\perp}$ is symmetric, write
\begin{equation}\label{eq:spike_spectral_decomp}
\bP_{\bF_{0;t}}^{\perp}\bG\bP_{\bF_{0;t}}^{\perp} 
= \bO\widehat{\bG}\bO^\sT,
\end{equation}
where $\bO\in\mathbb{R}^{n\times n}$ is orthogonal and $\widehat{\bG}=\operatorname{diag}(\widehat{g}_1,\dots,\widehat{g}_n)$ is diagonal. 

Similarly, write $\bP_{\bTheta_{*;t}}^{\perp}=\bU\bU^\sT$ for some $\bU\in\mathbb{R}^{d\times (d-k-m-mt)}$ with $\bU^\sT\bU=\bI$. Here $\bU$ can be chosen as any orthonormal basis for the orthogonal complement of $\operatorname{span}(\bTheta_{*;t})$. Define the rotated data matrix
\begin{equation}\label{eq:spike_tilde_X}
\widetilde{\bX} := \bO^\sT\bX_{\text{new}}
\in \mathbb{R}^{n\times d}.
\end{equation}
Since $\bO$ is orthogonal and $\bX_{\text{new}}$ has i.i.d.\ $\normal(0,1)$ entries, $\widetilde{\bX}$ also has i.i.d.\ $\normal(0,1)$ entries (rotation-invariance of Gaussian). Substituting into \eqref{eq:spike_resolvent_factorization},
\begin{equation}\label{eq:spike_resolvent_sandwich}
\begin{aligned}
\bY^\sT\bG\bY - z\bI
&= \bU\bU^\sT \widetilde{\bX}^\sT\widehat{\bG}\widetilde{\bX}\bU\bU^\sT - z\bI 
&&\text{(by \eqref{eq:spike_resolvent_factorization}, \eqref{eq:spike_spectral_decomp}, \eqref{eq:spike_tilde_X})} \\
&=: \bU\bS\bU^\sT - z\bI,
\end{aligned}
\end{equation}
where 
\begin{equation}\label{eq:spike_S_def}
\bS := \bU^\sT \widetilde{\bX}^\sT\widehat{\bG}\widetilde{\bX}\bU
\in \mathbb{R}^{(d-k-m-mt)\times(d-k-m-mt)}.
\end{equation}
Therefore
\begin{equation}\label{eq:spike_projection_identity}
\bR_0(z) = (\bY^\sT\bG\bY - z\bI)^{-1}=(\bU\bS\bU^\sT - z\bI)^{-1}
= -\frac{1}{z}\bP_{\bTheta_{*;t}} + \bU(\bS - z\bI)^{-1}\bU^\sT.
\end{equation}
This decomposition is valid for $z\neq0$ and $z$ not in the spectrum of $\bS$.

Using \eqref{eq:spike_projection_identity}, we compute the $(1,1)$-block of \eqref{eq:spike_resolvent_term_blocks}. Since $\bTheta_{*;t}^\sT\bU=\boldsymbol{0}$ by construction (the columns of $\bU$ span the orthogonal complement of $\bTheta_{*;t}$), we have
\begin{equation}\label{eq:spike_11_block}
\begin{aligned}
\bTheta_{*;t}^\sT\bR_0(z)\bTheta_{*;t}
= -\frac{1}{z}\,\bTheta_{*;t}^\sT\bTheta_{*;t}.
\end{aligned}
\end{equation}
By Lemma~\ref{lemma_amp_feature}, $\bTheta_{*;t}^\sT\bTheta_{*;t}\xrightarrow{~p~}\mathbb{E}[O_tO_t^\sT]$, so
\begin{equation}\label{eq:spike_11_limit}
\bTheta_{*;t}^\sT\bR_0(z)\bTheta_{*;t} 
\xrightarrow{~p~} -\frac{1}{z}\,\mathbb{E}[O_tO_t^\sT].
\end{equation}

For the cross-blocks (off-diagonal), note that
\begin{equation}\label{eq:spike_cross_block_step1}
\begin{aligned}
\bTheta_{*;t}^\sT\bR_0(z)\bY^\sT\bG\bZ
&= \bTheta_{*;t}^\sT\!\left(-\frac{1}{z}\,\bP_{\bTheta_{*;t}} + \bU(\bS - z\bI)^{-1}\bU^\sT\right)\!\bY^\sT\bG\bZ
&&\text{(by \eqref{eq:spike_projection_identity})} \\
&= -\frac{1}{z}\,\bTheta_{*;t}^\sT\bP_{\bTheta_{*;t}}\bY^\sT\bG\bZ + \bTheta_{*;t}^\sT\bU(\bS - z\bI)^{-1}\bU^\sT\bY^\sT\bG\bZ
&&\\
& = {\boldsymbol 0}\, .
\end{aligned}
\end{equation}
In the last step we used the fact that $\bY=\bP_{\bF_{0;t}}^{\perp}\bX_{\text{new}}\bP_{\bTheta_{*;t}}^{\perp}$ implies $\bY^\sT\bP_{\bTheta_{*;t}}=\boldsymbol{0}$, and that  $\bTheta_{*;t}^\sT\bU=\boldsymbol{0}$. 

The $(2,2)$-block $\bB_{\sf{lower}}:=\bZ^\sT\bG\bY\bR_0(z)\bY^\sT\bG\bZ$ requires more care.
We characterize the asymptotics of $\bB_{\sf{lower}}$ as follows.

\begin{lemma}\label{lem:B_lower_convergence}
Under the assumptions of Theorem~\ref{theorem:main_result}, for $z\in \mathcal{D}$, we have
\begin{equation}\label{eq:B_lower_limit}
\begin{aligned}
&\bB_{\sf{lower}} \xrightarrow{~p~} B_{\sf{lower}}^\infty(z;t),\\
&B_{\sf{lower}}^\infty(z;t) = \left(\mathbb{E}[O_tO_t^\sT]\right)^{-1}
\begin{bmatrix}
\mathbb{E}\left[\dfrac{\alpha_t^j(z)G_t^{j2}}{\delta + G_t^j\alpha_t^j(z)} V_{*\leq}V_{*\leq}^\sT\right] & 0 \\[8pt]
0 & \widetilde{S}_{\mathrm{R}}^j(z;t)
\end{bmatrix}
\left(\mathbb{E}[O_tO_t^\sT]\right)^{-1}.
\end{aligned}
\end{equation}
Here $\widetilde{S}_{\mathrm{R}}^j(z;t) \in \mathbb{R}^{(k-r+m+mt)\times(k-r+m+mt)}$ is some deterministic matrix depending on expectations over the discrete-time DMFT random variables.
\end{lemma}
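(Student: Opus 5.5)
We need to prove convergence of $\bB_{\sf lower}=\bZ^\sT\bG\bY\bR_0(z)\bY^\sT\bG\bZ$ for $z\in\mathcal{D}$ to the stated limit. The plan has two stages, mirroring the outline in Section~\ref{sec:proof_sketch}: first condition on $\cF_t$ and average over $\bX_{\rm new}$ only, then integrate over the DMFT variables.

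\emph{Conditional stage.} Conditionally on $\cF_t$, the quantities $\bZ$, $\bG$, $\bP_{\bF_{0;t}}$ and $\bP_{\bTheta_{*;t}}$ are fixed, and $\bX_{\rm new}$ is an independent Gaussian matrix. First we strip the projections. $\bP^\perp_{\bTheta_{*;t}}$ is a rank-$O(1)$ correction of the identity on the right. $\bP^\perp_{\bF_{0;t}}$ modifies $\bG$ only through the rank-$O(1)$ set of vectors $\bF_{0;t}$. We therefore write $\bY^\sT\bG\bZ=\bP^\perp_{\bTheta}\bX_{\rm new}^\sT\bP^\perp_{\bF}\bG\bZ$ and use the push-through identity
\[
\bG\bY\bR_0(z)\bY^\sT\bG=\bG^{1/2}\,\cdot\,\bigl(\text{resolvent of a companion } n\times n \text{ matrix}\bigr)\,\cdot\,\bG^{1/2}
\]
(or its analogue for signed $\bG$, obtained by writing $\bY\bR_0\bY^\sT=\bY(\bY^\sT\bG\bY-z)^{-1}\bY^\sT$). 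This reduces each entry of $\bB_{\sf lower}$ to a quadratic form $\bu^\sT\bA(z)\bw$, where $\bu,\bw$ are columns of $\bP^\perp_{\bF}\bG\bZ$. A standard leave-one-out argument (Sherman--Morrison on row $i$ of $\bX_{\rm new}$), combined with Hanson--Wright (Lemma~\ref{lemma:hanson_wright_statement}) and the local law (Lemma~\ref{lemma:local_law_adapted}), gives the diagonal approximation
\[
\bY\bR_0(z)\bY^\sT\approx\operatorname{diag}\left(\frac{\alpha(z)}{1+g_i\alpha(z)/\delta}\cdot\frac{1}{\delta}\right)
\]
in the appropriate normalization. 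Hence
\[
\bZ^\sT\bG\bY\bR_0\bY^\sT\bG\bZ\approx\frac1n\sum_i\frac{\alpha g_i^2}{\delta+g_i\alpha}\,\bz_i\bz_i^\sT+\text{(terms through }\bP_{\bF}\text{)},
\]
where $\bz_i$ denotes the $i$-th row of $\bZ$. The $\bP_{\bF}$ corrections are finite-rank and are expressed through $\bF_{0;t}^\sT(\cdot)\bG\bZ$, which have the same structure. Since $z$ is real-bounded away from the spectrum of $\bS$ (using Lemma~\ref{lemma:smallest_eigenvalue_concentration}), $\|\bR_0\|$ is bounded, so we can work at $\eta_n=0$ directly or let $\eta_n\to0$ slowly.

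\emph{Integration stage.} The empirical sums are functions of $(\bTheta_{*;t}^\sT\bx_i,\eps_i)$ with bounded, Lipschitz-in-$V$ weights. Lemma~\ref{lemma_amp_data} therefore gives
\[
\frac1n\sum_i\frac{\alpha G^2}{\delta+G\alpha}\,V_{*:t}V_{*:t}^\sT\;\longrightarrow\;\mathbb{E}[\cdot],
\]
sandwiched by $(\mathbb{E}[O_tO_t^\sT])^{-1}$ via Lemma~\ref{lemma_amp_feature} and Assumption~\ref{assumption:reg_1}, exactly as in Lemma~\ref{lem:spike_ZGZ_concentration}. The block structure then follows from Eq.~\eqref{eq:hard_orthogonality}. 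Any expectation $\mathbb{E}[\psi(V(0..t),y,V_{*\rm E})\,V_{*\leq}]$ vanishes, so the off-diagonal blocks pairing $V_{*\leq}$ with $(V_{*\rm E},V(0..t))$ are zero. The $\bP_{\bF}$ corrections involve $F(V(s),\cdot)$ paired with $\bZ$, and their hard component is likewise of the form $\mathbb{E}[\psi\,V_{*\leq}]=0$, so they only contribute to $\widetilde S^j_{\mathrm R}$. Finally, $(\mathbb{E}[O_tO_t^\sT])^{-1}$ is itself block diagonal, since $\Theta_{*\rm H}\indep(\Theta(\cdot),\Theta_{*\rm E})$ by Lemma~\ref{lemma_amp_feature}; this preserves the block form.

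\emph{Main obstacle.} The hardest step is the conditional leave-one-out computation in the presence of $\bP^\perp_{\bF}$ with signed $\bG$. The deterministic equivalent must be shown to hold entrywise with weights $\alpha g_i^2/(\delta+g_i\alpha)$, uniformly in the data-dependent $\bG$, and the denominators $\delta+g_i\alpha$ must be kept away from zero. For the latter I would use that $z<c(t)$ forces $\alpha\in(0,A_t)$ via Eq.~\eqref{eq:edge-variational}. I would first prove the statement without $\bP_{\bF}$, and then add the projection through a finite-rank Woodbury step, checking that its hard-block contribution vanishes by the orthogonality above.
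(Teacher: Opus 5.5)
Your proposal is correct and shares the paper's skeleton: condition on $\cF_t$; use leave-one-out with Hanson--Wright and Lemma~\ref{lemma:local_law_adapted} to get a diagonal deterministic equivalent; then use Lemmas~\ref{lemma_amp_data}--\ref{lemma_amp_feature} and \eqref{eq:hard_orthogonality} for the limit and its block structure. Where you differ is the order in which $\bP^\perp_{\bF_{0;t}}$ is handled. The paper never strips the projection. It diagonalizes the projected weights, $\bP^\perp_{\bF_{0;t}}\bG\bP^\perp_{\bF_{0;t}}=\bO\widehat{\bG}\bO^\sT$, and uses rotation invariance so that $\bR=\bO^\sT\bX_{\mathrm{new}}\bU$ is again an i.i.d.\ Gaussian matrix sandwiching a \emph{diagonal} weight. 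A single application of Lemma~\ref{lemma:master_concentration} then gives $\bO\widetilde f(n\widehat{\bG};z)\bO^\sT$. By functional calculus this equals, up to $o_P(1)$, $\bigl(\tfrac{\delta}{n\alpha}\bI+\bP^\perp_{\bF_{0;t}}\bG\bP^\perp_{\bF_{0;t}}\bigr)^{-1}$. Woodbury is applied only afterwards, as exact algebra on a matrix that no longer involves $\bX_{\mathrm{new}}$. The resulting terms are then handled exactly as you propose: $\bT_1$ gives $\frac1n\sum_i\frac{\alpha g_i^2}{\delta+g_i\alpha}(\cdot)(\cdot)^\sT$, and every term containing $\bP_{\bF_{0;t}}$ carries a factor $\frac1n\sum_i\rho_i\,\btheta_{*p}^\sT\bx_i\to0$ in its hard rows.

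Your ordering is: deterministic equivalent for the unprojected $g_i$ first, then Woodbury on the random resolvent. Its merit is that the leave-one-out weights are the actual $g_i$, so the only denominator bound you need is on $\delta+g_i\alpha(z)$. You also avoid a spectral decomposition of a data-dependent non-diagonal matrix.

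The cost is bookkeeping, in two places. First, you need bilinear-form concentration for all pairs among the columns of $\bG\bZ$, $\bF_{0;t}$ and $\bG\bF_{0;t}$. Their norms are of order $n^{-1/2}$, $1$ and $n^{-1}$, so the master lemma must be applied to rescaled vectors. Second, you must check that the limit of the random $2mt\times 2mt$ matrix inverted in Woodbury is nonsingular; the paper's ordering gets this essentially for free.

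This does close. With $\bD=\operatorname{diag}\bigl(n\alpha/(\delta+g_i\alpha)\bigr)$, the reassembled limit is $\bigl(\bD^{-1}+\bP^\perp_{\bF_{0;t}}\bG\bP^\perp_{\bF_{0;t}}-\bG\bigr)^{-1}=\bigl(\tfrac{\delta}{n\alpha}\bI+\bP^\perp_{\bF_{0;t}}\bG\bP^\perp_{\bF_{0;t}}\bigr)^{-1}$, the same object as in the paper. Moreover $n^{-1}$ times it is bounded, because $\Re\bigl(\delta+\lambda\alpha(z)\bigr)$ stays away from zero for every $\lambda$ between $\min(0,\min_i g_i)$ and $\max_i g_i$. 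That follows from the same edge argument you already use ($\Re\alpha>0$ and $\alpha(\Re z)<A^j_t$).

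One caveat: keep the slowly vanishing $\eta_n$ (your second option) rather than working at $\eta_n=0$. The leave-one-out step needs $\sup_i\|\bQ_{-i}\|$ bounded, which means edge control for all $n$ leave-one-out matrices simultaneously. Lemma~\ref{lemma:smallest_eigenvalue_concentration} gives only convergence in probability, so it does not supply this.
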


The proof of Lemma~\ref{lem:B_lower_convergence} is carried out in Sections~\ref{subsec:concentration} and~\ref{subsec:convergence}. Given Lemma \ref{lem:B_lower_convergence}, we can now prove Lemma \ref{lemma:limiting_spike_matrix}.

\begin{proof}[Proof of Lemma~\ref{lemma:limiting_spike_matrix}] 
Recall from \eqref{eq:spike_Mj_def} that the outlier matrix $\bM(z)$ is the sum of two terms. We analyze each term separately.

\emph{First term.} By Lemma~\ref{lem:spike_ZGZ_concentration}, the first term converges to
\begin{equation}
\begin{bmatrix}
\bZ^\sT\bG\bZ & \bI \\
\bI & \boldsymbol{0}
\end{bmatrix}^{-1}
\xrightarrow{~p~}
\begin{bmatrix}
0 & I \\
I & -\left(\mathbb{E}[O_tO_t^\sT]\right)^{-1}\mathbb{E}\left[G_t^j V_{*:t}V_{*:t}^\sT\right]\left(\mathbb{E}[O_tO_t^\sT]\right)^{-1}
\end{bmatrix}.
\end{equation}

\emph{Second term.} The second term in \eqref{eq:spike_Mj_def} has a $2\times 2$ block structure given by \eqref{eq:spike_resolvent_term_blocks}. By \eqref{eq:spike_11_limit}, the $(1,1)$-block converges to $-\frac{1}{z}\mathbb{E}[O_tO_t^\sT]$. By \eqref{eq:spike_cross_block_step1}, the cross-blocks equal to zero. By Lemma~\ref{lem:B_lower_convergence}, the $(2,2)$-block $\bB_{\sf{lower}}$ converges to $B_{\sf{lower}}^\infty(z;t)$.

Adding the two terms, we obtain
\begin{equation}
\bM_j(z;t) \xrightarrow{~p~}
\begin{bmatrix}
-\frac{1}{z}\mathbb{E}[O_tO_t^\sT] & I \\
I & -\left(\mathbb{E}[O_tO_t^\sT]\right)^{-1}\mathbb{E}\left[G_t^j V_{*:t}V_{*:t}^\sT\right]\left(\mathbb{E}[O_tO_t^\sT]\right)^{-1} + B_{\sf{lower}}^\infty(z;t)
\end{bmatrix}.
\end{equation}
To simplify the $(2,2)$-block, we use the block decomposition $V_{*:t} = \begin{bmatrix}V_{*\leq}^\sT & V_{*>:t}^\sT\end{bmatrix}^\sT$. By the second part of Lemma~\ref{lemma_amp_data} (cf.\ \eqref{eq:hard_orthogonality}), the hard directions $V_{*\leq} = P_{U_\mathrm{H}^*}V_*$ has no correlation with others:
\begin{equation}
\mathbb{E}\left[G_t^j V_{*\leq} V_{*>:t}^\sT\right] = 0.
\end{equation}
Therefore, $\mathbb{E}\left[G_t^j V_{*:t}V_{*:t}^\sT\right]$ is block diagonal:
\begin{equation}
\mathbb{E}\left[G_t^j V_{*:t}V_{*:t}^\sT\right] = \begin{bmatrix}
\mathbb{E}\left[G_t^j V_{*\leq}V_{*\leq}^\sT\right] & 0 \\
0 & \mathbb{E}\left[G_t^j V_{*>:t}V_{*>:t}^\sT\right]
\end{bmatrix}.
\end{equation}
Substituting the explicit form of $B_{\sf{lower}}^\infty(z;t)$ from Eq.~\eqref{eq:B_lower_limit} also using the block diagonal structure above, the $(2,2)$-block simplifies to
\begin{equation}
\left(\mathbb{E}[O_tO_t^\sT]\right)^{-1}
\begin{bmatrix}
\mathbb{E}\left[-\dfrac{\delta G_t^j}{\delta + G_t^j\alpha_t^j(z)} V_{*\leq}V_{*\leq}^\sT\right] & 0 \\
0 & S_{\mathrm{R}}^j(z;t)
\end{bmatrix}
\left(\mathbb{E}[O_tO_t^\sT]\right)^{-1},
\end{equation}
where $S_{\mathrm{R}}^j(z;t) := -\mathbb{E}\left[G_t^j V_{*>:t}V_{*>:t}^\sT\right] + \widetilde{S}_{\mathrm{R}}^j(z;t)$. This gives the limiting outlier matrix \eqref{eq:spike_Mj_infinity}.
\end{proof}

\subsection{Resolvent concentration via leave-one-out}\label{subsec:concentration}

$\bB_{\sf{lower}}=\bZ^\sT\bG\bY\bR_0(z)\bY^\sT\bG\bZ$ can be written as
\begin{equation}\label{eq:spike_22_block}
\begin{aligned}
\bB_{\sf{lower}}
&= \bZ^\sT\bG\bY\!\left(-\frac{1}{z}\bP_{\bTheta_{*;t}} + \bU(\bS - z\bI)^{-1}\bU^\sT\right)\!\bY^\sT\bG\bZ
&&\text{(by \eqref{eq:spike_projection_identity})} \\
&= \bZ^\sT\bG\bY\bU\left(\bS - z\bI\right)^{-1}\bU^\sT\bY^\sT\bG\bZ
&&\text{($\bY^\sT\bP_{\bTheta_{*;t}}=\boldsymbol{0}$)} \\
&= \bZ^\sT\bG\bP_{\bF_{0;t}}^{\perp}\bX_{\text{new}}\bU\left(\bS - z\bI\right)^{-1}\bU^\sT\bX_{\text{new}}^\sT\bP_{\bF_{0;t}}^{\perp}\bG\bZ.
\end{aligned}
\end{equation}
Define the Gaussian matrix
\begin{equation}\label{eq:spike_R_def}
\bR := \widetilde{\bX}\bU = \bO^\sT\bX_{\text{new}}\bU
\in \mathbb{R}^{n\times (d-k-m-mt)}.
\end{equation}
Since $\bU,\bO$ are measurable on $\cF_t$, the entries of $\bR$ are i.i.d.\ $\normal(0,1)$ and independent of $(\bX,\boldsymbol{\varepsilon})$. 
The Gaussian matrix $\bR$ should not be confused with the resolvent $\bR_0(z) = \left(\bY^\sT\bG\bY - z\bI\right)^{-1}$ defined in Eq.~\eqref{eq:spike_projection_identity}.
By Eq.~\eqref{eq:spike_S_def},
\begin{align}\label{eq:spike_S_R_relation}
&\bS = \bR^\sT\widehat{\bG}\bR,\\
\label{eq:spike_22_R_form}
&\bS=\bU^\sT\bX_{\text{new}}^\sT\bP_{\bF_{0;t}}^{\perp}\bG\bP_{\bF_{0;t}}^{\perp}\bX_{\text{new}}\bU.
\end{align}
Substituting this identity into Eq.~\eqref{eq:spike_22_block} and using the fact $\bP_{\bF_{0;t}}^{\perp} = \bP_{\bF_{0;t}}^{\perp}
\bO\bO^{\sT}$,
\begin{equation}\label{eq:spike_22_final}
\bB_{\sf{lower}}=\bZ^\sT\bG\bP_{\bF_{0;t}}^{\perp}\bO\bR\big(\bR^\sT\widehat{\bG}\bR - z\bI\big)^{-1}\bR^\sT\bO^\sT\bP_{\bF_{0;t}}^{\perp}\bG\bZ.
\end{equation}
We regard this as a function of the random matrix $\bR$ while conditioning on $(\bX,\boldsymbol{\varepsilon})$.

Denote
\begin{equation}\label{eq:conc_dtilde}
d_{\star}:= d - k - m - mt,
\end{equation}
the effective dimension after removing the conditioning subspace.
For the Stieltjes transform $\alpha_t^j(\cdot)$ of distribution $\mu_\infty^j(t)$, when the context is clear, we drop the time subscript $t$ and neuron index $j$, just denoting it by $\alpha(\cdot)$. 

The next lemma is the main result of this subsection.
We establish concentration of the resolvent bilinear form $\ba^\sT\bR^\sT\left(\bR\widehat{\bG}\bR^\sT - z\bI\right)^{-1}\bR\bb$ around its deterministic limit for random vectors $\ba, \bb$ that are measureable in $\cF_t$ (thus independent of $\bR$). Throughout, all statements and expectations are taken conditioning on $\{\ba, \bb, \widehat{\bG}\}$.
We note that the following result is not surprising. For instance, if $\widehat\bG$ is positive definite, the lemma can be derived via using its companion resolvent with the results from \cite{knowles2016anisotropiclocallawsrandom}.

\begin{lemma}\label{lemma:master_concentration}
    Consider $\ba, \bb \in \mathbb{R}^n$ such that $\|\ba\|, \|\bb\| = O_P(1/\sqrt{n})$ and are measurable in $\cF_t$. 
    
    Then for $z\in \cD$, we have 
    \begin{equation}\label{eq:conc_master_result}
    \abs{\ba^\sT\bR\left(\bR^\sT\widehat{\bG}\bR - z\bI\right)^{-1}\bR^\sT\bb
    - \sum_{i=1}^n a_i b_i \frac{d_{\star} \alpha(z)}{1+d_{\star} \widehat{g}_i\alpha(z)}}
    \xrightarrow{~p~} 0.
    \end{equation}
    where $\alpha(z)$ is the Stieltjes transform of distribution $\mu_\infty$ (cf.\ Lemma~\ref{lemma:bulk}).
    \end{lemma}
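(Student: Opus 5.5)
Conditionally on $\cF_t$ (so that $\ba,\bb,\widehat{\bG}$ are fixed and $\bR$ has i.i.d.\ $\normal(0,1)$ entries), I will expand the bilinear form row by row and reduce each diagonal term to a scalar Schur complement. Write $\br_i\in\reals^{d_\star}$ for the $i$-th row of $\bR$, so that $\bS=\bR^\sT\widehat{\bG}\bR=\sum_i \widehat g_i\br_i\br_i^\sT$. Set $\bS_{(i)}:=\bS-\widehat g_i\br_i\br_i^\sT$ and $\bQ_{(i)}:=(\bS_{(i)}-z\bI)^{-1}$. Then
\[
\ba^\sT\bR(\bS-z\bI)^{-1}\bR^\sT\bb=\sum_{i,l}a_ib_l\,\br_i^\sT(\bS-z\bI)^{-1}\br_l ,
\]
and Sherman--Morrison gives $(\bS-z\bI)^{-1}\br_i=\bQ_{(i)}\br_i/(1+\widehat g_i\,\br_i^\sT\bQ_{(i)}\br_i)$.

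\emph{Diagonal terms.} Since $\br_i$ is independent of $\bQ_{(i)}$, Hanson--Wright (Lemma~\ref{lemma:hanson_wright_statement}) gives $\br_i^\sT\bQ_{(i)}\br_i=\Tr\bQ_{(i)}+O_P(\sqrt{d}\,\|\bQ_{(i)}\|)$. The rank-one interlacing bound gives $|\Tr\bQ_{(i)}-\Tr\bQ|\le C/\mathrm{dist}(z,\mathrm{spec})$. The averaged law then gives $\Tr\bQ=d_\star\alpha(z)+o_P(d)$; here I use Lemma~\ref{lemma:local_law_adapted} together with the fact that the ESD of $\widehat{\bG}$ agrees with that of $\bG$ up to $O(1)$ eigenvalues, so its limit is the law of $G_t^j/n$ rescaled. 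With $\widehat g_i=O(1/n)$ this yields
\[
\frac{\br_i^\sT\bQ_{(i)}\br_i}{1+\widehat g_i\br_i^\sT\bQ_{(i)}\br_i}=\frac{d_\star\alpha}{1+d_\star\widehat g_i\alpha}+o_P(1)
\]
uniformly in $i$ after a union bound (log factors are absorbed). Summing against $a_ib_i$ and using $\sum|a_ib_i|\le\|\ba\|\|\bb\|=O_P(1/n)$ leaves an error that is $o_P(1/n)\cdot O(d)=o_P(1)$. This produces the main term in \eqref{eq:conc_master_result}.

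\emph{Off-diagonal terms.} Write $\sum_{i\ne l}a_ib_l\,\br_i^\sT(\bS-z\bI)^{-1}\br_l$ and apply the two-sided leave-out $\bQ_{(il)}$. Each term becomes $\br_i^\sT\bQ_{(il)}\br_l$ times bounded random factors $(1+\widehat g\,\cdot)^{-1}$, plus corrections of the form $\widehat g_i\,(\br_i^\sT\bQ\br_l)^2$-type products. The leading sum $\sum_{i\ne l}a_ib_l\,\br_i^\sT\bQ_{(il)}\br_l$ has mean zero, and I would bound its second moment by $\sum_{i\ne l}a_i^2b_l^2\,\E\|\bQ_{(il)}\|_F^2+(\text{cross pairings})\lesssim\|\ba\|^2\|\bb\|^2 d=O(1/n)$. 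A cleaner route is to write the off-diagonal part as $\ba^\sT(\bR\bQ\bR^\sT-\mathrm{diag})\bb$ and run a Gaussian Poincaré / Efron--Stein argument: the derivative of $\ba^\sT\bR(\bS-z\bI)^{-1}\bR^\sT\bb$ in $\bR$ has Frobenius norm $O(\|\ba\|\|\bb\|\sqrt{n})\cdot O(1)$, so the variance is $O(1/n)$. Concentration around the conditional expectation then follows, and it remains to compute the expectation. I plan to use this Poincaré route, because it handles all terms at once, and then identify the mean by the diagonal computation above, since the off-diagonal means vanish by the sign symmetry $\br_i\mapsto-\br_i$.

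\emph{Main obstacle.} The main obstacle is that $z\in\cD$ lies on the real axis, where the resolvent is not a priori bounded, and $\widehat g_i$ may be negative, so $1+\widehat g_i\br_i^\sT\bQ_{(i)}\br_i$ could a priori be small. I would first work on the high-probability event on which $\lambda_{\min}(\bS_{(i)})$ and $\lambda_{\min}(\bS)$ exceed $c(t)-\epsilon>\Re z$ for all $i$. Lemma~\ref{lemma:smallest_eigenvalue_concentration} provides this event, since $\bS$ is an $O(1)$-rank compression of a matrix of its type and the interlacing carries over. On that event the resolvent norms are bounded by $1/\mathrm{dist}$, and the denominator is bounded away from zero because its limit $1+G\alpha/\delta$ is nonzero for $z$ left of the edge; this is exactly the analyticity of the bulk Stieltjes transform there. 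Finally, to upgrade Lemma~\ref{lemma:local_law_adapted} from $z+i\eta_n$ to real $z$, I would apply Lemma~\ref{lemma:change_limit_first}, using Lipschitz continuity in $z$ on this event.
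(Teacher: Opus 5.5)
\textbf{There is a genuine gap.} You work directly at real $z\in\cD$, and neither of your two main tools holds there.

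\emph{The Poincar\'e step.} As a function of $\bR$, $f(\bR)=\ba^\sT\bR(\bR^\sT\widehat{\bG}\bR-z\bI)^{-1}\bR^\sT\bb$ has a pole wherever $z$ is an eigenvalue of $\bS$. Since $\widehat{\bG}$ may have negative entries and $\bR$ is unbounded, this happens on a set where $\lambda_{\min}(\bS)$ has positive density. In general $f$ is therefore not even conditionally integrable, and the conditional mean you want to identify (via sign symmetry plus the diagonal computation) does not exist. Your bound $\|\nabla f\|_F=O(n^{-1/2})$ holds only on a high-probability event. Gaussian Poincar\'e is a global inequality and cannot be localized to such an event.

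\emph{The leave-one-out event.} The event ``$\lambda_{\min}(\bS_{(i)})>\Re z+\varepsilon$ for all $i$'' is not supplied by Lemma~\ref{lemma:smallest_eigenvalue_concentration}. That lemma gives convergence in probability for a single matrix sequence, not simultaneous control of $n$ leave-one-out matrices. Interlacing does not carry over either. When $\widehat g_i>0$, $\bS_{(i)}=\bS-\widehat g_i\br_i\br_i^\sT$ can have one eigenvalue below $\lambda_{\min}(\bS)$, and Weyl only bounds that drop by $\widehat g_i\|\br_i\|^2=O(1)$. Without uniform control of $\|\bQ_{(i)}\|$, neither the union-bounded Hanson--Wright step nor the lower bound on $|1+\widehat g_i\br_i^\sT\bQ_{(i)}\br_i|$ is justified, since both depend on it.

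\textbf{The missing idea} is the regularization the paper uses. First replace $z$ by $z+i\eta_n$ with $\eta_n=1/\log n$. This costs only $O_P(\eta_n)$, and controlling it needs just the single event $\{\lambda_{\min}(\bS)\ge \Re z+\varepsilon_{\cK}\}$ together with $\|\bR\|_{\mathrm{op}}=O(\sqrt n)$ (Lemma~\ref{lemma:distortion_trick}). At $z+i\eta_n$ the following hold deterministically and uniformly in $i$:
\begin{itemize}
\item $\|\bQ\|_{\mathrm{op}}\le 1/\eta_n$ and $\|\bQ_{(i)}\|_{\mathrm{op}}\le 1/\eta_n$;
\item $|1+\widehat g_i\br_i^\sT\bQ_{(i)}\br_i|\ge \eta_n/C$, via the companion resolvent (Lemma~\ref{lem:bounded-denominators}).
\end{itemize}
So everything is integrable, and the polylogarithmic losses are absorbed by the $n^{-1/2}$ rates from Hanson--Wright and Lemma~\ref{lemma:local_law_adapted}. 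Lipschitz continuity of $\alpha$ then takes you from $\alpha(z+i\eta_n)$ back to $\alpha(z)$.

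With this step inserted, your Poincar\'e route does go through and is a reasonable, slicker alternative to the paper's explicit second-moment bound on $S_{\mathsf{cross}}$ (Lemma~\ref{lemma:estimation_cross_term_variance}, which uses the same sign-symmetry pairing you invoke). The variance bound becomes $O_P\big(1/(n\eta_n^4)\big)=o_P(1)$, and the off-diagonal conditional mean is exactly zero.

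\textbf{Minor point.} In your displayed diagonal approximation the error should be $o_P(d)$ uniformly in $i$, not $o_P(1)$, because the Hanson--Wright fluctuation is of order $\sqrt d\log n$. Your subsequent summation against $\sum_i|a_ib_i|=O_P(1/n)$ already uses the correct order.
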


\par\addvspace{6pt}\noindent\textit{Proof of Lemma \ref{lemma:master_concentration}.}
We first state some basic facts about the spectrum of $\bR^\sT\widehat{\bG}\bR$.

\begin{lemma}\label{lem:spike_hatmu_conv}
Denote by $\widehat{\mu}_{n,d}(t)$ the empirical spectral distribution of $\bR^\sT\widehat{\bG}\bR$. Then
\begin{equation}\label{eq:spike_hatmu_conv}
\widehat{\mu}_{n,d}(t) \Rightarrow \mu_{\infty}(t)
\quad\text{in probability},
\end{equation}
where $\mu_{\infty}(t)$ is the limiting measure from Lemma~\ref{lemma:bulk}.
\end{lemma}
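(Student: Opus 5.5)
The plan is to reduce Lemma~\ref{lem:spike_hatmu_conv} to the bulk result already obtained in the proof of Lemma~\ref{lemma:bulk}, using low-rank interlacing (Lemma~\ref{lem:interlacing}) to strip away the projections. By construction, $\bU\bS\bU^\sT = \bY^\sT\bG\bY$ with $\bS=\bR^\sT\widehat{\bG}\bR$ (cf.\ \eqref{eq:spike_resolvent_sandwich}, \eqref{eq:spike_S_R_relation}). Since $\bU^\sT\bU=\bI$, the $d\times d$ matrix $\bU\bS\bU^\sT$ has as eigenvalues those of $\bS$, together with $k+m+mt$ additional zeros. Two consequences follow:
\begin{itemize}
\item the ESD of $\bS$ (a $d_\star\times d_\star$ matrix) and the ESD of $\bY^\sT\bG\bY$ differ only in $O(1)$ atoms out of order $d$ eigenvalues;
\item their Lévy (or bounded-Lipschitz) distance is therefore $O(1/d)$.
\end{itemize}
This holds because $d_\star/d\to1$ and the eigenvalue multisets coincide up to $O(1)$ added or removed points. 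It thus suffices to show that the ESD of $\bY^\sT\bG\bY$ converges weakly, in probability, to $\mu_\infty(t)$.

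That convergence is exactly Step IV of the proof of Lemma~\ref{lemma:bulk}. There, $\bY^\sT\bG\bY = \bP^\perp_{\bTheta_{*;t}}\bX_{\rm new}^\sT\bP^\perp_{\bF_{0;t}}\bG\bP^\perp_{\bF_{0;t}}\bX_{\rm new}\bP^\perp_{\bTheta_{*;t}}$ differs from $\bX_{\rm new}^\sT\bG\bX_{\rm new}$ by a perturbation of rank $O(k+m+mt)$. To see this, write each projection as $\bI$ minus a low-rank matrix and expand; every cross term has rank bounded by the number of columns of $\bF_{0;t}$ or $\bTheta_{*;t}$. Lemma~\ref{lem:interlacing} then transfers the limit.

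The remaining step is the limit of the ESD of $\bX_{\rm new}^\sT\bG\bX_{\rm new}$. Conditionally on $\cF_t$, $\bX_{\rm new}$ is Gaussian and independent of the diagonal matrix $\bG$. The entries $n\bG_{ii}$ are uniformly bounded, by the regularity assumptions on $\ell',\sigma''$, and their empirical law converges to $\mathrm{Law}(G_t^j)$ by Lemma~\ref{lemma_amp_data}. The generalized Marchenko--Pastur theorem \cite{BaiSilverstein} therefore gives convergence to the measure with Stieltjes transform solving \eqref{eq:bulk_stieltjes}, namely $\mu_\infty(t)$.

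One could alternatively argue directly on $\bR^\sT\widehat{\bG}\bR$. Here $\bR$ has i.i.d.\ Gaussian entries independent of $\widehat{\bG}$, and $\widehat{\bG}$ consists of the eigenvalues of $\bP^\perp_{\bF_{0;t}}\bG\bP^\perp_{\bF_{0;t}}$. By interlacing these differ from the eigenvalues of $\bG$ only in $O(mt)$ positions, so the empirical law of $n\widehat g_i$ has the same limit, and Marchenko--Pastur applies again. The one point needing care is passing from conditional (given $\cF_t$) weak convergence in probability to unconditional convergence. I would handle this by bounding the Stieltjes transform difference at each fixed $z\in\mathbb{H}$: it is bounded by $1/\Im z$, so conditional convergence in probability implies unconditional convergence by dominated convergence. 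Pointwise convergence of Stieltjes transforms on $\mathbb{H}$ then yields weak convergence. This conditioning bookkeeping is the only mildly delicate step; the rest is routine.
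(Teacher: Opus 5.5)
Your proposal is correct and follows essentially the paper's argument: reduce $\bR^\sT\widehat{\bG}\bR$ to $\bX_{\mathrm{new}}^\sT\bG\bX_{\mathrm{new}}$ by finite-rank interlacing (Lemma~\ref{lem:interlacing}), then apply the generalized Marchenko--Pastur law behind Lemma~\ref{lemma:bulk}. One point where you are more careful than the paper: you handle the compression by the non-square $\bU$ by noting that $\bU\bS\bU^\sT=\bY^\sT\bG\bY$ has the eigenvalues of $\bS$ plus $d-d_\star$ zeros, which justifies precisely what the paper only asserts in its remark that ``the rotation by $\bU$ preserves eigenvalues.''
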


\begin{proof}
Recall from Eq.~\eqref{eq:spike_22_R_form} that \begin{equation}\label{eq:spike_hatmu_conv_proof_1}
\bR^\sT\widehat{\bG}\bR = \bU^\sT\bX_{\text{new}}^\sT\bP_{\bF_{0;t}}^{\perp}\bG\bP_{\bF_{0;t}}^{\perp}\bX_{\text{new}}\bU.
\end{equation} We compare this with $\bX_{\text{new}}^\sT\bG\bX_{\text{new}}$, whose limiting spectral distribution is $\mu_{\infty}(t)$ by Lemma~\ref{lemma:bulk}. 

Note that $\bP_{\bF_{0;t}}^{\perp}\bG\bP_{\bF_{0;t}}^{\perp} - \bG$ has rank at most $O(1)$ since $\bF_{0;t}\in\reals^{n\times mt}$. By Cauchy interlacing theorem (Lemma~\ref{lem:interlacing}), this rank-$O(1)$ perturbation does not affect the limiting spectral distribution. Similarly, the rotation by $\bU$ preserves eigenvalues. Hence the ESD of $\bR^\sT\widehat{\bG}\bR$ converges to $\mu_{\infty}(t)$.
\end{proof}

In addition, applying Lemma~\ref{lemma:smallest_eigenvalue_concentration} to $\bR^\sT\widehat{\bG}\bR$, we obtain $\lambda_{\min}(\bR^\sT\widehat{\bG}\bR) \xrightarrow{~p~} c(t)$.

\paragraph{Step I. Regularization.}

For convenience, we consider an arbitrary compact subset $\cK\subset\cD$ in this subsection and always assume that $z\in\cK$. For any such $\cK$, there exists a  positive distance 
\begin{equation}\label{eq:conc_epsilon_K}
\varepsilon_{\cK} := \frac{1}{2}\inf_{z\in \cK}\big(\min(c(t),0) - \Re(z)\big) > 0
\end{equation}
to the left edge. 
Without loss of generality, we assume that $\Im(z) \ge 0$; the case $\Im(z) < 0$ follows by a similar reasoning.
We further introduce an imaginary pertubation for technical convenience. Define the regularization parameter 
\begin{equation}\label{eq:conc_eta_n}
\eta_n := \frac{1}{\log n}.
\end{equation}
We emphasize that the concrete choice of $\eta_n$ does not matter as long as it converges to zero slowly enough.
The following lemma shows that the regularization introduces negligible error.

\begin{lemma}\label{lemma:distortion_trick}
For any $z\in\cK$ we have 
\begin{equation}\label{eq:conc_distortion_trick}
\abs{
\ba^\sT\bR\left(\bR^\sT\widehat{\bG}\bR - z\bI\right)^{-1} \bR^\sT\bb
-
\ba^\sT\bR\left(\bR^\sT\widehat{\bG}\bR - (z + i\eta_n)\bI\right)^{-1} \bR^\sT\bb
}
\xrightarrow{~p~} 0,
\end{equation}
for any sequence $\eta_n \to 0^+$.
\end{lemma}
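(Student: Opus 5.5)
We prove Lemma~\ref{lemma:distortion_trick} with the second resolvent identity, together with a uniform bound on both resolvents that holds with high probability. Write $\bS=\bR^\sT\widehat{\bG}\bR$, $\bQ(w):=(\bS-w\bI)^{-1}$ and $w_n:=z+i\eta_n$. Then
\begin{equation*}
\bQ(z)-\bQ(w_n) = (z-w_n)\,\bQ(z)\bQ(w_n) = -i\eta_n\,\bQ(z)\bQ(w_n),
\end{equation*}
so the difference in \eqref{eq:conc_distortion_trick} equals $-i\eta_n\,\ba^\sT\bR\,\bQ(z)\bQ(w_n)\,\bR^\sT\bb$. By Cauchy--Schwarz its modulus is at most
\begin{equation*}
\eta_n\,\|\bR^\sT\ba\|\,\|\bR^\sT\bb\|\,\|\bQ(z)\|_{\mathrm{op}}\,\|\bQ(w_n)\|_{\mathrm{op}}.
\end{equation*}
It therefore suffices to show that every factor other than $\eta_n$ is $O_P(1)$.

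\emph{The vector factors.} Conditionally on $\cF_t$, the vectors $\ba,\bb$ are fixed and $\bR$ has i.i.d.\ $\normal(0,1)$ entries with $d_\star\asymp n$ columns. Hence $\E\big[\|\bR^\sT\ba\|^2\,\big|\,\cF_t\big]=d_\star\|\ba\|^2=O_P(1)$, using $\|\ba\|=O_P(n^{-1/2})$. Markov's inequality applied conditionally, followed by unconditioning, gives $\|\bR^\sT\ba\|=O_P(1)$, and the same argument gives $\|\bR^\sT\bb\|=O_P(1)$. One could instead use $\|\bR\|_{\mathrm{op}}=O_P(\sqrt n)$, which also suffices.

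\emph{The resolvent factors.} This is the main step. By Lemma~\ref{lem:spike_hatmu_conv} together with Lemma~\ref{lemma:smallest_eigenvalue_concentration}, applied conditionally on $\widehat{\bG}$ (which is independent of $\bR$ and has bounded entries, with limiting law that of $G_t^j$ up to an $O(1)$-rank change), we have $\lambda_{\min}(\bS)\xrightarrow{~p~}c(t)$. On the event $\{\lambda_{\min}(\bS)\ge c(t)-\varepsilon_\cK\}$, whose probability tends to one, every eigenvalue $\lambda$ of $\bS$ satisfies
\begin{equation*}
\Re(z)\le \min(c(t),0)-2\varepsilon_\cK\le \lambda-\varepsilon_\cK,
\end{equation*}
so that $|\lambda-w|\ge \lambda-\Re(w)\ge\varepsilon_\cK$ for both $w=z$ and $w=w_n$, because $\Re(w_n)=\Re(z)$. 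Since $\bS$ is symmetric, this gives $\|\bQ(z)\|_{\mathrm{op}},\|\bQ(w_n)\|_{\mathrm{op}}\le \varepsilon_\cK^{-1}$ on that event.

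The point requiring care is that Lemma~\ref{lemma:smallest_eigenvalue_concentration} is stated for $\bX^\sT\bG\bX/n$ with a diagonal $\bG$. Here $\bS$ has exactly this form after absorbing the $1/n$ factor into $\widehat{\bG}$: $\widehat{\bG}$ is diagonal and independent of the Gaussian matrix $\bR$, which has $d_\star$ columns. The empirical law of $n\widehat g_i$ converges to that of $G_t^j$, because the eigenvalues of $\bP^\perp_{\bF_{0;t}}\bG\bP^\perp_{\bF_{0;t}}$ interlace with those of $\bG$ up to $O(1)$ indices. Its limiting left edge is therefore $c(t)$, the edge from Lemma~\ref{lemma:bulk}.

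Combining the two bounds, the difference is $O_P(\eta_n)\to0$ for any $\eta_n\to0^+$, which proves \eqref{eq:conc_distortion_trick}. The argument is uniform over $z\in\cK$, and the case $\Im(z)<0$ is identical.
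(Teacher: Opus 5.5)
Your proposal is correct and follows essentially the same route as the paper. You use the second resolvent identity to extract the factor $\eta_n$, bound both resolvents by $\varepsilon_\cK^{-1}$ on the high-probability event $\{\lambda_{\min}(\bR^\sT\widehat{\bG}\bR)\ge c(t)-\varepsilon_\cK\}$ given by Lemma~\ref{lemma:smallest_eigenvalue_concentration}, and show the remaining Gaussian factors are $O_P(1)$. The only differences are cosmetic: you control $\|\bR^\sT\ba\|$ by a conditional second moment, whereas the paper uses $\|\bR\|_{\mathrm{op}}\le C(\sqrt n+\sqrt d)$ (which you note as an alternative), and you justify more carefully why that lemma applies to $\bR^\sT\widehat{\bG}\bR$.
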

This lemma is proven in Section~\ref{subsec:proof_distortion_trick}.

Denote the regularized resolvent
\begin{equation}\label{eq:conc_Q_def}
\bQ := \bQ(z+i\eta_n) = \left(\bR^\sT\widehat{\bG}\bR - (z+i\eta_n)\bI\right)^{-1} \in \mathbb{C}^{d_{\star}\times d_{\star}}.
\end{equation}
Consequently, we work with the regularized resolvent $\bQ$ in this subsection, and remove the regularization at the end.

\paragraph{Step II. Diagonal-cross decomposition and leave-one-out.}
We now decompose the bilinear form into diagonal and off-diagonal parts. 

Let $\br_i \in \mathbb{R}^{d_{\star}}$ denote the $i$-th row of $\bR$. Then we have
\begin{equation}\label{eq:conc_bilinear_expand}
\ba^\sT\bR\bQ\bR^\sT\bb
= \sum_{i=1}^n \sum_{j=1}^n a_i b_j \br_i^\sT \bQ \br_j
= \sum_{i=1}^n a_i b_i \br_i^\sT\bQ\br_i
+ \sum_{i\neq j} a_i b_j \br_i^\sT\bQ\br_j
:= S_{\mathsf{diag}} + S_{\mathsf{cross}}.
\end{equation}
The diagonal term $S_{\mathsf{diag}}$ will be shown to concentrate around its deterministic limit, whereas the cross term $S_{\mathsf{cross}}$ will be proved to vanish.

To handle both terms, we employ the leave-one-out decomposition. For each $i\in[n]$, define the leave-one-out resolvent
\begin{equation}\label{eq:conc_Q_minus_i}
\bQ_{-i}
:= \left(\bR^\sT\widehat{\bG}_{-i}\bR - (z + i\eta_n)\bI\right)^{-1},
\quad\text{where}\quad
\widehat{\bG}_{-i} := \widehat{\bG} - \widehat{g}_i \be_i \be_i^\sT
\end{equation}
and $\widehat{g}_i$ is the $i$-th diagonal entry of $\widehat{\bG}$.
By the Sherman-Morrison formula applied to the rank-one perturbation $\bR^\sT\widehat{\bG}\bR = \bR^\sT\widehat{\bG}_{-i}\bR + \widehat{g}_i\br_i\br_i^\sT$, we have
\begin{equation}\label{eq:conc_sherman_morrison}
\bQ
= \bQ_{-i}
- \frac{\widehat{g}_i\bQ_{-i}\br_i\br_i^\sT\bQ_{-i}}{1 + \widehat{g}_i\br_i^\sT\bQ_{-i}\br_i}.
\end{equation}
Applying \eqref{eq:conc_sherman_morrison} to the bilinear form $\br_i^\sT \bQ \br_j$, we obtain
\begin{equation}\label{eq:conc_cross_form}
\br_i^\sT \bQ \br_j
= \frac{\br_i^\sT \bQ_{-i} \br_j}{1 + \widehat{g}_i \br_i^\sT \bQ_{-i} \br_i}
\quad\text{for all }i,j\in [n].
\end{equation}

\paragraph{Step III. Cross-term vanishing.}
Define for each $i\in[n]$
\begin{equation}\label{eq:conc_T_i_def}
T_i := T_i(z+i\eta_n)
:= \frac{1}{n^2}\sum_{j\neq i} h_{ij}\br_i^\sT \bQ\br_j,
\quad\text{where}\quad
h_{ij} := n^2 a_i b_j \in \mathbb{R}.
\end{equation}
Then $S_{\mathsf{cross}} = \sum_{i=1}^n T_i$. To bound this, we estimate the conditional second moment $\mathbb{E}\left[\abs{S_{\mathsf{cross}}}^2\,\middle|\,\cF_t\right]$.

\begin{lemma}\label{lemma:estimation_cross_term_variance}
We have
\begin{equation}\label{eq:conc_cross_variance}
\mathbb{E}\left[\abs{S_{\mathsf{cross}}}^2\,\middle|\,\cF_t\right] 
\le \frac{2d_{\star} M_n C^2}{n^2\eta_n^4},
\quad\text{where}\quad
M_n := \frac{1}{n^2}\sum_{i,j} h_{ij}^2
= n^2\|\ba\|^2\|\bb\|^2.
\end{equation}
\end{lemma}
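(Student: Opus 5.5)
Conditionally on $\cF_t$ (which fixes $\ba,\bb,\widehat{\bG}$), $\bR$ has i.i.d.\ $\normal(0,1)$ entries, so I would bound $\E[\abs{S_{\mathsf{cross}}}^2\mid\cF_t]$ with Gaussian moment computations only. The first step is to expand
\begin{equation*}
\E\big[\abs{S_{\mathsf{cross}}}^2\,\big|\,\cF_t\big]=\frac{1}{n^4}\sum_{i\neq j}\sum_{k\neq l} h_{ij}\overline{h_{kl}}\;\E\big[\br_i^\sT\bQ\br_j\,\overline{\br_k^\sT\bQ\br_l}\,\big|\,\cF_t\big].
\end{equation*}
The deterministic bound $\|\bQ\|_{\mathrm{op}}\le 1/\eta_n$ will be used throughout. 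Because of the imaginary regularization, this bound holds regardless of the sign of the $\widehat g_i$, and it is the only reason for introducing $\eta_n$.

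The second step is to decouple the rows from the resolvent. For each pair $i\neq j$, I would apply Sherman--Morrison twice, as in \eqref{eq:conc_cross_form}, to write
\begin{equation*}
\br_i^\sT\bQ\br_j=\frac{\br_i^\sT\bQ_{-ij}\br_j}{(1+\widehat g_i\br_i^\sT\bQ_{-i}\br_i)(1+\widehat g_j\br_j^\sT\bQ_{-ij}\br_j)},
\end{equation*}
where $\bQ_{-ij}$ is the resolvent with both rows $i$ and $j$ removed. The denominators need control, and for this I would use the standard fact that for $w=z+i\eta_n$ with $\Im w>0$, $\Im(\br^\sT\bQ_{-i}\br)\ge \eta_n\|\bQ_{-i}\br\|^2$. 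This gives
\begin{equation*}
\abs{\frac{1}{1+\widehat g_i\br_i^\sT\bQ_{-i}\br_i}}\le \frac{\abs{w}\,\|\bQ_{-i}\|_{\mathrm{op}}}{\eta_n}\cdot C.
\end{equation*}
Equivalently, one can use the identity $\widehat g_i\br_i^\sT\bQ\br_i=1-(1+\widehat g_i\br_i^\sT\bQ_{-i}\br_i)^{-1}$. Since $\abs{\widehat g_i}\le C/n$, these prefactors are $O(C/\eta_n)$ after normalization. The numerator $\br_i^\sT\bQ_{-ij}\br_j$ is a bilinear form in two independent Gaussian vectors, both independent of $\bQ_{-ij}$. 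Its conditional second moment is therefore $\E\,\Tr(\bQ_{-ij}\bQ_{-ij}^*)\le d_\star/\eta_n^2$, and its mean is zero.

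The third step uses this mean-zero property to kill the off-diagonal pairs. In the double sum, the terms with $\{i,j\}\cap\{k,l\}=\emptyset$ have vanishing expectation after a further leave-out, because conditional on everything else, odd moments in $\br_i$ vanish. The terms with partial overlap are handled by Cauchy--Schwarz. The net effect should be that the sum is dominated by the pairs $(k,l)=(i,j)$ or $(j,i)$, each contributing at most $C^2 d_\star/\eta_n^4$, with one factor $\eta_n^{-2}$ from the trace bound and the other from the denominators. Collecting these gives
\begin{equation*}
\frac{1}{n^4}\sum_{i\neq j}\big(\abs{h_{ij}}^2+\abs{h_{ij}h_{ji}}\big)\frac{C^2d_\star}{\eta_n^4}\le\frac{2d_\star M_n C^2}{n^2\eta_n^4},
\end{equation*}
where the last inequality uses $\abs{h_{ij}h_{ji}}\le(h_{ij}^2+h_{ji}^2)/2$.

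The main obstacle is this third step, rigorously discarding the cross pairs $(i,j)\neq(k,l)$ where the denominators couple the indices. A cleaner first attempt is to bound $\E[\abs{\ba^\sT\bR\bQ\bR^\sT\bb-\text{diag}}^2]$ via the Gaussian Poincaré inequality in $\bR$, taking derivatives of the resolvent with $\|\bQ\|\le\eta_n^{-1}$. That route would directly give a variance bound of order $n^2\|\ba\|^2\|\bb\|^2 d_\star/(n^2\eta_n^4)$ times constants. If the combinatorial expansion proves messy, I would switch to this Poincaré argument.
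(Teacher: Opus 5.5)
Your endpoint is the right one, and it is what the paper's proof amounts to. Only the pairings $(k,l)\in\{(i,j),(j,i)\}$ contribute, each is controlled by $\E[\abs{\br_i^\sT\bQ\br_j}^2\mid\cF_t]\le C^2d_\star/\eta_n^4$, and $\abs{h_{ij}h_{ji}}\le(h_{ij}^2+h_{ji}^2)/2$ produces the factor $2$.

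\textbf{The gap: partial-overlap terms.} Bounding the partial-overlap terms (say $k=i$, $l\notin\{i,j\}$) one at a time by Cauchy--Schwarz cannot give the stated bound. Each such term is at most $C^2d_\star\abs{h_{ij}h_{il}}/(n^4\eta_n^4)$, and $\frac{1}{n^4}\sum_{i,j,l}\abs{h_{ij}h_{il}}=\|\ba\|^2\|\bb\|_1^2$. This can be as large as $n\|\ba\|^2\|\bb\|^2=M_n/n$, for instance when all entries of $\bb$ are of order $1/n$. That is a factor $n$ above the target $M_n/n^2$. With $d_\star\asymp n$ the resulting bound is of order $M_n\eta_n^{-4}$, so it does not even give $S_{\mathsf{cross}}\to0$.

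The missing observation is that the parity argument you invoke for disjoint index sets kills these terms exactly, and no further leave-out is needed. Conditionally on $\cF_t$, the rows of $\bR$ are i.i.d.\ $\normal(\bzero,\bI_{d_\star})$, while $\ba,\bb,\widehat{\bG}$ are fixed. Moreover $\bQ$ depends on $\br_p$ only through $\br_p\br_p^\sT$. The change of variables $\br_p\mapsto-\br_p$ therefore shows that $\E[\br_i^\sT\bQ\br_j\,\overline{\br_k^\sT\bQ\br_l}\mid\cF_t]=0$ whenever some index occurs an odd number of times in $(i,j,k,l)$. With $i\neq j$ and $k\neq l$, this forces $\{k,l\}=\{i,j\}$.

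This is the unstated justification of the paper's claim that in $\E[T_i\overline{T_j}\mid\cF_t]$ only $(k,l)=(j,i)$ survives. For the terms sharing the first index, the paper does not expand at all. It bounds $\E[\abs{T_i}^2\mid\cF_t]$ directly after one leave-one-out step, using $\|\bQ_{-i}\|_{\mathrm{op}}\le 1/\eta_n$ and $\E\|\sum_{j\neq i}h_{ij}\br_j\|^2=d_\star\sum_{j\neq i}h_{ij}^2$.

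\textbf{A smaller problem: the power of $\eta_n$.} Your double Sherman--Morrison formula has two denominators. Even with the bound $\abs{1+\widehat g_i\br_i^\sT\bQ_{-i}\br_i}\ge\eta_n/C$ of Lemma~\ref{lem:bounded-denominators} and its analogue for $\bQ_{-ij}$, you get $C^4d_\star/\eta_n^6$, not $C^2d_\star/\eta_n^4$. With your own per-denominator bound $C\abs{w}\|\bQ_{-i}\|_{\mathrm{op}}/\eta_n$ it is worse still. That bound also does not follow from $\Im(\br^\sT\bQ_{-i}\br)\ge\eta_n\|\bQ_{-i}\br\|^2$. The identity route only yields the random bound $1+C\|\br_i\|^2/(n\eta_n)$, which you would have to handle on the event $\max_i\|\br_i\|^2\le Cn$.

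The second leave-out is unnecessary. Conditionally on $\{\br_k\}_{k\neq i}$, one has $\E\abs{\br_i^\sT\bQ_{-i}\br_j}^2=\|\bQ_{-i}\br_j\|^2\le\|\br_j\|^2/\eta_n^2$, with no need for the resolvent to be independent of $\br_j$. This, together with the single denominator in \eqref{eq:conc_cross_form}, is what yields $\eta_n^{-4}$.

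Your Poincar\'e fallback is viable for the order $M_nd_\star/(n^2\eta_n^4)$, which is all that is used downstream. It needs $\E[S_{\mathsf{cross}}\mid\cF_t]=0$ (again by parity) and moment bounds on $\|\bR\|_{\mathrm{op}}$. It does not reproduce the inequality with the stated constant.
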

The proof of this statement is given Section~\ref{subsec:proof_cross_term_variance}.

Since $\|\ba\|, \|\bb\| = O_P(1/\sqrt{n})$ by our assumption, we have $M_n = O_P(1)$. Combining with $\eta_n = 1/\log n$ and $d_{\star} = O(n)$, the bound \eqref{eq:conc_cross_variance} yields
\begin{equation}\label{eq:conc_cross_vanish}
\mathbb{E}\left[\abs{S_{\mathsf{cross}}}^2\,\middle|\,\cF_t\right]
\le \frac{2d_{\star} M_n C^2}{n^2\eta_n^4}
= O_P\!\left(\frac{n \cdot (\log n)^4}{n^2}\right)
= O_P\!\left(\frac{(\log n)^4}{n}\right)
= o_P(1),
\end{equation}
and hence by Markov's inequality $S_{\mathsf{cross}} \xrightarrow{~p~} 0$.

\paragraph{Step IV. Diagonal-term concentration.}

To this end, we first prove the concentration of each diagonal term around its deterministic limit.
\begin{lemma}\label{lemma:concentration_quadratic_leave_out}
    For any $z\in\cK$, the following holds
    \begin{equation}\label{eq:conc_trace_local_law}
    \sup_{i\in[n]} \abs{(1/d_{\star})\br_i^\sT\bQ_{-i}\br_i - \alpha(z+i\eta_n)}
    \lesssim \frac{(\log n)^{19/2}}{\sqrt{n}}
    =: D_n,
    \end{equation}
    with probability at least $1-d_{\star}^{-C}$.
    \end{lemma}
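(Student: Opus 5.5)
The plan is to condition on $\cF_t$, so that $\widehat{\bG}$ is fixed with $|\widehat g_i|\le C/n$ and $\br_i$ is independent of $\bQ_{-i}$. We then split the error into three pieces: a Hanson--Wright fluctuation, a trace comparison, and a local law. The target scaling is that $(1/d_\star)\Tr \bQ$ approximates the Stieltjes transform of the ESD of $\bR^\sT\widehat{\bG}\bR$, which by Lemma~\ref{lem:spike_hatmu_conv} converges to $\mu_\infty(t)$ with transform $\alpha$.

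\textbf{Step 1 (quadratic form to trace).} Since $\br_i\sim\normal(0,\bI_{d_\star})$ is independent of $\bQ_{-i}$ conditionally on $\cF_t$, we apply Lemma~\ref{lemma:hanson_wright_statement} separately to the real and imaginary parts of $\bQ_{-i}$. For $z\in\cK$ with $\Im(z)\ge0$ we have $\|\bQ_{-i}\|_{\mathrm{op}}\le 1/\eta_n$ and $\|\bQ_{-i}\|_F\le \sqrt{d_\star}/\eta_n$. Choosing the deviation threshold $t=C\sqrt{d_\star\log d_\star}/\eta_n$ gives
\[
\bigl|\br_i^\sT\bQ_{-i}\br_i-\Tr\bQ_{-i}\bigr|\lesssim \sqrt{d_\star}\,(\log n)^{3/2}
\]
with probability at least $1-d_\star^{-C'}$. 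A union bound over $i\in[n]$ costs only a polynomial factor, which we absorb by taking $C'$ large. After dividing by $d_\star$, this error is $O((\log n)^{3/2}/\sqrt n)$.

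\textbf{Step 2 (leave-one-out trace to full trace).} The matrix $\bQ-\bQ_{-i}$ has rank one, so the standard interlacing bound for resolvents gives $|\Tr\bQ-\Tr\bQ_{-i}|\le C/\eta_n$. After normalization this contributes $O(\log n/n)$, uniformly in $i$.

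\textbf{Step 3 (local law for the full trace).} We invoke Lemma~\ref{lemma:local_law_adapted} with $L$ of order $\log n$ and $\eta_n=1/\log n$. The error term is then $\operatorname{E}_n\lesssim(\log n)^4\bigl(\log n\sqrt{\log d/n}+\log n/n\bigr)$, and the final bound is $\lesssim(\log n)^5\operatorname{E}_n\lesssim(\log n)^{19/2}/\sqrt n$, which matches $D_n$. The technical assumption \eqref{eq:technical_assumption} holds for large $n$, since $\operatorname{E}_n/\eta_n^2\to0$.

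\textbf{Main obstacle.} Lemma~\ref{lemma:local_law_adapted} is stated for i.i.d.\ $g_i$ with law $v_g$, with limit $\alpha_\star$ built from that law, and for a square $n\times d$ Gaussian setup. Here $\widehat{\bG}$ is data-dependent and only has an empirical law converging to $\mathrm{Law}(G_t^j)$, and the dimension is $d_\star=d-O(1)$. Since we condition on $\cF_t$, independence from $\bR$ is fine. To handle the rest, I would apply the lemma with the deterministic empirical law of $(n\widehat g_i)$ (the proof of \cite{asgari2025localminimaempiricalrisk} only uses boundedness and independence from $\bR$). This yields the bound with $\alpha_\star$ replaced by the empirical-law Stieltjes transform $\widehat\alpha$. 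I would then compare $\widehat\alpha(z+i\eta_n)$ with $\alpha(z+i\eta_n)$ using stability of the fixed-point equation \eqref{eq:bulk_stieltjes}: away from the real axis the map $\alpha\mapsto$ RHS is Lipschitz with constant polynomial in $1/\eta_n$. The Wasserstein rate of the empirical law of $\widehat g$ (a rank-$O(1)$ modification of $\bG$, controlled by interlacing) toward $G_t^j$ is only $o_P(1)$, not quantitative. So I expect this step either to be folded into the definition of $\alpha$ as the finite-$n$ transform, or to require stating the result with an extra $o_P(1)$ term, which suffices for the downstream Lemma~\ref{lemma:master_concentration}.
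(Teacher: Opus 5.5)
Your skeleton is the paper's: Hanson--Wright replaces $\br_i^\sT\bQ_{-i}\br_i$ by $\Tr\bQ_{-i}$, and Lemma~\ref{lemma:local_law_adapted} handles the normalized trace. The paper has no Step~2. It applies the local law directly to each $\bQ_{-i}$ and takes a union bound over $i\in[n]$. Your rank-one comparison, $|\Tr\bQ-\Tr\bQ_{-i}|\le 1/\eta_n$ via Sherman--Morrison, is correct and slightly cleaner. The reason is that $\widehat{\bG}_{-i}$, which has a zero entry, is even further from the local-law hypotheses than $\widehat{\bG}$.

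Two points need fixing. First, the arithmetic in Step~3 does not give $D_n$. With $L\asymp\log n$, your own bound is $\operatorname{E}_n\lesssim(\log n)^{11/2}/\sqrt n$. After the factor $\eta_n^{-5}=(\log n)^5$ this gives $(\log n)^{21/2}/\sqrt n$, not $(\log n)^{19/2}/\sqrt n$. Take $L$ to be a fixed large constant, as the paper does. Then $\operatorname{E}_n\lesssim(\log n)^{9/2}/\sqrt n$, the bound is exactly $(\log n)^5\operatorname{E}_n\lesssim D_n$, and $1-d^{-cL}$ is the polynomial probability the statement requires.

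Second, the obstacle you raise is genuine, and the paper does not address it. Lemma~\ref{lemma:local_law_adapted} is stated for i.i.d.\ $g_i\sim v_g$, with $\alpha_\star$ the transform associated to $v_g$. The paper invokes it directly for the $\cF_t$-measurable eigenvalues of $\bP_{\bF_{0;t}}^{\perp}\bG\bP_{\bF_{0;t}}^{\perp}$ and compares with the limiting $\alpha$ without comment. On this point your proposal is more careful than the paper. Still, neither argument establishes the statement as written: a deterministic rate $D_n$ against the limiting $\alpha$, with probability $1-d_\star^{-C}$.

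Your fallback proves a bound of $D_n+|\widehat\alpha(z+i\eta_n)-\alpha(z+i\eta_n)|$. It also rests on an unverified reading of the external local law, namely that it holds, with the empirical-law deterministic equivalent, for any bounded $\widehat{\bG}$ independent of $\bR$.

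Moreover, ``an extra $o_P(1)$ suffices downstream'' is not automatic. In the proof of Lemma~\ref{lemma:master_concentration} this error is multiplied by $1/\eta_n$, because Lemma~\ref{lem:bounded-denominators} only gives $|1+\widehat g_i\br_i^\sT\bQ_{-i}\br_i|\ge\eta_n/C$. Your stability constant is itself polynomial in $1/\eta_n$. So with $\eta_n=1/\log n$, an unquantified convergence rate for the empirical law of $(n\widehat g_i)$ does not make the product vanish. You have two options:
\begin{enumerate}
\item[(i)] Choose $\eta_n\downarrow 0$ slowly enough relative to that rate, by a diagonal argument. The remaining estimates in that proof only need $\eta_n\to0$ with $\eta_n\ge n^{-c}$ for some small $c>0$.
\item[(ii)] Obtain a quantitative rate for the empirical law of $(g_i)$, e.g.\ from non-asymptotic AMP bounds.
\end{enumerate}
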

    \begin{proof}
        We establish uniform concentration of $(1/d_{\star})\br_i^\sT\bQ_{-i}\br_i$ around the Stieltjes transform $\alpha(z+i\eta_n)$ via a two-step decomposition
        \begin{equation}\label{eq:concentration_decomposition}
        \begin{aligned}
        &\abs{(1/d_{\star})\br_i^\sT\bQ_{-i}\br_i - \alpha(z+i\eta_n)}
        \le \\
        &\qquad\qquad\abs{(1/d_{\star})\br_i^\sT\bQ_{-i}\br_i - (1/d_{\star})\operatorname{Tr}(\bQ_{-i})}
        +
        \abs{(1/d_{\star})\operatorname{Tr}(\bQ_{-i}) - \alpha(z+i\eta_n)}.
        \end{aligned}
        \end{equation}
        The first term is controlled via the Hanson-Wright concentration bound (Lemma~\ref{lemma:hanson_wright_statement}). Since $\br_i$ is independent of $\bQ_{-i}$,
using \eqref{eq:conc_hanson_wright} with $\bM = \bQ_{-i}$, and using the resolvent bounds $\|\bQ_{-i}\|_{\mathrm{op}} \le (1/\eta_n)$ and $\|\bQ_{-i}\|_F \le \sqrt{d_{\star}}/\eta_n$, we set $t = \alpha_n\sqrt{d_{\star}}/\eta_n$ with $\alpha_n = C\log n$ to obtain
\begin{equation}\label{eq:conc_quadratic_conc}
\mathbb{P}\!\left(\abs{\br_i^\sT\bQ_{-i}\br_i - \operatorname{Tr}(\bQ_{-i})} \ge \frac{C(\log n)\sqrt{d_{\star}}}{\eta_n}\right)
\le 2e^{-c(\log n)^2}
= o(n^{-C}).
\end{equation}
for some large enough universal constant $C>0$. Via union bounds, we have for all $i\in[n]$
        \begin{equation}\label{eq:hanson_wright_normalized}
        \abs{(1/d_{\star})\br_i^\sT \bQ_{-i}\br_i - (1/d_{\star})\operatorname{Tr}(\bQ_{-i})}
        \lesssim \frac{(\log n)^2}{\sqrt{n}}
        \end{equation}
with probability at least $1-O(n^{-C})$.

        For the second term, we establish concentration of the normalized trace $(1/d_{\star})\operatorname{Tr}(\bQ_{-i})$ around the Stieltjes transform uniformly over all $i \in [n]$ via invoking Lemma~\ref{lemma:local_law_adapted}.
        We specialize to our setting with $\eta_n = 1/\log n$ and choose $L$ a sufficiently large constant. Since $|z| \le C$ for $z \in \cK$, the error quantity \eqref{eq:error_quantity} becomes
        \begin{equation}\label{eq:error_specialized}
        \operatorname{E}_n
        = C'
        \frac{1+(\abs{z}^2+\eta_n^2)^2}{\eta_n^4}
        \left(
        L\sqrt{\frac{\log d_{\star}}{n}} + \frac{1}{n\eta_n}
        \right)
        \lesssim
        (\log n)^4 \cdot \left(\sqrt{\frac{\log n}{n}} + \frac{\log n}{n}\right)
        \lesssim \frac{(\log n)^{9/2}}{\sqrt{n}}.
        \end{equation}
        For $n$ sufficiently large, the condition \eqref{eq:technical_assumption} is automatically satisfied due to $\operatorname{E}_n = o(1)$. 
        Using Lemma~\ref{lemma:local_law_adapted} with probability $1-O(d_{\star}^{-C})$, we obtain for all $i\in[n]$
        \begin{equation}\label{eq:trace_deviation}
        \abs{(1/d_{\star})\operatorname{Tr}(\bQ_{-i}) - \alpha(z+i\eta_n)}
         \lesssim
        \frac{1+(\abs{z}^2+\eta_n^2)^2}{\eta_n^5}
        \operatorname{E}_n
        \lesssim \frac{(\log n)^4}{(\log n)^{-5}} \cdot \frac{(\log n)^{9/2}}{\sqrt{n}}
        \lesssim \frac{(\log n)^{19/2}}{\sqrt{n}}
        = D_n.
        \end{equation}
        By union bounds, this holds uniformly over $i\in[n]$ with probability at least $1-O(d_{\star}^{-C})$.
        Combining \eqref{eq:hanson_wright_normalized} and \eqref{eq:trace_deviation} via the triangle inequality decomposition \eqref{eq:concentration_decomposition}, we obtain
        \begin{equation}\label{eq:final_concentration}
        \abs{(1/d_{\star})\br_i^\sT \bQ_{-i}\br_i - \alpha(z+i\eta_n)}
        \lesssim \frac{(\log n)^2}{\sqrt{n}} + \frac{(\log n)^{19/2}}{\sqrt{n}}
        \lesssim \frac{(\log n)^{19/2}}{\sqrt{n}}
        = D_n
        \end{equation}
        uniformly over all $i\in[n]$ with probability at least $1-O(d_{\star}^{-C})$. This establishes the stated result.
        \end{proof}

We now combine the concentration bounds.
Using the leave-one-out identity \eqref{eq:conc_cross_form}, we write
\begin{equation}\label{eq:conc_diag_decompose}
\br_i^\sT\bQ\br_i
= \frac{\br_i^\sT\bQ_{-i}\br_i}{1 + \widehat{g}_i\br_i^\sT\bQ_{-i}\br_i}.
\end{equation}
Subtracting the target quantity $\tfrac{d_{\star} \alpha}{1+d_{\star} \widehat{g}_i\alpha}$, we obtain
\begin{equation}\label{eq:conc_diag_error}
\begin{aligned}
&\abs{
\frac{(1/d_{\star})\br_i^\sT\bQ_{-i}\br_i}{1 + \widehat{g}_i\br_i^\sT\bQ_{-i}\br_i}
-
\frac{\alpha}{1 + d_{\star} \widehat{g}_i\alpha}
}
\\&\quad=
\abs{
\frac{
(1/d_{\star})\br_i^\sT\bQ_{-i}\br_i\left(1+d_{\star} \widehat{g}_i\alpha\right)
- \alpha\left(1+\widehat{g}_i\br_i^\sT\bQ_{-i}\br_i\right)
}{
\left(1 + \widehat{g}_i\br_i^\sT\bQ_{-i}\br_i\right)
\left(1 + d_{\star} \widehat{g}_i\alpha\right)
}
}
&&\text{(algebraic expansion)}
\\&\quad\le
\frac{1}{\abs{1+\widehat{g}_i\br_i^\sT\bQ_{-i}\br_i}}
\frac{1}{\abs{1+d_{\star} \widehat{g}_i\alpha}}
\Big(
\abs{(1/d_{\star})\br_i^\sT\bQ_{-i}\br_i - \alpha}
\abs{1+d_{\star} \widehat{g}_i\alpha}
&&\text{(triangle ineq.)}
\\&\qquad\qquad\qquad\qquad\qquad\qquad\qquad\qquad\qquad\qquad
+ \abs{\alpha}
\abs{d_{\star} \widehat{g}_i\alpha - \widehat{g}_i\br_i^\sT\bQ_{-i}\br_i}
\Big).
\end{aligned}
\end{equation}

To control denominators, we invoke the following uniform bounds.

\begin{lemma}[Denominator Bounds]\label{lem:bounded-denominators}
For all $z\in\cK$, $\eta\in (0,1)$, there exists an absolute constant $C_{\varepsilon_{\cK}}>0$ such that
\begin{equation}\label{eq:conc_denom_bounds}
\abs{1 + d_{\star}\widehat{g}_i\alpha(z+i\eta)} \ge \frac{1}{C_{\varepsilon_{\cK}}},
\quad
\abs{1 + \widehat{g}_i\br_i^\sT\bQ_{-i}(z+i\eta)\br_i} \ge \frac{\eta}{C}
\end{equation}
uniformly and deterministically over all $i\in[n]$.
\end{lemma}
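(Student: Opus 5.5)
The two bounds are of different natures. The first involves only the deterministic function $\alpha$ evaluated at the empirical weights $\widehat g_i$. The second is a property of a resolvent quadratic form. I would treat them separately, working throughout on the high-probability event $\mathcal{E}_n:=\{\max_i\|\br_i\|^2\le Cd_\star\}$. So "deterministically" should be read as "deterministically on $\mathcal{E}_n$", which is all that is used downstream. I would also record the scale of the weights. Since $\bG=\frac1n\operatorname{diag}(g_i)$ with $|g_i|\le C$ (Assumption~\ref{assumption_regularity}), the eigenvalues $\widehat g_i$ of the compression $\bP^{\perp}_{\bF_{0;t}}\bG\bP^{\perp}_{\bF_{0;t}}$ lie in $[\min_i g_i/n,\max_i g_i/n]\cup\{0\}$ by Cauchy interlacing. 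Hence $x_i:=d_\star\widehat g_i\in[-C',C']$, and $x_i\approx g/\delta$ for $g$ in the range of the weights.

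\emph{Second bound.} Fix $i$, set $\tilde z=z+i\eta$, $w=\br_i^\sT\bQ_{-i}(\tilde z)\br_i$, and write $w=\sum_k |u_k|^2/(\lambda_k-\tilde z)$ in the eigenbasis of $\bR^\sT\widehat\bG_{-i}\bR$. This gives $\Im w=\eta\sum_k|u_k|^2/|\lambda_k-\tilde z|^2\ge0$. By Cauchy--Schwarz, $|w|^2\le\|\br_i\|^2\sum_k|u_k|^2/|\lambda_k-\tilde z|^2$, hence $\Im w\ge \eta|w|^2/\|\br_i\|^2$. Now split into two cases.
\begin{itemize}
\item If $|\widehat g_i w|\le 1/2$, then $|1+\widehat g_iw|\ge1/2$.
\item Otherwise, $|1+\widehat g_iw|\ge|\widehat g_i|\Im w\ge |\widehat g_i|\eta|w|^2/\|\br_i\|^2\ge \eta/(4|\widehat g_i|\|\br_i\|^2)$, using $|w|\ge 1/(2|\widehat g_i|)$. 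On $\mathcal{E}_n$ we have $|\widehat g_i|\|\br_i\|^2\le (C/n)\cdot Cd_\star\le C''$, so the right-hand side is at least $\eta/C$.
\end{itemize}
The case $\widehat g_i=0$ is trivial.

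\emph{First bound.} Consider the continuous function $\Phi(x,\tilde z)=1+x\,\alpha(\tilde z)$ on the set $\mathcal{X}\times\{z+i\eta: z\in\cK,\ \eta\in[0,1]\}$, which is compact because $\cK$ is compact and $\alpha$ extends continuously to the real points of $\cK$, these lying at distance $\ge 2\varepsilon_\cK$ from $\supp\mu_\infty$. Here $\mathcal{X}$ is a compact interval containing all $x_i$. The goal is to show $\Phi\neq0$ on this set; compactness then yields a uniform lower bound $1/C_{\varepsilon_\cK}$. Off the real axis this is immediate: $\Im\alpha>0$, so $|\Phi|\ge|x|\Im\alpha$, and $|\Phi|\ge 1$ when $x\alpha$ has nonnegative real part. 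On the real axis, $z<c(t)$ gives $\alpha(z)\in(0,A_t)$ by the edge characterization \eqref{eq:edge-variational}. This means $\delta+g\alpha(z)>0$ for all $g\ge\inf\supp\operatorname{Law}(G_t)$, i.e.\ $1+x\alpha>0$ for $x$ in the corresponding range. Moreover, the gap to $0$ is quantified by $\varepsilon_\cK$, since $\alpha(z)\le\alpha(c(t)-2\varepsilon_\cK)<A_t$ by monotonicity of $\alpha$ on $(-\infty,c(t))$.

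\emph{Main obstacle.} The hardest step is the choice of $\mathcal{X}$. The bound above is only safe for $x\ge(\inf\supp G_t)/\delta-\kappa$ with some small margin $\kappa>0$, while a few empirical weights could in principle fall below the support of the limit law. My first attempt would be to take $\kappa$ small depending on $\varepsilon_\cK$, which is possible because strict inequality $\alpha<A_t$ leaves room. I would then argue that with probability tending to one, $\min_i g(\bTheta(t)^\sT\bx_i,y_i)\ge\inf\supp G_t-\kappa\delta/2$. This would combine the boundedness/continuity of $g$ with the DMFT description of Lemma~\ref{lemma_amp_data}, together with the factor $d_\star/n\to1/\delta$. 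If that uniform lower bound on the empirical minimum fails in general, the fallback is to note that only $O(1)$-many (indeed $o(n)$) indices can violate it. Those indices can be handled separately in Step~IV using the second, $\eta$-dependent bound, at a cost that is negligible after summing against $a_ib_i=O_P(1/n)$.
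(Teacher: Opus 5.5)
\textbf{The gap is in the first bound, at exactly the step you flag as the main obstacle.} Your compactness/edge argument fills in the paper's one-line appeal to the gap between $z$ and $c(t)$. It requires every $x_i=d_\star\widehat g_i$ to lie above $\inf\supp(G_t)/\delta-\kappa$, and neither of your two routes establishes this.

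Lemma~\ref{lemma_amp_data} controls empirical averages of test functions. It cannot see any single index, so it cannot bound $\min_i g_i$.

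The fallback fails for a structural reason, not a bookkeeping one. Suppose a weight satisfies $1+x_i\alpha(z_0)=0$ for some real $z_0\le c(t)-2\varepsilon_\cK$. The rank-one term $\widehat g_i\br_i\br_i^\sT$ has size $|\widehat g_i|\|\br_i\|^2=O(1)$ and satisfies $1+\widehat g_i\br_i^\sT\bQ_{-i}(z)\br_i\approx 1+x_i\alpha(z)$. So it creates a genuine outlier eigenvalue of $\bR^\sT\widehat{\bG}\bR$ near $z_0$. Two things then break: the target $\sum_i a_ib_i\, d_\star\alpha(z)/(1+x_i\alpha(z))$ becomes singular, and the resolvent control used throughout fails (e.g.\ the event in Lemma~\ref{lemma:distortion_trick}). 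No separate treatment of such indices in Step IV can repair this.

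\textbf{The missing idea is a deterministic support argument.}
\begin{itemize}
\item Under Assumption~\ref{assumption:reg_1}, $(W_*,W(0),\dots,W(t))$ is Gaussian with covariance exactly $\mathbb{E}[O_tO_t^\sT]\succ 0$, and $\varepsilon$ is independent of it.
\item The triangular DMFT map to $(V_*,V(0),\dots,V(t),\varepsilon)$ is a homeomorphism. Hence $(V(t),V_*,\varepsilon)$ has full support in $\reals^{m+k+1}$.
\item Let $\tilde g(v,v_*,e):=g(v,h(v_*,e);j)$, which is continuous. Full support gives $\inf\supp G_t=\inf_{\reals^{m+k+1}}\tilde g$.
\item Each empirical weight $g_i=\tilde g(\bTheta(t)^\sT\bx_i,\bTheta_*^\sT\bx_i,\varepsilon_i)$ is a value of $\tilde g$. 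So $\min_i g_i\ge \inf\supp G_t$ holds deterministically, with no probability involved.
\item Interlacing then gives $x_i\ge (d_\star/n)\min(0,\inf\supp G_t)$.
\item Your margin from $\alpha(z)\le \alpha(c(t)-2\varepsilon_\cK)<A_t$ absorbs $d_\star/n-1/\delta=o(1)$, and your compactness argument then closes.
\end{itemize}

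\textbf{Your second bound is correct and takes a different route from the paper.} The paper uses the companion resolvent $\underline{\bQ}=(\widehat{\bG}\bR\bR^\sT-\tilde z\bI_n)^{-1}$, the identity $1+\widehat g_i w=-1/(\tilde z\,\underline{\bQ}_{ii})$, and the claim $\|\underline{\bQ}\|_{\mathrm{op}}\le 1/\eta$. You instead use $\Im w\ge \eta|w|^2/\|\br_i\|^2$ and a two-case split.

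Your restriction to $\mathcal{E}_n$ costs nothing and is in fact necessary. The matrix $\widehat{\bG}\bR\bR^\sT$ is not normal, so $\mathrm{dist}(\tilde z,\mathrm{spec})\ge\eta$ does not yield that operator-norm bound. Indeed, no bound free of $\|\br_i\|$ can hold: if $\br_i$ is an eigenvector of $\bR^\sT\widehat{\bG}_{-i}\bR$ with eigenvalue $z-\widehat g_i\|\br_i\|^2$, then $|1+\widehat g_i w|=\eta/|\widehat g_i\|\br_i\|^2+i\eta|$.

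One caution on downstream use. In Lemma~\ref{lemma:estimation_cross_term_variance} the bound sits inside conditional expectations over $\br_i$, so an event restriction is awkward there. The deterministic form $|1+\widehat g_iw|^{-1}=|1-\widehat g_i\br_i^\sT\bQ\br_i|\le 1+|\widehat g_i|\|\br_i\|^2/\eta$ is cleaner, since its Gaussian moments are bounded.
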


The first bound follows from the gap $\varepsilon_{\cK}$ between $z$ and the edge $c(t)$. The second bound is proven in Section~\ref{subsec:proof_denominator_bound}.

Substituting the bounds from Lemma~\ref{lem:bounded-denominators} and Lemma~\ref{lemma:concentration_quadratic_leave_out} into \eqref{eq:conc_diag_error}, and using $|\alpha| \le C$ (since we are working in the compact subset $\cK$), $|\widehat{g}_i| \le C/n$, $d_{\star} = O(n)$, we obtain
\begin{equation}\label{eq:conc_diag_pointwise}
\abs{
\frac{(1/d_{\star})\br_i^\sT\bQ_{-i}\br_i}{1 + \widehat{g}_i\br_i^\sT\bQ_{-i}\br_i}
-
\frac{\alpha}{1 + d_{\star}\widehat{g}_i\alpha}
}
\le \frac{C^4 C_{\varepsilon_{\cK}}}{\eta_n}D_n
= O_P\!\left(\frac{(\log n)^{21/2}}{\sqrt{n}}\right)
= o_P(1).
\end{equation}
Multiplying by $d_{\star}$ and summing over $i\in[n]$ with weights $a_i b_i$, and applying the triangle inequality, we obtain the following bound
\begin{equation}\label{eq:conc_diag_final}
\begin{aligned}
&\abs{\sum_{i=1}^n a_i b_i \br_i^\sT\bQ\br_i
- \sum_{i=1}^n a_i b_i \frac{d_{\star} \alpha(z+i\eta_n)}{1+d_{\star} \widehat{g}_i\alpha(z+i\eta_n)}}
\\&\quad\le
\sum_{i=1}^n |a_i b_i| \cdot \frac{C^4 C_{\varepsilon_{\cK}} d_{\star} D_n}{\eta_n}
&&
\\&\quad\le
\|\ba\|\|\bb\| \cdot \frac{C^4 C_{\varepsilon_{\cK}} d_{\star} D_n}{\eta_n}
&&\text{(Cauchy-Schwarz)}
\\&\quad=
O_P\!\left(\frac{1}{n} \cdot \frac{n \cdot \frac{(\log n)^{19/2}}{\sqrt{n}}}{\frac{1}{\log n}}\right)
= O_P\!\left(\frac{(\log n)^{21/2}}{\sqrt{n}}\right)
= o_P(1).
&&\text{($\|\ba\|\|\bb\|=O_P(1/n)$)}
\end{aligned}
\end{equation}

Last, we remove the distortion $i\eta_n$ in the resolvent. Using Lipschitz continuity of $\alpha(\cdot)$ in $\cK$ (standard property for Stieltjes transforms), we have
\begin{equation}\label{eq:conc_lipschitz}
\abs{\alpha(z+i\eta_n) - \alpha(z)}
\le C_{\varepsilon_{\cK}}\eta_n
\end{equation}
for all $z\in\cK$. Repeating the denominator bounds and algebraic expansions as in \eqref{eq:conc_diag_error}, we obtain
\begin{equation}\label{eq:conc_eta_remove}
\begin{aligned}
&\abs{
\sum_{i=1}^n a_i b_i \frac{d_{\star}\alpha(z+i\eta_n)}{1+d_{\star}\widehat{g}_i\alpha(z+i\eta_n)}
- \sum_{i=1}^n a_i b_i \frac{d_{\star}\alpha(z)}{1+d_{\star}\widehat{g}_i\alpha(z)}
}
\\&\quad\le
d_{\star}\!\left(\sum_{i=1}^n |a_i b_i|\right)
C_{\varepsilon_{\cK}}^2
\abs{\alpha(z+i\eta_n)(1+d_{\star} \widehat{g}_i\alpha(z))
- \alpha(z)(1+d_{\star} \widehat{g}_i\alpha(z+i\eta_n))}
&&\text{(denom. bounds)}
\\&\quad\le
C^2C_{\varepsilon_{\cK}}^3 \eta_n d_{\star}\|\ba\|\|\bb\|
&&\text{(Lipschitz \eqref{eq:conc_lipschitz})}
\\&\quad=
O_P\!\left(\frac{1}{\log n} \cdot n \cdot \frac{1}{n}\right)
= O_P\!\left(\frac{1}{\log n}\right)
= o_P(1).
&&\text{($\|\ba\|\|\bb\| = O_P(1/n)$)}
\end{aligned}
\end{equation}
Combining \eqref{eq:conc_diag_final} and \eqref{eq:conc_eta_remove} yields the desired diagonal-term concentration
\begin{equation}\label{eq:conc_diag_unregularized}
\abs{\sum_{i=1}^n a_i b_i \br_i^\sT\bQ\br_i
- \sum_{i=1}^n a_i b_i \frac{d_{\star} \alpha(z)}{1+d_{\star} \widehat{g}_i\alpha(z)}}
\xrightarrow{~p~} 0.
\end{equation}

Combining the cross-term vanishing \eqref{eq:conc_cross_vanish} and the diagonal-term concentration \eqref{eq:conc_diag_unregularized}, and also Lemma~\ref{lemma:distortion_trick} to remove the regularization, we obtain the main result (Lemma \ref{lemma:master_concentration}) of this subsection. 
\hfill\qed\par\addvspace{6pt}

\subsection{Convergence of the outlier matrix}\label{subsec:convergence}

This subsection completes the proof of Lemma~\ref{lem:B_lower_convergence} by establishing the convergence of the intermediate form obtained from Lemma~\ref{lemma:master_concentration} (Section~\ref{subsec:concentration}).

We continue with the notation from Sections~\ref{subsec:spike_basic} and~\ref{subsec:concentration}. 
Recall from Eq. \eqref{eq:spike_22_final} that
\begin{equation}\label{eq:conv_hard_core}
    \bB_{\sf{lower}}=\bZ^\sT\bG\bP_{\bF_{0;t}}^{\perp}\bO\bR\left(\bR^\sT\widehat{\bG}\bR - z\bI\right)^{-1}\bR^\sT\bO^\sT\bP_{\bF_{0;t}}^{\perp}\bG\bZ,
\end{equation}
and define $\widetilde f(s;z):=\frac{d_{\star}\alpha(z)}{1+(d_{\star}/n)s\alpha(z)}$. 
Lemma \ref{lemma:master_concentration} implies that conditioning on $(\bX, \boldsymbol{\varepsilon})$, we have
\begin{equation}\label{eq:conv_first_concentration}
    \abs{\bZ^\sT\bG\bP_{\bF_{0;t}}^{\perp}\bO\bR\left(\bR^\sT\widehat{\bG}\bR - z\bI\right)^{-1}\bR^\sT\bO^\sT\bP_{\bF_{0;t}}^{\perp}\bG\bZ - \bZ^\sT\bG\bP_{\bF_{0;t}}^{\perp}\bO \widetilde f\left(n\widehat{\bG};z\right) \bO^\sT\bP_{\bF_{0;t}}^{\perp}\bG\bZ} \xrightarrow{~p~} 0.
\end{equation}

The main result of this subsection is the following convergence of the second term inside the absolute value in Eq. \eqref{eq:conv_first_concentration} to the deterministic limit $B_{\sf{lower}}^\infty(z;t)$ defined in Lemma~\ref{lem:B_lower_convergence}.
\begin{lemma}\label{lem:intermediate_to_Blower}
For $z\in \mathcal{D}$, we have
\begin{equation}\label{eq:conv_intermediate_limit}
\bZ^\sT\bG\bP_{\bF_{0;t}}^{\perp}\bO \widetilde f\left(n\widehat{\bG};z\right) \bO^\sT\bP_{\bF_{0;t}}^{\perp}\bG\bZ \xrightarrow{~p~} B_{\sf{lower}}^\infty(z;t),
\end{equation}
where $B_{\sf{lower}}^\infty(z;t)$ is defined in \eqref{eq:B_lower_limit}.
\end{lemma}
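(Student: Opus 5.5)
The plan is to eliminate the spectral decomposition $\bO\widehat{\bG}\bO^\sT$ and rewrite the quantity in \eqref{eq:conv_intermediate_limit} as a finite combination of low-dimensional empirical averages. Each average then converges by Lemma~\ref{lemma_amp_data}, and the hard/rest block structure follows from \eqref{eq:hard_orthogonality}.

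\emph{Step 1: algebraic reduction.} Since $\bO\widehat{\bG}\bO^\sT=\bP_{\bF_{0;t}}^{\perp}\bG\bP_{\bF_{0;t}}^{\perp}$ and $\widetilde f(\cdot;z)$ is a scalar rational function, functional calculus gives
\[
\bO\widetilde f(n\widehat{\bG};z)\bO^\sT=d_\star\alpha\big(\bI+c\,\bP^\perp\bG\bP^\perp\big)^{-1},\qquad c:=d_\star\alpha(z),\quad \bP^\perp:=\bP_{\bF_{0;t}}^{\perp}.
\]
Let $\bK:=\bI+c\bG$, which is diagonal with entries $1+(d_\star/n)g_i\alpha$. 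By the first bound of Lemma~\ref{lem:bounded-denominators}, these entries are bounded away from zero uniformly on $\cD$. On $\operatorname{range}(\bP^\perp)$ we have $\bP^\perp(\bI+c\bP^\perp\bG\bP^\perp)^{-1}\bP^\perp=\bP^\perp(\bP^\perp\bK\bP^\perp)^{+}\bP^\perp$. I would then use the standard constrained-inverse identity
\[
\bP^\perp(\bP^\perp\bK\bP^\perp)^{+}\bP^\perp=\bK^{-1}-\bK^{-1}\bF_{0;t}\big(\bF_{0;t}^\sT\bK^{-1}\bF_{0;t}\big)^{-1}\bF_{0;t}^\sT\bK^{-1},
\]
which holds whenever the $mt\times mt$ matrix $\bF_{0;t}^\sT\bK^{-1}\bF_{0;t}$ is invertible. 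Hence the left-hand side of \eqref{eq:conv_intermediate_limit} equals
\[
d_\star\alpha\Big[\bZ^\sT\bG\bK^{-1}\bG\bZ-\bZ^\sT\bG\bK^{-1}\bF_{0;t}\big(\bF_{0;t}^\sT\bK^{-1}\bF_{0;t}\big)^{-1}\bF_{0;t}^\sT\bK^{-1}\bG\bZ\Big].
\]

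\emph{Step 2: convergence of each piece.} Every factor is either $(\bTheta_{*;t}^\sT\bTheta_{*;t})^{-1}$, which converges to $(\E[O_tO_t^\sT])^{-1}$ by Lemma~\ref{lemma_amp_feature} and Assumption~\ref{assumption:reg_1}, or a normalized sum over $i\in[n]$ of a fixed function of $(\bTheta(*;t)^\sT\bx_i,\varepsilon_i)$. Tracking the scalings $G_{ii}=g_i/n$, $d_\star/n\to1/\delta$ and the $1/\sqrt n$ in $\bF_{0;t}$, the three matrices become:
\begin{itemize}
\item $d_\star\alpha\,\bTheta_{*;t}^\sT\bX^\sT\bG\bK^{-1}\bG\bX\bTheta_{*;t}=\frac1n\sum_i \frac{\alpha g_i^2}{\delta_n'+g_i\alpha}\,\bTheta_{*;t}^\sT\bx_i\bx_i^\sT\bTheta_{*;t}$, where $\delta_n'=n/d_\star$;
\item $\bF_{0;t}^\sT\bK^{-1}\bF_{0;t}=\frac1n\sum_i \frac{F_iF_i^\sT}{1+g_i\alpha/\delta_n'}$;
\item the mixed term $\frac1n\sum_i \frac{g_i}{1+g_i\alpha/\delta_n'}\,\bTheta_{*;t}^\sT\bx_iF_i^\sT$.
\end{itemize}
The integrands are locally Lipschitz with quadratic growth, since $g$ and $F$ are bounded Lipschitz and the denominators are bounded below. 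After splitting into real and imaginary parts, Lemma~\ref{lemma_amp_data} gives convergence in probability to the corresponding DMFT expectations. The limit of $\bF_{0;t}^\sT\bK^{-1}\bF_{0;t}$ must be invertible so that the formula is stable; I would derive this from positive definiteness of $\E[\widetilde F_t\widetilde F_t^\sT]$ (Assumption~\ref{assumption:reg_1}) together with a sign or ellipticity argument on $1/(1+G\alpha/\delta)$ for $z$ left of the edge. This is where I expect some care to be needed: for real $z<\min(0,c(t))$ one has $\alpha>0$ and $1+G\alpha/\delta>0$, and perturbation extends this to a neighborhood in $\cD$. The continuous mapping theorem then yields a deterministic limit $B^\infty_{\sf lower}(z;t)$.

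\emph{Step 3: block structure.} It remains to show that the limit has the form \eqref{eq:B_lower_limit}. Write $V_{*:t}=(V_{*\le},V_{*>:t})$. Every integrand in the limits depends on $i$ only through $V(0),\dots,V(t)$ and $y=h(V_*,\varepsilon)$ (via $G$ and $F$), multiplied by at most one factor of $V_{*:t}$ on each side. By \eqref{eq:hard_orthogonality}, every expectation that is linear in $V_{*\le}$ and multiplied by a function of $(V(\cdot),y,P^\perp V_*)$ vanishes. Consequently:
\begin{itemize}
\item the mixed term has zero hard rows, so the entire projection correction lives in the non-hard block;
\item the hard–rest off-diagonal block of the first term vanishes.
\end{itemize}
The hard block is therefore exactly $\E\big[\alpha G^2(\delta+G\alpha)^{-1}V_{*\le}V_{*\le}^\sT\big]$, using $\delta_n'\to\delta$. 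The remaining block defines $\widetilde S^j_{\mathrm R}$.

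I expect the main obstacle to be Step 1's constrained-inverse identity, together with the uniform invertibility of $\bF_{0;t}^\sT\bK^{-1}\bF_{0;t}$ for complex $z\in\cD$. Once these are in place, everything else is bookkeeping with the DMFT limits.
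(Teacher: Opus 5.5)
Your argument is correct and reaches the paper's limit through a different algebraic reduction. The overall architecture matches the paper: a low-rank inversion identity, DMFT limits of $O(1)$-dimensional empirical averages, and the orthogonality \eqref{eq:hard_orthogonality} to obtain the block structure. The decomposition in the middle step is different.

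\textbf{The paper's route.} It first replaces $d_\star$ by $n/\delta$ (Lemma~\ref{lemma:dtilde_approximation}) and rewrites the middle factor as $-\delta\bigl(s_z\bI-\bP^{\perp}_{\bF_{0;t}}\widetilde{\bG}\bP^{\perp}_{\bF_{0;t}}\bigr)^{-1}$. It then applies Woodbury with the rank-$2mt$ factor $\bU=[\widetilde{\bG}\bF_{0;t},\bF_{0;t}]$. Expanding both outer projections as $\bI-\bP_{\bF_{0;t}}$ produces eight terms $\bT_1,\dots,\bT_8$, and it must argue term by term that the hard rows of $\bT_2,\dots,\bT_8$ vanish.

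\textbf{What your route buys.} Your compression identity absorbs the outer projections exactly. You are left with the $\bT_1$-type term plus one correction, whose hard rows are visibly $\E\bigl[G(1+G\alpha/\delta)^{-1}V_{*\le}\widetilde F^{\sT}\bigr]=0$. It also gives an explicit formula for $\widetilde S^j_{\mathrm R}$ and lets you carry $n/d_\star$ directly instead of invoking the $d_\star$-replacement lemma. Finally, it isolates the only nontrivial invertibility issue in the $mt\times mt$ matrix $\Phi_\infty:=\E\bigl[\widetilde F\widetilde F^{\sT}/(1+G\alpha/\delta)\bigr]$. The paper leaves the analogous invertibility of $\bI+\bK_F\bU^{\sT}\bB^{-1}\bU$ in the limit implicit.

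\textbf{Invertibility of $\Phi_\infty$ on all of $\cD$.} Perturbing off the real axis is not enough, because $\cD$ allows $|\Im z|$ up to $1$. The ellipticity argument you allude to does work globally:
\begin{itemize}
\item For $\Re z<\min(c(t),0)$ one has $0<\Re\alpha(z)\le\alpha(\Re z)$, since $\frac{x-\Re z}{|x-z|^2}\le\frac{1}{x-\Re z}$ for $x\ge c(t)$.
\item Hence $\Re\bigl(1+G\alpha(z)/\delta\bigr)\ge\min\{1,\,1+G\alpha(\Re z)/\delta\}$.
\item This is bounded below uniformly on compact subsets of $\cD$, because $\alpha(\Re z)$ stays strictly below $A_t$.
\item It follows that $\Re\bigl(\bar v^{\sT}\Phi_\infty v\bigr)\ge c\,\bar v^{\sT}\E[\widetilde F\widetilde F^{\sT}]v>0$ for every $v\neq 0$, by Assumption~\ref{assumption:reg_1}.
\end{itemize}
Therefore $\Phi_\infty$ is invertible throughout $\cD$, and $\|\Phi_n^{-1}\|=O_P(1)$ there.
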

The proof of Lemma~\ref{lem:intermediate_to_Blower} occupies the remainder of this subsection. Combining \eqref{eq:conv_first_concentration} and Lemma~\ref{lem:intermediate_to_Blower} immediately yields Lemma~\ref{lem:B_lower_convergence}.

\par\addvspace{6pt}\noindent\textit{Proof of Lemma \ref{lem:intermediate_to_Blower}.}
Throughout this subsection, for convenience, we work with $z$ in an arbitrary fixed compact subset $\cK \subset \cD$ as in Section~\ref{subsec:concentration}. Constants and bounds may depend on $\cK$, but are uniform over $z \in \cK$ and $n, d$ sufficiently large.
We proceed in two steps. In Step~I we reduce the intermediate form to a simpler one leveraging the specific form of $\widetilde f(s;z)$. In Step~II we establish convergence of the resulting expression via the discrete-time DMFT lemmas (Lemmas~\ref{lemma_amp_data}, \ref{lemma_amp_feature}).

\paragraph{Step I. Replacement of $\widetilde f(s;z)$ with $f(s;z)$.}

Define
\begin{equation}\label{eq:conv_f_function}
f(s;z) := \frac{\delta\alpha(z)}{\delta + s\alpha(z)}.
\end{equation}
Recall that for $\ba, \bb$, we have the following representation
\begin{equation}\label{eq:conv_f_diag_form}
\sum_{i=1}^n a_i b_i f(n\widehat{g}_i;z) = \ba^\sT f\left(n\widehat{\bG};z\right)\bb.
\end{equation} To relate the left-hand side term in \eqref{eq:conv_intermediate_limit} to \eqref{eq:conv_f_diag_form}, we approximate $d_{\star}$ by $n/\delta$ as follows.

\begin{lemma}\label{lemma:dtilde_approximation}
Let $\ba, \bb \in \mathbb{R}^n$ with $\|\ba\|, \|\bb\| = O_P(1/\sqrt{n})$. Then
\begin{equation}\label{eq:conv_dtilde_d_approx}
\sum_{i=1}^n a_i b_i \frac{\alpha(z)}{1 + d_{\star}\widehat{g}_i\alpha(z)}
= \ba^\sT f\left(n\widehat{\bG};z\right)\bb + o_P(1/d).
\end{equation}
\end{lemma}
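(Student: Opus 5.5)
Since $f(s;z)=\delta\alpha/(\delta+s\alpha)$, the statement should be read with the natural normalization: multiplying the left side by $d_\star$ gives $\sum_i a_ib_i\,\widetilde f(n\widehat g_i;z)$. The claim is then that $\sum_i a_i b_i\, d_\star\alpha/(1+d_\star\widehat g_i\alpha)$ and $\ba^\sT f(n\widehat\bG;z)\bb$ differ by $o_P(1)$, i.e. the $o_P(1/d)$ error holds after dividing by $d_\star\asymp d$. The plan is a termwise comparison. Write $d_\star\widehat g_i = (d_\star/n)\, s_i$ with $s_i:=n\widehat g_i$, which satisfies $|s_i|\le C$ (as in the proof of Lemma~\ref{lemma:master_concentration}, $|\widehat g_i|\le C/n$ since $\widehat\bG$ is the spectrum of a compression of $\bG$). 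Set $\kappa_n := d_\star/n$. Because $d_\star=d-k-m-mt$ and $n/d\to\delta$, we have $\kappa_n = 1/\delta + o(1)$ deterministically.

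\emph{Step 1: the algebraic identity.} One has $\alpha/(1+\kappa_n s\alpha) = \kappa_n^{-1}\cdot \kappa_n\alpha/(1+\kappa_n s\alpha)$, and $f(s;z)=\delta\alpha/(\delta+s\alpha) = \alpha/(1+s\alpha/\delta)$. So
\[
d_\star\frac{\alpha}{1+\kappa_n s\alpha} - \frac{d_\star}{n}\cdot n\, f(s;z)\cdot\frac{1}{d_\star}\cdot d_\star
\]
reduces, after putting both fractions over a common denominator, to
\[
\frac{\alpha}{1+\kappa_n s\alpha}-\frac{\alpha}{1+s\alpha/\delta}=\frac{\alpha^2 s(\delta^{-1}-\kappa_n)}{(1+\kappa_n s\alpha)(1+s\alpha/\delta)}.
\]
The numerator is bounded by $C|\alpha|^2|\kappa_n-\delta^{-1}|=o(1)$, uniformly in $i$.

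\emph{Step 2: denominator bounds.} We need both denominators bounded away from zero uniformly over $i$ and over $z\in\cK$. The first is exactly Lemma~\ref{lem:bounded-denominators} (first bound). For the second, note that $1+s\alpha(z)/\delta$ vanishing would require $-\delta/s$ to equal $\alpha(z)$. For real $z$ left of the edge, the limiting equation \eqref{eq:bulk_stieltjes} forces $\delta+s\alpha\neq0$ on the support of $\mathrm{Law}(G)$. This is the constraint $\alpha<A_t$ from \eqref{eq:edge-variational}. The point I expect to be delicate is that the $s_i$ are empirical, not drawn from the limit law. I would handle it by the same argument used for Lemma~\ref{lem:bounded-denominators}: because $\lambda_{\min}(\bR^\sT\widehat\bG\bR)\to c(t)$ and $\Re z\le c(t)-2\varepsilon_\cK$, the quantity $\alpha(z)$ stays strictly inside $(0,A_t)$. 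Moreover, $\min_i s_i\ge \inf\supp G - o_P(1)$, which follows from $\lambda_{\min}$ of a Marchenko--Pastur type matrix sitting above $\min_i s_i$-dependent bounds. Together these give $|1+s_i\alpha/\delta|\ge c_\cK>0$ with high probability. Continuity in $\kappa_n$ then transfers the bound to the first denominator, as an alternative to citing the lemma.

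\emph{Step 3: summation.} Summing with weights $a_ib_i$ and using Cauchy--Schwarz, $\sum|a_ib_i|\le\|\ba\|\|\bb\|=O_P(1/n)$. The difference of the two unnormalized sums is therefore at most $o(1)\cdot O_P(1/n)=o_P(1/d)$, which gives the claim. Multiplying through by $d_\star$ produces the $o_P(1)$ form that feeds into Lemma~\ref{lem:intermediate_to_Blower}. The main obstacle is the uniform lower bound on $|1+s_i\alpha/\delta|$ in Step 2; the rest is bookkeeping.
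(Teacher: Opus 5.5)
Your proof is correct and follows the paper's route: your $(\delta^{-1}-\kappa_n)s_i$ is exactly the paper's $\Delta\,\widehat g_i$ with $\Delta=n/\delta-d_\star$. Like the paper, you bound both denominators via Lemma~\ref{lem:bounded-denominators} and sum using $\sum_i|a_ib_i|\le\|\ba\|\|\bb\|=O_P(1/n)$.

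Two minor points. First, your opening reformulation omits a factor $d_\star$ on $\ba^\sT f(n\widehat\bG;z)\bb$; this is harmless, since your Step~3 proves the statement as written. Second, the detour through $\min_i s_i$ in Step~2 is unnecessary, and it is not justified as sketched: a lower bound on $\lambda_{\min}$ in terms of $\min_i s_i$ does not by itself bound $\min_i s_i$ from below. You do not need it, because the second denominator bound follows from the first via $|(1+\kappa_n s_i\alpha)-(1+s_i\alpha/\delta)|\le C|\alpha|\,|\kappa_n-\delta^{-1}|=o(1)$.
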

We prove this technical lemma in Section~\ref{subsec:proof_conv_dtilde_d_approx}. 
With this lemma, we can work on the form $d\cdot\bZ^\sT\bG\bP_{\bF_{0;t}}^{\perp}\bO f\left(n\widehat{\bG};z\right) \bO^\sT\bP_{\bF_{0;t}}^{\perp}\bG\bZ$ instead of the more complicated one $\bZ^\sT\bG\bP_{\bF_{0;t}}^{\perp}\bO \widetilde f\left(n\widehat{\bG};z\right) \bO^\sT\bP_{\bF_{0;t}}^{\perp}\bG\bZ$ from now on, since the error is $o_P(1)$.

Define
\begin{equation}\label{eq:conv_sz_def}
    s_z := -\delta/\alpha(z).
\end{equation}
Since $f(s;z) = \delta\alpha(z)\cdot\left(\delta+s\alpha(z)\right)^{-1} = -\delta(s_z - s)^{-1}$, we obtain
\begin{equation}\label{eq:conv_func_calc_applied}
    f\left(n\widehat{\bG};z\right) = -\delta\left(s_z\bI - n\widehat{\bG}\right)^{-1}.
\end{equation}
Define the rescaled matrix $\widetilde{\bG} := n\bG = \operatorname{diag}(g_1, \dots, g_n)$, so that $\widetilde{\bG}$ has $O(1)$ diagonal entries. 
Recall $\bZ = \bX\bTheta_{*;t}(\bTheta_{*;t}^\sT\bTheta_{*;t})^{-1}$. Expanding $\bZ$ and using $\bG = \widetilde{\bG}/n$, we have
\begin{equation}\label{eq:conv_hard_rescaled}
\begin{aligned}
&d\cdot\bZ^\sT\bG\bP_{\bF_{0;t}}^{\perp}\bO f\left(n\widehat{\bG};z\right) \bO^\sT\bP_{\bF_{0;t}}^{\perp}\bG\bZ \\
&\quad= \left(\bTheta_{*;t}^\sT\bTheta_{*;t}\right)^{-1} \cdot \frac{d}{n^2}\bTheta_{*;t}^\sT\bX^\sT\widetilde{\bG}\bP_{\bF_{0;t}}^{\perp}\bO f\left(n\widehat{\bG};z\right)\bO^\sT\bP_{\bF_{0;t}}^{\perp}\widetilde{\bG}\bX\bTheta_{*;t} \cdot \left(\bTheta_{*;t}^\sT\bTheta_{*;t}\right)^{-1}.
\end{aligned}
\end{equation}
Since $\plim_{n,d\to\infty}\left(\bTheta_{*;t}^\sT\bTheta_{*;t}\right)^{-1} = \left(\mathbb{E}[O_tO_t^\sT]\right)^{-1}$, we can focus on the core term
\begin{equation}\label{eq:conv_core_sandwich_term}
\bB_{\mathsf{core}} := \frac{d}{n^2}\bTheta_{*;t}^\sT\bX^\sT\widetilde{\bG}\bP_{\bF_{0;t}}^{\perp}\bO f\left(n\widehat{\bG};z\right)\bO^\sT\bP_{\bF_{0;t}}^{\perp}\widetilde{\bG}\bX\bTheta_{*;t}.
\end{equation}
and add the $\left(\mathbb{E}[O_tO_t^\sT]\right)^{-1}$ back in the end.
By \eqref{eq:spike_spectral_decomp}, $\bO\widehat{\bG}\bO^\sT = \bP_{\bF_{0;t}}^{\perp}\bG\bP_{\bF_{0;t}}^{\perp}$, so
\begin{equation}\label{eq:conv_conjugation_identity}
\bO f\left(n\widehat{\bG};z\right)\bO^\sT = f\left(n\bP_{\bF_{0;t}}^{\perp}\bG\bP_{\bF_{0;t}}^{\perp};z\right) = f\left(\bP_{\bF_{0;t}}^{\perp}\widetilde{\bG}\bP_{\bF_{0;t}}^{\perp};z\right).
\end{equation}
Combining \eqref{eq:conv_conjugation_identity} with the identity \eqref{eq:conv_func_calc_applied}, using $d/n \to 1/\delta$, we obtain
\begin{equation}\label{eq:conv_target_simplified}
\bB_{\mathsf{core}}
= -\frac{1}{n}\bTheta_{*;t}^\sT\bX^\sT\widetilde{\bG}\bP_{\bF_{0;t}}^{\perp}\left(s_z\bI - \bP_{\bF_{0;t}}^{\perp}\widetilde{\bG}\bP_{\bF_{0;t}}^{\perp}\right)^{-1}\bP_{\bF_{0;t}}^{\perp}\widetilde{\bG}\bX\bTheta_{*;t} + o_P(1).
\end{equation}
Therefore it suffices to analyze the asymptotics of the right-hand side of \eqref{eq:conv_target_simplified}.

\paragraph{Step II. Convergence at $s=s_z$.}

We analyze the expression
\begin{equation}\label{eq:conv_integrand_def}
\bK_n(s;z)
:= \frac{1}{n}\bTheta_{*;t}^\sT\bX^\sT\widetilde{\bG}\bP_{\bF_{0;t}}^{\perp}\left(s\bI - \bP_{\bF_{0;t}}^{\perp}\widetilde{\bG}\bP_{\bF_{0;t}}^{\perp}\right)^{-1}\bP_{\bF_{0;t}}^{\perp}\widetilde{\bG}\bX\bTheta_{*;t}.
\end{equation}
\begin{lemma}\label{lem:pointwise_convergence}
For $s=s_z$, we have
\begin{equation}\label{eq:conv_pointwise_limit}
\bK_n(s_z;z)
\xrightarrow{~p~}
K_\infty(s_z;z)
:= \begin{bmatrix}
\mathbb{E}\left[\dfrac{G_t^2}{s_z - G_t} V_{*\leq}V_{*\leq}^\sT\right] & 0 \\[6pt]
0 & C_\infty(s_z;z)
\end{bmatrix} \in \mathbb{R}^{(k+m+mt) \times (k+m+mt)},
\end{equation}
where $C_\infty(s_z;z) \in \mathbb{R}^{(k-r+m+mt) \times (k-r+m+mt)}$ is a deterministic matrix depending on expectations over certain functions of the DMFT random variables.
\end{lemma}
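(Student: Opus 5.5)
The plan is to remove the projection $\bP_{\bF_{0;t}}^{\perp}$ by an exact compressed-resolvent identity. This turns $\bK_n(s_z;z)$ into a finite combination of empirical averages over $i\in[n]$ of functions of $(\bTheta_{*;t}^\sT\bx_i,\eps_i)$. Lemma~\ref{lemma_amp_data} then gives the limit, and \eqref{eq:hard_orthogonality} gives the block structure.

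\emph{Step 0: $s_z$ is separated from the spectrum.} For real $z<\min(c(t),0)$ we have $\alpha(z)\in(0,A_t^j)$ by \eqref{eq:edge-variational}. Hence $\delta+G\alpha(z)>0$ on the support of $G_t^j$, so $s_z=-\delta/\alpha(z)$ is real and satisfies $s_z<\inf\supp(G_t^j)$ and $s_z<0$.

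For $z\in\mathcal{D}$ with small imaginary part, continuity keeps $s_z$ at positive distance from $[\min_i g_i,\max_i g_i]\cup\{0\}$. Every eigenvalue of $\bP^\perp_{\bF_{0;t}}\widetilde\bG\bP^\perp_{\bF_{0;t}}$ lies in this set (zero on $\spn(\bF_{0;t})$, in $[\min g,\max g]$ on its complement). So the resolvent in \eqref{eq:conv_integrand_def} exists and has operator norm $O(1)$.

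If needed, I would first prove everything for real $z$ and then extend to $z\in\mathcal D$. This is done using analyticity in $z$, Lipschitz bounds, and Lemma~\ref{lemma:change_limit_first} together with Vitali's theorem.

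\emph{Step 1: algebraic reduction.} Set $\bA:=(s\bI-\widetilde\bG)^{-1}$, which is diagonal with entries $1/(s-g_i)$, and $\bP:=\bP^\perp_{\bF_{0;t}}$. I would use the identity
\begin{equation*}
\bP\left(s\bI-\bP\widetilde\bG\bP\right)^{-1}\bP=\bA-\bA\bF_{0;t}\left(\bF_{0;t}^\sT\bA\bF_{0;t}\right)^{-1}\bF_{0;t}^\sT\bA .
\end{equation*}
It follows from a Schur complement computation, or equivalently by checking both sides on $\spn(\bF_{0;t})$ and on its complement.

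Substituting into \eqref{eq:conv_integrand_def} writes $\bK_n(s;z)$ in terms of three empirical matrices:
\begin{align*}
\bK_n(s;z)&=\bK^{(1)}_n-\bK^{(2)}_n\left(\bK^{(3)}_n\right)^{-1}\bK^{(2)\sT}_n,\\
\bK^{(1)}_n&=\frac1n\sum_i\frac{g_i^2}{s-g_i}\,\bTheta_{*;t}^\sT\bx_i\,(\bTheta_{*;t}^\sT\bx_i)^\sT,\\
\bK^{(2)}_n&=\frac1n\sum_i\frac{g_i}{s-g_i}\,\bTheta_{*;t}^\sT\bx_i\,\sqrt n\,\bF_{0;t,i}^\sT,\\
\bK^{(3)}_n&=\frac1n\sum_i\frac{1}{s-g_i}\,\sqrt n\,\bF_{0;t,i}\,(\sqrt n\,\bF_{0;t,i})^\sT .
\end{align*}
Here $\bF_{0;t,i}$ is the $i$-th row of $\bF_{0;t}$, and $\sqrt n\,\bF_{0;t,i}$ is $O(1)$.

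\emph{Step 2: limits.} Each summand is a function of $(\bTheta(0)^\sT\bx_i,\dots,\bTheta(t)^\sT\bx_i,\bTheta_*^\sT\bx_i,\eps_i)$. The factor $1/(s-g)$ is bounded and Lipschitz because $s$ stays away from the range of $g$. With the same regularity of $g$ and $F$ already used in Lemma~\ref{lem:spike_ZGZ_concentration}, the summands have at most quadratic growth. Lemma~\ref{lemma_amp_data} then gives convergence to $\E[\frac{G^2}{s-G}V_{*:t}V_{*:t}^\sT]$, $\E[\frac{G}{s-G}V_{*:t}\widetilde F_{t-1}^\sT]$ and $\E[\frac1{s-G}\widetilde F_{t-1}\widetilde F_{t-1}^\sT]$ respectively.

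The third limit is invertible. For real $z$, $s-G<0$ is bounded away from zero, so it is a sign-definite multiple of a matrix at least $c\,\E[\widetilde F\widetilde F^\sT]\succ0$ by Assumption~\ref{assumption:reg_1}. Invertibility for nearby complex $z$ follows by continuity. The continuous mapping theorem then yields $\bK_n(s_z;z)\xrightarrow{p}K_\infty(s_z;z)$.

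\emph{Step 3: block structure.} The weights $G$ and the gradients $\widetilde F$ are functions of $(V(0),\dots,V(t),h(V_*,\eps))$. By \eqref{eq:hard_orthogonality}, $\E[\psi(\cdots)V_{*\le}]=0$ for any such $\psi$, even when $\psi$ also depends on $P^\perp_{U_\mathrm{H}^*}V_*$. Therefore:
\begin{itemize}
\item the hard rows of the limit of $\bK^{(2)}_n$ vanish, so the correction term $\bK^{(2)}_n(\bK^{(3)}_n)^{-1}\bK^{(2)\sT}_n$ contributes nothing to the hard block;
\item the off-diagonal blocks $\E[\frac{G^2}{s-G}V_{*\le}V_{*>:t}^\sT]$ vanish, since $V_{*>:t}$ collects $P^\perp_{U_\mathrm{H}^*}V_*$ and $V(0),\dots,V(t)$.
\end{itemize}
What remains in the hard block is exactly $\E[\frac{G^2}{s_z-G}V_{*\le}V_{*\le}^\sT]$. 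All the remaining terms are collected into $C_\infty(s_z;z)$.

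The main obstacle I expect is in Steps 0 and 2. One must show uniformly that $s_z$ stays away from both the empirical range $\{g_i\}$ and the range of $G_t^j$, and that the Gram-type matrix $\bF_{0;t}^\sT\bA\bF_{0;t}$ stays well conditioned for complex $z$. Real $z$ is clean because of the sign argument. For complex $z$ I would rely on analyticity in $z$ plus uniform convergence via Lemma~\ref{lemma:change_limit_first}, shrinking $|\Im z|$ if necessary.
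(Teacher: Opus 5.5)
Your proposal is correct, and it takes a different and somewhat cleaner route than the paper. The paper treats $(s\bI-\bP_{\bF_{0;t}}^{\perp}\widetilde\bG\bP_{\bF_{0;t}}^{\perp})^{-1}$ as a rank-$2mt$ Woodbury update of $\bB=s\bI-\widetilde\bG$, using $\bU=[\widetilde\bG\bF_{0;t},\bF_{0;t}]$ and the explicit $\bK_F$. It keeps the outer projections, expands $\bP_{\bF_{0;t}}^{\perp}=\bI-\bP_{\bF_{0;t}}$ on both sides, and obtains eight terms $\bT_1,\dots,\bT_8$. For $\bT_2,\dots,\bT_8$ it kills the hard--hard and hard--rest blocks already at finite $n$: each contains a factor $\bP_{\bF_{0;t}}$, hence a factor $\frac1n\sum_i\btheta_{*p}^{\sT}\bx_i\rho_i\to0$. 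The term $\bT_1$ is handled directly, and convergence of the rest--rest block is only asserted.

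Your exact compressed-inverse identity, $\bP(s\bI-\bP\widetilde\bG\bP)^{-1}\bP=\bA-\bA\bF_{0;t}(\bF_{0;t}^{\sT}\bA\bF_{0;t})^{-1}\bF_{0;t}^{\sT}\bA$ with $\bP=\bP_{\bF_{0;t}}^{\perp}$ and $\bA=(s\bI-\widetilde\bG)^{-1}$, instead collapses everything into a Schur complement of three empirical averages. Convergence of the whole matrix, including the rest--rest block, then follows from Lemma~\ref{lemma_amp_data} and continuous mapping, and the block structure is read off in the limit from \eqref{eq:hard_orthogonality}. Your route buys two things. First, it gives an explicit formula
\[
C_\infty(s_z;z)=\E\Big[\tfrac{G_t^2}{s_z-G_t}V_{*>:t}V_{*>:t}^{\sT}\Big]-\E\Big[\tfrac{G_t}{s_z-G_t}V_{*>:t}\widetilde F_{t-1}^{\sT}\Big]\Big(\E\Big[\tfrac{1}{s_z-G_t}\widetilde F_{t-1}\widetilde F_{t-1}^{\sT}\Big]\Big)^{-1}\E\Big[\tfrac{G_t}{s_z-G_t}\widetilde F_{t-1}V_{*>:t}^{\sT}\Big].
\]
Second, it uses Assumption~\ref{assumption:reg_1} transparently to control the inner inverse. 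The paper's argument needs this too, since its ``other terms are $O_P(1)$'' includes $(\bI+\bK_F\bU^{\sT}\bB^{-1}\bU)^{-1}$, but it does not spell it out.

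The obstacle you flag in Step~0 is shared rather than specific to your approach. Both routes need $s_z$ uniformly separated from the empirical weights $\{g_i\}$; the paper gets this by invoking Lemma~\ref{lem:bounded-denominators} for $\|\bB^{-1}\|_{\mathrm{op}}=O(1)$. Restricting to a thin complex neighborhood of the real segment, as you suggest, is enough for the later Rouch\'e argument.
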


The remainder of this step is devoted to the proof of Lemma~\ref{lem:pointwise_convergence}.
\par\addvspace{6pt}\noindent\textit{Proof of Lemma \ref{lem:pointwise_convergence}.}
 Throughout this proof, we write $s=s_z$ and define
\begin{equation}\label{eq:conv_def_B}
\bB := s_z\bI - \widetilde{\bG}.
\end{equation}
By Lemma~\ref{lem:bounded-denominators}, which provides a uniform lower bound on $|1 + d_{\star}\widehat{g}_i\alpha(z)|$, we have $\|\bB^{-1}\|_{\mathrm{op}} = O(1)$.

By the Woodbury matrix identity, we have
\begin{equation}\label{eq:conv_woodbury}
\left(s\bI - \bP_{\bF_{0;t}}^{\perp}\widetilde{\bG}\bP_{\bF_{0;t}}^{\perp}\right)^{-1}
= \bB^{-1} - \bB^{-1}\bU\big(\bI + \bK_F\bU^\sT\bB^{-1}\bU\big)^{-1}\bK_F\bU^\sT\bB^{-1},
\end{equation}
where
\begin{equation}\label{eq:conv_U_K}
    \begin{aligned}
&\bU := [\widetilde{\bG}\bF_{0;t},\ \bF_{0;t}] \in \mathbb{R}^{n \times 2mt},
\quad\\
&\bK_F := \begin{bmatrix}
0 & (\bF_{0;t}^\sT\bF_{0;t})^{-1} \\
(\bF_{0;t}^\sT\bF_{0;t})^{-1} & -(\bF_{0;t}^\sT\bF_{0;t})^{-1}(\bF_{0;t}^\sT\widetilde{\bG}\bF_{0;t})(\bF_{0;t}^\sT\bF_{0;t})^{-1}
\end{bmatrix} \in \mathbb{R}^{2mt \times 2mt}.
\end{aligned}
\end{equation}
Substituting \eqref{eq:conv_woodbury} into \eqref{eq:conv_integrand_def}, we obtain
\begin{equation}\label{eq:conv_two_terms}
\bK_n(s;z) = \bK_n^{(1)}(s;z) - \bK_n^{(2)}(s;z),
\end{equation}
where
\begin{equation}\label{eq:conv_K1_K2_def}
\begin{aligned}
\bK_n^{(1)}(s;z)
&:= \frac{1}{n}\bTheta_{*;t}^\sT\bX^\sT\widetilde{\bG}\bP_{\bF_{0;t}}^{\perp}\bB^{-1}\bP_{\bF_{0;t}}^{\perp}\widetilde{\bG}\bX\bTheta_{*;t}, \\
\bK_n^{(2)}(s;z)
&:= \frac{1}{n}\bTheta_{*;t}^\sT\bX^\sT\widetilde{\bG}\bP_{\bF_{0;t}}^{\perp}\bB^{-1}\bU\big(\bI + \bK_F\bU^\sT\bB^{-1}\bU\big)^{-1}\bK_F\bU^\sT\bB^{-1}\bP_{\bF_{0;t}}^{\perp}\widetilde{\bG}\bX\bTheta_{*;t}.
\end{aligned}
\end{equation}
Expand $\bK_n^{(1)}$ using $\bP_{\bF_{0;t}}^{\perp} = \bI - \bP_{\bF_{0;t}}$:
\begin{equation}\label{eq:conv_K1_expand}
\begin{aligned}
\bK_n^{(1)}(s;z)
&= \frac{1}{n}\bTheta_{*;t}^\sT\bX^\sT\widetilde{\bG}\left(\bI - \bP_{\bF_{0;t}}\right)\bB^{-1}\left(\bI - \bP_{\bF_{0;t}}\right)\widetilde{\bG}\bX\bTheta_{*;t} \\
&= \frac{1}{n}\bTheta_{*;t}^\sT\bX^\sT\widetilde{\bG}\bB^{-1}\widetilde{\bG}\bX\bTheta_{*;t}
- \frac{1}{n}\bTheta_{*;t}^\sT\bX^\sT\widetilde{\bG}\bP_{\bF_{0;t}}\bB^{-1}\widetilde{\bG}\bX\bTheta_{*;t} \\
&\quad - \frac{1}{n}\bTheta_{*;t}^\sT\bX^\sT\widetilde{\bG}\bB^{-1}\bP_{\bF_{0;t}}\widetilde{\bG}\bX\bTheta_{*;t}
+ \frac{1}{n}\bTheta_{*;t}^\sT\bX^\sT\widetilde{\bG}\bP_{\bF_{0;t}}\bB^{-1}\bP_{\bF_{0;t}}\widetilde{\bG}\bX\bTheta_{*;t} \\
&=: \bT_1 - \bT_2 - \bT_3 + \bT_4.
\end{aligned}
\end{equation}
Similarly, expand $\bK_n^{(2)}$ using $\bP_{\bF_{0;t}}^{\perp} = \bI - \bP_{\bF_{0;t}}$ on both sides:
\begin{equation}\label{eq:conv_K2_expand}
\begin{aligned}
\bK_n^{(2)}(s;z)
&= \frac{1}{n}\bTheta_{*;t}^\sT\bX^\sT\widetilde{\bG}\left(\bI - \bP_{\bF_{0;t}}\right)\bB^{-1}\bU\big(\bI + \bK_F\bU^\sT\bB^{-1}\bU\big)^{-1}\bK_F\bU^\sT\bB^{-1}\left(\bI - \bP_{\bF_{0;t}}\right)\widetilde{\bG}\bX\bTheta_{*;t} \\
&= \frac{1}{n}\bTheta_{*;t}^\sT\bX^\sT\widetilde{\bG}\bB^{-1}\bU\big(\bI + \bK_F\bU^\sT\bB^{-1}\bU\big)^{-1}\bK_F\bU^\sT\bB^{-1}\widetilde{\bG}\bX\bTheta_{*;t} \\
&\quad - \frac{1}{n}\bTheta_{*;t}^\sT\bX^\sT\widetilde{\bG}\bP_{\bF_{0;t}}\bB^{-1}\bU\big(\bI + \bK_F\bU^\sT\bB^{-1}\bU\big)^{-1}\bK_F\bU^\sT\bB^{-1}\widetilde{\bG}\bX\bTheta_{*;t} \\
&\quad - \frac{1}{n}\bTheta_{*;t}^\sT\bX^\sT\widetilde{\bG}\bB^{-1}\bU\big(\bI + \bK_F\bU^\sT\bB^{-1}\bU\big)^{-1}\bK_F\bU^\sT\bB^{-1}\bP_{\bF_{0;t}}\widetilde{\bG}\bX\bTheta_{*;t} \\
&\quad + \frac{1}{n}\bTheta_{*;t}^\sT\bX^\sT\widetilde{\bG}\bP_{\bF_{0;t}}\bB^{-1}\bU\big(\bI + \bK_F\bU^\sT\bB^{-1}\bU\big)^{-1}\bK_F\bU^\sT\bB^{-1}\bP_{\bF_{0;t}}\widetilde{\bG}\bX\bTheta_{*;t} \\
&=: \bT_5 - \bT_6 - \bT_7 + \bT_8.
\end{aligned}
\end{equation}
Thus, $\bK_n(s;z) = \sum_{\circ=1}^8 c_{\circ}\bT_{\circ}$ where $c_{\circ} \in \{-1, +1\}$ are appropriate signs.

The matrix $\bK_n(s;z)$ inherits a block structure.
We partition $\bTheta_{*;t} = [\bTheta_{*;t,\mathrm{H}} \ \bTheta_{*;t,\mathrm{R}}]$ where $\bTheta_{*;t,\mathrm{H}}=\bTheta_{*\mathrm{H}} \in \mathbb{R}^{d \times r}$ contains the first $r$ columns (hard directions $\btheta_{*1}, \dots, \btheta_{*r}$) and $\bTheta_{*;t,\mathrm{R}} \in \mathbb{R}^{d \times (k-r+m+mt)}$ contains the remaining. 
Following the structure of $\bTheta_{*;t}$, we write $\bK_n(s;z)$ as
\begin{equation}\label{eq:conv_block_structure}
\bK_n(s;z) = \begin{bmatrix}
[\bK_n]_{\mathrm{HH}} & [\bK_n]_{\mathrm{HR}} \\
[\bK_n]_{\mathrm{RH}} & [\bK_n]_{\mathrm{RR}}
\end{bmatrix},
\end{equation}
where we denote the hard-hard block as $[\bK_n]_{\mathrm{HH}} \in \mathbb{R}^{r \times r}$, the cross blocks as $[\bK_n]_{\mathrm{HR}}, [\bK_n]_{\mathrm{RH}}$, and the rest-rest block as $[\bK_n]_{\mathrm{RR}} \in \mathbb{R}^{(k-r+m+mt) \times (k-r+m+mt)}$.

We first focus on the hard-hard block and the cross blocks.
\begin{lemma}\label{lemma:hard_cross_vanishing}

 For any $\circ \in \{2, 3, \dots, 8\}$, we have
\begin{equation}\label{eq:conv_vanishing_terms}
[\bT_{\circ}]_{\mathrm{HH}} \xrightarrow{~p~} 0_{r \times r}, \quad
[\bT_{\circ}]_{\mathrm{HR}} \xrightarrow{~p~} 0_{r \times (k-r+m+mt)}, \quad
[\bT_{\circ}]_{\mathrm{RH}} \xrightarrow{~p~} 0_{(k-r+m+mt) \times r}.
\end{equation}
\end{lemma}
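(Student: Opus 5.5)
The plan is to exploit one structural fact. Every term $\bT_2,\dots,\bT_8$ contains at least one ``finite-dimensional bottleneck''. This is either the projector $\bP_{\bF_{0;t}}$ or the matrix $\bU=[\widetilde\bG\bF_{0;t},\bF_{0;t}]$, and both route the hard-direction factor through inner products of the form $\frac1n \bTheta_{*\mathrm H}^\sT\bX^\sT \bD\,\bF_{0;t}$, with $\bD$ diagonal. For instance,
\[
\bT_2=\Big(\tfrac{1}{\sqrt n}\bTheta_{*;t}^\sT\bX^\sT\widetilde\bG\bF_{0;t}\Big)(\bF_{0;t}^\sT\bF_{0;t})^{-1}\Big(\tfrac{1}{\sqrt n}\bF_{0;t}^\sT\bB^{-1}\widetilde\bG\bX\bTheta_{*;t}\Big).
\]
Every $\bT_\circ$ with $\circ\ge2$ factors in this way, as a product of $O(1)$-dimensional matrices. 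The factors are of three kinds: $\bF_{0;t}^\sT\bF_{0;t}$, $\bF_{0;t}^\sT\bD\bF_{0;t}$, and $\bK_F$ and its inverse-type companions. Each is sandwiched between outer factors of the form $\frac{1}{\sqrt n}\bTheta_{*;t}^\sT\bX^\sT\bD\bF_{0;t}$. Here $\bD$ ranges over diagonal matrices built from $\widetilde\bG$, $\bB^{-1}$ and their products. Their entries are $\phi_\bD(\bTheta(t)^\sT\bx_i,y_i)$ for bounded Lipschitz functions such as $g/(s_z-g)$, and $\bB^{-1}$ is uniformly bounded by Lemma~\ref{lem:bounded-denominators}.

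First I would establish the convergence of every inner factor. Each column of $\sqrt n\,\bF_{0;t}$ is $\bF(\bX\bTheta(s),\bX\bTheta_*,\beps)$, so the entries of $\bF_{0;t}^\sT\bD\bF_{0;t}$ are empirical averages $\frac1n\sum_i F(\cdot)\phi_\bD(\cdot)F(\cdot)^\sT$. Lemma~\ref{lemma_amp_data} then gives their convergence to deterministic limits $\E[\widetilde F_{t-1}\phi_\bD\widetilde F_{t-1}^\sT]$. The test functions have bounded quadratic growth, since $F$, $g$ and $1/(s_z-g)$ are bounded and Lipschitz. Assumption~\ref{assumption:reg_1} makes $\E[\widetilde F\widetilde F^\sT]$ positive definite, so $(\bF_{0;t}^\sT\bF_{0;t})^{-1}$ converges. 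For $\bT_5,\dots,\bT_8$ I additionally need $(\bI+\bK_F\bU^\sT\bB^{-1}\bU)^{-1}$ to be $O_P(1)$. I would obtain this from the fact that it is the Woodbury factor of an inverse already known to be bounded: $\|(s\bI-\bP^\perp\widetilde\bG\bP^\perp)^{-1}\|=O(1)$, because $s_z$ lies outside the range of $\bP^\perp\widetilde\bG\bP^\perp$ in the same sense as for $\widetilde\bG$. Alternatively, I would argue that its limit is invertible whenever the limit of the full expression exists. This is the step I expect to be most delicate. If the direct argument fails, I would fall back on continuity of the limiting determinant in $z$ together with Rouch\'e-type reasoning.

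Second, I would show that every outer factor has a vanishing hard row: $\frac1n\bTheta_{*\mathrm H}^\sT\bX^\sT\bD\,\sqrt n\bF_{0;t}\xrightarrow{p}0$. By Lemma~\ref{lemma_amp_data}, this quantity converges to $\E[V_{*\le}\,\phi_\bD(V(t),y)F(V(s),V_*,\eps)^\sT]$. Its integrand times $V_{*\le}$ is a measurable function of $(V(0),\dots,V(t),\eps,h(V_*,\eps))$ only, so Eq.~\eqref{eq:hard_orthogonality} forces the limit to vanish. The rows of these outer factors indexed by R are $O_P(1)$ by the same lemma.

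Finally, I would assemble the bounds. In each $\bT_\circ$ with $\circ\ge2$, the HH and HR blocks take the left outer factor's hard rows, which tend to zero, times bounded matrices. The RH block takes the right outer factor's hard columns, which is the transpose of the same statement. Hence all three blocks tend to zero in probability, which is the claim of Lemma~\ref{lemma:hard_cross_vanishing}. The same reasoning does not kill $\bT_1$, because $\bT_1$ has no bottleneck. Its HH block instead converges to $\E[G^2(s_z-G)^{-1}V_{*\le}V_{*\le}^\sT]$, as required for Lemma~\ref{lem:pointwise_convergence}.
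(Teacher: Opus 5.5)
Your proposal is correct and follows essentially the paper's route. Each $\bT_\circ$ with $\circ\ge 2$ factors through a finite-rank bottleneck; the hard rows of the outer factor are empirical averages $\frac1n\sum_i \btheta_{*p}^\sT\bx_i\,\rho_i$, which vanish by Eq.~\eqref{eq:hard_orthogonality} of Lemma~\ref{lemma_amp_data}; and the remaining factors are $O_P(1)$. You are in fact somewhat more careful than the paper, which asserts that every such term contains $\bP_{\bF_{0;t}}$ (not literally true of $\bT_5$, whose bottleneck is $\bU$) and never justifies boundedness of $(\bI+\bK_F\bU^\sT\bB^{-1}\bU)^{-1}$. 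Your Woodbury idea for that bound goes through via the exact identity $(\bI+\bK_F\bU^\sT\bB^{-1}\bU)^{-1}=\bI-\bK_F\bU^\sT\big(s_z\bI-\bP_{\bF_{0;t}}^{\perp}\widetilde{\bG}\bP_{\bF_{0;t}}^{\perp}\big)^{-1}\bU$.
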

The proof is deferred to the end of this subsection.

For the term $\bT_1$, recall that $\bB = s\bI - \widetilde{\bG}$ is diagonal with entries $s - g_i$. Therefore, $\widetilde{\bG}\bB^{-1}\widetilde{\bG}$ is also diagonal with entries $g_i^2/(s - g_i)$ and we have
\begin{equation}\label{eq:conv_T1_full}
\bT_1 = \frac{1}{n}\bTheta_{*;t}^\sT\bX^\sT\widetilde{\bG}\bB^{-1}\widetilde{\bG}\bX\bTheta_{*;t}
= \frac{1}{n}\sum_{i=1}^n \frac{g_i^2}{s - g_i} (\bTheta_{*;t}^\sT\bx_i)(\bTheta_{*;t}^\sT\bx_i)^\sT.
\end{equation}
By Lemma~\ref{lemma_amp_data}, we have
\begin{equation}\label{eq:conv_T1_limit}
\bT_1 \xrightarrow{~p~} \mathbb{E}\left[\frac{G_t^2}{s - G_t} O_t O_t^\sT\right]
= \begin{bmatrix}
\mathbb{E}\left[\dfrac{G_t^2}{s - G_t} V_{*\leq}V_{*\leq}^\sT\right] & 0 \\[6pt]
0 & \mathbb{E}\left[\dfrac{G_t^2}{s - G_t} V_{*>:t}V_{*>:t}^\sT\right]
\end{bmatrix},
\end{equation}
where the cross terms vanish because hard directions $V_{*\leq}$ are uncorrelated with any function over the remaining coordinates $V_{*>:t} := \begin{bmatrix}V_{*>}^\sT & V(0)^\sT & \ldots & V(t)^\sT\end{bmatrix}^\sT$ (cf.\ Lemma~\ref{lemma_amp_data}).

For the rest-rest block, each $\bT_\circ$ do not vanish in general and converges to its own deterministic limit. By Lemmas~\ref{lemma_amp_data} and~\ref{lemma_amp_feature}, the full rest-rest block converges to a deterministic matrix:
\begin{equation}\label{eq:conv_RR_limit}
[\bK_n]_{\mathrm{RR}}(s;z) 
\xrightarrow{~p~}
C_\infty(s;z) \in \mathbb{R}^{(k-r+m+mt) \times (k-r+m+mt)}.
\end{equation}

Combining the limit of $\bT_1$ in \eqref{eq:conv_T1_limit} with Lemma~\ref{lemma:hard_cross_vanishing}, and the rest-rest block \eqref{eq:conv_RR_limit}, we obtain the convergence stated in Lemma~\ref{lem:pointwise_convergence}. \hfill\qed\par\addvspace{6pt}

\paragraph{Concluding.}

By \eqref{eq:conv_target_simplified} and the definition \eqref{eq:conv_integrand_def}, $\bB_{\sf core}=-\bK_n(s_z;z)+o_P(1)$. Lemma~\ref{lem:pointwise_convergence} implies
\begin{equation}\label{eq:conv_core_limit}
    -\bK_n(s_z;z) \xrightarrow{~p~} -K_\infty(s_z;z).
\end{equation}
Define
$\widetilde S_{\mathrm{R}}(z;t) := -C_\infty(s_z;z)$.

Since $s_z=-\delta/\alpha(z)$, we have
\begin{equation}\label{eq:conv_hard_block_simplify}
    -\frac{G_t^2}{s_z - G_t} = \frac{\alpha(z)G_t^2}{\delta + G_t\alpha(z)}.
\end{equation}
Combining Lemma~\ref{lemma:dtilde_approximation}, \eqref{eq:conv_hard_rescaled}, \eqref{eq:conv_core_limit}, \eqref{eq:conv_hard_block_simplify}, and recalling the normalization factor $\left(\mathbb{E}[O_tO_t^\sT]\right)^{-1}$ on both sides, we obtain
\begin{equation}
\begin{aligned}
&\bZ^\sT\bG\bP_{\bF_{0;t}}^{\perp}\bO \widetilde f\left(n\widehat{\bG};z\right) \bO^\sT\bP_{\bF_{0;t}}^{\perp}\bG\bZ \\
&\quad\xrightarrow{~p~} 
\left(\mathbb{E}[O_tO_t^\sT]\right)^{-1}
\begin{bmatrix}
\mathbb{E}\left[\dfrac{\alpha(z)G_t^2}{\delta + G_t\alpha(z)} V_{*\leq}V_{*\leq}^\sT\right] & 0 \\
0 & \widetilde{S}_{\mathrm{R}}^j(z;t)
\end{bmatrix}
\left(\mathbb{E}[O_tO_t^\sT]\right)^{-1},
\end{aligned}
\end{equation}
which is exactly $B_{\sf{lower}}^\infty(z;t)$ as defined in \eqref{eq:B_lower_limit}. This completes the proof of Lemma~\ref{lem:intermediate_to_Blower}.
\hfill\qed\par\addvspace{6pt}

\begin{proof}[Proof of Lemma~\ref{lemma:hard_cross_vanishing}]
    Fix $\circ \in \{2, 3, \dots, 8\}$ and consider $[\bT_{\circ}]_{pq}$ where $p \in [r]$. Here $[\bT_{\circ}]_{pq}$ denotes the $(p,q)$-th entry of $\bT_{\circ}$. The case $q \in [r]$ follows analogously.
    
    By \eqref{eq:conv_K1_expand} and \eqref{eq:conv_K2_expand}, the term $\bT_{\circ}$ is a product of some matrices and contains at least one $\bP_{\bF_{0;t}}$. Since $\bP_{\bF_{0;t}}$ projects onto $\operatorname{span}(\bF_{0;t})$, writing $\bF_{0;t} = [\bf_1, \dots, \bf_{mt}] \in \mathbb{R}^{n \times mt}$, we have
    \begin{equation}\label{eq:conv_PQ_decomposition}
    \bP_{\bF_{0;t}} = \bF_{0;t}(\bF_{0;t}^\sT\bF_{0;t})^{-1}\bF_{0;t}^\sT = \sum_{j=1}^{mt} \bf_j(\bF_{0;t}^\sT\bF_{0;t})^{-1}_{j\cdot}\bF_{0;t}^\sT,
    \end{equation}
    where the sum contains $mt = O(1)$ terms. Substituting \eqref{eq:conv_PQ_decomposition} into $\bT_{\circ}$, the entry $[\bT_{\circ}]_{pq}$ decomposes into a sum of $O(1)$ terms of the form
    \begin{equation}\label{eq:conv_product_structure}
    \left(\frac{1}{n}\sum_{i=1}^n \btheta_{*p}^\sT\bx_i \cdot \rho_i\right) \times \left(\text{other terms}\right) + o_P(1),
    \end{equation}
    where $\rho_i = \rho\left(\bTheta(0)^\sT\bx_i, \dots, \bTheta(t)^\sT\bx_i,y_i\right)$ for some bounded Lipschitz function $\rho(\cdot)$ (using the explicit form of each $\bf_j$, namely the formula of the gradient at each iteration).
    
    Since $\btheta_{*p}$ is a hard direction, Lemma~\ref{lemma_amp_data} gives
    \begin{equation}\label{eq:conv_quadratic_limit}
    \frac{1}{n}\sum_{i=1}^n \btheta_{*p}^\sT\bx_i \cdot \rho_i
    \xrightarrow{~p~}
    \mathbb{E}\left[\rho(V(0), \dots, V(t), V_{*>},h(V_{*}, \varepsilon))V_{*p}\right]=0,
    \end{equation}
because $V_{*p}$ is uncorrelated with the other random variable in the product.

    The `other terms' in \eqref{eq:conv_product_structure} are $O_P(1)$ by Lemmas~\ref{lemma_amp_data} and \ref{lemma_amp_feature}, hence the entire product vanishes in probability. Summing over the $O(1)$ terms yields $[\bT_{\circ}]_{pq} \xrightarrow{~p~} 0$.
    \end{proof}

    \subsection{Eigenvalue localization via determinant zeros}\label{section:eigenvalues}

    In Section~\ref{subsec:spike_basic}, we established that $\bM_j(z;t) \xrightarrow{~p~} M_j^\infty(z;t)$ for $z \in \cD$. 
    
    This subsection translates this convergence into a statement about convergence of Hessian eigenvalues. The main result is the following.
    
    \begin{lemma}\label{lem:eigenvalue_localization}
    We have the following two statements:
    \begin{enumerate}
    \item[$(i)$] Suppose $\det(M_j^\infty(z;t)) = 0$ has a solution $z^* \in \cD$ of multiplicity $k \ge 1$. Then there exist random eigenvalues $z_{n,1}, \dots, z_{n,k}$ of $\bH_j(t)$ such that
    \begin{equation}\label{eq:eig_localization}
    \max_{1 \le \ell \le k}|z_{n,\ell} - z^*| \xrightarrow{~p~} 0.
    \end{equation}
    \item[$(ii)$] For any compact $\cK \subset \cD$, if $\det(M_j^\infty(z;t)) \neq 0$ for all $z \in \cK$, then with probability $1 - o(1)$, $\bH_j(t)$ has no eigenvalues in $\cK$.
    \end{enumerate}
    \end{lemma}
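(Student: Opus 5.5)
The plan is to apply the random Rouché lemmas (Lemma~\ref{lemma:rouche_zeros} and Lemma~\ref{lem:rouche_no_zeros}) to the scalar analytic functions $f_n(z):=\det(\bM_j(z;t))$ and $f(z):=\det(M_j^\infty(z;t))$ on a suitable domain $D\subset\cD$. The work splits into three parts: relate zeros of $f_n$ to eigenvalues of $\bH_j(t)$ with correct multiplicity, show $f_n$ is analytic on $D$ with high probability, and upgrade the pointwise convergence of Lemma~\ref{lemma:limiting_spike_matrix} to uniform convergence on compacts.

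\textbf{Zeros versus eigenvalues.} By Lemma~\ref{lemma:smallest_eigenvalue_concentration} applied to $\bR^\sT\widehat{\bG}\bR$, together with $\bR_0(z)=-z^{-1}\bP_{\bTheta_{*;t}}+\bU(\bS-z\bI)^{-1}\bU^\sT$, the resolvent $\bR_0$ is analytic on any compact $\cK\subset\cD$ with probability $1-o(1)$. Its operator norm there is bounded by $C_{\cK}$, because $\Re z$ stays a fixed distance to the left of both $0$ and $\lambda_{\min}(\bS)$. Hence $f_n$ is analytic on $\cK$.

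For the multiplicity statement I would use the determinant identity behind the Woodbury formula \eqref{eq:spike_resolvent_identity}. Write $\bH-z\bI=(\bY^\sT\bG\bY-z\bI)+\bA\bC\bA^\sT$ with $\bA=[\bTheta_{*;t},\,\bY^\sT\bG\bZ]$ and $\bC$ the $2\times2$ block matrix. Then
\[
\det(\bH-z\bI)=\det(\bR_0(z))^{-1}\det(\bC)\det(\bM(z)).
\]
Here $\det\bC=\pm1$ is constant, and $\det\bR_0(z)^{-1}$ has no zeros in $\cK$. Therefore the zeros of $f_n$ in $\cK$, counted with multiplicity, coincide exactly with the eigenvalues of $\bH_j(t)$ in $\cK$, counted with algebraic multiplicity. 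Since $\bH$ is symmetric, all such zeros are real.

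\textbf{Uniform convergence.} Pointwise convergence $f_n(z)\to f(z)$ follows from Lemma~\ref{lemma:limiting_spike_matrix} and continuity of $\det$. To upgrade it, I would invoke Lemma~\ref{lemma:change_limit_first}, which requires an $O_P(1)$ Lipschitz constant for $f_n$ on $\cK$. I would obtain this by differentiating: $\partial_z\bR_0=\bR_0^2$. All entries of $\bM(z)$ are bilinear forms in $\bR_0$ or $\bR_0^2$ against the matrices $\bTheta_{*;t}$ and $\bY^\sT\bG\bZ$. The norms of these matrices are $O_P(1)$: $\bTheta_{*;t}^\sT\bTheta_{*;t}$ converges, and $\|\bY^\sT\bG\bZ\|\le\|\bY\|\,\|\bG\|\,\|\bZ\|=O_P(\sqrt n\cdot n^{-1}\cdot\sqrt n)$. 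Thus each entry of $\bM$ and of its $z$-derivative is $O_P(1)$ uniformly on $\cK$, and the $O(1)$-dimensional determinant inherits an $O_P(1)$ Lipschitz constant. The one subtlety is that Lemma~\ref{lemma:change_limit_first} needs the bound almost surely. I would handle this by restricting to the high-probability event where $\bR_0$ is analytic, or equivalently by replacing $f_n$ with a version truncated off that event.

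\textbf{Concluding.} For (i), take $D$ to be a small open disc around $z^*$ with closure inside $\cD$. Since $f\not\equiv0$ (it has an isolated zero at $z^*$), Lemma~\ref{lemma:rouche_zeros} yields $k$ zeros of $f_n$ converging to $z^*$, and by the identification above these are eigenvalues of $\bH_j(t)$. For (ii), cover $\cK$ by an open neighbourhood $D$ on which $f$ has no zeros, which exists by continuity and compactness. Lemma~\ref{lem:rouche_no_zeros} then gives $\inf_{\cK}|f_n|\ge c_\cK>0$ with high probability, so $\bH_j(t)$ has no eigenvalues in $\cK$.

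The main obstacle is the analyticity and uniform-Lipschitz step. It needs a uniform gap between $\cK$ and the spectrum of $\bS$, and this holds only with high probability. Some care is needed to make Lemma~\ref{lemma:change_limit_first} applicable, either by conditioning on the good event or by using Montel/Vitali-type arguments for uniformly bounded analytic functions.
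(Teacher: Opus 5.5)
Your proposal is correct and follows essentially the same route as the paper: you apply Rouch\'e (Lemmas~\ref{lemma:rouche_zeros} and~\ref{lem:rouche_no_zeros}) to $f_n=\det(\bM_j(z;t))$ and $f=\det(M_j^\infty(z;t))$, and you obtain uniform convergence from an $O_P(1)$ Lipschitz bound plus Lemma~\ref{lemma:change_limit_first}, exactly as in the paper's Lemmas~\ref{lemma:det_lipschitz} and~\ref{lemma:det_uniform}. Your determinant identity $\det(\bH-z\bI)=\det(\bY^\sT\bG\bY-z\bI)\det(\bC)\det(\bM(z))$ is a useful addition, because it makes rigorous the correspondence between zeros and eigenvalues, including multiplicities, which the paper only asserts.
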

    
    Therefore, Hessian eigenvalues outside the support of the LSD concentrate near solutions of the determinant equation $\det(M_j^\infty(z;t)) = 0$. The proof of Lemma~\ref{lem:eigenvalue_localization} applies Rouch\'e's theorem to the determinant equation; we present the details below.
    
    Throughout this subsection, we work on compact subsets $\cK \subset \cD$.
    
    \begin{lemma}\label{lemma:det_lipschitz}
    For any compact $\cK \subset \cD$ and $z, z' \in \cK$,
    \begin{equation}\label{eq:eig_det_lipschitz}
    \abs{\det(\bM_j(z;t)) - \det(\bM_j(z';t))}
    \le L_n |z - z'|
    \end{equation}
    where $L_n = O_P(1)$ uniformly over $n$. 
    \end{lemma}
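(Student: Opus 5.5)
The plan is to get the Lipschitz bound for $\det(\bM_j(z;t))$ from two ingredients: a uniform $O_P(1)$ bound on the entries of $\bM_j(z;t)$ over $\cK$, and a uniform $O_P(1)$ bound on the entries of $\partial_z\bM_j(z;t)$ over $\cK$. Since $\bM_j$ has fixed dimension $p:=2(k+m+mt)$, the determinant is a polynomial of degree $p$ in its entries. The mean value theorem, applied along the segment joining $z$ and $z'$, then gives
\begin{equation*}
\abs{\det\bM_j(z)-\det\bM_j(z')}\le p\,p!\,\Big(\sup_{w\in\cK}\|\bM_j(w)\|_{\max}\Big)^{p-1}\sup_{w\in\cK}\|\partial_w\bM_j(w)\|_{\max}\,|z-z'|.
\end{equation*}
We may replace $\cK$ by a slightly larger convex compact subset of $\cD$ (for instance a rectangle). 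This is possible because $\cD$ is itself convex, being an intersection of a half-plane, a strip, and a disk.

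\emph{Step 1 (the $z$-independent part).} The first term in \eqref{eq:spike_Mj_def} does not depend on $z$. By Lemma~\ref{lem:spike_ZGZ_concentration}, it is $O_P(1)$, so it contributes nothing to the derivative and only a bounded amount to the entries.

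\emph{Step 2 (the resolvent part).} By \eqref{eq:spike_11_block} and \eqref{eq:spike_cross_block_step1}, the $(1,1)$ block equals $-z^{-1}\bTheta_{*;t}^\sT\bTheta_{*;t}$ and the off-diagonal blocks vanish. On $\cK$ we have $|z|\ge \varepsilon_\cK$, so these are explicit and Lipschitz with $O_P(1)$ constant, using Lemma~\ref{lemma_amp_feature}.

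The $(2,2)$ block is $\bB_{\sf lower}(z)=\bA^\sT\bR(\bS-z\bI)^{-1}\bR^\sT\bA$, where $\bA:=\bO^\sT\bP^\perp_{\bF_{0;t}}\bG\bZ$ as in \eqref{eq:spike_22_final}. Its $z$-derivative is $\bA^\sT\bR(\bS-z\bI)^{-2}\bR^\sT\bA$. We bound this in operator norm by
\begin{equation*}
\|\bA\|^2\,\|\bR\|^2\,\|(\bS-z\bI)^{-1}\|^2.
\end{equation*}
Each factor is controlled as follows.
\begin{itemize}
\item $\|\bR\|^2=O_P(n)$, by standard Gaussian operator-norm bounds.
\item $\|\bA\|^2\le \|\bG\bZ\|^2=O_P(1/n)$. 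This holds because $\bG$ has diagonal entries $O(1/n)$ and $\|\bZ\|^2=O_P(n)$; the latter follows from $\bZ^\sT\bZ/n$ converging via Lemma~\ref{lemma_amp_data} together with Assumption~\ref{assumption:reg_1}.
\item $\|(\bS-z\bI)^{-1}\|\le 1/\mathrm{dist}(z,\mathrm{spec}(\bS))$. By Lemma~\ref{lemma:smallest_eigenvalue_concentration} applied to $\bS=\bR^\sT\widehat\bG\bR$ (as noted after Lemma~\ref{lem:spike_hatmu_conv}), $\lambda_{\min}(\bS)\to c(t)$ in probability. Hence, with probability $1-o(1)$, $\lambda_{\min}(\bS)\ge c(t)-\varepsilon_\cK$, and so $\mathrm{dist}(z,\mathrm{spec}(\bS))\ge\varepsilon_\cK$ uniformly for $z\in\cK$.
\end{itemize}
Combining these, the derivative, and likewise the block itself, are $O_P(1)$ uniformly on $\cK$.

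\emph{Main obstacle.} The delicate point is the uniformity over $z$. It is handled by working on the single high-probability event $\{\lambda_{\min}(\bS)\ge c(t)-\varepsilon_\cK\}$, rather than with pointwise-in-$z$ statements, so that the resolvent bound holds simultaneously for every $z\in\cK$. A second issue is that $\lambda_{\min}$ convergence is needed for $\bS$ rather than for $\bX_{\rm new}^\sT\bG\bX_{\rm new}$; this requires $\widehat\bG$ to be independent of $\bR$ conditional on $\cF_t$, which the construction guarantees.

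On this event, which has probability tending to one, we take $L_n$ to be the product bound displayed above. Off the event we may set $L_n$ to the true (finite) Lipschitz constant, so that the inequality holds almost surely, and we still have $L_n=O_P(1)$.
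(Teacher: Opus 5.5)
Your argument is correct and is essentially the paper's proof: both rest on a single high-probability event (from Lemma~\ref{lemma:smallest_eigenvalue_concentration}) that gives a uniform resolvent bound on $\cK$, on $O_P(1)$ outer factors, and on a fixed-dimension determinant bound. The only difference is that the paper uses the identity $\bR_0(z)-\bR_0(z')=(z'-z)\bR_0(z)\bR_0(z')$ in place of your derivative-plus-mean-value step, which makes the convex-hull enlargement unnecessary. One small correction: off that event $\bS-z\bI$ can be singular at a real point of $\cK$, so a finite Lipschitz constant need not exist there, and the bound should simply be asserted on the event of probability $1-o(1)$, as the paper does.
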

    We prove this lemma in Section~\ref{subsec:proof_det_lipschitz}.

    Next, we establish uniform convergence of the determinant.
    
    \begin{lemma}\label{lemma:det_uniform}
    For any compact $\cK \subset \cD$,
    \begin{equation}\label{eq:eig_det_uniform}
    \sup_{z \in \cK}\abs{\det(\bM_j(z;t)) - \det(M_j^\infty(z;t))}
    \xrightarrow{~p~} 0.
    \end{equation}
    \end{lemma}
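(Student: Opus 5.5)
The plan is to derive Lemma~\ref{lemma:det_uniform} from Lemma~\ref{lemma:change_limit_first}, applied with $\Gamma=\cK$, $f_n(z)=\det(\bM_j(z;t))$ and $f(z)=\det(M_j^\infty(z;t))$. That lemma needs three inputs: continuity of $f_n$ and $f$ on $\cK$, pointwise convergence in probability, and an $O_P(1)$ Lipschitz constant. Lemma~\ref{lemma:det_lipschitz} already provides the Lipschitz constant. So two things remain: pointwise convergence, and continuity of the deterministic limit.

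\textbf{Pointwise convergence.} Fix $z\in\cK\subset\cD$. Lemma~\ref{lemma:limiting_spike_matrix} gives entrywise convergence $\bM_j(z;t)\xrightarrow{~p~} M_j^\infty(z;t)$. The matrix has fixed dimension $2(k+m+mt)$, and the determinant is a polynomial in its entries. The continuous mapping theorem then gives $\det(\bM_j(z;t))\xrightarrow{~p~}\det(M_j^\infty(z;t))$.

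\textbf{Continuity of $f_n$.} On the high-probability event $\lambda_{\min}(\bY^\sT\bG\bY)>\min(c(t),0)-\varepsilon_{\cK}$, the resolvent $\bR_0(z)$ is analytic on a neighborhood of $\cK$. This event has probability tending to one by Lemma~\ref{lemma:smallest_eigenvalue_concentration} together with Lemma~\ref{lem:spike_hatmu_conv}. On this event $f_n$ is analytic, hence continuous. Off the event we may redefine $f_n$ (say, set it to zero), which costs only $o(1)$ probability.

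\textbf{Continuity of $f$.} The entries of $M_j^\infty(z;t)$ are of three kinds:
\begin{itemize}
\item $-\tfrac1z\mathbb{E}[O_tO_t^\sT]$, which is analytic because $0\notin\cK$;
\item the constant blocks and $(\mathbb{E}[O_tO_t^\sT])^{-1}$, which is well defined by Assumption~\ref{assumption:reg_1};
\item expectations of the form $\mathbb{E}[\phi(G_t)\,\delta G_t/(\delta+G_t\alpha(z))\,VV^\sT]$, together with the analogous rest-block terms built from $\alpha(z)$ and $s_z=-\delta/\alpha(z)$.
\end{itemize}
For $\Re z<c(t)$ the Stieltjes transform $\alpha$ is analytic. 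The denominators are bounded away from zero uniformly on $\cK$ by Lemma~\ref{lem:bounded-denominators}, and $G_t$ is bounded while $V$ has finite second moments. Dominated convergence therefore makes these expectations continuous, indeed analytic, in $z$. For the rest block $S_{\mathrm R}^j$, continuity follows the same way: it is obtained from $\bB^{-1}=(s_z\bI-\widetilde{\bG})^{-1}$ and a fixed-size Woodbury inverse, and the inner matrix $\bI+\bK_F\bU^\sT\bB^{-1}\bU$ must have its limit invertible on $\cK$. I would argue this invertibility from the invertibility of $s_z\bI-\bP^\perp\widetilde{\bG}\bP^\perp$, which is guaranteed since $s_z$ stays away from the range of $\widetilde{\bG}$.

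With these three inputs, Lemma~\ref{lemma:change_limit_first} yields the uniform convergence \eqref{eq:eig_det_uniform}.

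\textbf{Main obstacle.} I expect the difficulty to lie in making Lemma~\ref{lemma:change_limit_first} applicable on events of probability tending to one rather than almost surely, and in verifying that the limiting rest-block matrix $C_\infty(s_z;z)$ is continuous. For the first issue, I would state the Lipschitz bound on the good event and note that the lemma only requires tightness. An alternative is a finite $\epsilon$-net argument: take pointwise convergence at the finitely many net points, and control the oscillation between net points by $L_n\epsilon$ for $f_n$ and by uniform continuity for $f$. I would present the net argument directly, as it is self-contained.
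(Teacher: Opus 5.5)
Your proposal is correct and matches the paper's proof, which applies Lemma~\ref{lemma:change_limit_first} directly using the Lipschitz bound of Lemma~\ref{lemma:det_lipschitz} and the pointwise convergence of Lemma~\ref{lemma:limiting_spike_matrix}. Your care about the Lipschitz bound holding only on an event of probability $1-o(1)$ is a fair refinement that the paper glosses over. Your hand verification that the limit is continuous, including the rest block, is unnecessary: because $f$ is deterministic, pointwise convergence together with $L_n=O_P(1)$ already gives $|f(z)-f(z')|\le M|z-z'|$ for any $M$ with $\sup_n\mathbb{P}(L_n>M)<1$.
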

    
    \begin{proof}
    By the Lipschitz bound \eqref{eq:eig_det_lipschitz} and the pointwise convergence from Lemma~\ref{lemma:limiting_spike_matrix}, we apply Lemma~\ref{lemma:change_limit_first}: equicontinuity combined with pointwise convergence implies uniform convergence.
    \end{proof}

    We now apply Rouch\'e's theorem to relate zeros of the finite-sample determinant to those of the limiting determinant.

    \begin{proof}[Proof of Lemma~\ref{lem:eigenvalue_localization}]
    We apply Lemma~\ref{lemma:rouche_zeros} and Lemma~\ref{lem:rouche_no_zeros} with $f_n(z) = \det(\bM_j(z;t))$, $f(z) = \det(M_j^\infty(z;t))$, and $D = \cD$. 
    
    Part $(i)$ follows from Lemma~\ref{lemma:rouche_zeros}, using the fact that isolated eigenvalues of $\bH_j(t)$ outside the support of the limiting spectral distribution correspond to zeros of $\det(\bM_j(z;t))$ in $\cD$. Part $(ii)$ follows similarly from Lemma~\ref{lem:rouche_no_zeros}.
    \end{proof}
    
    We now analyze the limiting determinant equation using its block structure.
    
    \begin{lemma}\label{lemma:limiting_det_factorization}
    $\det(M_j^\infty(z;t)) = 0$ holds if and only if one of the following two equations holds:
    \begin{align}
    &\det\left(-zI_r + \mathbb{E}\left[\frac{\delta G_t^j}{\delta + G_t^j\alpha_t^j(z)}V_{*\leq}V_{*\leq}^\sT\right]\right) = 0,
    \label{eq:eig_hard_det} \\
    &\det\left(z\left(\mathbb{E}\left[\Theta_{R;t}\Theta_{R;t}^\sT\right]\right)^{-1} + \left(\mathbb{E}\left[\Theta_{R;t}\Theta_{R;t}^\sT\right]\right)^{-1}S_{\mathrm{R}}^j(z;t)\left(\mathbb{E}\left[\Theta_{R;t}\Theta_{R;t}^\sT\right]\right)^{-1}\right) = 0.
    \label{eq:eig_rest_det}
    \end{align}
    \end{lemma}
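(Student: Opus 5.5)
The plan is to compute $\det(M_j^\infty(z;t))$ by a Schur complement with respect to its upper-left block, and then use the block-diagonal structure of all ingredients along the hard/rest split. Write $\Sigma := \mathbb{E}[O_tO_t^\sT]$ and $D(z) := \mathrm{diag}(S_{\mathrm{H}}(z), S_{\mathrm{R}}^j(z;t))$, where $S_{\mathrm{H}}(z) := -\mathbb{E}\bigl[\frac{\delta G_t^j}{\delta+G_t^j\alpha_t^j(z)}V_{*\leq}V_{*\leq}^\sT\bigr]$. Then
\[
M_j^\infty = \begin{bmatrix} -z^{-1}\Sigma & I\\ I & \Sigma^{-1}D\Sigma^{-1}\end{bmatrix}.
\]
For $z\in\mathcal D$ we have $z\neq 0$, since $\Re z<\min(c(t),0)\le 0$. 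Also $\Sigma$ is invertible by Assumption~\ref{assumption:reg_1}, so the upper-left block is invertible. The Schur complement formula gives
\[
\det M_j^\infty = \det(-z^{-1}\Sigma)\,\det\bigl(\Sigma^{-1}D\Sigma^{-1} + z\Sigma^{-1}\bigr) = \det(-z^{-1}\Sigma)\,\det(\Sigma^{-1})^2\det\bigl(D + z\Sigma\bigr).
\]
The prefactors are nonzero, so $\det M_j^\infty(z;t)=0$ if and only if $\det(D(z)+z\Sigma)=0$.

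Next I use the block structure of $\Sigma$. By Lemma~\ref{lemma_amp_feature} (claim {\sf C3}), $(\Theta(0),\dots,\Theta(t))$ is independent of $\Theta_{*\mathrm{H}}$. Moreover $\Theta_*\sim\normal(0,I_k)$, so $\Theta_{*\mathrm{H}}$ is uncorrelated with $\Theta_{*\mathrm{E}}$. Hence
\[
\Sigma = \mathrm{diag}\bigl(I_r,\ \mathbb{E}[\Theta_{R;t}\Theta_{R;t}^\sT]\bigr),
\]
where $\Theta_{R;t}$ collects the remaining coordinates. Since $D$ shares the same block split, $D+z\Sigma$ is block diagonal and
\[
\det(D+z\Sigma)=\det\bigl(S_{\mathrm{H}}(z)+zI_r\bigr)\cdot\det\bigl(S_{\mathrm{R}}^j+z\,\mathbb{E}[\Theta_{R;t}\Theta_{R;t}^\sT]\bigr).
\]

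It remains to match each factor with the two stated equations. The first factor equals $(-1)^r\det\bigl(-zI_r+\mathbb{E}[\cdots]\bigr)$, which is \eqref{eq:eig_hard_det}. For the second factor, write $\Sigma_R:=\mathbb{E}[\Theta_{R;t}\Theta_{R;t}^\sT]$. Since $\Sigma_R$ is positive definite (a principal block of $\Sigma$), we have
\[
\det(S_{\mathrm{R}}^j+z\Sigma_R)=\det(\Sigma_R)^2\det\bigl(z\Sigma_R^{-1}+\Sigma_R^{-1}S_{\mathrm{R}}^j\Sigma_R^{-1}\bigr),
\]
which is \eqref{eq:eig_rest_det} up to a nonzero factor. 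The product of the two factors vanishes if and only if one of them does, which proves the equivalence.

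The main point requiring care is justifying that $\Sigma$ is exactly block diagonal. Lemma~\ref{lemma:limiting_spike_matrix} already places $D$ in block form, so the only issue is the cross-covariance $\mathbb{E}[\Theta_{*\mathrm{H}}\Theta(s)^\sT]=0$. This follows from {\sf C3}/{\sf C4}, i.e.\ $C_\theta(s,*)U_{\mathrm{H}}^*=0$. Everything else is routine linear algebra.
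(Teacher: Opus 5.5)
Your proposal is correct and follows the paper's proof: a Schur complement on the invertible upper-left block $-z^{-1}\mathbb{E}[O_tO_t^\sT]$, then the block-diagonal form $\mathbb{E}[O_tO_t^\sT]=\mathrm{diag}\bigl(I_r,\mathbb{E}[\Theta_{R;t}\Theta_{R;t}^\sT]\bigr)$ from Lemma~\ref{lemma_amp_feature}, so that the determinant factors into the hard and rest blocks. The only difference is cosmetic: you first write $\Sigma^{-1}D\Sigma^{-1}+z\Sigma^{-1}=\Sigma^{-1}(D+z\Sigma)\Sigma^{-1}$ and then split into blocks, whereas the paper splits $\Sigma^{-1}S\Sigma^{-1}+z\Sigma^{-1}$ directly (and you spell out the justification for block-diagonality via {\sf C3} and $\Theta_*\sim\normal(0,I_k)$).
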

    In the second equation, \begin{equation}\label{eq:eig_ThetaR_definition}\Theta_{R;t} := [\Theta_{*,>}^\sT, \Theta(0)^\sT, \ldots, \Theta(t)^\sT]^\sT \in \mathbb{R}^{k-r+m+mt},\end{equation} where $\Theta_{*,>} \in \mathbb{R}^{k-r}$ denotes the last $k-r$ coordinates of $\Theta_*$. We note that the first equation is our master equation \eqref{equa_main} in the main theorem.
    
    \begin{proof}[Proof of Lemma \ref{lemma:limiting_det_factorization}]
    From Lemma~\ref{lemma:limiting_spike_matrix}, the limiting outlier matrix has the form
    \begin{align}
    &M_j^\infty(z;t)
    = \begin{bmatrix}
    -\frac{1}{z}\mathbb{E}\big[O_tO_t^\sT\big] & I \\
    I & \left(\mathbb{E}\big[O_tO_t^\sT\big]\right)^{-1}S(z;t)\left(\mathbb{E}\big[O_tO_t^\sT\big]\right)^{-1}
    \end{bmatrix},
    \label{eq:eig_Mj_original}\\
    &S(z;t) = \begin{bmatrix}
    \mathbb{E}\left[-\frac{\delta G_t^j}{\delta + G_t^j\alpha_t^j(z)}V_{*\leq}V_{*\leq}^\sT\right] & 0^\sT \\
    0 & S_{\mathrm{R}}(z;t)
    \end{bmatrix}.
    \label{eq:eig_S_definition}
    \end{align}

    We apply the Schur complement formula to Eq. \eqref{eq:eig_Mj_original}. Since $\mathbb{E}\left[O_tO_t^\sT\right]$ is positive definite, we have
    \begin{equation}\label{eq:eig_schur_step1}
    \begin{aligned}
    &\det(M_j^\infty(z;t))=\\
    & \quad\det\left(-\frac{1}{z}\mathbb{E}\left[O_tO_t^\sT\right]\right) \cdot \det\left(\left(\mathbb{E}\left[O_tO_t^\sT\right]\right)^{-1}S(z;t)\left(\mathbb{E}\left[O_tO_t^\sT\right]\right)^{-1} + z\left(\mathbb{E}\left[O_tO_t^\sT\right]\right)^{-1}\right).
    \end{aligned}
    \end{equation}
    The first factor is nonzero since $\mathbb{E}\left[O_tO_t^\sT\right]$ is invertible.

    For the second, by Lemma~\ref{lemma_amp_feature}, $\mathbb{E}\left[O_tO_t^\sT\right]$ is block-diagonal:
    \begin{equation}\label{eq:eig_Ot_block}
    \mathbb{E}\left[O_tO_t^\sT\right] = \begin{bmatrix}
    I_r & 0 \\
    0 & \mathbb{E}\left[\Theta_{R;t}\Theta_{R;t}^\sT\right]
    \end{bmatrix},
    \end{equation}
    where the $r \times r$ block corresponds to hard directions and the $\left(k-r+m+mt\right) \times \left(k-r+m+mt\right)$ block corresponds to rest directions. Using \eqref{eq:eig_S_definition} and \eqref{eq:eig_Ot_block}, we expand the second as
    \begin{equation}\label{eq:eig_second_factor_expanded}
    \begin{aligned}
    &\left(\mathbb{E}\left[O_tO_t^\sT\right]\right)^{-1}S(z;t)\left(\mathbb{E}\left[O_tO_t^\sT\right]\right)^{-1} + z\left(\mathbb{E}\left[O_tO_t^\sT\right]\right)^{-1} \\
    &\quad= \begin{bmatrix}
    I_r & 0 \\
    0 & \left(\mathbb{E}\left[\Theta_{R;t}\Theta_{R;t}^\sT\right]\right)^{-1}
    \end{bmatrix}
    \begin{bmatrix}
    -\mathbb{E}\left[\frac{\delta G_t^j}{\delta + G_t^j\alpha_t^j(z)}V_{*\leq}V_{*\leq}^\sT\right] & 0^\sT \\
    0 & S_{\mathrm{R}}(z;t)
    \end{bmatrix}
    \begin{bmatrix}
    I_r & 0 \\
    0 & \left(\mathbb{E}\left[\Theta_{R;t}\Theta_{R;t}^\sT\right]\right)^{-1}
    \end{bmatrix} \\
    &\qquad+ z\begin{bmatrix}
    I_r & 0 \\
    0 & \left(\mathbb{E}\left[\Theta_{R;t}\Theta_{R;t}^\sT\right]\right)^{-1}
    \end{bmatrix} \\
    &\quad= \begin{bmatrix}
    -\mathbb{E}\left[\frac{\delta G_t^j}{\delta + G_t^j\alpha_t^j(z)}V_{*\leq}V_{*\leq}^\sT\right] + zI_r & 0^\sT \\
    0 & \left(\mathbb{E}\left[\Theta_{R;t}\Theta_{R;t}^\sT\right]\right)^{-1}S_{\mathrm{R}}(z;t)\left(\mathbb{E}\left[\Theta_{R;t}\Theta_{R;t}^\sT\right]\right)^{-1} + z\left(\mathbb{E}\left[\Theta_{R;t}\Theta_{R;t}^\sT\right]\right)^{-1}
    \end{bmatrix}.
    \end{aligned}
    \end{equation}
    This matrix is block-diagonal. Therefore, $\det(M_j^\infty(z;t)) = 0$ if and only if
    \begin{equation}\label{eq:eig_product}
    \begin{aligned}
    &\det\left(-\mathbb{E}\left[\frac{\delta G_t^{j}}{\delta + G_t^j\alpha_t^j(z)}V_{*\leq}V_{*\leq}^\sT\right] + zI_r\right) \\
    &\qquad\qquad \cdot \det\left(\left(\mathbb{E}\left[\Theta_{R;t}\Theta_{R;t}^\sT\right]\right)^{-1}S_{\mathrm{R}}(z;t)\left(\mathbb{E}\left[\Theta_{R;t}\Theta_{R;t}^\sT\right]\right)^{-1} + z\left(\mathbb{E}\left[\Theta_{R;t}\Theta_{R;t}^\sT\right]\right)^{-1}\right) = 0.
    \end{aligned}
    \end{equation}
    This product equals zero if and only if one of the two equals zero, and the conclusion follows.
    \end{proof}

    \subsection{Weak recovery of hard directions from eigenvectors}\label{section:eigenvectors}

    This subsection analyzes the correlation between Hessian eigenvectors corresponding to outliers and the hard directions $\bTheta_{*\mathrm{H}} = [\btheta_1^*, \dots, \btheta_r^*] \in \mathbb{R}^{d \times r}$. The main result is the following.
    
    \begin{lemma}\label{lemma:eigenvector_correlation}
    Let $z^* \in \cD$ with $\det(M_j^\infty(z^*;t)) = 0$ and multiplicity $\beta \ge 1$. Let $U = \begin{bmatrix}U_{\leq}^\sT & U_{>}^\sT\end{bmatrix}^\sT \in \mathbb{R}^{2(k+m+mt) \times \beta}$ be a basis for $\ker(M_j^\infty(z^*;t))$, where $U_{\leq} \in \mathbb{R}^{(k+m+mt) \times \beta}$ corresponds to the upper block and $U_{>} \in \mathbb{R}^{(k+m+mt) \times \beta}$ to the lower block. Let $U_{\leq,\mathrm{H}} \in \mathbb{R}^{r \times \beta}$ denote the first $r$ rows of $U_{\leq}$.
    
    Consider the eigenvalues $\{z_{n,\ell}\}_{\ell=1}^\beta$ of $\bH_j(t)$ converging to $z^*$ (cf.\ Lemma~\ref{lem:eigenvalue_localization}), with orthonormal eigenvectors $\{\boldsymbol{\xi}_{n,\ell}\}_{\ell=1}^\beta$. Then
    \begin{equation}\label{eq:evec_main_result}
    \sum_{\ell=1}^\beta \|\bTheta_{*\mathrm{H}}^\sT\boldsymbol{\xi}_{n,\ell}\|^2
    \xrightarrow{~p~}
    \operatorname{Tr}\left(U_{\leq,\mathrm{H}}\left(U_{\leq}^\sT\left(\mathbb{E}[O_tO_t^\sT] + S'(z^*;t)\right)U_{\leq}\right)^{-1}U_{\leq,\mathrm{H}}^\sT\right),
    \end{equation}
    where $S'(z^*;t) := \frac{\partial}{\partial z}S(z;t)\big|_{z=z^*}$.
    \end{lemma}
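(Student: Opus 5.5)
We will access the outlier eigenvectors through the Riesz spectral projector, following the roadmap of Step 3 in Section~\ref{sec:proof_sketch}, but now for a zero of general multiplicity $\beta$. Fix a small circle $\gamma$ around $z^*$ that lies in $\cD$, encloses no other zero of $\det(M_j^\infty(z;t))$, and stays at positive distance from $\min(c(t),0)$. By Lemma~\ref{lem:eigenvalue_localization}, with probability $1-o(1)$ exactly $\beta$ eigenvalues of $\bH_j(t)$ lie inside $\gamma$ and none lie on it. Hence
\[
\sum_{\ell=1}^\beta \boldsymbol{\xi}_{n,\ell}\boldsymbol{\xi}_{n,\ell}^\sT=-\frac{1}{2\pi i}\oint_\gamma (\bH_j(t)-z\bI)^{-1}\,\de z,
\]
and the quantity of interest is the trace of $\bTheta_{*\mathrm{H}}^\sT(\cdot)\bTheta_{*\mathrm{H}}$ applied to this projector. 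We substitute the Woodbury representation \eqref{eq:spike_resolvent_identity}. The bulk term $\bR_0(z)$ is analytic inside $\gamma$ with high probability, since $\lambda_{\min}(\bY^\sT\bG\bY)\to c(t)$ by Lemma~\ref{lemma:smallest_eigenvalue_concentration} and $\bR_0$ has the extra pole only at $0\notin\cD$. Its contour integral therefore vanishes, and only the term containing $\bM_j(z;t)^{-1}$ survives.

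Next we identify the outer factors. By \eqref{eq:spike_projection_identity} and $\bY\bTheta_{*;t}=0$, we have $\bTheta_{*\mathrm{H}}^\sT\bR_0(z)[\bTheta_{*;t},\bY^\sT\bG\bZ]=-z^{-1}[\bTheta_{*\mathrm{H}}^\sT\bTheta_{*;t},\,0]$ exactly. By Lemma~\ref{lemma_amp_feature} together with the block structure \eqref{eq:eig_Ot_block}, this converges uniformly on $\gamma$ to $A_\infty(z)^\sT=-z^{-1}[I_r\,|\,0\,|\,0]$. For the middle factor, Lemma~\ref{lemma:det_uniform} gives uniform convergence of $\det\bM_j$ on $\gamma$, and the same Lipschitz and equicontinuity argument (Lemmas~\ref{lemma:det_lipschitz} and \ref{lemma:change_limit_first}) applied entrywise gives $\sup_{z\in\gamma}\|\bM_j(z;t)-M_j^\infty(z;t)\|\to0$ in probability. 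Since $\det M_j^\infty$ is bounded away from $0$ on $\gamma$, the inverses also converge uniformly on $\gamma$. Combining these, the contour integral converges to
\[
\operatorname{Tr}\Big(\frac{1}{2\pi i}\oint_\gamma A_\infty(z)^\sT M_j^\infty(z;t)^{-1}A_\infty(z)\,\de z\Big).
\]

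It remains to compute the residue of $M_j^\infty(z;t)^{-1}$ at $z^*$. This is the main obstacle, because in general the pole need not be simple and the residue formula requires care. We plan to show that $M_j^\infty$ is real symmetric for real $z$ and that $v^\sT\partial_zM_j^\infty(z^*)v$ is positive definite on the kernel. Under these two facts the pole of the inverse is simple, with residue $U(U^\sT\partial_zM_j^\infty(z^*)U)^{-1}U^\sT$ (the standard result for symmetric analytic matrix families with definite derivative on the kernel). To establish them, write a kernel vector as $v=(v_\le,v_>)$. The first block row of $M_j^\infty(z^*)v=0$ gives $v_>=z^{*-1}\mathbb{E}[O_tO_t^\sT]v_\le$. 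Substituting this into $v^\sT\partial_zM_j^\infty v$ yields
\[
v^\sT\partial_zM_j^\infty(z^*)\,v=z^{*-2}\,v_\le^\sT\big(\mathbb{E}[O_tO_t^\sT]+S'(z^*;t)\big)v_\le,
\]
which is the identity \eqref{eq:outline_derivative_identity}. Positivity follows because $\mathbb{E}[O_tO_t^\sT]\succ0$ by Assumption~\ref{assumption:reg_1} and $S'\succeq0$: $S$ is a negative multiple of $\mathbb{E}[\tfrac{\delta G}{\delta+G\alpha}VV^\sT]$, and $\partial_z$ of this expression involves $\alpha'(z)>0$ below the edge, making $S'$ a positive semidefinite expectation. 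For the rest block $S_{\mathrm R}$ we would need the analogous monotonicity, which we expect to follow from its origin as a resolvent quadratic form, namely $\partial_z(\cdot)$ of $\bTheta^\sT\bR_0(z)\bTheta$-type terms, which are positive semidefinite.

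Finally, substituting $A_\infty(z^*)^\sT U=-z^{*-1}U_{\le,\mathrm H}$, the factors $z^{*-2}$ cancel. This gives $\operatorname{Tr}\big(U_{\le,\mathrm H}(U_\le^\sT(\mathbb{E}[O_tO_t^\sT]+S')U_\le)^{-1}U_{\le,\mathrm H}^\sT\big)$, as claimed.
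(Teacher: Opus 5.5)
Your proposal is correct and follows the paper's proof essentially step for step. The shared steps are the Riesz projector, the Woodbury expansion with the $\bR_0(z)$ term integrating to zero, and uniform convergence of the integrand to $A_\infty(z)^\sT M_j^\infty(z;t)^{-1}A_\infty(z)$. The residue is then computed from the principal part $U\big(U^\sT\partial_zM_j^\infty(z^*;t)U\big)^{-1}U^\sT/(z-z^*)$, using the kernel relation $U_>=\frac{1}{z^*}\mathbb{E}[O_tO_t^\sT]U_\le$ and the cancellation of the $(z^*)^{-2}$ factors.

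Your extra care on three points is sound. These are the uniform convergence of $\bM_j$ via the Lipschitz bound, the pole of $\bR_0$ at $0\notin\cD$, and the positivity of $U^\sT\partial_zM_j^\infty(z^*;t)U$, which the paper only asserts. The rest-block positivity you leave as an expectation closes along the lines you indicate. For real $z$, $\partial_z\bM_j(z;t)=[\bTheta_{*;t},\,\bY^\sT\bG\bZ]^\sT\,\bR_0(z)^2\,[\bTheta_{*;t},\,\bY^\sT\bG\bZ]\succeq0$. Passing to the limit, where derivatives converge by Cauchy's formula, gives $S'(z^*;t)\succeq0$.
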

    
    By Lemma~\ref{lemma:limiting_det_factorization}, the determinant equation $\det(M_j^\infty(z;t)) = 0$ factors into hard and rest parts. If $z^*$ arises from the rest block (i.e., from \eqref{eq:eig_rest_det}), then $U_{\leq,\mathrm{H}} = 0$ and the correlation \eqref{eq:evec_main_result} equals zero.
    Instead, if $z^*$ arises from the hard block (i.e., from \eqref{eq:eig_hard_det}), then by the block-diagonal structure of $\mathbb{E}[O_tO_t^\sT]$ and $S'(z^*;t)$, the formula \eqref{eq:evec_main_result} simplifies to
    \begin{equation}\label{eq:evec_hard_only}
    \sum_{\ell=1}^\beta \|\bTheta_{*\mathrm{H}}^\sT\boldsymbol{\xi}_{n,\ell}\|^2
    \xrightarrow{~p~}
    \operatorname{Tr}\left(U_{\leq,\mathrm{H}}\left(U_{\leq,\mathrm{H}}^\sT\left(I_r + S'_{\mathrm{H}}(z^*;t)\right)U_{\leq,\mathrm{H}}\right)^{-1}U_{\leq,\mathrm{H}}^\sT\right),
    \end{equation}
    which is strictly positive and depends only on the hard block. Therefore, in order to have positive correlation, the corresponding eigenvalue must arise from the equation of the hard block.
    
    The proof of Lemma~\ref{lemma:eigenvector_correlation} occupies the remainder of this subsection.

    \par\addvspace{6pt}\noindent\textit{Proof of Lemma \ref{lemma:eigenvector_correlation}.}
    Let $z^* \in \cD$ be a solution of $\det(M_j^\infty(z;t)) = 0$ with multiplicity $\beta \ge 1$. By Lemma~\ref{lem:eigenvalue_localization}, there exist random eigenvalues $\{z_{n,\ell}\}_{\ell=1}^\beta$ of $\bH_j(t)$ such that $\max_{\ell \in [\beta]}|z_{n,\ell} - z^*| \xrightarrow{~p~} 0$. Let $\{\boldsymbol{\xi}_{n,\ell}\}_{\ell=1}^\beta$ denote the corresponding orthonormal eigenvectors.
    
    We measure the correlation via the following. Choose a contour $\gamma$ centered at $z^*$ with radius small enough that $\gamma \subset \cD$ and $\gamma$ encloses $z^*$ but no other solutions of $\det(M_j^\infty(z;t)) = 0$ and no points in the support of the LSD. Define
    \begin{equation}\label{eq:evec_projector}
    \bP_*^{n,d} := -\frac{1}{2\pi i}\oint_{\gamma}(\bH_j(t) - z\bI)^{-1}\de z.
    \end{equation}
    With probability $1 - o(1)$, the eigenvalues $\{z_{n,\ell}\}_{\ell=1}^\beta$ are the only eigenvalues of $\bH_j(t)$ inside $\gamma$, hence
    \begin{equation}\label{eq:evec_projector_sum}
    \bP_*^{n,d} = \sum_{\ell=1}^\beta \boldsymbol{\xi}_{n,\ell}\boldsymbol{\xi}_{n,\ell}^\sT.
    \end{equation}
    The total squared correlation is
    \begin{equation}\label{eq:evec_correlation}
    \sum_{\ell=1}^\beta \|\bTheta_{*\mathrm{H}}^\sT\boldsymbol{\xi}_{n,\ell}\|^2
    = \operatorname{Tr}\left(\bTheta_{*\mathrm{H}}^\sT\bP_*^{n,d}\bTheta_{*\mathrm{H}}\right)
    = -\frac{1}{2\pi i}\oint_{\gamma}\operatorname{Tr}\left(\bTheta_{*\mathrm{H}}^\sT(\bH_j(t) - z\bI)^{-1}\bTheta_{*\mathrm{H}}\right)\de z.
    \end{equation}
    
    From Section~\ref{subsec:spike_basic}, we have the following resolvent decomposition:
    \begin{equation}\label{eq:evec_resolvent_decomp}
    \begin{aligned}
    (\bH_j(t) - z\bI)^{-1}
    &= \bR_0(z) - \bR_0(z)\begin{bmatrix}\bTheta_{*;t} & \bY^\sT\bG\bZ\end{bmatrix}
    \bM_j(z;t)^{-1}
    \begin{bmatrix}\bTheta_{*;t}^\sT \\ \bZ^\sT\bG\bY\end{bmatrix}
    \bR_0(z),
    \end{aligned}
    \end{equation}
    where $\bR_0(z) := (\bY^\sT\bG\bY - z\bI)^{-1}$ is the resolvent and $\bM_j(z;t)$ is the outlier matrix.
    Substituting \eqref{eq:evec_resolvent_decomp} into \eqref{eq:evec_correlation}, we obtain
    \begin{equation}\label{eq:evec_two_terms}
    \begin{aligned}
    &\sum_{\ell=1}^\beta \|\bTheta_{*\mathrm{H}}^\sT\boldsymbol{\xi}_{n,\ell}\|^2
    = -\frac{1}{2\pi i}\oint_{\gamma}\operatorname{Tr}\left(\bTheta_{*\mathrm{H}}^\sT\bR_0(z)\bTheta_{*\mathrm{H}}\right)\de z \\
    &\quad + \frac{1}{2\pi i}\oint_{\gamma}\operatorname{Tr}\left(\bTheta_{*\mathrm{H}}^\sT\bR_0(z)
    \begin{bmatrix}\bTheta_{*;t} & \bY^\sT\bG\bZ\end{bmatrix}
    \bM_j(z;t)^{-1}
    \begin{bmatrix}\bTheta_{*;t}^\sT \\ \bZ^\sT\bG\bY\end{bmatrix}
    \bR_0(z)\bTheta_{*\mathrm{H}}\right)\de z.
    \end{aligned}
    \end{equation}
    The contour $\gamma$ avoids the spectrum of $\bY^\sT\bG\bY$ with probability $1 - o(1)$ due to Lemma~\ref{lemma:smallest_eigenvalue_concentration}, hence $\bR_0(z) = (\bY^\sT\bG\bY - z\bI)^{-1}$ is analytic on and inside $\gamma$ conditioned on the event. Therefore,
    \begin{equation}\label{eq:evec_bulk_vanishes}
    \oint_{\gamma}\operatorname{Tr}\left(\bTheta_{*\mathrm{H}}^\sT\bR_0(z)\bTheta_{*\mathrm{H}}\right)\de z = 0.
    \end{equation}
    Thus,
    \begin{equation}\label{eq:evec_contour_spike}
    \sum_{\ell=1}^\beta \|\bTheta_{*\mathrm{H}}^\sT\boldsymbol{\xi}_{n,\ell}\|^2
    = \frac{1}{2\pi i}\oint_{\gamma}\operatorname{Tr}\left(\bTheta_{*\mathrm{H}}^\sT\bR_0(z)
    \begin{bmatrix}\bTheta_{*;t} & \bY^\sT\bG\bZ\end{bmatrix}
    \bM_j(z;t)^{-1}
    \begin{bmatrix}\bTheta_{*;t}^\sT \\ \bZ^\sT\bG\bY\end{bmatrix}
    \bR_0(z)\bTheta_{*\mathrm{H}}\right)\de z.
    \end{equation}
    
    Next, we establish uniform convergence of the entire integrand in \eqref{eq:evec_contour_spike}.
    
    \begin{lemma}\label{lemma:uniform_integrand}
    Define
    \begin{equation}\label{eq:evec_Fn_def}
    F_{n,d}(z) := \bTheta_{*\mathrm{H}}^\sT\bR_0(z)\begin{bmatrix}\bTheta_{*;t} & \bY^\sT\bG\bZ\end{bmatrix}
    \bM_j(z;t)^{-1}
    \begin{bmatrix}\bTheta_{*;t}^\sT \\ \bZ^\sT\bG\bY\end{bmatrix}
    \bR_0(z)\bTheta_{*\mathrm{H}}.
    \end{equation}
    and the limiting matrix
    \begin{equation}\label{eq:evec_Finfty_def}
    F_\infty(z) := A_\infty(z)^\sT M_j^\infty(z;t)^{-1} A_\infty(z), \quad \text{where} \quad A_\infty(z) := -\frac{1}{z}\begin{bmatrix} I_r \\ 0 \end{bmatrix} \in \mathbb{R}^{2(k+m+mt) \times r}.
    \end{equation}
    Then
    \begin{equation}\label{eq:evec_uniform_F}
    \sup_{z \in \gamma}\|F_{n,d}(z) - F_\infty(z)\|_{\mathrm{op}}
    \xrightarrow{~p~} 0.
    \end{equation}
    \end{lemma}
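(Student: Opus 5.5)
The plan is to factor $F_{n,d}(z)$ as $\bA_n(z)^\sT\bM_j(z;t)^{-1}\widetilde{\bA}_n(z)$, where
\[
\bA_n(z)^\sT:=\bTheta_{*\mathrm{H}}^\sT\bR_0(z)\begin{bmatrix}\bTheta_{*;t} & \bY^\sT\bG\bZ\end{bmatrix}
\]
and $\widetilde{\bA}_n(z)$ is its transpose counterpart; by symmetry of $\bR_0$ on the real axis and analyticity, both are handled identically. We then prove uniform convergence on $\gamma$ of each factor separately: $\bA_n\to A_\infty$ and $\bM_j^{-1}\to (M_j^\infty)^{-1}$. The conclusion follows by multiplying, since all factors are uniformly bounded on $\gamma$ with high probability.

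For the outer factors the argument is essentially algebraic, reusing \eqref{eq:spike_projection_identity}. Since $\bTheta_{*\mathrm{H}}$ consists of columns of $\bTheta_{*;t}$, we have $\bU^\sT\bTheta_{*\mathrm{H}}=0$. Hence $\bTheta_{*\mathrm{H}}^\sT\bR_0(z)=-\frac1z\bTheta_{*\mathrm{H}}^\sT\bP_{\bTheta_{*;t}}=-\frac1z\bTheta_{*\mathrm{H}}^\sT$ exactly, for every $z\in\gamma$ (on the event that $\gamma$ avoids the spectrum of $\bS$, which has probability $1-o(1)$ by Lemma~\ref{lemma:smallest_eigenvalue_concentration} applied as in Lemma~\ref{lem:spike_hatmu_conv}). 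Therefore
\[
\bA_n(z)^\sT=-\frac1z\bigl[\bTheta_{*\mathrm{H}}^\sT\bTheta_{*;t}\ \ \bTheta_{*\mathrm{H}}^\sT\bY^\sT\bG\bZ\bigr],
\]
and the second block vanishes identically because $\bY\bTheta_{*;t}=0$. The first block converges to $[I_r\ 0]$ by Lemma~\ref{lemma_amp_feature} together with \eqref{eq:hard_orthogonality_B}, which gives block-diagonality of $\E[O_tO_t^\sT]$. The dependence on $z$ is only through the scalar factor $-1/z$, which is bounded on $\gamma$ because $\gamma\subset\cD$ stays away from $0$. Thus $\sup_{z\in\gamma}\|\bA_n(z)-A_\infty(z)\|\to0$ in probability with no further work.

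For the middle factor we use Lemma~\ref{lemma:limiting_spike_matrix} (pointwise convergence) upgraded to uniform convergence on the compact set $\gamma$ via Lemma~\ref{lemma:change_limit_first}. This requires an $O_P(1)$ Lipschitz bound for $z\mapsto\bM_j(z;t)$ entrywise, which is the same estimate underlying Lemma~\ref{lemma:det_lipschitz}. Concretely, $\partial_z\bM_j$ involves $\bR_0(z)^2$, whose operator norm is at most $\mathrm{dist}(\gamma,\mathrm{spec}(\bY^\sT\bG\bY))^{-2}=O_P(1)$. The flanking matrices $\bTheta_{*;t}$ and $\bY^\sT\bG\bZ$ have $O_P(1)$ norms because $\|\bG\|\le C/n$, $\|\bY\|=O_P(\sqrt n)$, and $\|\bZ\|=O_P(\sqrt n)$. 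Next, $\gamma$ is chosen to enclose no zero of $\det M_j^\infty$, so $z\mapsto M_j^\infty(z;t)$ is continuous and invertible on $\gamma$, and $\sup_\gamma\|(M_j^\infty)^{-1}\|<\infty$. Uniform convergence $\bM_j\to M_j^\infty$ then gives, with probability $1-o(1)$, invertibility of $\bM_j$ on all of $\gamma$. The resolvent identity
\[
\bM_j^{-1}-(M_j^\infty)^{-1}=\bM_j^{-1}(M_j^\infty-\bM_j)(M_j^\infty)^{-1}
\]
then yields $\sup_\gamma\|\bM_j^{-1}-(M_j^\infty)^{-1}\|\to0$ in probability.

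The main obstacle is the uniform (rather than pointwise) control of $\bM_j(z;t)$ on $\gamma$, in particular the lower-right block $\bB_{\sf lower}$, whose convergence was established only pointwise through the leave-one-out argument. We handle it by the Lipschitz–tightness route above, since the equicontinuity follows deterministically from the resolvent bound once the spectrum of $\bS$ is separated from $\gamma$. Combining the three uniform limits, and using that products of uniformly bounded, uniformly convergent factors converge uniformly, gives \eqref{eq:evec_uniform_F}.
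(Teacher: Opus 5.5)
Your proposal is correct and follows the paper's proof essentially step for step. You factor $F_{n,d}$ through the outer blocks, which reduce exactly to $-\frac{1}{z}\bTheta_{*\mathrm{H}}^\sT\bTheta_{*;t}$ and $0$ via the projection identity and $\bY\bTheta_{*;t}=0$, and you handle the middle factor with the resolvent identity for inverses once $\bM_j\to M_j^\infty$ uniformly on $\gamma$.

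Your explicit upgrade of Lemma~\ref{lemma:limiting_spike_matrix} from pointwise to uniform convergence, via Lemma~\ref{lemma:change_limit_first} and the operator-norm Lipschitz bound, is a welcome clarification. The paper simply asserts this uniformity, although the needed estimate appears in the proof of Lemma~\ref{lemma:det_lipschitz}. One correction: $\gamma$ must pass through no zero of $\det M_j^\infty$, not enclose none, since it does enclose $z^*$.
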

    We prove this lemma in Section~\ref{subsec:proof_uniform_integrand}.
    Next, as a consequence of Lemma~\ref{lemma:uniform_integrand},
    \begin{equation}\label{eq:evec_contour_limit}
    \oint_{\gamma}F_{n,d}(z)\de z
    \xrightarrow{~p~}
    \oint_{\gamma}F_\infty(z)\de z.
    \end{equation}
    It therefore suffices to compute the deterministic integral $\oint_{\gamma}F_\infty(z)\de z$ by residue calculus.
    The only pole inside $\gamma$ arises from $M_j^\infty(z;t)^{-1}$ at $z = z^*$.
    
    By standard analytic perturbation theory, since $\det(M_j^\infty(z;t))$ has a zero of order $\beta$ at $z^*$ and $U^\sT\frac{\partial}{\partial z}M_j^\infty(z^*;t)U$ is positive definite on $\ker(M_j^\infty(z;t))$, the inverse $M_j^\infty(z;t)^{-1}$ has a Laurent expansion near $z^*$ with principal part
    \begin{equation}\label{eq:evec_laurent}
    M_j^\infty(z;t)^{-1}
    = \frac{U\left(U^\sT\frac{\partial}{\partial z}M_j^\infty(z^*;t)U\right)^{-1}U^\sT}{z - z^*}
    + \mathcal{R}(z),
    \end{equation}
    where $\mathcal{R}(z)$ is analytic around $z^*$. Therefore, by the residue theorem,
    \begin{equation}\label{eq:evec_residue_integral}
    \frac{1}{2\pi i}\oint_{\gamma}F_\infty(z)\de z
    = \frac{1}{2\pi i}\oint_{\gamma}\operatorname{Tr}\left(A_\infty(z)^\sT M_j^\infty(z;t)^{-1}A_\infty(z)\right)\de z
    = \operatorname{Tr}\left(A_\infty(z^*)^\sT  U\Lambda^{-1}U^\sT A_\infty(z^*)\right),
    \end{equation}
    where we abbreviate $\Lambda := U^\sT\frac{\partial}{\partial z}M_j^\infty(z^*;t)U$.

    Note that $A_\infty(z^*)$ extracts only the first $r$ rows. Therefore,
    \begin{equation}\label{eq:evec_A_U_product}
    A_\infty(z^*)^\sT U
    = -\frac{1}{z^*}[I_r \ 0]
    \begin{bmatrix}
    U_{\leq} \\
    U_{>}
    \end{bmatrix}
    = -\frac{1}{z^*}U_{\leq,\mathrm{H}}
    \in \mathbb{R}^{r \times \beta}.
    \end{equation}
    Therefore,
    \begin{equation}\label{eq:evec_residue_simplified}
    \operatorname{Tr}\left(A_\infty(z^*)^\sT U\Lambda^{-1}U^\sT A_\infty(z^*)\right)
    = \frac{1}{(z^*)^2}\operatorname{Tr}\left(U_{\leq,\mathrm{H}}^\sT\Lambda^{-1}U_{\leq,\mathrm{H}}\right).
    \end{equation}
    
    From the block structure of $M_j^\infty(z;t)$ in \eqref{eq:spike_Mj_infinity}, the kernel satisfies
    \begin{equation}\label{eq:evec_kernel_relation}
    U_{>} = \frac{1}{z^*}\mathbb{E}[O_tO_t^\sT]U_{\leq}.
    \end{equation}
    Differentiating $M_j^\infty(z;t)$ with respect to $z$ gives
    \begin{equation}\label{eq:evec_Mj_derivative}
    \frac{\partial}{\partial z}M_j^\infty(z;t)
    = \begin{bmatrix}
    \frac{1}{z^2}\mathbb{E}[O_tO_t^\sT] & 0 \\
    0 & (\mathbb{E}[O_tO_t^\sT])^{-1}S'(z;t)(\mathbb{E}[O_tO_t^\sT])^{-1}
    \end{bmatrix},
    \end{equation}
    where
    \begin{equation}\label{eq:evec_Sprime}
    S'(z;t) := \frac{\partial}{\partial z}S(z;t)
    = \begin{bmatrix}
    S'_{\mathrm{H}}(z;t) & 0 \\
    0 & S'_{\mathrm{R}}(z;t)
    \end{bmatrix},
    \quad
    S'_{\mathrm{H}}(z;t)
    := \mathbb{E}\left[\frac{\delta G_t^{j2} \alpha_t^{j'}(z)}{(\delta + G_t^j\alpha_t^j(z))^2}V_{*\leq}V_{*\leq}^\sT\right],
    \end{equation}
    and $\alpha_t^{j'}(z) := \frac{\partial}{\partial z}\alpha_t^j(z)$ is the derivative of the Stieltjes transform.
    
    For $u \in \ker(M_j^\infty(z^*;t))$ with $u = \begin{bmatrix}u_{\leq}^\sT & u_{>}^\sT\end{bmatrix}^\sT$, we compute
    \begin{equation}\label{eq:evec_quadratic_form}
    \begin{aligned}
    u^\sT\frac{\partial}{\partial z}M_j^\infty(z^*;t)u
    &= \frac{1}{(z^*)^2}u_{\leq}^\sT\mathbb{E}[O_tO_t^\sT]u_{\leq}
    + u_{\leq}^\sT(\mathbb{E}[O_tO_t^\sT])^{-1}S'(z^*;t)(\mathbb{E}[O_tO_t^\sT])^{-1}u_{>}.
    \end{aligned}
    \end{equation}
    Using \eqref{eq:evec_kernel_relation},
    \begin{equation}\label{eq:evec_quadratic_simplified}
    \begin{aligned}
    u^\sT\frac{\partial}{\partial z}M_j^\infty(z^*;t)u
    &= \frac{1}{(z^*)^2}u_{\leq}^\sT\mathbb{E}[O_tO_t^\sT]u_{\leq}
    + \frac{1}{(z^*)^2}u_{\leq}^\sT(\mathbb{E}[O_tO_t^\sT])^{-1}S'(z^*;t)u_{\leq} \\
    &= \frac{1}{(z^*)^2}u_{\leq}^\sT\left(\mathbb{E}[O_tO_t^\sT] + S'(z^*;t)\right)u_{\leq}.
    \end{aligned}
    \end{equation}
    Same calculation works for $U$. Substituting $\Lambda = \frac{1}{(z^*)^2}U_{\leq}^\sT(\mathbb{E}[O_tO_t^\sT] + S'(z^*;t))U_{\leq}$ from \eqref{eq:evec_quadratic_simplified}, the $(z^*)^2$ factors cancel:
    \begin{equation}\label{eq:evec_final}
    \frac{1}{2\pi i}\oint_{\gamma}F_\infty(z)\de z
    = \operatorname{Tr}\left(U_{\leq,\mathrm{H}}^\sT\left(U_{\leq}^\sT\left(\mathbb{E}[O_tO_t^\sT] + S'(z^*;t)\right)U_{\leq}\right)^{-1}U_{\leq,\mathrm{H}}\right).
    \end{equation}
    
    Combining \eqref{eq:evec_contour_limit} and \eqref{eq:evec_final} yields the formula \eqref{eq:evec_main_result}, completing the proof of Lemma~\ref{lemma:eigenvector_correlation}.
    \qed

\subsection{Proof of Theorem \ref{theorem:main_result}: Finite time spectral phase transition}

Theorem \ref{theorem:main_result} is a direct consequence of results in Sections~\ref{section:eigenvalues} and~\ref{section:eigenvectors}.

The statements of outlier eigenvalues follow from Lemma~\ref{lem:eigenvalue_localization}. The weak recovery formulas in both regimes are established in Lemma~\ref{lemma:eigenvector_correlation} and its subsequent discussion. 

\subsection{Proof of Theorem \ref{theorem:main_result_limit}: Large time limit}\label{subsec:proof_limit}

This subsection establishes the results for $t \to \infty$ via leveraging the finite-time results of Theorem~\ref{theorem:main_result}.

For technical convenience, we work on arbitrary compact subsets $\cK\subset\cD^\infty := \{z \in \mathbb{C}:\ \Re(z) < \min(0, c_j(\infty))\}$, where $c_j(\infty)$ denotes the left edge of the limiting spectrum as $t \to \infty$. 

\paragraph{Step I. Uniform convergence of determinants.}For notational convenience, we define the determinant functions
\begin{align}
f_t(z)
&:= \det\left(-zI_r + \mathbb{E}\left[\frac{\delta G_t^{j2}}{\delta + G_t^j\alpha_t^j(z)}V_{*\leq}V_{*\leq}^\sT\right]\right)
\label{eq:limit_ft_def}\\
\text{and}\quad
f_\infty(z)
&:= \det\left(-zI_r + \mathbb{E}\left[\frac{\delta G_\infty^{j2}}{\delta + G_\infty^j\alpha_\infty^j(z)}V_{*\leq}V_{*\leq}^\sT\right]\right),
\label{eq:limit_finfty_def}
\end{align}
where $G_t^j := g(V(t), h(V_*, \varepsilon); j)$, $G_\infty^j := g(V(\infty), h(V_*, \varepsilon); j)$, and $\alpha_t^j(z)$, $\alpha_\infty^j(z)$ are the Stieltjes transforms of the finite-time and infinite-time measures $\mu_\infty^j(t)$, $\mu_\infty^j(\infty)$ with left edges $c_j(t)$, $c_j(\infty)$.
By construction, the outlier equations \eqref{equa_main} and \eqref{equa_main_limit} from Theorems~\ref{theorem:main_result} and~\ref{theorem:main_result_limit} correspond to $f_t(z) = 0$ and $f_\infty(z) = 0$.

\begin{lemma}\label{lemma:ft_to_finfty}
For any compact $\cK \subset \cD^\infty$, we have $\sup_{z \in \cK}|f_t(z) - f_\infty(z)| \to 0$ as $t \to \infty$.
\end{lemma}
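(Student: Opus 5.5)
The plan is to reduce the uniform convergence of the determinants to uniform convergence, on $\cK$, of the $r\times r$ matrix-valued functions
\[
\Phi_t(z):=\mathbb{E}\Big[\frac{\delta G_t^{j}}{\delta + G_t^j\alpha_t^j(z)}V_{*\leq}V_{*\leq}^\sT\Big],
\]
and of their $t=\infty$ analogue $\Phi_\infty(z)$. (The extra square on $G$ in the displayed definitions of $f_t,f_\infty$ is a typo; the argument is the same with either power, since $G$ is bounded.) Since $\det$ is locally Lipschitz on bounded sets of $r\times r$ matrices, and $|z|$ is bounded on $\cK$, it suffices to show $\sup_{z\in\cK}\|\Phi_t(z)-\Phi_\infty(z)\|\to 0$, together with a uniform bound $\sup_{t,z\in\cK}\|\Phi_t(z)\|<\infty$.

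\textbf{Step 1 (inputs).} By Assumption~\ref{assumption:saddle_2}, $V(t)\to V_\infty$ in $L^2$. Since $g(\cdot,y;j)$ is bounded and Lipschitz in its first argument (bounded $\ell',\ell''$, $\sigma',\sigma''$, $a_i$; if $\sigma''$ is only continuous, use bounded continuity plus convergence in probability), we get $G_t^j\to G_\infty^j$ in probability, with $|G_t^j|\le C$ uniformly. By Lemma~\ref{lemma:limiting_measure}, $\mu^j_\infty(t)\Rightarrow\mu^j_\infty(\infty)$ and $c_j(t)\to c_j(\infty)$. Hence for a compact $\cK\subset\cD^\infty$ there are $\varepsilon>0$ and $t_0$ such that $\operatorname{dist}(\cK,\supp\mu_\infty^j(t))\ge\varepsilon$ for all $t\ge t_0$.

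\textbf{Step 2 (Stieltjes transforms).} Because the measures are uniformly compactly supported (the $G$'s are bounded), weak convergence gives pointwise convergence $\alpha_t^j(z)\to\alpha_\infty^j(z)$ for $z\in\cK$. The family is uniformly bounded by $1/\varepsilon$ with derivative bounded by $1/\varepsilon^2$, so it is equicontinuous and the convergence is uniform on $\cK$ (Arzel\`a--Ascoli, or Lemma~\ref{lemma:change_limit_first} in deterministic form).

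\textbf{Step 3 (the main obstacle: denominators).} We need a uniform lower bound $|\delta+G_t^j\alpha_t^j(z)|\ge c_\cK>0$ over $t\ge t_0$, $z\in\cK$ and the support of $G_t^j$. For real $z$ below the edge this is the same fact used in Lemma~\ref{lem:bounded-denominators}: the edge characterization \eqref{eq:edge-variational} forces $\alpha\in(0,A_t^j)$, so $\delta+G\alpha>0$. What is not automatic is a \emph{quantitative} margin uniform in $t$. I would obtain it from the distance $\varepsilon$ to the edge, using the identity $1/(\delta+G\alpha)$ as the diagonal entry of the companion resolvent, whose norm is bounded by a function of the distance to the spectrum. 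For complex $z$ with $|\Im z|\le1$, argue by continuity and compactness. If a direct bound is delicate, first prove it for the limit $t=\infty$ and then propagate to large $t$ via Step 2 and $\|G_t^j-G_\infty^j\|\to0$ in probability, noting that a small set where $G_t$ deviates contributes negligibly since all quantities are bounded on it once one controls the denominator by the support of $\mathrm{Law}(G_t)$.

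\textbf{Step 4 (conclusion).} With the denominator bound, the integrand $\frac{\delta G}{\delta+G\alpha}V_{*\leq}V_{*\leq}^\sT$ is Lipschitz in $(G,\alpha)$, with a Lipschitz constant of order $\|V_{*\leq}\|^2$, which is integrable. Therefore
\[
\sup_{z\in\cK}\|\Phi_t(z)-\Phi_\infty(z)\|\lesssim \mathbb{E}\big[\|V_{*\leq}\|^2\,\big(|G_t-G_\infty|\wedge 1\big)\big]+\sup_{z\in\cK}|\alpha_t^j-\alpha_\infty^j| .
\]
The first term tends to $0$ by dominated convergence, and the second by Step 2. Combining with the local Lipschitz property of $\det$ proves the lemma.
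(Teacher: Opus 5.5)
Your proposal is correct and follows essentially the same route as the paper: reduce via the local Lipschitz property of $\det$ to the $r\times r$ matrix functions, get $\sup_{z\in\cK}|\alpha_t^j(z)-\alpha_\infty^j(z)|\to 0$ from $c_j(t)\to c_j(\infty)$ (Lemma~\ref{lemma:limiting_measure}) plus equicontinuity, and conclude by dominated convergence using $G_t^j\to G_\infty^j$. Your Step~3 makes explicit the $t$-uniform lower bound on $|\delta+G_t^j\alpha_t^j(z)|$, which the paper only asserts when it calls $\phi$ ``Lipschitz continuous and absolutely bounded in $\cK$''. There you should keep the companion-resolvent route, whose norm is controlled by the distance to the spectrum via $z(\bG\bX\bX^\sT-z\bI)^{-1}=-\bI+\bG\bX(\bX^\sT\bG\bX-z\bI)^{-1}\bX^\sT$. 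Drop the fallback of propagating the bound from $t=\infty$: it is circular, because $L^2$ convergence of $V(t)$ gives no control on $\supp\,\mathrm{Law}(G_t^j)$.
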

The proof is deferred to the end of this subsection.

\paragraph{Step II. Regime $\delta > \delta_j^*(\infty)$.}
By definition \eqref{equa_threshold_limit}, the equation $f_\infty(z) = 0$ has at least one negative solution $z < c_j(\infty)$. Let $z_1^*, \dots, z_p^* < c_j(\infty)$ denote the distinct negative solutions with multiplicities $k_1, \dots, k_p \ge 1$. Since $f_\infty$ is analytic (as the determinant of a matrix with analytic entries), its zeros are isolated. 

For each $i \in \{1, \dots, p\}$, choose a radius $r_i > 0$ small enough that the open disk
\begin{equation}\label{eq:limit_disk_def}
\mathsf{D}_i := \{z \in \mathbb{C}:\ |z - z_i^*| < r_i\}
\end{equation}
satisfies $\overline{\mathsf{D}}_i \subset \cD^\infty$, the disks $\{\mathsf{D}_i\}_{i=1}^p$ are pairwise disjoint, and $f_\infty$ has exactly $k_i$ zeros (counted with multiplicity) in $\mathsf{D}_i$ and no zeros on the boundary $\gamma_i := \partial\mathsf{D}_i$.
By Lemma~\ref{lemma:ft_to_finfty}, we have
\begin{equation}\label{eq:limit_ft_boundary_conv}
\sup_{z \in \gamma_i}|f_t(z) - f_\infty(z)| \to 0
\quad\text{as }t \to \infty.
\end{equation}
Define $m_i := \min_{z \in \gamma_i}|f_\infty(z)| > 0$. For all sufficiently large $t$, we have $\sup_{z \in \gamma_i}|f_t(z) - f_\infty(z)| < m_i/2$ by \eqref{eq:limit_ft_boundary_conv}. Therefore, for all $z \in \gamma_i$,
\begin{equation}\label{eq:limit_rouche_bound}
|f_t(z) - f_\infty(z)| < m_i/2 \le |f_\infty(z)|.
\end{equation}

By Rouch\'e's theorem, $f_t$ and $f_\infty$ have the same number of zeros (counted with multiplicity) inside $\gamma_i$. Since $f_\infty$ has exactly $k_i$ zeros in $\mathsf{D}_i$, so does $f_t$ for all large $t$.
Denote the zeros of $f_t$ in $\mathsf{D}_i$ as $\{z_i^{*(t,q)}\}_{q=1}^{k_i}$ (counted with multiplicity). To show convergence $z_i^{*(t,q)} \to z_i^*$, fix $\varepsilon > 0$ and choose $0 < \rho < r_i$ with $\rho < \varepsilon$. By the same Rouch\'e argument applied to the circle $\{z:\ |z - z_i^*| = \rho\}$, for all large $t$, $f_t$ has exactly $k_i$ zeros in $\{z:\ |z - z_i^*| < \rho\}$ and exactly $k_i$ zeros in $\mathsf{D}_i$. Therefore, all zeros of $f_t$ in $\mathsf{D}_i$ lie in $\{z:\ |z - z_i^*| < \rho\}$. Since $\varepsilon$ is arbitrary, we obtain the convergence for $t \to \infty$.

Next, by Theorem~\ref{theorem:main_result}, for each $q \in [k_i]$, there exist random eigenvalues $z_{n,q}^{(t)}$ of $\bH_j(t)$ such that
\begin{align}
\plim_{n, d \to \infty}\max_{q \in [k_i]}|z_{n,q}^{(t)} - z_i^{*(t,q)}| &= 0.
\label{eq:limit_Hess_eig_finite_t}
\intertext{Combining them via the triangle inequality, we obtain}
\lim_{t \to \infty}\plim_{n, d \to \infty}\max_{q \in [k_i]}|z_{n,q}^{(t)} - z_i^*| &= 0,
\label{eq:limit_final_eig}
\end{align}
which is exactly the eigenvalue convergence statement in Theorem~\ref{theorem:main_result_limit}.

\paragraph{Step III. Eigenspace convergence.}
Fix $i \in \{1, \dots, p\}$ and consider a large enough time $t$. We use the same notation as in Step II.

Let $\left\{\boldsymbol{\xi}_{n,\ell}^{(t)}\right\}_{\ell=1}^{k_i}$ denote orthonormal eigenvectors of $\bH_j(t)$ corresponding to eigenvalues inside $\gamma_i=\partial\mathsf{D}_i$. Denote the distinct zeros of $f_t(z)$ in $\mathsf{D}_i$ as $\left\{z_i^{(t,r)}\right\}_{r=1}^{q_i(t)}$ with multiplicities $m_{i,r}(t)$ satisfying $\sum_{r}m_{i,r}(t) = k_i$. 

For each $r$, let $U_{\leq,\mathrm{H}}^{(t,i,r)} \in \mathbb{R}^{r \times m_{i,r}(t)}$ be an orthonormal basis for the kernel
\begin{equation}\label{eq:limit_kernel_finite_t}
\ker\left(-z_i^{(t,r)}I_r + \mathbb{E}\left[\frac{\delta G_t^{j2}}{\delta + G_t^j\alpha_t^j(z_i^{(t,r)})}V_{*\leq}V_{*\leq}^\sT\right]\right).
\end{equation}
By Theorem~\ref{theorem:main_result}, for fixed $t$,
\begin{equation}\label{eq:limit_evec_finite_t}
\sum_{\ell=1}^{k_i}\|\bTheta_{*\mathrm{H}}^\sT\boldsymbol{\xi}_{n,\ell}^{(t)}\|^2
\xrightarrow{~p~}
\sum_{r=1}^{q_i(t)}\operatorname{Tr}\!\left(U_{\leq,\mathrm{H}}^{(t,i,r)}\left(U_{\leq,\mathrm{H}}^{(t,i,r)\sT}\big(I_r + S'_\mathrm{H}(t; z_i^{(t,r)})\big)U_{\leq,\mathrm{H}}^{(t,i,r)}\right)^{-1}U_{\leq,\mathrm{H}}^{(t,i,r)\sT}\right)
\end{equation}
as $n, d \to \infty$, where
\begin{equation}\label{eq:limit_Sprime_def}
S'_\mathrm{H}(t; z)
:= \mathbb{E}\left[\frac{\delta G_t^{j2}\alpha_t^{j'}(z)}{(\delta + G_t^j\alpha_t^j(z))^2}V_{*\leq}V_{*\leq}^\sT\right]
\end{equation}
and $\alpha_t^{j'}(z) := \frac{\partial}{\partial z}\alpha_t^j(z)$.

We now take $t \to \infty$. By Assumption~\ref{assumption:saddle_2}, we have $G_t^j \rightarrow G_\infty^j$ in $L^2$ (also in Wasserstein-2 sense), $\alpha_t^j \to \alpha_\infty^j$, and $\alpha_t^{j'} \to \alpha_\infty^{j'}$ uniformly on $\cK$.

Since $z_i^{(t,r)} \to z_i^*$ by Step II, dominated convergence yields
\begin{equation}\label{eq:limit_Sprime_conv}
S'_\mathrm{H}(t; z_i^{(t,r)}) \to S'_\mathrm{H}(\infty; z_i^*)
\quad\text{as }t \to \infty,
\end{equation}
where $S'_\mathrm{H}(\infty; z) := \mathbb{E}\left[\frac{\delta G_\infty^{j2}\alpha_\infty^{j'}(z)}{(\delta + G_\infty^j\alpha_\infty^j(z))^2}V_{*\leq}V_{*\leq}^\sT\right]$. Define $B_\infty^i := I_r + S'_\mathrm{H}(\infty; z_i^*)$.

Concatenating the bases, we form
\begin{equation}\label{eq:limit_Uti_def}
U^{(t,i)} := [U_{\leq,\mathrm{H}}^{(t,i,1)}\cdots U_{\leq,\mathrm{H}}^{(t,i,q_i(t))}] \in \mathbb{R}^{r \times k_i}, \quad \mathcal{S}_t^i:=\operatorname{span}(U^{(t,i)}).
\end{equation}
Since $A_t(\cdot)\to A_\infty(\cdot)$ uniformly on $\cK$ (Step I) and $z_i^{(t,r)}\to z_i^*$ (Step II), we have $A_t(z_i^{(t,r)})\to A_\infty(z_i^*)$ for each $r$. By the Davis-Kahan theorem (applied to the eigenspace of $A_\infty(z_i^*)$, which has dimension $k_i$), the corresponding $k_i$-dimensional subspaces $\mathcal{S}_t^i$ converge as $t\to\infty$ with the limit
\begin{equation}\label{eq:limit_Sinfty_def}
\mathcal{S}_\infty^i := \ker\left(-z_i^*I_r + \mathbb{E}\left[\frac{\delta G_\infty^{j2}}{\delta + G_\infty^j\alpha_\infty^j(z_i^*)}V_{*\leq}V_{*\leq}^\sT\right]\right).
\end{equation}
Let $U_\infty^i \in \mathbb{R}^{r \times k_i}$ be an orthonormal basis of $\mathcal{S}_\infty^i$.

Since the trace formula $U \mapsto \operatorname{Tr}(U(U^\sT B_\infty^iU)^{-1}U^\sT)$ is continuous on the Stiefel manifold and basis-invariant, we obtain
\begin{equation}\label{eq:limit_final_evec}
\lim_{t \to \infty}\plim_{n, d \to \infty}\sum_{\ell=1}^{k_i}\|\bTheta_{*\mathrm{H}}^\sT\boldsymbol{\xi}_{n,\ell}^{(t)}\|^2
= \operatorname{Tr}\!\left(U_\infty^i\big(U_\infty^{i\sT}B_\infty^iU_\infty^i\big)^{-1}U_\infty^{i\sT}\right),
\end{equation}
which matches the eigenvector correlation formula in Theorem~\ref{theorem:main_result_limit}.

\paragraph{Step IV. Regime $\delta < \delta_j^*(\infty)$.}

By definition \eqref{equa_threshold_limit}, $f_\infty(z) = 0$ has no negative solutions in the region $z < c_j(\infty)$. Since $f_\infty$ is continuous and nonvanishing on the compact set $\cK$,
\begin{equation}\label{eq:limit_m_bound}
\beta := \inf_{z \in \cK}|f_\infty(z)| > 0.
\end{equation}
By Lemma~\ref{lemma:ft_to_finfty}, for all sufficiently large $t$, we have $\sup_{z \in \cK}|f_t(z) - f_\infty(z)| < m/2$. 

Therefore, for all $z \in \cK$,
\begin{equation}\label{eq:limit_ft_lower}
|f_t(z)| \ge |f_\infty(z)| - |f_t(z) - f_\infty(z)| \ge \beta - \beta/2 = \beta/2 > 0,
\end{equation}
which means $f_t(z) \neq 0$ for all $z \in \cK$. By Theorem~\ref{theorem:main_result} (the case $\delta < \delta_j^*(t)$), for each large $t$, with probability $1 - o(1)$, the Hessian $\bH_j(t)$ has no eigenvalues in $\cK$ arising from the hard block. The conclusion follows since $\cK$ is arbitrary.

The conclusion for eigenvectors also follows from the finite-time result. Taking $t$ large enough and noting that the choice of $\cK$ is arbitrary (as long as $\cK \subset \cD^\infty$), we obtain
\begin{equation}\label{eq:limit_no_corr}
\plim_{n, d \to \infty}\|\bTheta_{*\mathrm{H}}^\sT\boldsymbol{\xi}_p\|^2 = 0
\end{equation}
for each fixed $p \ge 1$ and all large enough $t$, where $\boldsymbol{\xi}_p$ is the eigenvector associated with the $p$-th smallest eigenvalue of $\bH_j(t)$. This completes the proof of Theorem~\ref{theorem:main_result_limit}.\qed

\begin{proof}[Proof of Lemma~\ref{lemma:ft_to_finfty}]
Note that $V(t)\to V(\infty)$ in $L^2$ by Assumption~\ref{assumption:saddle_2}. Under Assumption~\ref{assumption_regularity}, the map $v\mapsto g(v,h(V_*,\varepsilon);j)$ is bounded and Lipschitz, therefore also $G_t^j\to G_\infty^j$ in $L^2$ (in particular, $G_t^j\Rightarrow G_\infty^j$). This yields $\mu_\infty^j(t)\Rightarrow \mu_\infty^j(\infty)$ and $c_j(t)\to c_j(\infty)$ (which is the statement of Lemma~\ref{lemma:limiting_measure}). 
    Since $\cK\subset \cD^\infty$ is at strictly positive distance from $\supp(\mu_\infty^j(\infty))$, for all sufficiently large $t$ we have $\cK \subset \cD^t := \{z:\ \Re(z) < \min(0, c_j(t))\}$. Therefore,
    \begin{equation}\label{eq:limit_stieltjes_conv}
    \sup_{z \in \cK}|\alpha_t^j(z) - \alpha_\infty^j(z)| \to 0
    \quad\text{as }t \to \infty,
    \end{equation}
    Define
    \begin{equation}\label{eq:limit_At_def}
    A_t(z) := -zI_r + \mathbb{E}\left[\frac{\delta G_t^{j2}}{\delta + G_t^j\alpha_t^j(z)}V_{*\leq}V_{*\leq}^\sT\right]
    \in \mathbb{R}^{r \times r},
    \end{equation}
    and similarly $A_\infty(z)$ with $t$ replaced by $\infty$. By the determinant Lipschitz bound, 
    \begin{equation}\label{eq:limit_det_lipschitz}
    |f_t(z) - f_\infty(z)|
    \lesssim \left(\|A_t(z)\|_{\mathrm{op}}^{r-1} + \|A_\infty(z)\|_{\mathrm{op}}^{r-1}\right)\|A_t(z) - A_\infty(z)\|_{\mathrm{op}}.
    \end{equation}
    From the definition of $\cK$, we have the uniform boundedness of $G_t^j$, $G_\infty^j$, $\alpha_t^j(z)$, $\alpha_\infty^j(z)$ on $\cK$ for large enough $t$, and also the uniform boundedness of the operator norms $\|A_t(z)\|_{\mathrm{op}}$ and $\|A_\infty(z)\|_{\mathrm{op}}$ over $z \in \cK$ and large enough $t$. Therefore, it suffices to show $\sup_{z \in \cK}\|A_t(z) - A_\infty(z)\|_{\mathrm{op}} \to 0$.
    
    Define $\phi(g, \alpha) := \frac{\delta g^2}{\delta + g\alpha}$, such that
    \begin{equation}\label{eq:limit_At_diff}
    A_t(z) - A_\infty(z)
    = \mathbb{E}\left[\big(\phi(G_t^j, \alpha_t^j(z)) - \phi(G_\infty^j, \alpha_\infty^j(z))\big)V_{*\leq}V_{*\leq}^\sT\right].
    \end{equation}
    For fixed $z$, we have $G_t^j \rightarrow G_\infty^j$ in $L^2$ and $\alpha_t^j(z) \to \alpha_\infty^j(z)$. The function $\phi$ is Lipschitz continuous and absolutely bounded in $\cK$, and $\|V_{*\leq}\|^2$ is integrable. By dominated convergence,
    \begin{equation}\label{eq:limit_pointwise}
    \|A_t(z) - A_\infty(z)\|_{\mathrm{op}} \to 0
    \quad\text{for each fixed }z \in \cK.
    \end{equation}
    
    To obtain uniform convergence over $z \in \cK$, note that by \eqref{eq:limit_stieltjes_conv}, the family $\{\alpha_t^j(\cdot)\}_{t \ge 1}$ is equicontinuous on $\cK$ for large enough $t$. Combined with the Lipschitz continuity of $\phi(\cdot)$ in $\alpha$ uniformly over the possible values of $g$, the functions $z \mapsto A_t(z)$ are also equicontinuous for large enough $t$. Therefore, pointwise convergence \eqref{eq:limit_pointwise} implies
    \begin{equation}\label{eq:limit_uniform}
    \sup_{z \in \cK}\|A_t(z) - A_\infty(z)\|_{\mathrm{op}} \to 0
    \quad\text{as }t \to \infty,
    \end{equation}
    which yields the result by \eqref{eq:limit_det_lipschitz}.
    \end{proof}

\section{Proofs of supporting lemmas}\label{sec:technical_lemmas}

This appendix provides proofs of the supporting technical lemmas used in Appendix \ref{sec:proof_result_iii}.

\subsection{Proof of Lemma \ref{lemma:distortion_trick}}\label{subsec:proof_distortion_trick}

\begin{proof}

First, define the high-probability event
\begin{equation}\label{eq:invertibility_event}
\mathcal{E}
:= 
\bigl\{\Re(z) \le \lambda_{\min}\left(\bR\widehat{\bG}\bR^\sT\right) - \varepsilon_{\cK} \bigr\},
\end{equation}
where $\varepsilon_{\cK}>0$ is the gap defined in the beginning of Section~\ref{subsec:concentration}. 

By Lemma~\ref{lemma:smallest_eigenvalue_concentration}, we have $\mathbb{P}(\mathcal{E})\to 1$ as $n,d\to\infty$. On the event $\mathcal{E}$, both resolvents are well-defined and satisfy the operator-norm bounds
\begin{equation}\label{eq:resolvent_op_bounds}
\bigl\|(\bR\widehat{\bG}\bR^\sT - z\bI)^{-1}\bigr\|_{\mathrm{op}}
\le \frac{1}{\varepsilon_{\cK}}
\quad\text{and}\quad
\bigl\|(\bR\widehat{\bG}\bR^\sT - (z+i\eta_n)\bI)^{-1}\bigr\|_{\mathrm{op}}
\le \frac{1}{\varepsilon_{\cK}}.
\end{equation}

Second, define the bilinear forms
\begin{align}
S(z)
&:= \ba^\sT \bR^\sT (\bR\widehat{\bG}\bR^\sT - z\bI)^{-1}\bR \bb
\nonumber\\
\text{and}\quad
S(z+i\eta_n)
&:= \ba^\sT \bR^\sT (\bR\widehat{\bG}\bR^\sT - (z+i\eta_n)\bI)^{-1}\bR \bb.
\label{eq:bilinear_forms}
\end{align}
Using the fact that $\bA^{-1}-\bB^{-1} = \bB^{-1}(\bB-\bA)\bA^{-1}$ with $\bA=\bR\widehat{\bG}\bR^\sT - z\bI$ and $\bB=\bR\widehat{\bG}\bR^\sT - (z+i\eta_n)\bI$, we obtain 
\begin{equation}\label{eq:perturbation_bound_step1}
\begin{aligned}
&\abs{S(z)-S(z+i\eta_n)}= \eta_n \abs{\ba^\sT \bR^\sT
(\bR\widehat{\bG}\bR^\sT - z\bI)^{-1}
(\bR\widehat{\bG}\bR^\sT - (z+i\eta_n)\bI)^{-1}
\bR \bb }
&&
\\[3pt]
&\qquad\le \eta_n \|\ba\|\|\bb\|\|\bR\|_{\mathrm{op}}^2
\bigl\|(\bR\widehat{\bG}\bR^\sT - z\bI)^{-1}\bigr\|_{\mathrm{op}}
\bigl\|(\bR\widehat{\bG}\bR^\sT - (z+i\eta_n)\bI)^{-1}\bigr\|_{\mathrm{op}}
&&
\\[3pt]
&\qquad\le \frac{\eta_n}{\varepsilon_{\cK}^2}\|\ba\|\|\bb\|\|\bR\|_{\mathrm{op}}^2.
&&\text{(bound \eqref{eq:resolvent_op_bounds})}
\end{aligned}
\end{equation}

Since $\bR\in\mathbb{R}^{n\times d_{\star}}$ has i.i.d. $\normal(0,1)$ entries (where we recall that $d_{\star}=d-k-m-mt$), there exists a constant $C$ such that, with probability $1-o(1)$
\cite{vershynin2018high}
\begin{equation}\label{eq:R_op_norm}
\|\bR\|_{\mathrm{op}} \le C\left(\sqrt{n}+\sqrt{d}\right)
\end{equation}
for some absolute constant $C>0$.  
Conditioning on the events $\mathcal{E}$ and \eqref{eq:R_op_norm} (which together hold with probability $1-o(1)$), and substituting into \eqref{eq:perturbation_bound_step1}, we obtain
\begin{equation}\label{eq:final_perturbation}
\abs{S(z)-S(z+i\eta_n)}
\ \le\
\frac{C'\eta_n(\sqrt{n}+\sqrt{d})^2}{\varepsilon_{\cK}^2}\|\ba\|\|\bb\|,
\end{equation}
for some universal constant $C'>0$ independent of $n,d$.

To conclude, we analyze the right-hand side of \eqref{eq:final_perturbation}. By the assumption of Lemma~\ref{lemma:distortion_trick}, we have $\|\ba\|, \|\bb\| = O_P(1/\sqrt{n})$, which implies $\|\ba\|\|\bb\| = O_P(1/n)$. In addition, we have $(\sqrt{n}+\sqrt{d})^2 = O(n)$.
Combining these observations, the bound \eqref{eq:final_perturbation} becomes
\begin{equation}\label{eq:final_scaling}
\abs{S(z)-S(z+i\eta_n)}
\le \frac{C'\eta_n \cdot O(n)}{\varepsilon_{\cK}^2} \cdot O_P(1/n)
= O_P(\eta_n).
\end{equation}
Since $\eta_n = 1/\log n \to 0$ as $n\to\infty$, we conclude $S(z+i\eta_n) - S(z) \xrightarrow{~p~} 0$. The convergence holds unconditionally since both events $\mathcal{E}$ and \eqref{eq:R_op_norm} occur with probability $1-o(1)$.
\end{proof}
\subsection{Proof of Lemma \ref{lemma:estimation_cross_term_variance}}\label{subsec:proof_cross_term_variance}

\begin{proof}
We establish the bound via decomposing the conditional second moment into diagonal and off-diagonal terms, then invoking Gaussian moment calculations and the leave-one-out technique. Throughout, we use the resolvent bound $\|\bQ_{-i}\|_{\mathrm{op}} \le (1/\eta_n)$ (since $\Im(z+i\eta_n) \geq \eta_n$).

First, recall from Section~\ref{subsec:concentration} that
\begin{equation}\label{eq:cross_term_def_reminder}
S_{\mathsf{cross}} = \sum_{i=1}^n T_i,
\quad\text{where}\quad
T_i := \frac{1}{n^2}\sum_{j\neq i} h_{ij}\br_i^\sT \bQ\br_j,
\quad
h_{ij} := n^2 a_i b_j.
\end{equation}
The second moment can be expanded as 
\begin{equation}\label{eq:second_moment_expansion}
\mathbb{E}\!\left[\abs{S_{\mathsf{cross}}}^2\,\middle|\,\cF_t\right]
= \mathbb{E}\left[S_{\mathsf{cross}}\overline{S_{\mathsf{cross}}}\,\middle|\,\cF_t\right]
= \sum_{i=1}^n \mathbb{E}\left[T_i\overline{T_i}\,\middle|\,\cF_t\right]
+ \sum_{i\neq j}\mathbb{E}\left[T_i\overline{T_j}\,\middle|\,\cF_t\right].
\end{equation}
We bound the two terms respectively.

For the off-diagonal terms, for $i\neq j$, we expand the product
\begin{equation}\label{eq:cross_off_diagonal}
\begin{aligned}
\mathbb{E}\left[T_i\overline{T_j}\,\middle|\,\cF_t\right]
&= \frac{1}{n^4}
\mathbb{E}\left[
\Bigl(\sum_{k\neq i} h_{ik}\br_i^\sT \bQ\br_k\Bigr)
\Bigl(\sum_{l\neq j} h_{jl}\overline{\br_j^\sT \bQ\br_l}\Bigr)\,\middle|\,\cF_t\right]
&&
\\
&= \frac{1}{n^4}
\mathbb{E}\left[
h_{ij}^2(\br_i^\sT \bQ\br_j)
\overline{(\br_i^\sT \bQ\br_j)}
\,\middle|\,\cF_t\right]
&&\text{(only $(k,l)=(j,i)$ survives)}
\end{aligned}
\end{equation}
where all other cross-indexed terms have zero expectations.

Conditioning on $\{\br_k\}_{k\neq i}$ (which determines both $\bQ_{-i}$ and $\br_j$ since $j\neq i$), and applying the leave-one-out formula $\br_i^\sT \bQ\br_j = \tfrac{\br_i^\sT \bQ_{-i}\br_j}{1+\widehat{g}_i\br_i^\sT \bQ_{-i}\br_i}$, we obtain
\begin{equation}\label{eq:off_diagonal_bound}
\begin{aligned}
\mathbb{E}\left[h_{ij}^2 \abs{\br_i^\sT \bQ \br_j}^2\,\middle|\,\cF_t\right]
&= \mathbb{E}\left[h_{ij}^2 \mathbb{E}\!\left[\abs{\br_i^\sT \bQ \br_j}^2\,\middle|\,\{\br_k\}_{k\neq i},\cF_t\right]\right]
&&
\\
&= \mathbb{E}\left[h_{ij}^2 \mathbb{E}\!\left[\abs{\frac{\br_i^\sT \bQ_{-i}\br_j}{1+\widehat{g}_i\br_i^\sT \bQ_{-i}\br_i}}^2\,\middle|\,\{\br_k\}_{k\neq i},\cF_t\right]\right]
&&\text{(leave-one-out)}
\\
& \leq \frac{C^2}{\eta_n^2}\mathbb{E}\left[h_{ij}^2\,\mathbb{E}\!\left[\abs{\br_i^\sT \bQ_{-i}\br_j}^2\,\middle|\,\{\br_k\}_{k\neq i},\cF_t\right]\right]    
&&\text{(Lemma~\ref{lem:bounded-denominators})}
\\
&= \frac{C^2}{\eta_n^2}\mathbb{E}\left[h_{ij}^2\,\mathbb{E}\!\left[\|\bQ_{-i}\br_j\|^2\,\middle|\,\{\br_k\}_{k\neq i},\cF_t\right]\right]
&&\text{($\br_i\sim\normal(\boldsymbol{0},\bI_{d_{\star}})$ indep.)}
\\
&= \frac{C^2}{\eta_n^2}\mathbb{E}\!\left[h_{ij}^2\|\bQ_{-i}\br_j\|^2\,\middle|\,\cF_t\right]
&&
\\
& \leq \frac{C^2}{\eta_n^2}\mathbb{E}\!\left[h_{ij}^2\|\bQ_{-i}\|_{\mathrm{op}}^2\|\br_j\|^2\,\middle|\,\cF_t\right]\leq \frac{d_{\star} C^2}{\eta_n^4}h_{ij}^2,
&&\text{($\|\bQ_{-i}\|_{\mathrm{op}}\le(1/\eta_n)$)}
\end{aligned}
\end{equation}
where $d_{\star}=d-k-m-mt$. Summing over all $i\neq j$ and defining $M_n := (1/n^2)\sum_{i,j}h_{ij}^2 = n^2\|\ba\|^2\|\bb\|^2$, we obtain the off-diagonal bound
\begin{equation}\label{eq:off_diagonal_sum}
\sum_{i\neq j}\mathbb{E}\left[T_i\overline{T_j}\,\middle|\,\cF_t\right]
= \frac{1}{n^4}\sum_{i\neq j} \mathbb{E}\left[h_{ij}^2 \abs{\br_i^\sT \bQ \br_j}^2\,\middle|\,\cF_t\right]
\leq \frac{d_{\star} M_nC^2}{n^2\eta_n^4}.
\end{equation}

For the diagonal terms, we write $T_i$ using the leave-one-out formula as
\begin{equation}\label{eq:T_i_leave_one_out}
T_i
= \frac{1}{n^2} \frac{1}{1+\widehat{g}_i \br_i^\sT \bQ_{-i}\br_i}\br_i^\sT \left(\bQ_{-i}\sum_{j\neq i} h_{ij}\br_j\right).
\end{equation}
Taking the expectation for $\abs{T_i}^2$ and conditioning on $\{\br_j\}_{j\neq i}$, we obtain
\begin{equation}\label{eq:diagonal_bound}
\begin{aligned}
\mathbb{E}\!\left[\abs{T_i}^2\,\middle|\,\cF_t\right]
&= \frac{1}{n^4}\mathbb{E}\!\left[\abs{\frac{1}{1+\widehat{g}_i \br_i^\sT \bQ_{-i}\br_i}\br_i^\sT \left(\bQ_{-i}\sum_{j\neq i} h_{ij}\br_j\right)}^2\,\middle|\,\cF_t\right]
&&
\\
&\leq \frac{C^2}{n^4\eta_n^2}\mathbb{E}\!\left[\abs{\br_i^\sT \left(\bQ_{-i}\sum_{j\neq i} h_{ij}\br_j\right)}^2\,\middle|\,\cF_t\right]
&&\text{(Lemma~\ref{lem:bounded-denominators})}
\\
& \leq  \frac{C^2}{n^4\eta_n^2}\mathbb{E}\!\left[\left\|\bQ_{-i}\sum_{j\neq i} h_{ij}\br_j\right\|^2\,\middle|\,\cF_t\right]
\leq \frac{C^2}{n^4\eta_n^4}\mathbb{E}\!\left[\left\|\sum_{j\neq i} h_{ij}\br_j\right\|^2\,\middle|\,\cF_t\right]
&&\text{($\|\bQ_{-i}\|\le(1/\eta_n)$)}
\\
&= \frac{C^2}{n^4\eta_n^4}\sum_{j\neq i} h_{ij}^2\mathbb{E}\!\left[\|\br_j\|^2\,\middle|\,\cF_t\right]
= \frac{d_{\star} C^2}{n^4\eta_n^4}\sum_{j\neq i} h_{ij}^2. &&\text{(independence)}
\end{aligned}
\end{equation}
Summing over all $i\in[n]$, we obtain
\begin{equation}\label{eq:diagonal_sum}
\sum_{i=1}^n \mathbb{E}\!\left[\abs{T_i}^2\,\middle|\,\cF_t\right]
\leq \sum_{i=1}^n \frac{d_{\star} C^2}{n^4\eta_n^4}\sum_{j\neq i} h_{ij}^2
\leq \frac{d_{\star} M_nC^2}{n^2\eta_n^4}.
\end{equation}

Finally, combining the diagonal bound \eqref{eq:diagonal_sum} and the off-diagonal bound \eqref{eq:off_diagonal_sum} into the equation \eqref{eq:second_moment_expansion}, we conclude
\begin{equation}\label{eq:final_variance_bound}
\mathbb{E}\!\left[\abs{S_{\mathsf{cross}}}^2\,\middle|\,\cF_t\right]
\leq \frac{d_{\star} M_nC^2}{n^2\eta_n^4}
+ \frac{d_{\star} M_nC^2}{n^2\eta_n^4}
= \frac{2d_{\star} M_nC^2}{n^2\eta_n^4}.
\end{equation}
This completes the proof.
\end{proof}

\subsection{Proof of Lemma \ref{lem:bounded-denominators}}\label{subsec:proof_denominator_bound}

\begin{proof}
Define the companion resolvent
\begin{equation}\label{eq:companion_resolvent}
\underline{\bQ}(z)
:= \left(\widehat{\bG}\bR^\sT\bR - z\bI_n\right)^{-1}
\in \mathbb{R}^{n\times n}.
\end{equation}
By the Woodbury matrix identity (or direct verification), we have the following resolvent relation
\begin{equation}\label{eq:companion_identity}
\underline{\bQ}(z)
= -\frac{1}{z}\bI_n
+ \frac{1}{z}\widehat{\bG}\bR^\sT \bQ(z)\bR,
\quad\text{where}\quad
\bQ(z) := \left(\bR^\sT\widehat{\bG}\bR - z\bI_{d_{\star}}\right)^{-1}.
\end{equation}
This formula couples the two resolvents acting on spaces of different dimensions.

Taking the $(i,i)$ diagonal entry of both sides of \eqref{eq:companion_identity}, we obtain
\begin{equation}\label{eq:diagonal_companion}
(\underline{\bQ}(z))_{ii}
= -\frac{1}{z}
+ \frac{\widehat{g}_i}{z}\br_i^\sT \bQ(z)\br_i,
\end{equation}
where $\br_i\in\mathbb{R}^{d_{\star}}$ is the $i$-th row of $\bR$ and $\widehat{g}_i$ is the $i$-th diagonal entry of $\widehat{\bG}$. We then invoke the leave-one-out formula,
\begin{equation}\label{eq:leave_one_out_substitution}
\br_i^\sT \bQ(z)\br_i
= \frac{\br_i^\sT \bQ_{-i}(z)\br_i}{1+\widehat{g}_i \br_i^\sT \bQ_{-i}(z)\br_i},
\end{equation}
and substituting into \eqref{eq:diagonal_companion}, we obtain
\begin{equation}\label{eq:denom_extraction}
(\underline{\bQ}(z))_{ii}
= -\frac{1}{z}
\frac{1}{1+\widehat{g}_i \br_i^\sT \bQ_{-i}(z)\br_i}.
\end{equation}
Rearranging yields
\begin{equation}\label{eq:denominator_formula}
1+\widehat{g}_i \br_i^\sT \bQ_{-i}(z)\br_i
= -\frac{1}{z(\underline{\bQ}(z))_{ii}}.
\end{equation}

For $z+i\eta$ with $\eta>0$, the matrix $\widehat{\bG}\bR^\sT\bR - (z+i\eta)\bI_n$ has eigenvalues bounded away from zero by $\eta$. Therefore, the following resolvent bound holds
\begin{equation}\label{eq:companion_op_bound}
\|\underline{\bQ}(z+i\eta)\|_{\mathrm{op}}
= \left\|\left(\widehat{\bG}\bR^\sT\bR - (z+i\eta)\bI_n\right)^{-1}\right\|_{\mathrm{op}}
\le \frac{1}{\eta}.
\end{equation}
From \eqref{eq:denominator_formula}, we have
\begin{equation}\label{eq:abs_denominator}
\begin{aligned}
\abs{1+\widehat{g}_i \br_i^\sT \bQ_{-i}(z+i\eta)\br_i}
&= \abs{\frac{1}{(z+i\eta)\,(\underline{\bQ}(z+i\eta))_{ii}}}
&&
\\
&\ge \frac{1}{\|z+i\eta\|}
\frac{1}{\|\underline{\bQ}(z+i\eta)\|_{\mathrm{op}}}
&&
\\
&\ge \frac{1}{C'}\eta
&&\text{(bound \eqref{eq:companion_op_bound} and $|z+i\eta|\le C'$)}
\end{aligned}
\end{equation}
for some constant $C'>0$ depending only on the magnitude of $z$. Since the considered domain of $z$ is bounded, $C'$ is an absolute constant.
\end{proof}

\subsection{Proof of Lemma \ref{lemma:dtilde_approximation}}\label{subsec:proof_conv_dtilde_d_approx}
\begin{proof}
    Write $\Delta := n/\delta - d_{\star}$. Since $n/\delta = \delta_n d/\delta = d + o(d)$ and $d_{\star} = d - C_0$, we have $|\Delta| = o(d)$.
    In addition, we have
    \begin{equation}\label{eq:conv_main_diff}
    \begin{aligned}
    \frac{\alpha(z)}{1 + d_{\star}\widehat{g}_i\alpha(z)} - \frac{\delta\alpha(z)}{\delta + n\widehat{g}_i\alpha(z)}
    &= \frac{\Delta \cdot \widehat{g}_i\alpha^2(z)}{(1 + d_{\star}\widehat{g}_i\alpha(z))(1 + n\widehat{g}_i\alpha(z)/\delta)}.
    \end{aligned}
    \end{equation}
    By Lemma~\ref{lem:bounded-denominators}, there exists $c_{\cK} > 0$ such that $|1 + d_{\star}\widehat{g}_i\alpha(z)| \ge c_{\cK}$ and $|\delta + n\widehat{g}_i\alpha(z)| \ge \delta c_{\cK}$ uniformly for $z \in \cK$ and $i\in [n]$. Since $|\widehat{g}_i| \le C/n$, $|\alpha(z)| \le C_\cK$, and $|\Delta| = o(d)$, we obtain from \eqref{eq:conv_main_diff}
    \begin{equation}\label{eq:conv_pointwise_bound}
    \left|\frac{\alpha(z)}{1 + d_{\star}\widehat{g}_i\alpha(z)} - \frac{\delta\alpha(z)}{\delta + n\widehat{g}_i\alpha(z)}\right|
    \le \frac{C_\cK \cdot |\Delta| \cdot (C/n) \cdot C_\cK}{c_{\cK}^2}
    = o(1).
    \end{equation}
    Summing over $i\in[n]$ and using $\|\ba\|\|\bb\| = O_P(1/n)$ we get
    \begin{equation}\label{eq:conv_total_error}
    \begin{aligned}
    &\left|\sum_{i=1}^n a_i b_i \left(\frac{\alpha(z)}{1 + d_{\star}\widehat{g}_i\alpha(z)} - \frac{\delta\alpha(z)}{\delta + n\widehat{g}_i\alpha(z)}\right)\right| \\
    &\quad\quad\quad\quad\le o(1) \sum_{i=1}^n |a_i b_i|
    \le o(1) \cdot \|\ba\| \|\bb\|  = o_P(1/d)
    &&\text{(Cauchy-Schwarz)} 
    \end{aligned}
    \end{equation}
    which establishes \eqref{eq:conv_dtilde_d_approx}.
\end{proof}

\subsection{Proof of Lemma \ref{lemma:det_lipschitz}}\label{subsec:proof_det_lipschitz}
\begin{proof}
    Recall the outlier matrix from Section~\ref{subsec:spike_basic}:
    \begin{equation}\label{eq:eig_Mj_structure}
    \bM_j(z;t)
    = \begin{bmatrix}
    \bZ^\sT\bG\bZ & \bI \\
    \bI & \boldsymbol{0}
    \end{bmatrix}^{-1}
    + \begin{bmatrix}
    \bTheta_{*;t}^\sT \\
    \bZ^\sT\bG\bY
    \end{bmatrix}
    \bR_0(z)
    \begin{bmatrix}
    \bTheta_{*;t} & \bY^\sT\bG\bZ
    \end{bmatrix},
    \end{equation}
    where $\bZ = \bX\bTheta_{*;t}(\bTheta_{*;t}^\sT\bTheta_{*;t})^{-1}$, $\bY = \bP_{\bF_{0;t}}^{\perp}\bX_{\text{new}}\bP_{\bTheta_{*;t}}^{\perp}$, $\bG = \bG_j(t)$, and $\bTheta_{*;t}, \bF_{0;t}$ are as defined in Section~\ref{subsec:spike_basic}. The resolvent $\bR_0(z) := (\bY^\sT\bG\bY - z\bI)^{-1}$ was introduced in Section~\ref{subsec:spike_basic}.
    
    By the resolvent identity, for $z, z' \in \cK$,
    \begin{equation}\label{eq:eig_resolvent_identity}
    \bR_0(z) - \bR_0(z')
    = (z' - z)\bR_0(z)\bR_0(z').
    \end{equation}
    From the Lemma~\ref{lemma:smallest_eigenvalue_concentration}, for $z \in \cK$ we have $\min_{i\in[n]}|\lambda_i(\bY^\sT\bG\bY) - z| \ge \varepsilon_{\cK}$ with probability $1-o(1)$, where $\varepsilon_{\cK} > 0$ depends on $\cK$ but is uniform over $n,d$. Therefore,
    \begin{equation}\label{eq:eig_resolvent_bound}
    \sup_{z \in \cK}\|\bR_0(z)\|_{\mathrm{op}}
    \le \frac{1}{\varepsilon_{\cK}}
    \quad\text{with probability }1 - o(1).
    \end{equation}
    Combining \eqref{eq:eig_resolvent_identity} and \eqref{eq:eig_resolvent_bound}, we obtain
    \begin{equation}\label{eq:eig_resolvent_diff}
    \|\bR_0(z) - \bR_0(z')\|_{\mathrm{op}}
    \le \frac{|z - z'|}{\varepsilon_{\cK}^2}
    \quad\text{with probability }1 - o(1).
    \end{equation}
    
    Define the block matrices
    \begin{equation}\label{eq:eig_blocks}
    \bA := \begin{bmatrix}
    \bTheta_{*;t}^\sT \\
    \bZ^\sT\bG\bY^\sT
    \end{bmatrix},
    \quad
    \bB := \begin{bmatrix}
    \bTheta_{*;t} & \bY\bG\bZ
    \end{bmatrix},
    \quad
    \bC := \begin{bmatrix}
    \bZ^\sT\bG\bZ & \bI \\
    \bI & \boldsymbol{0}
    \end{bmatrix}^{-1}.
    \end{equation}
    From \eqref{eq:eig_Mj_structure}, the $z$-dependence enters only through $\bR_0(z)$:
    \begin{equation}\label{eq:eig_Mj_diff}
    \bM_j(z;t) - \bM_j(z';t)
    = \bA(\bR_0(z) - \bR_0(z'))\bB.
    \end{equation}
    By submultiplicativity and \eqref{eq:eig_resolvent_diff},
    \begin{equation}\label{eq:eig_Mj_op_diff}
    \|\bM_j(z;t) - \bM_j(z';t)\|_{\mathrm{op}}
    \le \|\bA\|_{\mathrm{op}}\|\bR_0(z) - \bR_0(z')\|_{\mathrm{op}}\|\bB\|_{\mathrm{op}}
    \le \frac{|z - z'|}{\varepsilon_{\cK}^2}\|\bA\|_{\mathrm{op}}\|\bB\|_{\mathrm{op}}.
    \end{equation}
    By Lemma~\ref{lemma_amp_data} and the results established in Section~\ref{subsec:spike_basic},
    \begin{equation}\label{eq:eig_AB_bounds}
    \|\bA\|_{\mathrm{op}} = O_P(1),
    \quad
    \|\bB\|_{\mathrm{op}} = O_P(1),
    \quad
    \sup_{z \in \cK}\|\bM_j(z;t)\|_{\mathrm{op}} = O_P(1).
    \end{equation}
    Therefore, from \eqref{eq:eig_Mj_op_diff},
    \begin{equation}\label{eq:eig_Mj_Lipschitz_op}
    \|\bM_j(z;t) - \bM_j(z';t)\|_{\mathrm{op}}
    \le C_n |z - z'|
    \quad\text{where }C_n = O_P(1).
    \end{equation}
    
    Finally, invoke the standard determinant Lipschitz bound: for matrices $\bX, \bY \in \mathbb{R}^{p \times p}$,
    \begin{equation}\label{eq:eig_det_Lipschitz_general}
    |\det(\bX) - \det(\bY)|
    \lesssim (\|\bX\|_{\mathrm{op}}^{p-1} + \|\bY\|_{\mathrm{op}}^{p-1})\|\bX - \bY\|_{\mathrm{op}}.
    \end{equation}
    Here for $\bM_j(z;t)$ the dimension $p=O(1)$. Applying \eqref{eq:eig_det_Lipschitz_general} with $\bX = \bM_j(z;t)$ and $\bY = \bM_j(z';t)$, and using \eqref{eq:eig_AB_bounds} and \eqref{eq:eig_Mj_Lipschitz_op}, we obtain \eqref{eq:eig_det_lipschitz}.
    \end{proof}

\subsection{Proof of Lemma \ref{lemma:uniform_integrand}}\label{subsec:proof_uniform_integrand}

We first establish that $\bM_j(z;t)^{-1}$ converges to $M_j^\infty(z;t)^{-1}$ 
uniformly over $z\in \gamma$.
\begin{lemma}\label{lemma:uniform_inverse_convergence}
Choose the contour $\gamma$ such that $\inf_{z \in \gamma}|\det(M_j^\infty(z;t))| > 0$. Then
\begin{equation}\label{eq:evec_uniform_inverse}
\sup_{z \in \gamma}\|\bM_j(z;t)^{-1} - M_j^\infty(z;t)^{-1}\|_{\mathrm{op}}
\xrightarrow{~p~} 0.
\end{equation}
\end{lemma}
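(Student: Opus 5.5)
The plan is to deduce uniform convergence of inverses from uniform convergence of the matrices themselves, combined with a uniform lower bound on the determinant of the limit along $\gamma$. Since $\gamma$ is a compact subset of $\cD$, two steps suffice. First, show $\sup_{z\in\gamma}\|\bM_j(z;t)-M_j^\infty(z;t)\|_{\mathrm{op}}\xrightarrow{~p~}0$. Second, transfer this to inverses using the bounded inverse of the limit.

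\textbf{Step 1: uniform convergence of the matrices.} This is an application of Lemma~\ref{lemma:change_limit_first}, entrywise. Lemma~\ref{lemma:limiting_spike_matrix} gives pointwise convergence $\bM_j(z;t)\xrightarrow{~p~}M_j^\infty(z;t)$ for each $z\in\cD$. The proof of Lemma~\ref{lemma:det_lipschitz}, through \eqref{eq:eig_Mj_Lipschitz_op}, already gives the random Lipschitz bound $\|\bM_j(z;t)-\bM_j(z';t)\|_{\mathrm{op}}\le C_n|z-z'|$ with $C_n=O_P(1)$, on an event of probability $1-o(1)$. The limit $M_j^\infty(\cdot;t)$ is continuous, indeed analytic, on $\gamma$, because $\alpha_t^j$ is analytic away from the support and the denominators $\delta+G_t^j\alpha_t^j(z)$ are bounded away from zero there (Lemma~\ref{lem:bounded-denominators}). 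Since the dimension $2(k+m+mt)$ is fixed, entrywise uniform convergence implies uniform convergence in operator norm.

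\textbf{Step 2: from matrices to inverses.} By compactness and continuity, $\kappa:=\sup_{z\in\gamma}\|M_j^\infty(z;t)^{-1}\|_{\mathrm{op}}<\infty$. This follows from $\inf_{\gamma}|\det M_j^\infty|>0$, the bound $\sup_\gamma\|M_j^\infty\|_{\mathrm{op}}<\infty$, and the adjugate formula. Set $\Delta_n(z):=\bM_j(z;t)-M_j^\infty(z;t)$ and work on the event $\{\sup_\gamma\|\Delta_n\|_{\mathrm{op}}\le 1/(2\kappa)\}$, whose probability tends to one by Step 1. On this event a Neumann series shows that $\bM_j(z;t)=M_j^\infty(I+(M_j^\infty)^{-1}\Delta_n)$ is invertible for every $z\in\gamma$. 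It also gives the bound
\begin{equation}
\|\bM_j(z;t)^{-1}-M_j^\infty(z;t)^{-1}\|_{\mathrm{op}}\le \frac{\kappa^2\|\Delta_n(z)\|_{\mathrm{op}}}{1-\kappa\|\Delta_n(z)\|_{\mathrm{op}}}\le 2\kappa^2\sup_{\gamma}\|\Delta_n\|_{\mathrm{op}},
\end{equation}
which tends to zero in probability, uniformly in $z\in\gamma$.

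\textbf{Main obstacle.} The delicate point is Step 1, specifically making the Lipschitz constant uniform over $\gamma$. The bound \eqref{eq:eig_resolvent_bound} on $\|\bR_0(z)\|_{\mathrm{op}}$ needs $\gamma$ to stay at positive distance from the spectrum of $\bY^\sT\bG\bY$. This requires choosing $\gamma\subset\cD$ with $\Re z\le\min(c(t),0)-2\varepsilon$ and invoking Lemma~\ref{lemma:smallest_eigenvalue_concentration}, together with the fact that $\bY^\sT\bG\bY$ differs from $\bX_{\mathrm{new}}^\sT\bG\bX_{\mathrm{new}}$ only through projections that do not lower $\lambda_{\min}$ below the edge. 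That lower-bound argument, via the interlacing in Lemma~\ref{lem:spike_hatmu_conv} and the representation \eqref{eq:spike_projection_identity}, is the step I would check most carefully. Apart from it, $\|\bA\|_{\mathrm{op}},\|\bB\|_{\mathrm{op}}=O_P(1)$ follow from Lemmas~\ref{lemma_amp_data} and \ref{lemma_amp_feature}, as in \eqref{eq:eig_AB_bounds}.
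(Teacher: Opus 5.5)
Your proposal is correct and follows essentially the paper's argument: uniform convergence of $\bM_j(z;t)$ on $\gamma$, a uniform bound on $\|M_j^\infty(z;t)^{-1}\|_{\mathrm{op}}$ from the determinant lower bound, and the perturbation bound for the difference of inverses. Your Neumann-series step replaces the paper's determinant-based bound on $\|\bM_j(z;t)^{-1}\|_{\mathrm{op}}$, and your Step 1 makes explicit the pointwise-to-uniform upgrade via Lemma~\ref{lemma:change_limit_first} and the Lipschitz estimate \eqref{eq:eig_Mj_Lipschitz_op}, which the paper uses only implicitly when it cites the pointwise Lemma~\ref{lemma:limiting_spike_matrix} for uniform convergence.
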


\begin{proof}
By Lemma~\ref{lemma:limiting_spike_matrix}, we have $\sup_{z \in \gamma}\|\bM_j(z;t) - M_j^\infty(z;t)\|_{\mathrm{op}} \xrightarrow{~p~} 0$. Since $\inf_{z \in \gamma}|\det(M_j^\infty(z;t))| > 0$ and $\det(M_j^\infty(z;t))$ is continuous on the compact set $\gamma$, there exists $c_\gamma > 0$ such that $\sup_{z \in \gamma}\|M_j^\infty(z;t)^{-1}\|_{\mathrm{op}} \le c_\gamma^{-1}$. By uniform convergence, with probability $1 - o(1)$, we have $\inf_{z \in \gamma}|\det(\bM_j(z;t))| \ge c_\gamma/2 > 0$, hence $\sup_{z \in \gamma}\|\bM_j(z;t)^{-1}\|_{\mathrm{op}} \le C_\gamma$ for some $C_\gamma < \infty$, and large enough $n$.

Using the identity $\bM_j(z;t)^{-1} - M_j^\infty(z;t)^{-1} = \bM_j(z;t)^{-1}(M_j^\infty(z;t) - \bM_j(z;t))M_j^\infty(z;t)^{-1}$ and submultiplicativity,
\begin{equation}\label{eq:evec_inverse_diff}
\begin{aligned}
\|\bM_j(z;t)^{-1} - M_j^\infty(z;t)^{-1}\|_{\mathrm{op}}
&\le \|\bM_j(z;t)^{-1}\|_{\mathrm{op}}\|M_j^\infty(z;t) - \bM_j(z;t)\|_{\mathrm{op}}\|M_j^\infty(z;t)^{-1}\|_{\mathrm{op}} \\
&\le C_\gamma \cdot \|\bM_j(z;t) - M_j^\infty(z;t)\|_{\mathrm{op}} \cdot c_\gamma^{-1},
\end{aligned}
\end{equation}
which converges to zero uniformly in probability over $z \in \gamma$ by Lemma~\ref{lemma:limiting_spike_matrix}.
\end{proof}

\begin{proof}[Proof of Lemma~\ref{lemma:uniform_integrand}]
    We decompose $F_{n,d}(z)$ as a product of three parts:
    \begin{equation}\label{eq:evec_F_decomposition}
    F_{n,d}(z) = \bA(z)^\sT \cdot \bM_j(z;t)^{-1} \cdot \bA(z),
    \end{equation}
    where
    \begin{equation}\label{eq:evec_A_def}
    \bA(z) := \begin{bmatrix}\bTheta_{*;t} & \bY^\sT\bG\bZ\end{bmatrix}^\sT \bR_0(z)\bTheta_{*\mathrm{H}}
    = \begin{bmatrix}
    \bTheta_{*;t}^\sT\bR_0(z)\bTheta_{*\mathrm{H}} \\
    \bZ^\sT\bG\bY\bR_0(z)\bTheta_{*\mathrm{H}}
    \end{bmatrix}
    \in \mathbb{R}^{2(k+m+mt) \times r}.
    \end{equation}
    We establish uniform convergence with respect to $z \in \gamma$ for $\bA(z)$.

First, by Section~\ref{subsec:spike_basic}, specifically \eqref{eq:spike_11_block}, we have
    \begin{equation}\label{eq:evec_R0_on_W}
    \bTheta_{*;t}^\sT\bR_0(z)\bTheta_{*;t} = \bTheta_{*;t}^\sT(\bY^\sT\bG\bY - z\bI)^{-1}\bTheta_{*;t}
    = -\frac{1}{z}\bTheta_{*;t}^\sT\bTheta_{*;t}.
    \end{equation}
    By Lemma~\ref{lemma_amp_feature}, $\bTheta_{*\mathrm{H}}^\sT\bTheta_{*;t} \xrightarrow{~p~} [I_r \ 0]$. Therefore,
    \begin{equation}\label{eq:evec_upper_convergence}
    \bTheta_{*;t}^\sT\bR_0(z)\bTheta_{*\mathrm{H}}
    = -\frac{1}{z}\bTheta_{*;t}^\sT\bTheta_{*\mathrm{H}}
    \xrightarrow{~p~} -\frac{1}{z}[I_r \ 0]^\sT
    \end{equation}
    uniformly over $z \in \gamma$ (since $\gamma$ is compact and bounded away from $z = 0$).
    
    Second, recall that $\bZ^\sT\bG\bY\bR_0(z)\bTheta_{*;t} = 0$. Since $\bTheta_{*\mathrm{H}}$ consists of the first $r$ columns of $\bTheta_{*;t}$, we have immediately
    \begin{equation}\label{eq:evec_lower_limit}
    \bZ^\sT\bG\bY\bR_0(z)\bTheta_{*\mathrm{H}} = 0
    \end{equation}
    for all $z \in \gamma$.
    
    Combining \eqref{eq:evec_upper_convergence} and \eqref{eq:evec_lower_limit}, we obtain
    \begin{equation}\label{eq:evec_A_convergence}
    \sup_{z \in \gamma}\left\|\bA(z) - A_\infty(z)\right\|_{\mathrm{op}}
    \xrightarrow{~p~} 0,
    \end{equation}
    where
    \begin{equation}\label{eq:evec_A_infinity}
    A_\infty(z) := -\frac{1}{z}\begin{bmatrix} I_r \\ 0 \end{bmatrix} \in \mathbb{R}^{2(k+m+mt) \times r}.
    \end{equation}
    By Lemma~\ref{lemma:uniform_inverse_convergence}, we have
    \begin{equation}\label{eq:evec_M_convergence_recall}
    \sup_{z \in \gamma}\|\bM_j(z;t)^{-1} - M_j^\infty(z;t)^{-1}\|_{\mathrm{op}}
    \xrightarrow{~p~} 0.
    \end{equation}
    Moreover, $\bA(z)$, $\bM_j(z;t)^{-1}$, and their limits are uniformly $O_P(1)$ on $\gamma$.
    
    From the decomposition
    \begin{equation}\label{eq:evec_F_difference}
    \begin{aligned}
    F_{n,d}(z) - F_\infty(z)
    &= \bA(z)^\sT\bM_j(z;t)^{-1}\bA(z) - A_\infty(z)^\sT M_j^\infty(z;t)^{-1}A_\infty(z) \\
    &= (\bA(z) - A_\infty(z))^\sT\bM_j(z;t)^{-1}\bA(z) \\
    &+ A_\infty(z)^\sT(\bM_j(z;t)^{-1} - M_j^\infty(z;t)^{-1})\bA(z) \\
    &+ A_\infty(z)^\sT M_j^\infty(z;t)^{-1}(\bA(z) - A_\infty(z)),
    \end{aligned}
    \end{equation}
    we obtain by submultiplicativity and \eqref{eq:evec_A_convergence}, \eqref{eq:evec_M_convergence_recall} that
    \begin{equation}\label{eq:evec_F_uniform_final}
    \sup_{z \in \gamma}\|F_{n,d}(z) - F_\infty(z)\|_{\mathrm{op}}
    \xrightarrow{~p~} 0,
    \end{equation}
    which completes the proof of \eqref{eq:evec_uniform_F}.
    \end{proof}

\section{Additional experiments}\label{sec:additional_simulation}
\subsection{Results for networks with moderately large width}

Numerically verifying our results and thresholds for large width networks is computationally prohibitive. Nevertheless, we provide numerical simulations for moderately large width networks to give insights into the phase transition phenomena.

We extend our experiments to multi-neuron networks. For these experiments, we still use $\GeLU$ activation, Huber loss with parameter $M=1$, and fixed learning rate $\eta=1.5$. We consider widths $m\in\{1,2,3,5,10\}$ with $a_j=1$ and $b_j=0$ for all neurons $j\in[m]$. The correlation is defined as the maximum over all $m$ neurons:
\begin{equation}
\rho(t) := \max_{j\in[m]}\left|\frac{\btheta_j(t)^\sT\btheta_*}{\|\btheta_j(t)\|}\right|.
\end{equation}
Figure~\ref{fig:phase_transition_m5} presents the success rate and the final correlation for $m=5$ across varying dimensions, exhibiting sharp phase transitions around $\delta=3.5$.

We further investigate the proportional scaling regime where the network width grows proportionally with dimension, maintaining a fixed ratio $m/d = 0.01$. In these experiments, we consider pairs $(m,d) \in \{(4,400), (6,600), (8,800), (10,1000), (12,1200)\}$, with the same experimental setup as before: $\GeLU$ activation, Huber loss with parameter $M=1$, learning rate $\eta=1.5$, and all biases set to zero.
Figure~\ref{fig:proportional_width} shows the phase transition behavior under proportional width scaling. Interestingly, we observe that as both $m$ and $d$ increase proportionally, the phase transition threshold shifts a bit to the right.

Figure~\ref{fig:phase_transition_large_width} (left panel) shows success rate versus $\delta$ for different widths at fixed dimension $d=1000$. As width increases, the phase transition threshold changes. Whether the threshold continues to decrease as width increases remains an open question due to computational constraints that limit us to moderately large widths and dimensions.
Figure~\ref{fig:phase_transition_large_width} (right panel) shows the final correlation for the same widths at fixed dimension $d=1000$.

\begin{figure}[!t]
  \centering
  \includegraphics[width=0.49\textwidth]{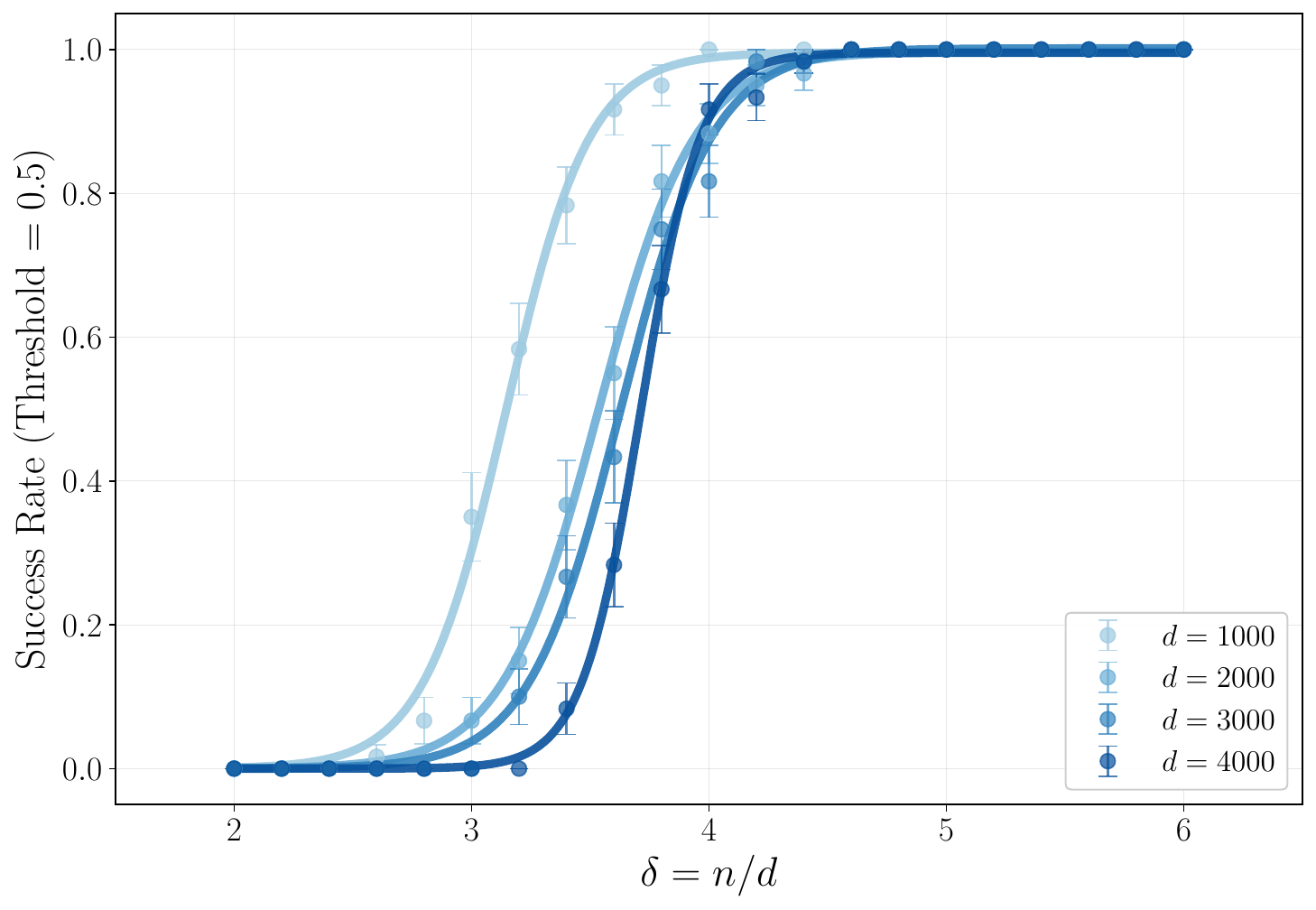}
  \hfill
  \includegraphics[width=0.49\textwidth]{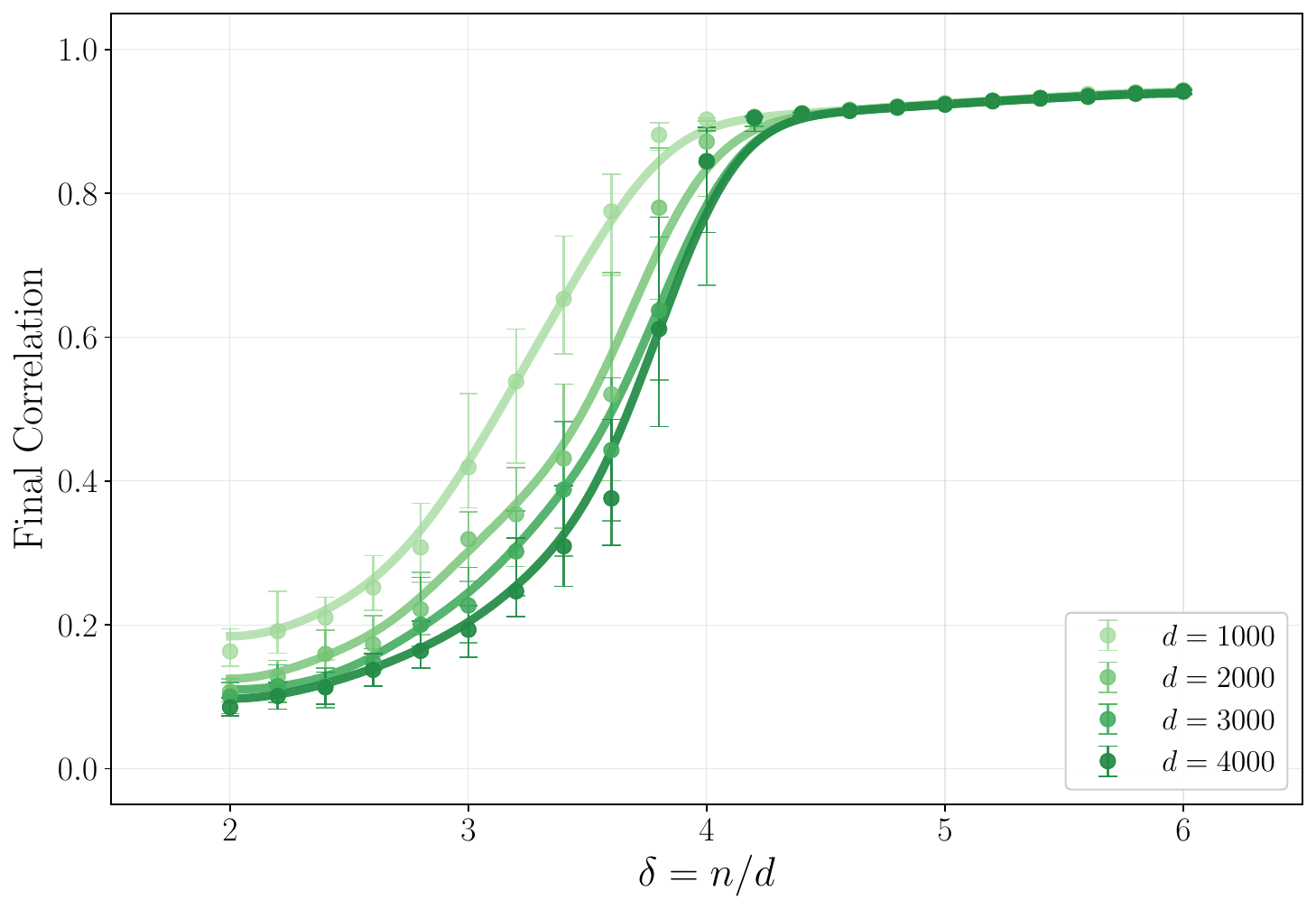}
  \vspace{-0.1cm}
  \caption{Phase transitions for moderately large width learners with $\GeLU$ activation. Left: Success rate for $m=5$, showing sharp transitions around $\delta \approx 3.5$. Right: Final correlation.}
  \label{fig:phase_transition_m5}
\end{figure}

\begin{figure}[!t]
  \centering
  \includegraphics[width=0.49\textwidth]{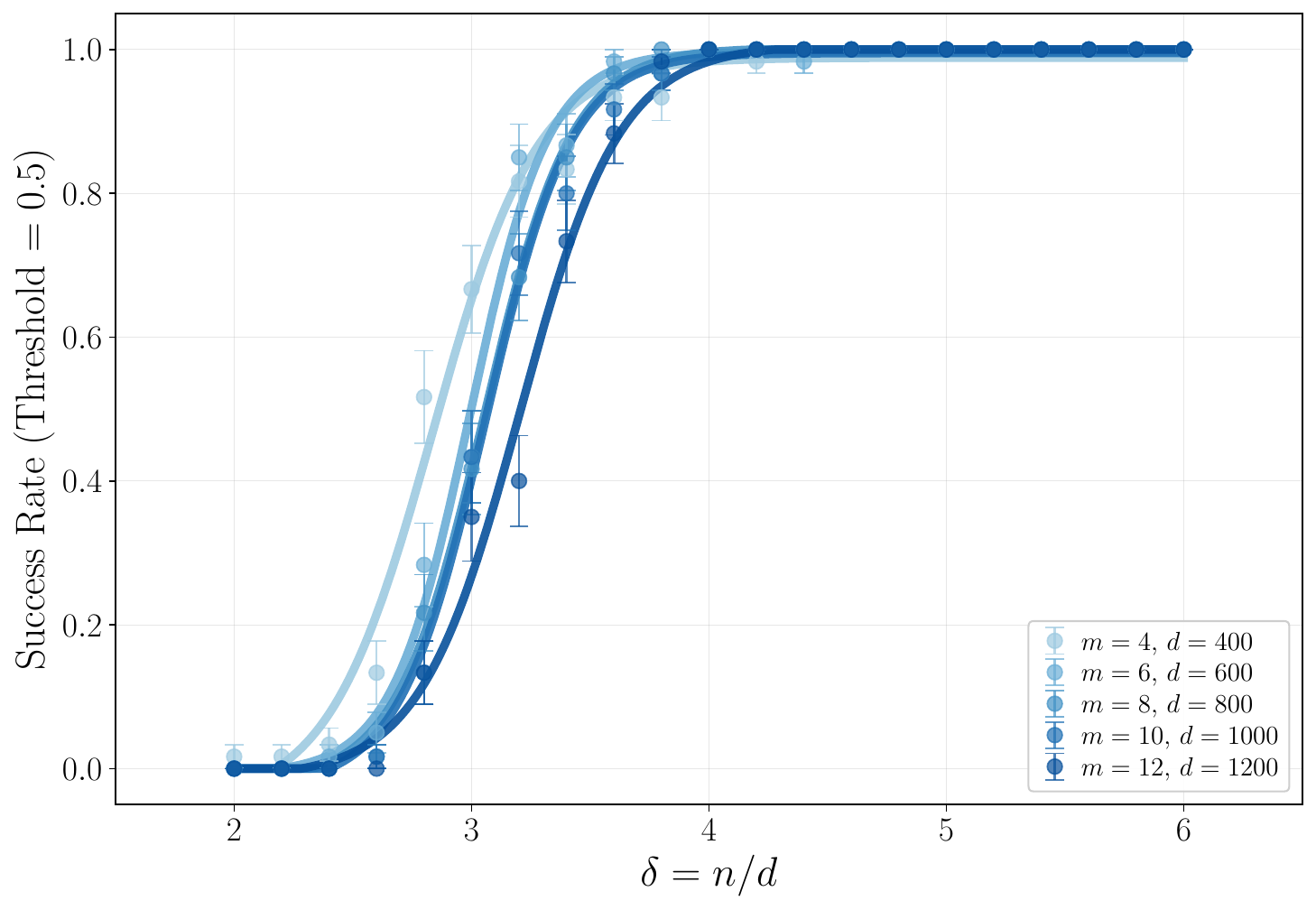}
  \hfill
  \includegraphics[width=0.49\textwidth]{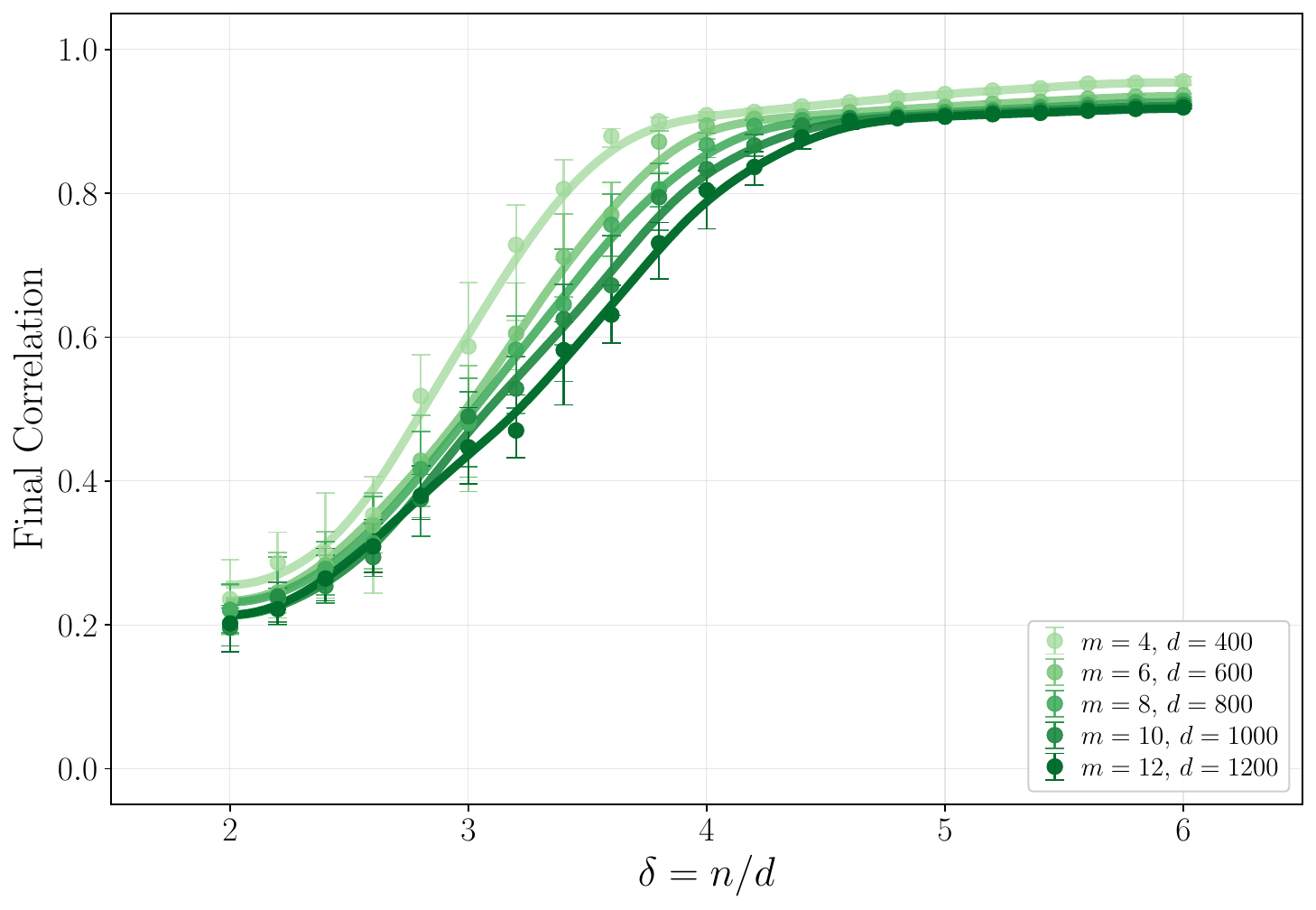}
  \vspace{-0.1cm}
  \caption{Phase transitions for proportional width scaling with $m/d = 0.01$. Left: Success rate (threshold $0.5$). Right: Final correlation (median with 30th/70th percentiles).}
  \label{fig:proportional_width}
\end{figure}
\begin{figure}[!t]
  \centering
  \includegraphics[width=0.49\textwidth]{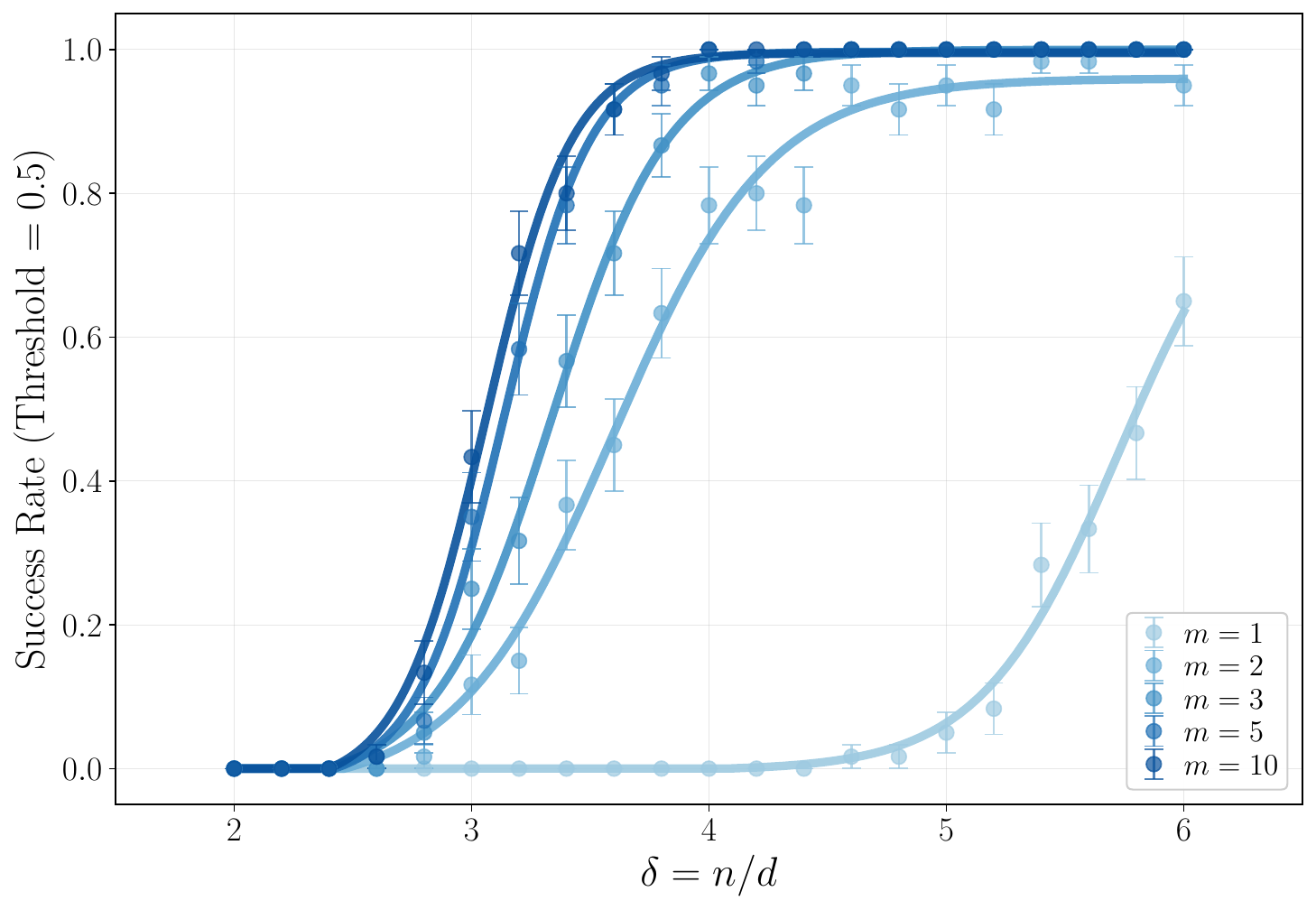}
  \hfill
  \includegraphics[width=0.49\textwidth]{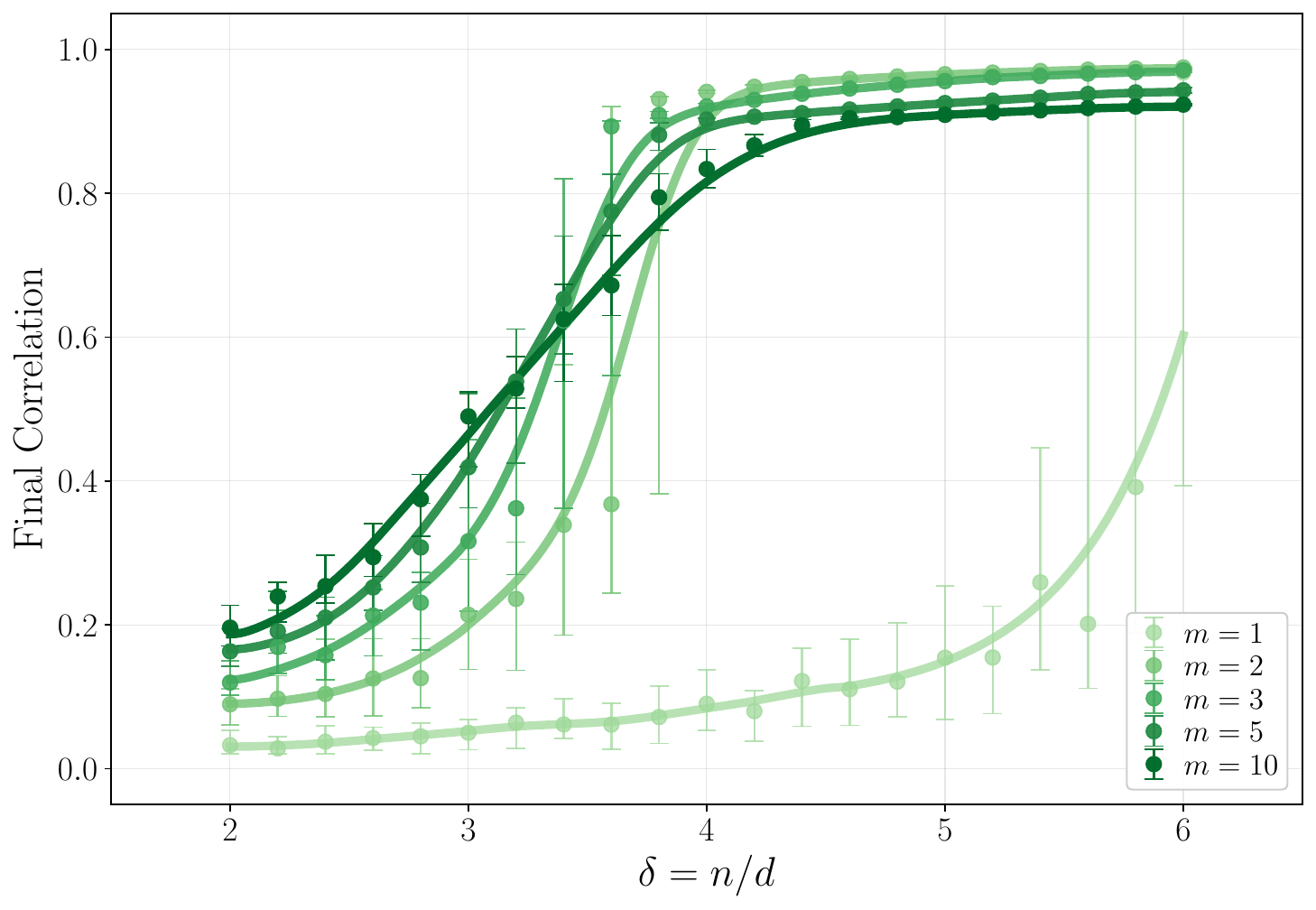}
  \vspace{-0.1cm}
  \caption{Phase transitions for $\GeLU$ activation with varying width at dimension $d=1000$. Left: Success rate with sigmoid fits showing how phase transitions change with increasing width. Right: Final correlation (median with 30th/70th percentiles) for varying widths.}
  \label{fig:phase_transition_large_width}
\end{figure}

\subsection{Grokking across activation functions}

Figure~\ref{fig:grokking_quad_relu} demonstrates that the same grokking 
phenomenon discussed in the main text extends to other activation functions. We show learning dynamics for $\mathsf{Quad}$ (left, $\delta=10$, $\eta=0.15$) and $\mathsf{ReLU}$ (right, $\delta=17.5$, $\eta=0.5$), both with $d=5000$. Both experiments exhibit the two-stage pattern: an initial phase where the misalignment $1-\rho(t)$ remains close to $1$ with a constant gap between training and test losses, followed by a rapid transition where the network learns the signal direction and achieves near-perfect generalization.

\begin{figure}[!t]
  \centering
  \begin{minipage}{0.48\textwidth}
    \centering
    \includegraphics[width=\textwidth]{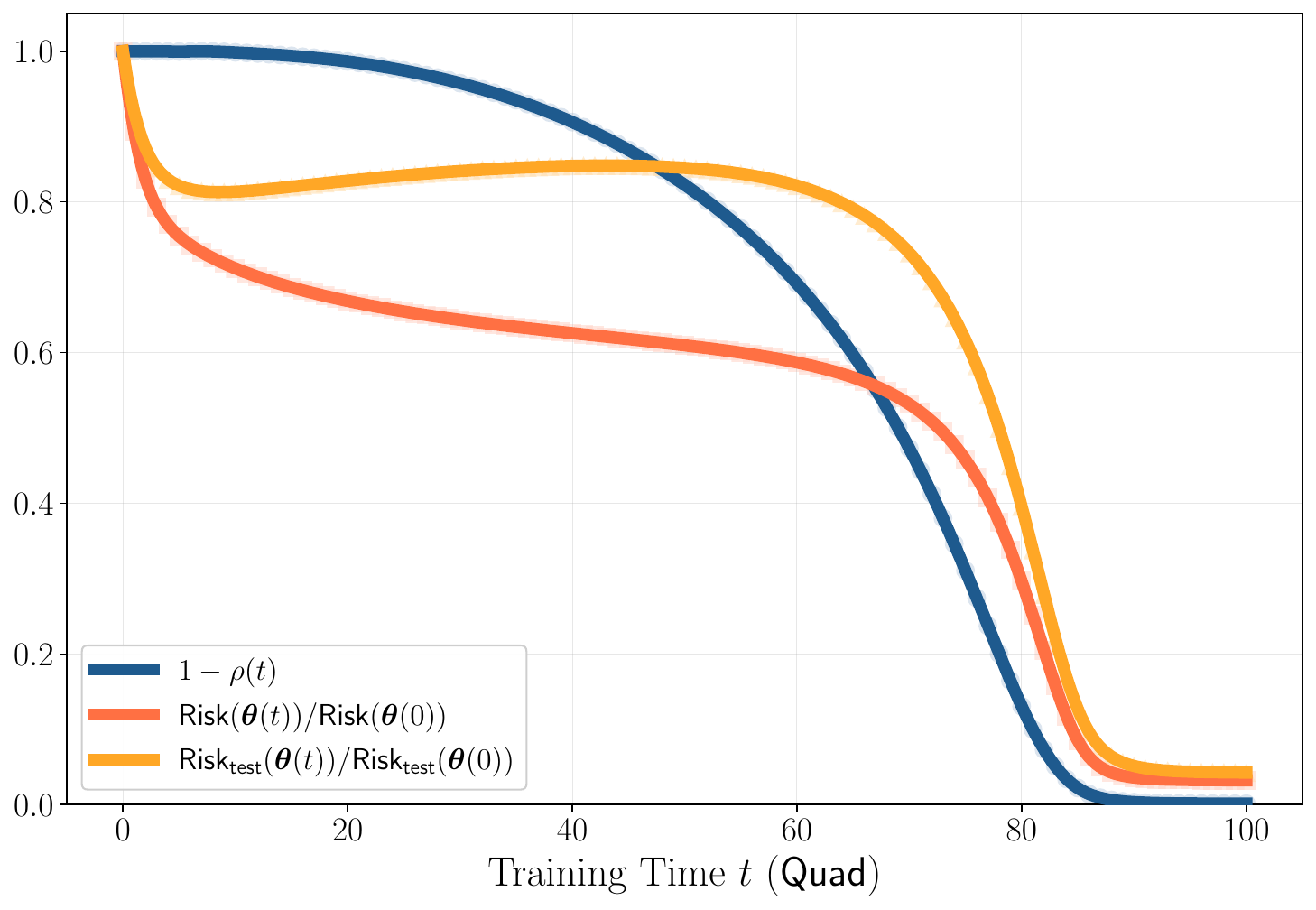}
  \end{minipage}
  \begin{minipage}{0.48\textwidth}
    \centering
    \includegraphics[width=\textwidth]{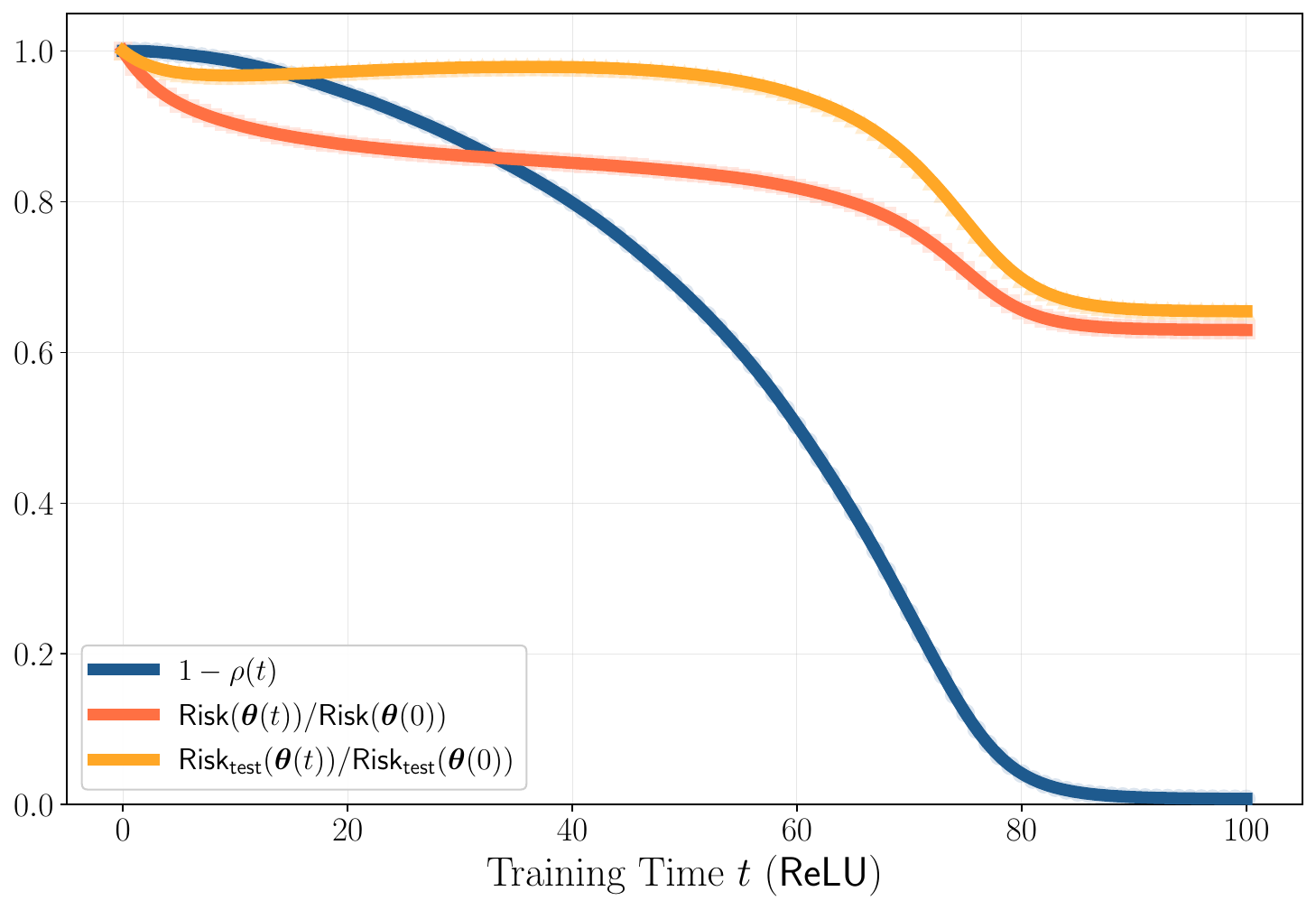}
  \end{minipage}
  \caption{Grokking dynamics for different activations. Left: $\mathsf{Quad}$ with $d=5000$, $\delta=10$, $\eta=0.15$. Right: $\mathsf{ReLU}$ with $d=5000$, $\delta=17.5$, $\eta=0.5$.}
  \label{fig:grokking_quad_relu}
\end{figure}

\end{document}